\newcommand\reallywidehat[1]{%
	\savestack{\tmpbox}{\stretchto{%
			\scaleto{%
				\scalerel*[\widthof{\ensuremath{#1}}]{\kern.1pt\mathchar"0362\kern.1pt}%
				{\rule{0ex}{\textheight}}
			}{\textheight}%
		}{2.4ex}}%
	\stackon[-6.9pt]{#1}{\tmpbox}%
}
\newcommand\numberthis{\addtocounter{equation}{1}\tag{\theequation}}
\newcommand{\vecOne}[1]{\mathbf{1}_{#1}}
\newcommand{\vecZero}[1]{\mathbf{0}_{#1}}
\newcommand{\matOne}[1]{\mathbf{J}_{#1}}
\newcommand{\test}{\Delta}
\newcommand{\distClassGeneric}{\mathcal{P}}
\newcommand{\distclassMultinomial}{\distClassGeneric_{\mathrm{multi}}^{(\alphabetSize)}}
\newcommand{\minSep}{\rho} 
\newcommand{\minimaxTestingRateTwosampleLDP}{\minSep^\ast_{\sampleSize_1, \sampleSize_2, \privacyParameter}}
\newcommand{\rhoTwosample}{\rho_{\sampleSize_1, \sampleSize_2}}
\newcommand{\rvLDPnoise}{W}
\newcommand{\numbersetRealizedLap}{w}
\newcommand{\numbersetNonnegInt}{\mathbb{N}_0}
\newcommand{\numbersetReal}{\mathbb{R}} 
\newcommand{\floor}[1]{\lfloor #1 \rfloor} 
\newcommand{\indicator}[1]{\mathds{1}\left( #1 \right) }
\newcommand{\Ell}{\mathbb{L}}
\newcommand{\EllTwo}{\Ell_2} 
\newcommand{\EllInfty}{\Ell_{\infty}} 
\newcommand{\normEllp}[3]{
	\vertiii{#1}_{\Ell_{#2}}^{#3}
}
\newcommand{\vertiii}[1]{
	{\left\vert\kern-0.25ex\left\vert\kern-0.25ex\left\vert #1 
		\right\vert\kern-0.25ex\right\vert\kern-0.25ex\right\vert}
}
\newcommand{\rvOne}{X}
\newcommand{\rvTwo}{Y}
\newcommand{\rvThree}{Z}
\newcommand{\rvTwoObs}{y}
\newcommand{\sampleSizeOne}{\sampleSize_1}
\newcommand{\vectorize}[1]{\mathbf{#1}}
\newcommand{\private}[1]{\tilde{#1}}
\newcommand{\rvCodomain}[1]{\mathcal{#1}}
\newcommand{\sigmafield}{\mathcal{F}}
\newcommand{\sigmafieldPriv}{\tilde{\mathcal{F}}}
\newcommand{\rvX}{X} 
\newcommand{\rvXRealized}{x}
\newcommand{\rvXCodomain}{\mathcal{\rvX}} 
\newcommand{\rvXPriv}{\tilde{\rvX}} %
\newcommand{\rvXPrivCodomain}{\tilde{\rvXCodomain}}
\newcommand{\rVecX}{\vectorize{\rvX}}
\newcommand{\rVecXRealized}{\vectorize{\rvXRealized}}
\newcommand{\rVecXPriv}{\tilde{\rVecX}} %
\newcommand{\rVecXPrivRealized}{\tilde{\rVecXRealized}} %
\newcommand{\rvY}{Y}
\newcommand{\rvYPriv}{\tilde{\rvY}} %
\newcommand{\rVecY}{\vectorize{\rvY}}
\newcommand{\vecRandomPrivTwoSampleYNumber}[1]{\tilde{\mathbf{Y}}_{#1}}
\newcommand{\rvZ}{Z}
\newcommand{\rvZPriv}{\tilde{\rvZ}} %
\newcommand{\rVecZ}{\vectorize{\rvZ}}
\newcommand{\rVecZPriv}{\tilde{\rVecZ}} %
\newcommand{\mE}{\mathbb{E}} 
\newcommand{\mP}{\mathbb{P}} 
\newcommand{\mVPQ}{\mathrm{Var}_{P,Q}} 
\newcommand{\sampleIndexOne}{i}
\newcommand{\sampleIndexTwo}{j}
\newcommand{\sampleSets}[3]{\{{#1}_{#2}\}_{#2 \in [#3]}}
\newcommand{\adaptiveBinNumIndex}{t}
\newcommand{\normSqMultinomMax}{b}
\newcommand{\dimDensity}{d} 
\newcommand{\alphabetSize}{k} 
\newcommand{\vectorIndex}{m}
\newcommand{\sampleSize}{n}
\newcommand{\dimensionIndex}{p}
\newcommand{\probVecElement}[2]{p_{{#1}{#2}}}
\newcommand{\probVec}{\mathbf{p}} 
\newcommand{\privacyDensity}{q}
\newcommand{\smoothness}{s}
\newcommand{\sizeMonteCarlo}{B}
\newcommand{\binner}{D} 	  
\newcommand{\kernelOperator}{L}
\newcommand{\kernelMoment}{M} 
\newcommand{\dataGenDist}{P}  
\newcommand{\privacyMechanism}{Q}
\newcommand{\ballRadius}{R}
\newcommand{\ONset}{\mathbb{B}}
\newcommand{\nTest}{\mathcal{N}} 
\newcommand{\privacyMechanismClass}{\mathcal{Q}}
\newcommand{\privacyParameter}{\alpha} 
\newcommand{\privacyParameterrappor}{\privacyParameter_{\mathrm{bf}}}
\newcommand{\maxErrorTypeTwo}{\beta} 
\newcommand{\maxErrorTypeOne}{\gamma} 
\newcommand{\smallNumber}{\delta}
\newcommand{\smallNumberrappor}{\smallNumber_{\mathrm{bf}}}
\newcommand{\distparamDiscLap}{\zeta}
\newcommand{\binNum}{\kappa}           
\newcommand{\separation}{\rho} 
\newcommand{\wavFatherFunc}{\phi}
\newcommand{\wavMotherFunc}{\psi} 
\newcommand{\coef}{\theta}
\newcommand{\permutation}{\pi}
\newcommand{\WavFatherBasisSet}{\overline{\Phi}}
\newcommand{\WavMotherBasisSet}{\overline{\uppercase{\Psi}}}
\newcommand{\pValueMonteCarlo}{\hat{p}_{\sizeMonteCarlo}}
\newcommand{\LapUParam}{\sigma_{\privacyParameter}}
\newcommand{\discLapUParam}{\distparamDiscLap_{\privacyParameter}}
\newcommand{\momentTwosampleVarCondexpY}{\kernelMoment_{\rvTwo,1}(\dataGenDist, \privacyMechanism)}
\newcommand{\momentTwosampleVarCondexpZ}{\kernelMoment_{\rvThree,1}(\dataGenDist, \privacyMechanism)}
\newcommand{\momentTwosampleVarCondexpBarY}{\bar{\kernelMoment}_{\rvTwo, 1}(\dataGenDist, \privacyMechanism)}
\newcommand{\momentTwosampleVarCondexpBarZ}{\bar{\kernelMoment}_{\rvThree, 1}(\dataGenDist, \privacyMechanism)}
\newcommand{\momentTwosampleExpSquare}{\kernelMoment_{\rvTwo \rvThree,2}(\dataGenDist, \privacyMechanism)}
\newcommand{\momentTwosampleExpSquareBar}{\bar{\kernelMoment}_{\rvTwo \rvThree,2}(\dataGenDist, \privacyMechanism)}
\newcommand{\besovParamMicroscope}{q}
\newcommand{\resLev}{j}
\newcommand{\primResLev}{J}
\newcommand{\wavFatherUnivIndex}{k}
\newcommand{\wavFatherIndex}{\boldsymbol{\wavFatherUnivIndex}}
\newcommand{\wavMotherUnivIndex}{\ell}
\newcommand{\wavMotherIndex}{\boldsymbol{\wavMotherUnivIndex}}
\newcommand{\wavMotherBooleanUnivIndex}{\epsilon}
\newcommand{\wavMotherBooleanIndex}{\mathbf{\wavMotherBooleanUnivIndex}}
\newcommand{\wavCoef}{\theta}
\newcommand{\multivInhomoWavFatherBasis}{\overline{\Phi}_{\primResLev}}
\newcommand{\wavGenericFatherCoef}{\wavCoef_{\wavFatherFunc}}
\newcommand{\wavGenericMotherCoef}{\wavCoef_{\wavMotherFunc}}
\newcommand{\adaptiveSingleTest}[1]{\Delta^{#1}}
\newcommand{\domainTs}{
	[0,1]^{{\dimDensity}}
}
\newcommand{\LtwoSpace}{\mathbb{L}_{2}(\domainTs)}
\newcommand{\ballDistn}{\mathcal{B}}
\newcommand{\besovBall}[2]{\ballDistn_{\dimDensity,\smoothness, #2}^{\mathrm{B}}(\ballRadius)}
\newcommand{\holderBall}{\ballDistn_{\dimDensity, \smoothness}^{\mathrm{H}}(\ballRadius)} 
\newcommand{\pBesovTs}{ 
	\distClassGeneric_{
		\dimDensity, \smoothness, \besovParamMicroscope
	}^{
		\mathrm{B,2}
	}
	(\ballRadius)
}
\newcommand{\pBesovGof}{ 
	\distClassGeneric_{
		\dimDensity, \smoothness, \besovParamMicroscope
	}^{
		\mathrm{B,1}
	}
	(\ballRadius)
}
\newcommand{\pHolderTs}{ 
	\distClassGeneric_{
		\dimDensity, \smoothness
	}^{
		\mathrm{H,2}
	}
	(\ballRadius)
}
\newcommand{\pHolderGof}{ 
	\distClassGeneric_{
		\dimDensity, \smoothness
	}^{
		\mathrm{H,1}
	}
	(\ballRadius)
}
\newcommand{\bumpHolder}{\varphi}
\newcommand{\vecRandomPrivTwoSampleZNumber}[1]{\tilde{\mathbf{Z}}_{#1}}
\newcommand{\kernelTwoSample}{h_{ts}}
\newcommand{\kernelTwoSampleSym}{\check{h}_{ts}}
\newcommand{\hypoNull}{\distClassGeneric_0}
\newcommand{\pAlterTwosample}{\distClassGeneric_1(\rhoTwosample)}
\newtheorem{definition}{Definition}[section]
\newtheorem{theorem}{Theorem}[section]
\newtheorem{lemma}{Lemma}[section]
\newtheorem{remark}{Remark}[section]
\begin{document}

 \begin{center}
 	\textbf{\LARGE Minimax Optimal Two-Sample Testing under Local Differential Privacy}	
 
 	\vspace*{.2in}
 	
 	\begin{author}
 		A
 		Jongmin Mun$^{\dagger}$  \quad Seungwoo Kwak $^{\ddagger}$ \quad Ilmun Kim$^{\S}$ 	\end{author}
 
 	\vspace*{.1in}
 	
 	\begin{tabular}{c}
 		$^{\dagger}$Data Sciences and Operations Department, Marshall School of Business, University of Southern California\\
 		$^{\ddagger}$Division of Future Convergence, Sungkonghoe University\\
 		$^{\S}$Department of Statistics and Data Science, Yonsei University 	\end{tabular}

 	\vspace*{.1in}
 	
 	\today
 	
 	\vspace*{.1in}
 \end{center}

\begin{abstract}
We explore the trade-off between privacy and statistical utility in private two-sample testing under local differential privacy (LDP) for both multinomial and continuous data. We begin by addressing the multinomial case, where we introduce private permutation tests using practical privacy mechanisms such as Laplace, discrete Laplace, and Google's \texttt{RAPPOR}. We then extend our multinomial approach to continuous data via binning and study its uniform separation rates under LDP over H\"{o}lder and Besov smoothness classes. The proposed tests for both discrete and continuous cases rigorously control the type I error for any finite sample size, strictly adhere to LDP constraints, and achieve minimax separation rates under LDP. The attained minimax rates reveal inherent privacy-utility trade-offs that are unavoidable in private testing. To address scenarios with unknown smoothness parameters in density testing, we propose an adaptive test based on a Bonferroni-type approach that ensures robust performance without prior knowledge of the smoothness parameters. We validate our theoretical findings with extensive numerical experiments and demonstrate the practical relevance and effectiveness of our proposed methods.
\end{abstract}



\setcounter{tocdepth}{2}

\section{Introduction}\label{section_intro}
Large-scale internet services such as Netflix and Amazon collect sensitive data from massive user bases, allowing companies to conduct cost-effective randomized experiments by assigning users to two different user interfaces or marketing campaigns. By testing whether the resulting two independent sets of samples originate from the same distribution{---}a procedure known as A/B testing or two-sample testing{---}companies can statistically assess the impact of new interfaces or campaigns on various user behaviors. However, the sensitivity of detailed personal data raises substantial privacy concerns in data analysis. Since privacy protection inherently conceals some of the information in the data and in turn compromises statistical utility, it is crucial to characterize and balance the trade-off between statistical utility and data privacy. Differential privacy \citep[DP;][]{CynthiaDwork2006CalibratingAnalysis} provides a rigorous framework for this trade-off, defining data privacy as a mathematical concept that supports such balancing.

We briefly review two notions of DP: central DP and local DP \citep[LDP;][]{Kasiviswanathan2008WhatPrivately}.
The central DP constraint, illustrated in Figure \hyperref[dp_graphical_central]{1(a)}, assumes that a trusted data curator (or distributor) has access to the entire original data set and calculates a noisy statistical result. This centralized approach requires that the probability of any event remains essentially the same when a single data entry is arbitrarily perturbed.
Under this constraint, one cannot reliably extract any individual-level information from the noisy statistical result. In contrast, under the local DP constraint, illustrated in Figure \hyperref[dp_graphical_local]{1(b)}, data owners do not place trust in the curator. Instead, each data owner independently reports a noisy version of their data. This more stringent local constraint inevitably impairs statistical utility more than the central constraint. However, it provides a higher level of privacy and separates the data curator from the responsibility for disclosure risk.

These benefits have driven widespread adoption of the LDP framework in the internet-scale deployment of data analysis under privacy, such as in
Apple \citep{Apple2017Privacy},
Google \citep{erlingsson_rappor_2014},
Microsoft \citep{Ding2017CollectingPrivately},
and Uber \citep{Near2017Uber}. With the massive user bases, these companies require stringent privacy protections, while also having the capacity to obtain large samples which allow statistically meaningful analyses within strong privacy constraints. Various implementations of LDP in industrial applications has naturally spurred a substantial body of work~\citep{Acharya2021DifferentiallyCam, Duchi2018MinimaxEstimation, Lam-Weil2021MinimaxConstraint, Lalanne2023on, Tony2021DP}, examining the intrinsic trade-off between privacy and statistical utility in various settings. Our paper contributes to the existing body of work by exploring this trade-off in the context of the two-sample testing problem.

\begin{figure}[t!]
	\centering
	\begin{subfigure}{0.41\textwidth}
		\includegraphics[width=\textwidth]{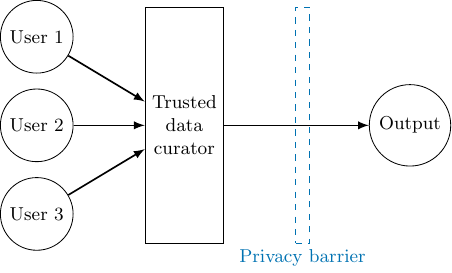}
		\caption{Central differential privacy}
		\label{dp_graphical_central}
	\end{subfigure}
	\hskip 1.1cm
	\begin{subfigure}{0.5\textwidth}
		\includegraphics[width=\textwidth]{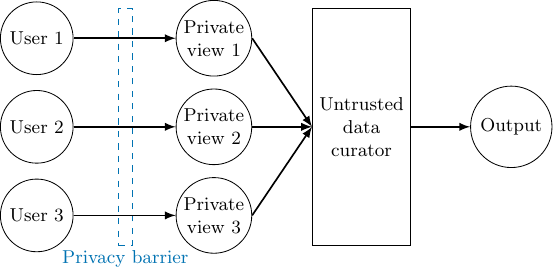}
		\caption{Local differential privacy}
		\label{dp_graphical_local}
	\end{subfigure}	
	\caption{Graphical illustration of central differential privacy and local differential privacy.}\label{dp_graphical}
\end{figure}
The two-sample testing problem, which originates from the classical two-sample $t$-test~\citep{Student1908Twosample}, has gained renewed interest in recent years due to the emergence of high-dimensional and complex data.  
Notably, several novel methodological approaches have been developed including kernel-based tests~\citep{Gretton2009FastKernel, Gretton2012ATest}, distance-based tests~\citep{Szekely2004Energy, szekely_new_2005} and regression/classification-based tests~\citep{kim2019,kim2021}, which have demonstrated promising capabilities in dealing with modern data sets.
On the theoretical front, researchers have explored the fundamental limit of this problem through the lens of minimax analysis both in the statistics~\citep[for example,][]{Arias-Castro2018RememberDimension, kim_minimax_2022, Schrab2021MMDTest} and computer science~\citep[for example,][]{Batu2000NonprivateTwosample, chan2014optimal, Diakonikolas2016Nonprivate, Goldreich2000Nonprivate} literature.
In addition to the methodological and theoretical advancements, the topic has found contemporary applications such as in education research \citep{Rabin2019ModelingTests}, network traffic analysis \citep{Kohout2018NetworkTest}, and audio segmentation \citep{Harchaoui2009ASegmentation}.
Despite its long history and fundamental roles in practice, most of the existing work on two-sample testing has focused on non-private settings with a few exceptions. Exceptions include private versions of
multinomial tests~\citep{Acharya2018dPGofTwosample, Aliakbarpour2019PrivatePermutations,Aliakbarpour2018DPgof},
traditional non-parametric tests \citep[for example,~Mann-Whitney and Wilcoxon signed-rank tests;][]{Couch2019NonparamTwosample,Task2016Wilcoxon},
partitioning-based test of univariate continuous distributions \citep{sheffet_differentially_2024}
and kernel tests \citep{Raj2020ATest,kim2023dp} under central DP. The literature on two-sample testing under LDP is even more scarce. Notable contributions in this area include \cite{Ding2018TwosampleMean} and \cite{waudby-smith_nonparametric_2023}, both of which are mainly concerned with detecting differences in location. In contrast, our primary goal is to develop two-sample tests for general alternatives under LDP, focusing on both multinomial and multivariate continuous data. Additionally, we shed new light on the fundamental limits of the two-sample problem  under LDP.

In the following subsection, we begin with a review of related work and then discuss our techniques and contributions. 

\subsection{Related Prior Work}\label{section:related_works}
Private hypothesis testing has been extensively studied in the statistics and computer science literature.
Among various studies on this topic, we briefly review those closely related to our work. Initially motivated by the privacy attack on genome-wide association study~(GWAS)~\citep{homer2008resolving}, the early work on private testing mainly concentrates on private versions of chi-square tests and explores their asymptotic properties~\citep{Gaboardi2016DPChisq, Gaboardi2018LDPChisq, Johnson2013PrivacyStudies, Kifer2017DPChisq, Uhler2013PrivacyStudies, Vu2009DifferentialEvaluations, Wang2015DPChisq, Yu2014Chisq}. In contrast, a recent line of work in computer science is concerned with non-asymptotic properties of private tests designed for multinomial data sets, and studies optimal sample complexities of testing problems from a minimax perspective. This line of work has been initiated by \citet{Cai2017DPGofPrivit} for central DP and \citet{Sheffet2018LDP} for LDP, and continued by \citet{Acharya2018dPGofTwosample}, \citet{Aliakbarpour2019PrivatePermutations, Aliakbarpour2018DPgof}, and \citet{sheffet_differentially_2024} for central DP and \citet{Acharya2020Lowerbound, Acharya2021Lowerbound} for LDP, respectively. The optimal sample complexity is usually achieved through a systematic analysis of both its upper and lower bounds. For the upper bound analysis, the prior works extend non-private multinomial tests, such as in \citet{Acharya2015Nonprivate}, \citet{chan2014optimal}, \citet{Diakonikolas2018Nonprivate}, \citet{Diakonikolas2016Nonprivate}, \citet{Goldreich2000Nonprivate}, and \citet{Valiant2017Nonprivate}, to corresponding private counterparts by incorporating randomization mechanisms. On the other hand, the lower bound analysis relies on information-theoretic techniques, such as Le Cam's method, while treating the DP requirement as information constraints \citep[see, for example,][for detailed discussions]{Acharya2020Lowerbound, Duchi2018MinimaxEstimation}.

There are also a few recent papers from the statistics community that explore univariate goodness-of-fit testing under LDP. Specifically, \citet{Dubois2022} propose minimax optimal goodness-of-fit tests for H\"{o}lder densities under LDP in both non-interactive and interactive scenarios. \citet{Lam-Weil2021MinimaxConstraint} also consider goodness-of-fit testing under LDP, and develop minimax optimal tests for multinomials and for continuous densities over Besov balls. Our work builds on their framework and extends the focus from goodness-of-fit testing to two-sample testing for both (i) multinomials and (ii) multivariate H\"{o}lder and Besov densities. It is worth highlighting that previous works \citep{Lam-Weil2021MinimaxConstraint, Dubois2022} rely  solely on the Laplace mechanism \citep{Dwork2014Book} to establish their theoretical results. In contrast, we explore various LDP mechanisms that achieve similar optimality properties and empirically demonstrate that the Laplace mechanism can underperform in practical scenarios. Specifically, we delve into the Google's \texttt{RAPPOR} \citep{erlingsson_rappor_2014}, generalized randomized response~\citep{Gaboardi2018LDPChisq}, and (discrete) Laplace mechanisms~\citep{ghosh2009}, and illustrate their theoretical and empirical performance.
\subsection{Techniques and Results}\label{subsection:techniques}
Previous work on hypothesis testing under local differential privacy has primarily focused on goodness-of-fit testing~\citep{Dubois2022, Lam-Weil2021MinimaxConstraint}. We instead target a broader and arguably more challenging settings of two-sample testing of multinomials and multivariate densities.  In particular, we provide testing methods that are both practically reliable and theoretically optimal.
Our practical reliability stems from both the privatization mechanism and the testing procedure. For privatization, one of our methods leverages Google's \texttt{RAPPOR}, a widely adopted open-source privacy mechanism  which has demonstrated effectiveness through years of large-scale deployment in Chrome browser. 
Although prior works~\citep{duchi2013local, acharya_test_2019, acharya_estimating_2021} adopt \texttt{RAPPOR} and analyze its statistical performance under the minimax framework, their focuses are limited to $\ell_1$ separation and multinomial data. Our work expands upon them by establishing minimax optimality for both multivariate continuous and multinomial data under  $\ell_2$ separation, marking the first result of its kind.
For the testing procedure, our methods rigorously control the type I error in all scenarios. At the heart of achieving the blend of practicality and theory lies the permutation test.
Achieving practical reliability poses a significant challenge, especially in calibrating the critical value within the non-asymptotic regime while accounting for the randomization effects introduced by local differential privacy.
Under a composite null hypothesis of two-sample testing, critical values cannot be determined through Monte-Carlo-approximated population quantile of test statistics,
unlike in goodness-of-fit testing.
The critical values obtained through concentration inequalities, on the other hand, usually depend on unspecified constants and thus are not reliable in practice.
By employing a permutation procedure, our testing methods guarantee type I error control at any sample size and with sufficiently large number of permutations (which does not depend on the sample size). The permutation procedure also facilitates theoretical analysis of power, leading to minimax upper bound analyses. In particular, it enables us to leverage the technique of \citet{kim_minimax_2022}, namely the two moments method therein.
This technique allows us to avoid directly analyzing the permutation distribution under LDP, and provides a sufficient condition for type II error control based solely on the first two moments of the test statistic. Equipped with this tool, we analyze our test statistics, which are U-statistics derived from perturbed data with a carefully selected perturbation level. A bulk of our technical effort is dedicated to bounding the moments related to the U-statistic in the presence of this data perturbation.

By obtaining matching information-theoretic lower bounds, we establish the optimality of our methods and gain insight into the fundamental trade-off between statistical power and privacy.
For the lower bound analysis, we leverage a recently developed technique by~\citet{Lam-Weil2021MinimaxConstraint}. This technique builds upon Ingster's method~\citep{Ingster1993AsymptoticallyAlternatives}, a classical approach for deriving minimax separation rates in testing problems, and adapts it to incorporate the LDP constraint. At the heart of Ingster's method is bounding the chi-square divergence between a simple null distribution and a mixture of alternative distributions. The key idea behind obtaining a tight lower bound under LDP is to construct a mixture distribution in Ingster's method using the singular values and singular vectors of the privacy mechanism. Such construction naturally imposes extra restrictions caused by the LDP constraint, enabling us to achieve a tight lower bound under LDP. Our technical effort lies in extending the univariate result of \cite{Lam-Weil2021MinimaxConstraint} to more general settings, including the multivariate H\"{o}lder ball and Besov ball.

\vskip 1em

\noindent \textit{Summary of our contributions.} We highlight our contributions and contrast them with prior work as follows. We also refer readers to Table~\ref{table:rates}, which summarizes the non-private and private minimax separation rates for two-sample testing, derived from both prior work and our findings.

\begin{table*}
	\small
	\begin{center}
		\begin{tabular}{lcc}
			\toprule
			&Non-private rate & Private rate under LDP\ \\
			\midrule
			\rule{0pt}{15pt}
			\makecell{Testing for multinomials \\ in $\ell_2$ separation}
			&
			\makecell{$\sampleSize_1^{-1/2}$
				\\
				\citep{chan2014optimal,kim_minimax_2022}
			}
			&
			\makecell{$
				\dfrac{\alphabetSize^{1/4}}{{(\sampleSize_1 \privacyParameter^2)^{1/2}}}
				\vee
				\sampleSize_1^{-1/2}
				$
				\\
				(Theorem~\ref{theorem:twosample_multinomial_rates})
			}
			\\
			\rule{0pt}{25pt}
			\makecell{
				Testing for H\"{o}lder and Besov \\densities in $\mathbb{L}_2$ separation}
			&
			\makecell{ $\sampleSize_1^{\frac{-2s}{4\smoothness+\dimDensity}}$\\ \citep{Arias-Castro2018RememberDimension}
			}
			&
			\makecell{$(\sampleSize_1\privacyParameter^2)^{\frac{-2s}{4\smoothness+3\dimDensity}} 
				\vee
				\sampleSize_1^{\frac{-2s}{4\smoothness+\dimDensity}}$
				\\(Theorem~\ref{theorem:twosample_conti_rate})
			}
			\\
			\bottomrule
		\end{tabular}
	\end{center}
				\caption{Non-private and private minimax rates for two-sample multinomial and density testing in $\ell_2$ and $\mathbb{L}_2$ separation where $\sampleSize_1$ denotes the minimum sample size and $\privacyParameter$ denotes the privacy level. For multinomial testing, $\alphabetSize$ stands for the number of categories, and for density testing, $\smoothness$ represents the smoothness parameter.
		The rates exhibit elbow effects{---}phase transitions at specific levels of privacy.
		See Section~\ref{section:twosample_disc} and~\ref{section:twosample_conti} for details.
	}\label{table:rates}
\end{table*}
\begin{itemize}
	\item \textbf{Optimal multinomial testing under LDP~(Theorem~\ref{theorem:twosample_multinomial_rates}):} We start by developing a private two-sample test for multinomials, and present minimax separation rates in terms of the $\ell_2$ distance under LDP. The prior work \citep{Acharya2018dPGofTwosample, Aliakbarpour2019PrivatePermutations,Aliakbarpour2018DPgof} for private two-sample testing generally focuses on central DP and imposes conditions such as equal sample sizes and Poisson sampling that may not be practically relevant. In contrast, our approach does not rely on such unnecessary conditions, and obtain optimality under more practical settings. We also highlight that our upper bound result is established using three distinct LDP mechanisms{---}namely Laplace, discrete Laplace and \texttt{RAPPOR} mechanisms{---}which diversifies the toolkit in practice. As mentioned earlier, this is in contrast to the prior work \citep{Lam-Weil2021MinimaxConstraint,Dubois2022}, which mainly focuses on the Laplace mechanism. Moreover, we show that the use of generalized randomized response mechanism~\citep{Gaboardi2018LDPChisq} can lead to suboptimal power in Appendix~\ref{genrr_suboptimal_theory}. 
	\item \textbf{Optimal density testing under LDP~(Theorem~\ref{theorem:twosample_conti_rate}):} We next consider the two-sample problem for continuous data and derive optimal $\mathbb{L}_2$ separation rates under LDP, by leveraging the prior work \citep{Lam-Weil2021MinimaxConstraint, kim_minimax_2022}. In particular, we examine both H\"{o}lder and Besov smoothness classes, and show that the proposed private test is optimal for both classes with the finite-sample validity. This approach differs from the prior work on a similar topic. For instance, unlike \citet{sheffet_differentially_2024} that consider central DP with Poissonization, we focus on the more stringent setting of LDP and consider the standard sampling with fixed sample sizes. Moreover, in contrast to the prior work under LDP~\citep{Ding2018TwosampleMean, waudby-smith_nonparametric_2023}, primarily focused on location alternatives, our private test is sensitive against a broad range of nonparametric alternatives. Lastly, we highlight that our method controls the type I error in any finite sample sizes, and exactly satisfies the LDP condition, distinguishing it from the prior work of \cite{Raj2020ATest}.
	\item \textbf{Adaptive density testing under LDP (Theorem~\ref{theorem:twosample_adaptive_upper}):} Similar to other nonparametric methods for density testing, the optimality of the proposed density test relies on the knowledge of the underlying smoothness parameter, which is typically unknown. To tackle this issue, we introduce a Bonferroni-type approach that adapts to the unknown smoothness parameter at the expense of extra logarithmic factors in the separation rate. The proof of the adaptation result leverages the exponential inequality of the permuted U-statistic~\citep{kim_minimax_2022}. This technique leads to an improvement of the adaptive technique used in \citet{Lam-Weil2021MinimaxConstraint}, which resorts to a simple upper bound for the variance of the U-statistic along with Chebyshev's inequality.
	
	\item \textbf{Numerical validation (Section~\ref{section:simulation}):}
	Lastly, we assess the empirical performance of the proposed tests under various scenarios and showcase the trade-off between privacy and utility through numerical simulations. It is important to emphasize that previous research on private testing has primarily centered on theoretical optimality, often lacking empirical validation of their findings. We address this gap by complementing theoretical justifications with empirical evaluation, thereby enhancing practical relevance. Since no previous methods exist for two-sample testing for multinomials or densities under LDP, we create baseline methods by extending one-sample LDP $\chi^2$-tests~\citep{Gaboardi2018LDPChisq} to the two-sample problem (Appendix~\ref{appendix:baseline}), and compare their empirical performance with our main proposals. To facilitate the use of our method, we provide a Python package \texttt{privateAB} that implements all proposed and baseline methods, available at \url{https://pypi.org/project/privateAB/0.0.2/}.

\end{itemize}

\subsection{Notation}\label{subsection:notation}
Throughout this paper, real numbers are represented by lowercase, non-bold letters, such as \(a\), while vectors in \( \mathbb{R}^d \) for \(d \geq 2\) are written in boldface lowercase, such as \( \mathbf{a} \). Constant vectors are denoted using bold numerals, such as \( \mathbf{1} \) and \( \mathbf{0} \). The $i$th element of \( \mathbf{a} \) is denoted by \( a_i \). For indexed vector such as $\vectorize{a}_j$, its $i$th element is denoted as \( a_{ji} \). Unless otherwise specified, random variables are written in uppercase non-bold (for example, \( X \)), while random vectors use bold uppercase (for example, \( \mathbf{X} \)). The $i$th element of $\vectorize{X}$ and $\vectorize{X}_j$ are denoted as $X_i$ and $X_{ji}$, respectively.
The set of non-negative integers is denoted by \( \mathbb{N}_0 := \{0,1,2,\ldots\} \). For positive integers \( u \) and \( v \), \( [u] \) represents \( \{1, \ldots, u\} \), and \( [u]^v \) denotes its Cartesian product taken \( v \) times. A set of \( u \) elements indexed by \( i \) is written as \( \{a_i\}_{i \in [u]} := \{a_1, \ldots, a_u\} \).
For any real \( s > 0 \), \( \lfloor s \rfloor \) denotes the largest integer strictly smaller than \( s \). For \( a, b \in \mathbb{R} \), we define \( a \vee b := \max\{a, b\} \) and \( a \wedge b := \min \{a, b\} \). Given \( \mathbf{w} \in \mathbb{R}^d \) and \( p \geq 1 \), its \( \ell_p \)-norm is defined by \( \|\mathbf{w}\|_p := \bigl(\sum_{i=1}^d |w_i|^p\bigr)^{1/p} \). Similarly, the \( \mathbb{L}_p \)-norm of a function supported on \( [0,1]^d \) is defined as follows:
$$
\vertiii{f}_{\mathbb{L}_p}
:=
\biggl(
\int_{\domainTs}
|f(\vectorize{x})|^p d\vectorize{x}
\biggr)^{1/p},
\quad
1 \leq p < \infty.
$$
For any  $\vectorize{u}, \vectorize{v} \in \mathbb{R}^\dimDensity$,
with $u_j \leq v_j$ for  $j=1, \ldots, d$, a hyperrectangle $[\vectorize{u},\vectorize{v}]$ is defined as follows:
$$
[\vectorize{u},\vectorize{v}] := \prod_{j=1}^\dimDensity [u_j, v_j].
$$
Given the privacy parameter $\alpha > 0$, we write $z_\alpha := e^{2\alpha}- e^{-2\alpha} = 2 \sinh(2\alpha)$. Throughout, a constant that only depends on parameters $\theta_1,\theta_2,\ldots$ is denoted as $C(\theta_1,\theta_2,\ldots)$. The indicator function $\indicator{\mathcal{A}} $ takes value 1 if the event $\mathcal{A}$ is true and 0 otherwise.

\subsection{Outline of the Paper}
The remainder of the paper is organized as follows:
Section~\ref{section:background} introduces the necessary background on LDP, the minimax framework, and the permutation procedure.
Section~\ref{section:twosample_disc} illustrates the minimax analysis for multinomial testing under LDP and the optimal permutation testing procedure.
Building on this result, Section~\ref{section:twosample_conti} presents the minimax analysis for multivariate two-sample density testing under LDP and an optimal permutation testing procedure.
Finally, Section~\ref{section:simulation} presents numerical validation of the procedures proposed in Sections \ref{section:twosample_disc} and \ref{section:twosample_conti}. All proofs and additional simulation results are deferred to the appendix.
\section{Background}\label{section:background}
This section introduces the notion of local differential privacy and two-sample testing under LDP. We then explain the minimax framework for two-sample testing under LDP, along with the permutation test procedure considered throughout this paper. 
%
\subsection{Two-Sample Testing under Local Differential Privacy Constraint}\label{subsection:LDPtwosampleFormulation}
Let $\distClassGeneric$ denote the space of pairs of distributions of interest{---}multinomial distributions taking values in $[k]$ or continuous distributions taking values in $[0,1]^{\dimDensity}$. Given a pair of distributions $(P_\rvY, P_\rvZ) \in \distClassGeneric$, for each $\sampleIndexOne \in [\sampleSize_1]$, the $\sampleIndexOne$th  data owner draws $\rvY_\sampleIndexOne$ from $P_\rvY$ independently from the others. Similarly, for each $\sampleIndexTwo \in [\sampleSize_2]$,  the $\sampleIndexTwo$th data owner independently draws $\rvZ_\sampleIndexTwo$ from $P_\rvZ$.
We allow the sample sizes $\sampleSize_1$ and $\sampleSize_2$ to differ, and assume $\sampleSize_1 \leq \sampleSize_2$, without loss of generality, throughout this paper.  We denote the pooled sample size as $\sampleSize := \sampleSize_1 + \sampleSize_2$. Under the LDP constraint, each owner releases only a randomized transformation of their raw sample as follows:
\begin{definition}[Local differential privacy]\label{def:LDP}
	Given a privacy level $\privacyParameter > 0$, let $X_i$ and $\tilde{X}_i$ be random elements  mapped to measurable spaces $(\rvXCodomain, \sigmafield)$ and $(\rvXPrivCodomain_i, \sigmafieldPriv_\sampleIndexOne)$, respectively, for each $i \in [n]$. Then $\tilde{X}_i$ is an $\privacyParameter$-local differentially private ($\privacyParameter$-LDP) view of $X_i$ if there exists a bivariate function $Q_\sampleIndexOne(\cdot \,|\, \cdot)$ on $\sigmafieldPriv_\sampleIndexOne \times \rvXCodomain$  such that:
	\begin{enumerate}
		\item For any $x \in \rvXCodomain$, $Q_\sampleIndexOne ( \cdot \,|\, x)$ is a conditional distribution of $\tilde{X}_i$ given $X_i=x$,
		\item For any $A \in \sigmafieldPriv_\sampleIndexOne$, $x \mapsto Q_\sampleIndexOne (A\,|\,x) $ is a measurable function on $\rvXCodomain$,~\text{and}
		\item For any $x, x' \in \rvXCodomain \times \rvXCodomain$ and $A \in \sigmafieldPriv_i$, the inequality
		$Q_\sampleIndexOne
		(
		A\,|\,x
		)  \leq  e^{\privacyParameter}	Q_\sampleIndexOne
		(
		A\,|\,
		x'
		)$ holds.
	\end{enumerate}
	Let $\privacyMechanismClass_\privacyParameter$
	be the set of joint distributions whose marginals satisfy the above properties.
	Then $\privacyMechanism \in \privacyMechanismClass_\privacyParameter$ is called an $\privacyParameter$-LDP mechanism (or channel) associated with $\{\rvOne_\sampleIndexOne\}_{\sampleIndexOne=1}^\sampleSize$.
\end{definition}
\noindent
The curator, aware of the privacy level $\alpha$ and the  mechanism $Q$, only receives the $\privacyParameter$-LDP views $\{\tilde{X}_i\}_{i \in [n]}$, consisting of $\sampleSets{\rvYPriv}{\sampleIndexOne}{\sampleSize_1}$ and $\sampleSets{\rvZPriv}{\sampleIndexTwo}{\sampleSize_2}$, and uses them to decide whether $H_0: P_\rvY = P_\rvZ$  or $H_1: P_\rvY \neq P_\rvZ$.
The definition of LDP above is non-interactive in a sense that the $\sampleIndexOne$th conditional distribution $Q_\sampleIndexOne$ is assumed to be independent of other private views $\tilde{X}_1,\ldots,\tilde{X}_{i-1},\tilde{X}_{i+1},\ldots,\tilde{X}_n$.
It has been pointed out that allowing $Q_i$ to be interactive with private views $\{ \tilde{X}_{i'} \}_{i' \in [\sampleSize]\setminus{i}}$ can yield more efficient statistical procedures \citep[for example,][]{Acharya2022InteractiveConstraints,Berrett2020Interactive, Kasiviswanathan2008WhatPrivately}.
The non-interactive approach, however, requires less communication between data owners and a curator and thus is more suitable for large-scale statistical inference than the interactive counterpart~\citep{Berret2021Classification, Joseph2019Interactivity}. Thereby, we focus on the non-interactive privacy mechanisms throughout this paper.

Unless otherwise specified, all statements regarding expectations and variances in this paper are with respect to the distributions of the $\privacyParameter$-LDP views defined as:
\begin{equation*}
	P_{ \tilde{Y}_{i}}(A) := \int_{\rvXCodomain_i}
	Q_\sampleIndexOne
	(
	A\,|\,
	y
	)
	P_Y(dy)~\text{and}~
	P_{ \tilde{Z}_{j}}(B) := \int_{\rvXCodomain_j}
	Q_\sampleIndexOne
	(
	B\,|\,
	z
	)
	P_Z(dz),
\end{equation*}
for $i \in [n_1]$, $j \in [n_2]$, $A \in \sigmafieldPriv_i$, and $B \in \sigmafieldPriv_j$,  where the raw samples are marginalized out.

\subsection{Non-Private and LDP Minimax Framework for Two-Sample Testing}\label{subsection:testing}
Let \( \distClassGeneric_0 \) and \( \distClassGeneric_1 \) denote the collections of null and alternative distributions, respectively, corresponding to \( H_0 \) and \( H_1 \) introduced in Section \ref{subsection:LDPtwosampleFormulation}.   In the minimax framework, we focus on a subset of $\distClassGeneric_1$, denoted as $\distClassGeneric_1(\minSep_{\sampleSize_1, \sampleSize_2})$, where $\minSep_{\sampleSize_1, \sampleSize_2}$ indicates a minimum separation between $\distClassGeneric_1(\minSep_{\sampleSize_1, \sampleSize_2})$ and \( \distClassGeneric_0 \). 

For a given privacy mechanism $Q$, the curator evaluates a test function
$\test_Q: \prod_{i=1}^{n} \rvXPrivCodomain_i
\mapsto
\{0,1\}$ on the $\privacyParameter$-LDP views
and rejects $H_0$ if $\test_\privacyMechanism = 1$ and accepts $H_0$ otherwise. Our objective is to design the private test $\test_Q$ that controls the type I and II errors uniformly over distributions in $\distClassGeneric_0$ and $\distClassGeneric_1(\minSep_{\sampleSize_1, \sampleSize_2})$, respectively. In particular, for fixed $\maxErrorTypeOne, \maxErrorTypeTwo \in (0,1)$, we aim for the private test $\test_Q$ to satisfy the following conditions: 
\begin{equation}
	\begin{aligned} \label{eq:uniform_control}
		& \text{Type I error:} \, \sup_{(P_Y,P_Z) \in \distClassGeneric_0}
		\mathbb{E}
		[ \Delta_Q ] \leq \gamma,
		\;\; \text{and} \\
		& \text{Type II error:} \, \sup_{
			(P_Y,P_Z)
			\in
			\mathcal{P}_1(\rho_{n_1, n_2})
		}
		\mathbb{E}
		[1-\Delta_Q] \leq \beta.
	\end{aligned}
\end{equation}
Let $\Phi^{\privacyParameter}_{\maxErrorTypeOne}$ be the set of $\privacyParameter$-LDP level $\maxErrorTypeOne$ tests, which take $\privacyParameter$-LDP views as inputs
and control the type I error as in~\eqref{eq:uniform_control}.
The quality of a test $\test_{Q, \gamma} \in \Phi^{\privacyParameter}_\maxErrorTypeOne$ is assessed by its uniform separation rate, which quantifies the proximity between two hypotheses that can still be successfully distinguished by the test.
In more technical terms, the uniform separation rate
$\tilde{\minSep}_{\sampleSize_1, \sampleSize_2}( \test_{Q, \gamma} )$ is the smallest separation $\minSep_{\sampleSize_1, \sampleSize_2}$ which accomplishes the type II error control as in~\eqref{eq:uniform_control}, namely,
\begin{align*}
	\tilde{\rho}_{\sampleSize_1, \sampleSize_2}
	( \Delta_{Q, \gamma} )
	:=
	\inf
	\Bigl\{
	\minSep_{\sampleSize_1, \sampleSize_2} > 0
	:
	\sup_{(P_Y,P_Z) \in \distClassGeneric_1(\minSep_{\sampleSize_1, \sampleSize_2})}
	\mathbb{E}
	[
	1-\Delta_{Q, \gamma}
	]\leq \beta
	\Bigr\}.
	\numberthis \label{uniform_separation_rate_ldp}
\end{align*}
Since different $\privacyParameter$-LDP mechanism $\privacyMechanism$ may assume different $\rvXPrivCodomain_i$'s, a test $\test_{Q, \gamma}$ depends on a particular LDP mechanism $\privacyMechanism$.
Therefore, an optimal private level $\maxErrorTypeOne$ test can be described as a pair of an $\privacyParameter$-LDP mechanism and a test function, designed to achieve the minimal uniform separation rate~\eqref{uniform_separation_rate_ldp}. We define this minimal uniform separation rate as the optimal testing rate under LDP.

\begin{definition}[$\privacyParameter$-LDP non-asymptotic minimax testing rate]
	For a fixed privacy level $\privacyParameter > 0$,
	the $\privacyParameter$-LDP non-asymptotic minimax rate of testing is defined as
	\begin{align*}
		\minSep_{\sampleSize_1, \sampleSize_2,\privacyParameter}^\ast
		:=
		\inf_{Q \in \mathcal{Q}_\privacyParameter}
		\inf_{\Delta_{Q,\gamma} \in \Phi^{\privacyParameter}_{\gamma}}
		\tilde{\rho}_{\sampleSize_1, \sampleSize_2}
		( \Delta_{Q, \gamma} ).
		\numberthis \label{def:minimax_rate_LDP}
	\end{align*}
\end{definition}
\noindent Our main interest is to figure out the price to pay for privacy by comparing the private minimax testing rate~\eqref{def:minimax_rate_LDP} with the non-private~(unconstrained) rate
$\minSep_{\sampleSize_1, \sampleSize_2}^\ast
:=
\inf_{\Delta_{\gamma} \in \Phi_\gamma}
\tilde{\minSep}_{\sampleSize_1, \sampleSize_2}
( \Delta_{\gamma})$,
where $\Phi_\gamma$ denotes the set of all level $\gamma$ tests without privacy constraints. 
Since it is mostly infeasible to obtain a test that achieves the exact minimax risk in a nonparametric setting, we follow the convention~\citep{Ingster1994Bin,Ingster2000AdaptiveTests,Baraud2002minimax} and focus on minimax \emph{rate} optimality. In particular, we say that a test is minimax rate optimal if its uniform separation rate is upper bounded by the minimax testing rate up to some constant. 

\subsection{Permutation Testing Procedure} \label{subsection:permutation}
The permutation test is a simple yet powerful method to calibrate a test statistic, yielding a valid $\gamma$-level test under exchangeability of the samples, meaning that when the distribution of the permuted samples is the same as the distribution of the original samples.
Note that some $\privacyParameter$-LDP mechanisms might violate exchangeability even when the raw samples are i.i.d.
In order to guarantee exchangeability, throughout this paper, we only consider $\privacyParameter$-LDP mechanisms with identical marginals. Below, we briefly explain the permutation test in the two-sample setting.

Let $\mathbb{X}_n$ denote the pooled sample
$\{\xi_1,\ldots,\xi_n\} := \{Y_1,\ldots,Y_{\sampleSize_1},Z_{1},\ldots,Z_{\sampleSize_2}\}$.
Let $\boldsymbol{\pi}_n$ be the set of all possible permutations of $[\sampleSize]$,
and denote its cardinality as $|\boldsymbol{\pi}_\sampleSize|$.
Given a permutation $\permutation := (\permutation_1, \ldots, \permutation_n)$ sampled from a uniform distribution over $\boldsymbol{\pi}_n$, the permuted version of $\mathbb{X}_n$ is denoted  as $\mathbb{X}_n^\permutation := \{ \xi_{\permutation_1},\ldots,\xi_{\permutation_n} \}$.
For a two-sample statistic $T_\sampleSize(\mathbb{X}_n)$,
its permutation distribution function conditional on $\mathbb{X}_n$ is $F_{T}(t):=
\sum_{\permutation \in \boldsymbol{\pi}_n}
\mathds{1}\bigl(
T(\mathbb{X}_n^\permutation) \leq t
\bigr) / |\boldsymbol{\pi}_\sampleSize|$.
The permutation testing procedure rejects the null hypothesis if $T(\mathbb{X}_n) >  \inf \{t:F_{T}(t) \geq 1 - \gamma\}$.
If the exchangeability assumption of $
\mathbb{X}_n \stackrel{d}{=} \mathbb{X}_n^\permutation~\text{for any}~\permutation \in \boldsymbol{\pi}_\sampleSize$ is satisfied under $H_0$,
the resulting test controls the type I error non-asymptotically \citep[see, for example,][]{ramdas2023permutation}.

In practice, however, it is computationally infeasible to consider all $|\boldsymbol{\pi}_\sampleSize|$ permutations. Therefore, it is now a standard practice to consider a Monte Carlo-based~(MC-based) permutation test which uses much smaller number of permutations. To explain, for a given $\sizeMonteCarlo >0$, let $\permutation_1, \ldots, \permutation_\sizeMonteCarlo$ be $\sizeMonteCarlo$ independent random permutations sampled from uniform distribution over $\boldsymbol{\pi}_n$.
These permutations are used to calculate the following MC-based permutation p-value:
\begin{equation}\label{eq:monte_carlo_perm_p_value}
	\pValueMonteCarlo:=\frac{1}{\sizeMonteCarlo+1}
	\biggl[
	1 + \sum_{b=1}^{\sizeMonteCarlo}
	\mathds{1} \bigl\{ T_{\sampleSize}(\mathbb{X}_{\sampleSize}^{\permutation_b}) \geq T_{\sampleSize}(\mathbb{X}_{\sampleSize})
	\bigr\}
	\biggr],
\end{equation}
which controls the type I error non-asymptotically under exchangeability for any $B$ and any test statistic \citep[see, for example,][for details]{hemerik_exact_2018,ramdas2023permutation}.
Regarding type II error, the MC-based permutation test using the U-statistic in~\eqref{def:statistic_elltwo} achieves power comparable to the full permutation test if $\sizeMonteCarlo$ is sufficiently large—independent of sample size and data dimension, and much smaller than $|\boldsymbol{\pi}_\sampleSize|$ \citep[see Proposition I.1 of][for details]{kim_minimax_2022}.

\section{Two-Sample Testing for Multinomials under LDP Constraint}\label{section:twosample_disc} 
Having introduced the background, we now proceed to present the main results of this paper. In Section~\ref{subsection:twosample_multinomial_rates}, we consider the problem of comparing two multinomial distributions, and establish the corresponding minimax rate under the LDP constraint. The upper bound for the minimax rate is attained by the LDP permutation tests that we propose in Section~\ref{subsection:twosample_disc_upperbound}. These tests play a pivotal role in establishing the minimax rate for multivariate continuous data, as discussed in Section~\ref{section:twosample_conti}.

\subsection{Private Minimax Rates for Two-Sample Multinomial Testing}\label{subsection:twosample_multinomial_rates}
The problem of interest is formulated as follows. Let \( \distclassMultinomial \) denote the set  of pairs of probability vectors  with \( \alphabetSize \) categories. Suppose the raw sample sets \( \sampleSets{\rvY}{\sampleIndexOne}{\sampleSize_1} \) and \( \sampleSets{\rvZ}{\sampleIndexTwo}{\sampleSize_2} \) are drawn from multinomial distributions with probability vectors \( (\probVec_\rvY, \probVec_\rvZ) \in \distclassMultinomial \). The curator receives two sets of \(\privacyParameter\)-LDP views, \( \sampleSets{\rvYPriv}{\sampleIndexOne}{\sampleSize_1} \) and \( \sampleSets{\rvZPriv}{\sampleIndexTwo}{\sampleSize_2} \), and determines whether \( (\probVec_\rvY, \probVec_\rvZ) \) belongs to  \( \distClassGeneric_{0,\mathrm{multi}} := \{ (\probVec_\rvY, \probVec_\rvZ) \in \distclassMultinomial : \probVec_\rvY = \probVec_\rvZ \} \) or to the alternative hypothesis set defined as:
\begin{equation}\label{eq:settingTwosampleDisc}
	\mathcal{P}_{1,\mathrm{multi}}(\rho_{n_1,n_2})
	:=
	\bigl\{
	(\probVec_\rvY , \probVec_\rvZ) \in \distclassMultinomial:
	\| \probVec_\rvTwo - \probVec_\rvThree \|_2 \geq \rhoTwosample
	\bigr\}.   
\end{equation}
Let us fix the type I error $\maxErrorTypeOne$ and the type II error $\maxErrorTypeTwo$ such that $2\maxErrorTypeOne + \maxErrorTypeTwo < 1$. This constraint arises from Ingster's minimax lower bounding method, as similarly considered in \cite{Lam-Weil2021MinimaxConstraint}. The main result of this section, stated below, establishes a lower bound as well as an upper bound for the minimax separation rate for this multinomial problem under LDP in terms of the $\ell_2$ distance.

\begin{theorem}[Minimax rate for two-sample multinomial testing under LDP]	\label{theorem:twosample_multinomial_rates}There exist  positive constants 
	$C_\ell(\maxErrorTypeOne, \maxErrorTypeTwo)$ and $C_u(\maxErrorTypeOne, \maxErrorTypeTwo)$
	such that the $\privacyParameter$-LDP minimax testing rate $\minimaxTestingRateTwosampleLDP$ over the class of alternatives $\mathcal{P}_{1,\mathrm{multi}}$ in \eqref{eq:settingTwosampleDisc} is bounded as
	\begin{equation}\label{rate:twosample_disc}
		C_\ell(\maxErrorTypeOne, \maxErrorTypeTwo)
		\Bigg[
		\biggl(
		\frac
		{\alphabetSize^{1/4}}
		{(\sampleSize_1 z_{\alpha}^2)^{1/2}}
		\wedge
		(\alphabetSize \log \alphabetSize)^{-1/2}
		\biggr)
		\vee
		\frac
		{1}
		{{\sampleSize_1}^{1/2}}
		\Biggr]
		\leq
		\minimaxTestingRateTwosampleLDP
		\leq
		C_u(\maxErrorTypeOne, \maxErrorTypeTwo)
		\biggl[
		\frac
		{\alphabetSize^{1/4}}
		{(\sampleSize_1 \alpha^2)^{1/2}} 
		\vee
		\frac
		{1}
		{{\sampleSize_1}^{1/2}}
		\biggr],
	\end{equation}
	where we recall $z_\alpha = 2 \mathrm{sinh}(2\alpha)$.
\end{theorem}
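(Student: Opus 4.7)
The proof naturally splits into an upper bound (constructive, requiring a test that attains the rate) and a lower bound (information-theoretic, restricting what any $\alpha$-LDP test can do). My plan is to prove both halves via the two techniques sketched in Section~\ref{subsection:techniques}: the two-moments permutation method of \citet{kim_minimax_2022} for the upper bound, and the SVD-based adaptation of Ingster's method of \citet{Lam-Weil2021MinimaxConstraint} for the lower bound.

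For the upper bound, I would proceed as follows. First, fix a privacy mechanism $Q \in \mathcal{Q}_\alpha$ applied to the one-hot encoding of each multinomial observation (I would handle Laplace, discrete Laplace, and \texttt{RAPPOR} in parallel, since all three yield unbiased privatized vectors whose second moment scales like $1/\alpha^2$ per coordinate for small $\alpha$ and like a constant for large $\alpha$, which is exactly what the quantity $z_\alpha^2 = 4\sinh^2(2\alpha)$ captures). Second, form a two-sample U-statistic $T$ that is an unbiased estimator of $\|p_Y - p_Z\|_2^2$ on the privatized views, using only distinct-index pairs so that the cross-terms vanish under $H_0$. Third, calibrate $T$ via the Monte Carlo permutation procedure of Section~\ref{subsection:permutation}; since the marginal LDP channels are identical, the privatized pooled sample is exchangeable under $H_0$, which yields finite-sample Type I control at level $\gamma$. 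Fourth, invoke the two-moments method: it suffices to show that $\mathbb{E}[T] \gtrsim \|p_Y - p_Z\|_2^2$ and $\mathrm{Var}(T) \ll (\mathbb{E}[T])^2$ whenever $\|p_Y - p_Z\|_2 \geq C_u [ k^{1/4}/(n_1\alpha^2)^{1/2} \vee n_1^{-1/2} ]$. Computing the first moment is a direct consequence of unbiasedness of the privatization. The second moment decomposes into (a) a degenerate variance term of order $k/(n_1^2 \alpha^4)$ coming from the privatization noise and (b) a non-degenerate term of order $\|p_Y - p_Z\|_2^2/n_1$. Comparing these to $\|p_Y - p_Z\|_2^4$ gives exactly the separation threshold in \eqref{rate:twosample_disc}.

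For the lower bound, my plan is to apply Ingster's method while enforcing the LDP constraint through the singular structure of the channel, following \citet{Lam-Weil2021MinimaxConstraint}. I would (i) restrict to the simple null $p_Y = p_Z = \mathbf{1}_k/k$ and a carefully constructed family of alternatives of the form $(p_Y, p_Z) = (\mathbf{1}_k/k + \rho v_\eta, \mathbf{1}_k/k - \rho v_\eta)$ where $v_\eta$ ranges over a Rademacher-type set of mean-zero perturbations of norm one, scaled by $\rho = \rho_{n_1,n_2,\alpha}^\ast$; (ii) form the uniform mixture over $\eta$ and bound the chi-square divergence between the mixture privatized distribution and the null privatized distribution; (iii) align the perturbation vectors $v_\eta$ with the top singular vectors of the channel operator, so that the chi-square bound only pays the LDP-tight factor $z_\alpha^2$ instead of the naive $e^{2\alpha}$. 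Once $\chi^2 \leq c(\gamma,\beta)$, total variation is bounded and no test can separate with Type I $+$ Type II error below $1-2\gamma-\beta$, giving the main private lower bound term $k^{1/4}/(n_1 z_\alpha^2)^{1/2}$. The classical non-private $n_1^{-1/2}$ term is recovered by the same argument specialized to $k=2$ without privacy constraints, and the ceiling $(k\log k)^{-1/2}$ comes from noting that $\|p_Y-p_Z\|_2$ cannot exceed $O(1)$ over the simplex, truncating the rate at that scale.

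The main obstacle I anticipate is the moment analysis of the U-statistic after privatization, in two respects: first, keeping the second-moment bound sharp enough so that the degenerate variance contribution is exactly $k/(n_1^2 \alpha^4)$ up to a universal constant uniformly across the three mechanisms (rather than acquiring a spurious $k^2$ factor, which would overshoot the lower bound); and second, on the lower bound side, choosing the perturbation family so that (a) it remains inside $\distclassMultinomial$ (coordinates of $p_Y$, $p_Z$ stay nonnegative and sum to one), (b) its $\ell_2$ norm is exactly $\rho$, and (c) it diagonalizes simultaneously the channel operators of the candidate mechanisms enough to extract the tight $z_\alpha^2$ factor. The matching of these upper and lower analyses, with the elbow occurring precisely at the separation $n_1^{-1/2}$, is what produces the two-regime rate in \eqref{rate:twosample_disc}.
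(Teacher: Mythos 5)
Your upper bound follows the paper's route exactly (privatized $\ell_2$ U-statistic, permutation calibration, two-moments method of \citet{kim_minimax_2022}), but it contains one concrete error: you assert that all three mechanisms ``yield unbiased privatized vectors whose second moment scales like $1/\alpha^2$ per coordinate,'' and later that ``computing the first moment is a direct consequence of unbiasedness of the privatization.'' This is false for \texttt{RAPPOR}. Its outputs are dependent Bernoulli coordinates with $\mathbb{E}[\tilde{Y}_{1m}] = \privacyParameterrappor p_Y(m) + \smallNumberrappor$, so the U-statistic is \emph{not} an unbiased estimator of (a fixed multiple of) $\|\probVec_Y-\probVec_Z\|_2^2$; instead $\mathbb{E}[U_{n_1,n_2}] = \privacyParameterrappor^2\|\probVec_Y-\probVec_Z\|_2^2$ with $\privacyParameterrappor = (e^{\alpha/2}-1)/(e^{\alpha/2}+1) \asymp \alpha$, i.e.\ the signal itself is attenuated by $\alpha^2$ rather than the noise variance blowing up like $1/\alpha^2$. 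The paper handles this by re-centering the kernel at $\mathbb{E}[\tilde{\mathbf{Y}}_1]$ (which leaves the U-statistic unchanged because only distinct-index pairs appear) and by tracking the negative cross-coordinate covariances $\mathrm{Cov}(\tilde{Y}_{1m},\tilde{Y}_{1m'}) = -\privacyParameterrappor^2 p_Y(m)p_Y(m')$; without that step your moment bounds for \texttt{RAPPOR} do not go through as written. For Laplace and discrete Laplace your plan is fine and matches the paper's computation.

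For the lower bound you take a genuinely different and more laborious route. The paper gets the lower bound ``almost for free'' by observing that two-sample testing is at least as hard as goodness-of-fit testing, and then citing the one-sample LDP lower bound of \citet{Lam-Weil2021MinimaxConstraint} (Theorem~3.2) together with the non-private two-sample bounds of \citet{chan2014optimal} and \citet{kim_minimax_2022}; no new Ingster-type construction is performed for the multinomial case. Your plan to redo the SVD-aligned chi-square mixture argument directly in the two-sample setting would, if executed, reprove essentially the same bound, so it is a valid alternative, but two points need correction. First, the $(\alphabetSize\log\alphabetSize)^{-1/2}$ cap does not come from ``$\|\probVec_Y-\probVec_Z\|_2$ cannot exceed $O(1)$ over the simplex'' (that would give a cap of order $1$, not $(\alphabetSize\log\alphabetSize)^{-1/2}$); it arises from the nonnegativity constraint on the perturbed probability vectors, which forces the perturbation amplitude per coordinate to stay below $1/\alphabetSize$ and, after the high-probability union bound over the $\alphabetSize$ Rademacher signs, caps $\rho$ at order $(\alphabetSize\log\alphabetSize)^{-1/2}$. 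Second, the non-private $n_1^{-1/2}$ term is not ``recovered by specializing to $k=2$''; it is the known non-private two-sample $\ell_2$ lower bound, which the paper imports directly (noting that the $\max\{\|\probVec_Y\|_2,\|\probVec_Z\|_2\}$ dependence there is a constant in the worst case).
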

Theorem~\ref{theorem:twosample_multinomial_rates} states that the private separation rate for two-sample multinomials is notably different from its non-private counterpart $\sampleSize_1^{-1/2}$ with respect to the $\ell_2$ distance~\citep{chan2014optimal,kim_minimax_2022}. In particular, in the high privacy regime, we observe additional dependence on $\alphabetSize$ and $\privacyParameter$. The result also indicates that the privacy guarantee can be obtained at no additional cost in the low privacy regime where $\sampleSize_1^{-1/2}$ dominates the other term. 

We point out that while the lower bound and the upper bound do not exactly match, the gap is notably small. For instance, in the regime where the $(\alphabetSize \log \alphabetSize)^{-1/2}$ term is negligible, the only different terms are $\alpha$ and $z_\alpha$. We note that these two terms are the same, up to a constant factor, as long as $\alpha$ is bounded. Hence, in most practical scenarios where a small value of $\alpha$ is of interest, the upper bound matches the lower bound. 

We now discuss the proof of Theorem~\ref{theorem:twosample_multinomial_rates}, and a detailed analysis can be found in Appendix~\ref{proof:twosample_multinomial_rates}.
\begin{itemize}
	\item \textbf{Lower bound.} We can obtain the lower bound almost for free by observing that the two-sample problem is more difficult than the one-sample problem. In particular, one can always turn the one-sample problem into the two-sample problem by drawing additional samples from the target distribution. Therefore, a minimax lower bound for the one-sample problem does not exceed that of the two-sample problem \citep[see Lemma 1 of][for a formal argument]{Arias-Castro2018RememberDimension}. Given this insight, the lower bound follows by combining lower bound results of Theorem 3.2 in \citet{Lam-Weil2021MinimaxConstraint} for the one-sample problem under LDP, as well as \citet{chan2014optimal} and \citet{kim_minimax_2022} for the two-sample problem without privacy constraints. We point out that the lower bound in \citet{chan2014optimal} and \citet{kim_minimax_2022} depends on $\max\{\|\vectorize{p}_Y\|_2, \|\vectorize{p}_Z\|_2\}$. This quantity becomes a constant at the worst case scenario and it can be thereby disregarded in the lower bound for the global minimax rate.
	\item \textbf{Upper bound.} To prove the upper bound, we leverage the $\ell_2$ permutation test in \cite{kim_minimax_2022}, which achieves the non-private minimax rate. The considered test statistic is essentially a U-statistic of $\| \probVec_\rvTwo - \probVec_\rvThree \|_2^2$ based on private views of indicator functions generated by privacy mechanisms. Given this privatized U-statistic detailed in Section~\ref{subsection:twosample_disc_upperbound}, we derive the upper bound by using the two moments method of \citet{kim_minimax_2022}, which provides a sufficient condition for significant power of a permutation test based on the first two moments of the test statistic.
	The following subsection outlines the privacy mechanisms involved,
	introduces a specific form of the test statistic,
	and presents the testing procedure that achieves the upper bound. 
\end{itemize}

\subsection{Privacy Mechanisms and Testing Procedure for the Upper Bound}\label{subsection:twosample_disc_upperbound}
This subsection is dedicated to explaining the private permutation testing procedure which achieves the upper bound stated in~\eqref{rate:twosample_disc}. As mentioned earlier, this procedure builds on the U-statistic~\citep{kim_minimax_2022} estimating the squared $\ell_2$ distance $\| \probVec_\rvTwo - \probVec_\rvThree \|_2^2$.
In a similar way, we deal with the U-statistic of $\alphabetSize \| \probVec_\rvTwo - \probVec_\rvThree \|_2^2$, but a notable difference is that ours is based on $\privacyParameter$-LDP views generated by privacy mechanisms. Before formally introducing the test statistic, let us explain the considered LDP mechanisms: Laplace, discrete Laplace, and \texttt{RAPPOR} mechanisms.

\vskip 1em

\noindent \textit{Privacy mechanisms.}  The first one is the standard Laplace mechanism, also considered in \cite{Lam-Weil2021MinimaxConstraint} and \cite{Berrett2020Interactive}, which adds an independent Laplace noise to the original data represented as indicator variables.

\begin{definition}[Laplace mechanism for multinomial data: $\texttt{LapU}$]\label{def:LapU_formal}
	Consider a pooled raw multinomial sample $\{X_i\}_{i \in [n]}$ with $\alphabetSize$ categories. Fix the privacy level $\privacyParameter>0$. Each data owner adds noise to their data point, with the noise variance parameterized by
	$$
	\LapUParam
	:=
	\frac{2 \sqrt{2\alphabetSize}}{\privacyParameter}.
	$$
	The resulting locally privatized sample $\{\tilde{\rVecX}_{i}\}_{i \in [n]} = \{\tilde{\rVecY}_i\}_{i\in [n_1]} \cup \{\tilde{\rVecZ}_i\}_{i \in [n_2]}$ is a set of 
	$\alphabetSize$-dimensional random vectors whose $\vectorIndex$th element $\tilde{\rvOne}_{im}$ is defined as follows:
	\begin{align*}
		\tilde{\rvOne}_{im}
		:=
		\sqrt{\alphabetSize} \;
		\mathds{1}
		\bigl(
		\rvX_{\sampleIndexOne} = \vectorIndex
		\bigr)
		+
		\LapUParam
		W_{i\vectorIndex}.
		\numberthis \label{def:LapU}
	\end{align*}
	Here, $\{W_{i \vectorIndex}\}_{i \in [n], m \in [k]} \stackrel{i.i.d.}{\sim} \mathrm{Lap}(1/\sqrt{2})$ is independent of $\{X_i\}_{i \in [n]}$, and $\mathrm{Lap}(1/\sqrt{2})$ denotes the centered Laplace distribution with variance one. 
\end{definition}

The second mechanism is based on discrete Laplace noise. We say that a random variable $W$ follows a discrete Laplace distribution with parameter $\distparamDiscLap \in (0,1)$, denoted as $W \sim \mathrm{DL}(\distparamDiscLap)$, if its probability mass function satisfies
\begin{align*}\numberthis \label{def:discrete_laplace_distribution}
	\mP(\rvLDPnoise = \numbersetRealizedLap)
	=
	\frac{
		1-\distparamDiscLap
	}{
		1+\distparamDiscLap
	}
	\distparamDiscLap^{|\numbersetRealizedLap|}, 
	\quad 
	\text{for all $\numbersetRealizedLap \in \mathbb{Z}.$}
\end{align*}
The second mechanism is similar to the first one but it replaces continuous Laplace noise with discrete Laplace noise. 

\begin{definition}[Discrete Laplace mechanism for multinomial data: \texttt{DiscLapU}]\label{def:DiscLapU_formal}
	Consider a 
    pooled raw multinomial sample $\{X_i\}_{i \in [n]}$ with $\alphabetSize$ categories. Fix the privacy level $\privacyParameter>0$. Each data owner adds noise to their data point, with the noise distribution parametrized by
	$$\discLapUParam := e^{
		-
		\frac{\privacyParameter}{2\sqrt{\alphabetSize}}
	},$$
	where $\discLapUParam \in (0,1)$  for any value of $\privacyParameter>0$ and $\alphabetSize \geq 2$. The resulting locally privatized sample $\{\tilde{\rVecX}_{i}\}_{i \in [n]} = \{\tilde{\rVecY}_i\}_{i\in [n_1]} \cup \{\tilde{\rVecZ}_i\}_{i \in [n_2]}$ is a set of 
	$\alphabetSize$-dimensional random vectors whose $\vectorIndex$th element $\tilde{\rvOne}_{im}$ is defined as follows:
	$$
		\tilde{\rvOne}_{im}
		:=
		\sqrt{\alphabetSize} \;
		\mathds{1}
		\bigl(
		\rvX_{\sampleIndexOne} = \vectorIndex
		\bigr)
		+
		\LapUParam
		W_{i\vectorIndex}.
	$$
	Here, $\{W_{i \vectorIndex}\}_{i \in [n], m \in [k]} \stackrel{i.i.d.}{\sim} DL(\discLapUParam)$.
\end{definition}

The third mechanism that we consider privatizes multinomial data vectors by randomly flipping individual components, instead of injecting additive random noise.
This mechanism, proposed by \citet{duchi2013local}, is equivalent to Google's basic one-time RAPPOR \citep[randomized aggregatable privacy-preserving ordinal response;][]{erlingsson_rappor_2014}.
\begin{definition}[Basic one-time RAPPOR mechanism for multinomial data: \texttt{RAPPOR}] \label{def:rappor_formal}
	Given a pooled raw multinomial sample $\{X_i\}_{i \in [n]}$ with $\alphabetSize$ categories, fix the privacy level $\privacyParameter>0$.  Each data owner perturbs their data point, resulting in the privatized sample $\{\tilde{\rVecX}_{i}\}_{i \in [n]} = \{\tilde{\rVecY}_i\}_{i\in [n_1]} \cup \{\tilde{\rVecZ}_i\}_{i \in [n_2]}$; a set of 
	$\alphabetSize$-dimensional random vectors whose $\vectorIndex$th element $\tilde{\rvOne}_{im}$ is defined as follows:
	\begin{align*}
		\tilde{\rvOne}_{\sampleIndexOne \vectorIndex}
		:=
		\begin{cases}
			\mathds{1}
			\bigl(
			\rvX_{\sampleIndexOne} = \vectorIndex
			\bigr)
			\quad & \text{with probability} \quad \dfrac{
				e^{\alpha/2}
			}{
				e^{\alpha/2}+1
			},
			\\
			\mathds{1}
			\bigl(
			\rvX_{\sampleIndexOne} \neq \vectorIndex
			\bigr)
			\quad & \text{with probability} \quad \dfrac{
				1
			}{
				e^{\alpha/2}+1
			}.
		\end{cases}
	\end{align*}
\end{definition}
The next lemma proves the $\privacyParameter$-LDP guarantee for Laplace, discrete Laplace, and \texttt{RAPPOR} mechanisms. The proof for   discrete Laplace mechanism can be found in Appendix~\ref{proof:LapUdiscLapULDP}, whereas the proof for 
Laplace and \texttt{RAPPOR} mechanisms are provided in 
Lemma 4.2 of \citet{Lam-Weil2021MinimaxConstraint} and
Section 3.2 of \citet{duchi2013local}, respectively.

\begin{lemma}[LDP guarantee] \label{lemma:LapUdiscLapULDP}
	The random vectors $\{\rVecXPriv_{i}\}_{i \in [n]}$ generated by any of \emph{\texttt{LapU}} mechanism (Definition~\ref{def:LapU_formal}), \emph{\texttt{DiscLapU}} mechanism (Definition~\ref{def:DiscLapU_formal}) and \emph{\texttt{RAPPOR}} (Definition~\ref{def:rappor_formal}) are $\privacyParameter$-LDP views of $\{X_i\}_{i \in [n]}$.
\end{lemma}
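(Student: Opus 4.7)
The plan is to dispatch the \texttt{LapU} and \texttt{RAPPOR} portions of the lemma by appealing to existing proofs{---}\citet[Lemma 4.2]{Lam-Weil2021MinimaxConstraint} for \texttt{LapU} and \citet[Section 3.2]{duchi2013local} for \texttt{RAPPOR}{---}and to devote the real work to verifying Definition~\ref{def:LDP} for \texttt{DiscLapU}. Items (1) and (2) of Definition~\ref{def:LDP} are immediate, since $\tilde{\mathbf{X}}_i$ is built as a deterministic function of $X_i$ plus an independent additive noise vector, so the conditional law $Q_i(\cdot \mid X_i = x)$ is well defined for every $x$, and for each measurable $A$ the map $x \mapsto Q_i(A \mid x)$ is a finite combination of measurable functions and hence measurable. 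Condition (3), the likelihood-ratio bound, is the substantive content.

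For condition (3), I would fix $x, x' \in [k]$ (trivially true when $x = x'$, so assume $x \neq x'$) and exploit the fact that, conditionally on $X_i$, the coordinates $\tilde{X}_{i,1}, \ldots, \tilde{X}_{i,k}$ are independent. This makes both joint conditional probability mass functions factorize across the $k$ coordinates, so the likelihood ratio $Q_i(\cdot \mid x)/Q_i(\cdot \mid x')$ reduces to a product of univariate discrete-Laplace-PMF ratios. The key observation is that the deterministic shift $\sqrt{k}\,\mathds{1}(X_i = m)$ depends on the raw input only at the two coordinates $m \in \{x, x'\}$; at every other coordinate the per-coordinate ratio equals one. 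Consequently the logarithm of the likelihood ratio is a sum of only two nontrivial terms, regardless of the dimension $k$. Each of those two terms can be bounded by $\log(1/\zeta_\alpha)$ using the explicit form in \eqref{def:discrete_laplace_distribution} together with the reverse triangle inequality $\bigl||u|-|v|\bigr| \leq |u-v|$ on the exponents of $\zeta_\alpha$.

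Summing the two contributions and substituting $\zeta_\alpha = e^{-\alpha/(2\sqrt{k})}$ yields a total log-likelihood-ratio bound of at most $\alpha/\sqrt{k} \le \alpha$ for $k \ge 1$, which is the desired $\alpha$-LDP inequality. Promoting the pointwise PMF bound to $Q_i(A \mid x) \le e^\alpha Q_i(A \mid x')$ for arbitrary measurable $A$ is then routine, by integrating the bound against counting measure on the support. The main obstacle I anticipate is ensuring that $Q_i(\cdot \mid x)$ and $Q_i(\cdot \mid x')$ share a common support, so that the pointwise ratios are well defined and finite everywhere the ratio is taken; this comes down to checking that the scale of the deterministic shift is compatible with the lattice on which the discrete Laplace noise is supported, which is the only delicate bookkeeping in the argument. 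Once that compatibility is in place, the remainder is a short direct calculation.
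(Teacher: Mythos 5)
Your overall strategy coincides with the paper's: both delegate \texttt{LapU} and \texttt{RAPPOR} to the cited proofs and reduce \texttt{DiscLapU} to a coordinate-wise factorization of the conditional mass function, with the reverse triangle inequality controlling the exponent of $\zeta_\alpha$ at the (at most two) coordinates where the deterministic shift differs between inputs $x$ and $x'$. That reduction is correct, and you are also right that only coordinates $m \in \{x, x'\}$ contribute.

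However, your quantitative bookkeeping at the key step is wrong. At the nontrivial coordinate $m = x$ the exponent difference is $\bigl|\,|\tilde{x}_x - \sqrt{\alphabetSize}| - |\tilde{x}_x|\,\bigr|$, and the reverse triangle inequality bounds this by $\sqrt{\alphabetSize}$ --- the magnitude of the shift --- not by $1$. Hence each of the two nontrivial terms in the log-likelihood ratio is bounded by $\sqrt{\alphabetSize}\,\log(1/\discLapUParam) = \privacyParameter/2$, not by $\log(1/\discLapUParam) = \privacyParameter/(2\sqrt{\alphabetSize})$ as you claim, and the total bound is exactly $\privacyParameter$, with no slack. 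Your claimed total of $\privacyParameter/\sqrt{\alphabetSize}$ cannot be right: if the mechanism were $(\privacyParameter/\sqrt{\alphabetSize})$-LDP, the noise parameter could be weakened by a factor of $\sqrt{\alphabetSize}$, which would contradict the matching minimax lower bound in Theorem~\ref{theorem:twosample_multinomial_rates}. The noise calibration $\discLapUParam = e^{-\privacyParameter/(2\sqrt{\alphabetSize})}$ is chosen precisely because the $\ell_1$ sensitivity of the shifted indicator vector is $2\sqrt{\alphabetSize}$; your argument only reaches the stated conclusion because the erroneous intermediate bound happens to be smaller than the correct one. Separately, the support-compatibility issue you flag as "delicate bookkeeping" is real and is not resolved in your sketch (nor, frankly, addressed head-on in the paper, which passes to density ratios with respect to a dominating measure); if you intend to argue at the level of pointwise PMF ratios on a common lattice, you must say how the shift by $\sqrt{\alphabetSize}$ interacts with the integer support of the noise, rather than deferring it.
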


As defined below, our test statistic for multinomial testing builds on the $\privacyParameter$-LDP views $\{\rVecXPriv_{i}\}_{i \in [n]}$ from one of the $\{\texttt{LapU}, \texttt{DiscLapU}, \texttt{RAPPOR}\}$ mechanisms. All of these mechanisms rigorously maintain local differential privacy and return a test that achieves the same separation rate as in Theorem~\ref{theorem:twosample_multinomial_rates}. 

\vskip 1em

\noindent \textit{Testing procedure.}
 Given $\privacyParameter$-LDP views from one of the $\{\texttt{LapU}, \texttt{DiscLapU}, \texttt{RAPPOR}\}$ mechanisms, we use the following U-statistic:
\begin{equation}\label{def:statistic_elltwo}
	U_{\sampleSize_1,\sampleSize_2}
	:=
	\frac
	{1}
	{\sampleSize_1(\sampleSize_1-1)}
	\sum_{1 \leq i_1 \neq  i_2 \leq n_1}
\hskip	-3mm
	\tilde{\rVecY}_{i_1}^\top \tilde{\rVecY}_{i_2}
	+
	\frac{1}{\sampleSize_2(\sampleSize_2-1)}
	\sum_{1 \leq j_1 \neq  j_2 \leq n_2}
		\hskip -3mm
	\tilde{\rVecZ}_{j_1}^\top \tilde{\rVecZ}_{j_2}
	-
	\frac{2}{\sampleSize_1\sampleSize_2}
	\sum_{i=1}^{\sampleSize_1} \sum_{j=1}^{\sampleSize_2}
	\tilde{\rVecY}_i^\top \tilde{\rVecZ}_j.
\end{equation}
To carry out a test, the test statistic is calibrated by the permutation procedure described in Section~\ref{subsection:permutation} with the pooled $\privacyParameter$-LDP views $\{\tilde{\rVecX}_i\}_{i \in [n]}$. Specifically, we reject the null when the $p$-value based on the test statistic~\eqref{def:statistic_elltwo} is smaller than or equal to significance level $\gamma$. The type I error of the resulting test is controlled at $\gamma$ as $\{\tilde{\rVecX}_i\}_{i \in [n]}$ are i.i.d.~random vectors under the null hypothesis. Our technical effort lies in studying the type II error guarantee of the proposed test, and in turn proving the upper bound in Theorem~\ref{theorem:twosample_multinomial_rates}. We refer to Appendix~\ref{proof:twosample_multinomial_rates} for details.

The statistic~\eqref{def:statistic_elltwo} based on either Laplace or discrete Laplace mechanism is an unbiased estimator of a scaled and squared $\ell_2$ distance between probability vectors:
\begin{equation}\label{equation:lapu_elltwo_unbiased}
	\mE[U_{\sampleSize_1,\sampleSize_2}
	]= \alphabetSize \| \probVec_\rvTwo - \probVec_\rvThree \|_2^2.
\end{equation}
On the other hand, the test statistic based on the \texttt{RAPPOR} mechanism does not maintain the unbiasedness property, with its expectation
shrinking to zero as $\alpha$ decreases (see Lemma~\ref{appendix:rappor_moment}). It therefore requires a more careful and, indeed, more challenging analysis compared to the other two mechanisms.

Despite the fact that all three mechanisms ensure the minimax separation rate, their finite-sample power performance may differ in various scenarios as illustrated in Section~\ref{section:simulation}. In particular, our numerical studies demonstrate that tests based on Laplace or discrete Laplace mechanisms tend to underperform compared to those based on the \texttt{RAPPOR} mechanism. This underperformance is partly because the private views from Laplace and discrete Laplace mechanism can take extreme values due to their unbounded support, whereas those from \texttt{RAPPOR} mechanism are always bounded. Accordingly, we advocate for using the \texttt{RAPPOR} mechanism over the Laplace and discrete Laplace mechanisms, even though they present the same theoretical guarantee in terms of the separation rate.

Another LDP mechanism for multinomial testing suggested by \citet{Gaboardi2016DPChisq} is the generalized randomized response mechanism (\texttt{GenRR}; see Appendix \ref{appendix:baseline} for details). In contrast to \texttt{RAPPOR} mechanism, which destroys the structure of one-hot vectors, the generalized randomized response mechanism maintains the one-hot vector format while randomly altering the position of the non-zero component. It turns out, however, that the test based on the generalized randomized response mechanism is notoriously suboptimal in terms of the separation rate, as we show in Appendix~\ref{genrr_suboptimal_theory}. Demonstrating this negative result requires a careful asymptotic analysis of the U-statistic, which we want to highlight as our technical contribution.
\section{Two-Sample Testing for H\"{o}lder and Besov Densities under LDP}\label{section:twosample_conti}
In this section, we switch gears to testing for equality between two multivariate densities under the LDP constraint. To this end, we consider two classes of smooth densities, namely the H\"{o}lder ball and Besov ball, and establish the minimax rate in terms of the $\mathbb{L}_2$ distance for each class. Especially, we derive the upper bound for the minimax rate by building on the multinomial permutation test introduced in Section~\ref{subsection:twosample_disc_upperbound} with a careful discretization scheme. We also introduce an aggregated test, which is adaptive to the unknown smoothness parameter. 

\subsection{H\"{o}lder and Besov Smootheness Classes}\label{subsection:smoothness_class}
We start by formally defining the H\"{o}lder ball and Besov ball.
The H\"{o}lder ball generalizes  Lipschitz continuity and can be thought of as functions with bounded fractional derivatives.
The following definition of the H\"{o}lder ball rephrases the one stated in Section 2.1 of~\citet{Arias-Castro2018RememberDimension}.
\begin{definition}[H\"{o}lder ball] \label{def:holder_ball}
	The H\"{o}lder ball with smoothness parameter \( \smoothness > 0 \) and radius \( R > 0 \), denoted as $\holderBall$, is the class of functions \( f : [0,1]^d \mapsto \mathbb{R} \) satisfying the following conditions:
	\begin{enumerate}
		\item  
		$
		\left| f^{(\lfloor \smoothness \rfloor)}(\vectorize{x}) - f^{(\lfloor \smoothness \rfloor)}(\vectorize{x}') \right| \leq R \, \| \vectorize{x} - \vectorize{x}' \|^{\smoothness - \lfloor \smoothness \rfloor}, \quad \text{for all } \vectorize{x}, \vectorize{x}' \in [0,1]^\dimDensity.
		$
		\item  
		$\vert\kern-0.25ex\vert\kern-0.25ex\vert f^{(s')} 
		\vert\kern-0.25ex \vert\kern-0.25ex \vert_\infty \leq R,
		\quad$for each $s' \in \bigl[ \lfloor \smoothness \rfloor \bigr]$,
	\end{enumerate}
	where $f^{(\lfloor \smoothness \rfloor)}$ denotes the $\lfloor \smoothness \rfloor$-order derivative of $f$.
\end{definition}
The Besov ball, on the other hand, measures the smoothness of a function by capturing its abrupt oscillations through wavelets.
In this respect, it can address spatially inhomogeneous functions whose smoothness can vary substantially across their domain.
To elaborate, we consider an orthonormal wavelet basis of $\EllTwo(\domainTs)$ at a fixed prime resolution level $\primResLev \in \numbersetNonnegInt$.
We denote this basis as $\multivInhomoWavFatherBasis
\cup (\cup_{\resLev \geq \primResLev}
\WavMotherBasisSet_{\resLev})$, which are classified into two distinct categories.
For $\wavFatherFunc \in \multivInhomoWavFatherBasis$, the scaling coefficient 
$\wavGenericFatherCoef(f)
:=
\int_{[0,1]^d}
f(\vectorize{x})
\wavFatherFunc(\vectorize{x})
\;
d\vectorize{x}$
detects an overall trend of $f$.
On the other hand, for 
$\wavMotherFunc \in \WavMotherBasisSet_{\resLev}$ with its resolution level $\resLev \geq \primResLev$, the wavelet coefficient
$\wavGenericMotherCoef(f)
:=
\int_{[0,1]^d}
f(\vectorize{x})
\wavMotherFunc(\vectorize{x})d\vectorize{x}$
captures abrupt oscillations from the general trend.
As $\primResLev$ and $\resLev$ increase, the coefficients capture more detailed behaviors.
Among many existing types of wavelet basis, this paper focuses on Haar multivariate wavelet basis.
It is useful because projecting densities onto a subset of this basis is equivalent to applying equal-sized binning, as used in our test proposed in Section \ref{subsection:twosample_conti_upperbound}. Consequently, the discretization error inherent in the test can be characterized by the corresponding wavelet coefficients.
This basis is constructed by taking tensor products of many rescaled and shifted versions of two basic univariate functions~(see Appendix~\ref{appendix:basis} for the details).

The Besov ball can be defined through the magnitudes of wavelet coefficients. The following definition paraphrases the one presented in Section 3 of \citet{tang2023Besov}.

\begin{definition}[Besov seminorm and Besov ball]\label{def:besov_norm}
	Fix a smoothness parameter $\smoothness>0$, a microscopic parameter  $1 \leq \besovParamMicroscope \leq \infty$, and a wavelet basis $
	\WavFatherBasisSet_{0}
	\cup
	(
	\bigcup_{\resLev \geq 0}
	\WavMotherBasisSet_{\resLev} 
	)
	$.
	The Besov seminorm of $f \in \mathbb{L}_2(\domainTs)$ is defined using the sequences of its wavelet coefficients
	$\bigl(\wavGenericMotherCoef(f)\bigr)_{\wavMotherFunc \in \WavMotherBasisSet_{\resLev}}$ as follows:
	\begin{align*}
		\|f\|_{\smoothness,2, \besovParamMicroscope}
		:=
		\begin{cases}
			\biggl[
			\sum_{\resLev=0}^\infty
			2^{\resLev \smoothness \besovParamMicroscope}
			\bigl(
			\sum_{\wavMotherFunc \in \WavMotherBasisSet_{\resLev}}
			|
			\wavGenericMotherCoef(f)
			|^2
			\bigr)^{\frac{\besovParamMicroscope}{2}}
			\biggr]^{\frac{1}{\besovParamMicroscope}},
			\quad
			&1 \leq \besovParamMicroscope < \infty,
			\\
			%
			\sup_{j \in \mathbb{N}_0}
			2^{ \resLev \smoothness}
			\bigl(
			\sum_{\wavMotherFunc \in \WavMotherBasisSet_{\resLev}}
			|
			\wavGenericMotherCoef(f)
			|^2
			\bigr)^{\frac{1}{2}},
			\quad
			&\besovParamMicroscope = \infty.
		\end{cases}
	\end{align*}
	For a radius $\ballRadius > 0$, we define the Besov ball  $\besovBall{2}{\besovParamMicroscope} $ as
	\begin{equation*}
		\besovBall{2}{\besovParamMicroscope} :=
		\{f \in \mathbb{L}_2(\domainTs) : \|f\|_{\smoothness,2, \besovParamMicroscope} \leq \ballRadius\}.
	\end{equation*}
\end{definition}
Neither Definition~\ref{def:holder_ball} nor Definition~\ref{def:besov_norm} is restricted to density functions. Instead, we construct our models by defining classes of distribution pairs where the differences in their density functions  lie within these smooth function classes.
\begin{definition}[Smooth distribution pair classes]\label{def:smooth_distribution_class}
	Let $\pHolderTs$ denote the set of pairs of distributions $(P_{\vectorize{Y}},P_{\vectorize{Z}})$ that satisfy the following conditions:
	\begin{enumerate}
		\item Both $P_{\vectorize{Y}}$ and $P_{\vectorize{Z}}$ have densities $f_{\vectorize{Y}}$ and $f_{\vectorize{Z}}$, respectively, with their $\EllInfty$ norms bounded by $\ballRadius$.
		\item The difference of these densities $(f_{\vectorize{Y}}- f_{\vectorize{Z}})$ lies in $\holderBall$. 
	\end{enumerate}
	Similarly, let $\pBesovTs$ denote the set of pairs of distributions that satisfy the two conditions above, with $\holderBall$ replaced by $\besovBall{2}{q}$.  
\end{definition}
The superscript 2 in  $\pHolderTs$  and $\pBesovTs$ indicates that these sets consist of pairs of distributions, distinguishing them from the sets of single distributions used in Appendix~\ref{proof:twosample_conti_lower_bound}.
For $\pBesovTs$, we extend the analysis of \citet{Lam-Weil2021MinimaxConstraint} into a multivariate setting, focusing on the Besov ball defined using a multivariate Haar wavelet basis and $\smoothness < 1$.
Details of the basis functions are provided in Appendix~\ref{appendix:basis}.
Notably, there is no restriction on the microscopic parameter $q$.
\subsection{Private Minimax Testing Rates for Two-Sample Density Testing}\label{subsection:twosample_conti_rates}
Building on the smooth distribution classes defined in Definition~\ref{def:smooth_distribution_class}, we formally define the two density testing problems of interest and present the minimax testing rate applicable to both. In this subsection, we assume that the smoothness parameter $s$ for H\"{o}lder ball and Besov ball is known, addressing  the case of unknown $s$ in Section~\ref{subsection:adaptive}. %

Assume that the data-generating distributions $(P_{\vectorize{Y}}, P_{\vectorize{Z}})$ is contained in a class $\distClassGeneric$.
The curator uses two sets of $\privacyParameter$-LDP views $\sampleSets{\tilde{\vectorize{\rvTwo}}}{\sampleIndexOne}{\sampleSize_1}$ and $\sampleSets{\rVecZPriv}{\sampleIndexTwo}{\sampleSize_2}$, privatized as described in Section~\ref{subsection:LDPtwosampleFormulation},  to decide whether $(P_{\vectorize{Y}}, P_{\vectorize{Z}})$ came from
\begin{equation}\label{hypothesis:twosample_Holder_alt} 
	\distClassGeneric_0
	:=
	\{
	(P_{\vectorize{Y}}, P_{\vectorize{Z}})
	\in
	\distClassGeneric
	:
	f_{\vectorize{Y}}
	= f_{\vectorize{Z}} \}~\text{ or }
	~
	\distClassGeneric_1
	(
	\rho_{n_1,n_2}
	)
	:=
	\bigl\{
	(P_{\vectorize{Y}}, P_{\vectorize{Z}}) \in \distClassGeneric:
	\normEllp{f_{\vectorize{Y}} - f_{\vectorize{Z}}}{2}{} \geq \rhoTwosample
	\bigr\},
\end{equation}
where $f_{\vectorize{Y}}$ and $f_{\vectorize{Z}}$ are densities of $P_{\vectorize{Y}}$ and $P_{\vectorize{Z}}$, respectively.
We consider the  problems of $\distClassGeneric=\pHolderTs$ and $\distClassGeneric=\pBesovTs$.
Notably, while we focus on the class of either $\distClassGeneric$ being H\"{o}lder or Besov smooth  distributions, the permutation-based test presented in Section~\ref{subsection:twosample_conti_upperbound} guarantees type I error control over a broader class of null distributions beyond those defined over H\"{o}lder or Besov balls.

Let us fix the type I error $\maxErrorTypeOne \in (0,1)$ and the type II error $\maxErrorTypeTwo \in (0,1)$ such that $2\maxErrorTypeOne + \maxErrorTypeTwo < 1$ as in Section~\ref{subsection:twosample_multinomial_rates},
and
assume further that $\sampleSize_1 \privacyParameter^2 \geq 1$,
similarly considered in \cite{Lam-Weil2021MinimaxConstraint}.
The main result of this section, stated below, establishes a lower bound as well as an upper bound for the minimax separation rate for two-sample density testing under the LDP constraint.

\begin{theorem}[Minimax rates for two-sample density testing under LDP] \label{theorem:twosample_conti_rate}
	Assume $\sampleSize_1 \privacyParameter^2 \geq 1$.
	For the testing problem of distinguishing between the null and alternatives as in~\eqref{hypothesis:twosample_Holder_alt}, where the distribution class $\distClassGeneric$ is $\pHolderTs$,
	there exist positive constants
	$C_\ell(  \gamma, \beta, \ballRadius, \smoothness) \equiv C_\ell$
	and
	$C_u(  \gamma, \beta, \ballRadius, \smoothness, \dimDensity) \equiv C_u$
	such that the $\privacyParameter$-LDP minimax testing rate 
	$\rho_{\sampleSize_1, \sampleSize_2, \privacyParameter}^\ast$ is bounded as
	\begin{equation}\label{rates:twosample_conti}
		C_\ell \left[
		\frac{(\sampleSize_1z_\privacyParameter^2)^{\frac{-2s}{4\smoothness+3\dimDensity}}}{\sqrt{\log(\sampleSize_1z_\privacyParameter^2)}}
		\vee
		\sampleSize_1^{\frac{-2s}{4\smoothness+\dimDensity}}
		\right]
		\leq
		\rho_{\sampleSize_1, \sampleSize_2, \privacyParameter}^\ast
		\leq
		%
		C_u
		\bigl[
		(\sampleSize_1\privacyParameter^2)^{\frac{-2s}{4\smoothness+3\dimDensity}} 
		\vee
		\sampleSize_1^{\frac{-2s}{4\smoothness+\dimDensity}}
		\bigr],
	\end{equation}
	where we recall $z_\alpha = 2 \mathrm{sinh}(2\alpha)$.
	Similarly, for the testing problem of distinguishing between the null and alternatives as in~\eqref{hypothesis:twosample_Holder_alt}, where the distribution class $\distClassGeneric$ is $\pBesovTs$,
	the minimax testing is also bounded as~\eqref{rates:twosample_conti}.
	
\end{theorem}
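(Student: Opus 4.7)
The plan is to prove the upper and lower bounds separately, in both cases reducing the continuous density problem to the multinomial problem of Section~\ref{section:twosample_disc}: the upper bound via an explicit binning scheme combined with the permutation test of Section~\ref{subsection:twosample_disc_upperbound}, and the lower bound via the classical two-sample to one-sample reduction followed by an Ingster-type construction adapted to the LDP constraint.

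For the upper bound I would partition $\domainTs$ into $\kappa^\dimDensity$ equal-volume hyperrectangles $\{B_1,\ldots,B_{\kappa^\dimDensity}\}$ and map each raw sample to its bin index, producing multinomial data with alphabet size $k=\kappa^\dimDensity$ and bin probabilities $p_Y(m)=\mP(Y_1\in B_m)$, $p_Z(m)=\mP(Z_1\in B_m)$. I would then run the $\privacyParameter$-LDP permutation test of Section~\ref{subsection:twosample_disc_upperbound} on the bin indices; type~I error control is automatic by permutation invariance of the i.i.d.\ privatized views. For the power analysis, the $\EllTwo$ density distance is linked to the $\ell_2$ distance between bin probabilities via the piecewise-constant projection $f_\kappa := \kappa^\dimDensity \sum_m p(m)\,\mathbf{1}_{B_m}$: a Taylor argument yields $\|f-f_\kappa\|_{\EllTwo}\lesssim \ballRadius\kappa^{-\smoothness}$ on $\holderBall$, and the multivariate Haar decomposition gives the same bound on $\besovBall{2}{\besovParamMicroscope}$ since $\|f-f_\kappa\|_{\EllTwo}^2=\sum_{\resLev\geq \log_2\kappa}\sum_{\wavMotherFunc}\theta_\wavMotherFunc(f)^2$ and the Besov seminorm controls the tail. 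If $\|f_Y-f_Z\|_{\EllTwo}\geq \rho$ and the bias is at most $\rho/2$, then $\kappa^{\dimDensity/2}\|p_Y-p_Z\|_2\gtrsim \rho$. Feeding this into Theorem~\ref{theorem:twosample_multinomial_rates} with $k=\kappa^\dimDensity$, together with the refined non-private multinomial rate that scales with $\|p\|_2^{1/2}\approx\kappa^{-\dimDensity/4}$ for near-uniform bin probabilities, translates the detectability condition into $\rho\gtrsim \kappa^{3\dimDensity/4}/(n_1\privacyParameter^2)^{1/2}$ or $\rho\gtrsim \kappa^{\dimDensity/4}/n_1^{1/2}$. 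Balancing against the bias $\kappa^{-\smoothness}$ with $\kappa\asymp (n_1\privacyParameter^2)^{2/(4\smoothness+3\dimDensity)}$ and $\kappa\asymp n_1^{2/(4\smoothness+\dimDensity)}$ respectively yields the two terms of the maximum in~\eqref{rates:twosample_conti}.

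For the lower bound I would first reduce to the one-sample problem by fixing one of the densities to the uniform reference, using Lemma~1 of \citet{Arias-Castro2018RememberDimension}, and then import the LDP-Ingster method of \citet{Lam-Weil2021MinimaxConstraint}. Concretely, I construct a family of alternatives $f_\eta = 1 + \rho \sum_\ell \eta_\ell \psi_\ell$ indexed by Rademacher signs $\eta=(\eta_\ell)$, where $\{\psi_\ell\}$ is an orthonormal family supported on a partition of $\domainTs$: translated and rescaled multivariate Hölder bumps of width $\asymp \kappa^{-1}$ for the Hölder class, and multivariate Haar mother wavelets at resolution $J=\log_2\kappa$ for the Besov class. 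The divergence $\chi^2\bigl(\mE_\eta\mPPrivate{\eta},\,\mPPrivate{0}\bigr)$ is then controlled via the spectral decomposition of the LDP channel $\privacyMechanism$ through its singular values, which reduces the bound to a quadratic form in $\int \psi_\ell\, d\privacyMechanism$ and exposes the factor $z_\privacyParameter^2$. Optimizing the amplitude $\rho$ and the number of bumps delivers the claimed lower rate, up to the logarithmic factor on the private term.

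The main technical obstacle is the two-moment analysis of the U-statistic~\eqref{def:statistic_elltwo} applied to binned data with near-uniform profile $\|p\|_2\approx\kappa^{-\dimDensity/2}$: under the \texttt{LapU} and \texttt{DiscLapU} mechanisms one must track the $O(\kappa^{\dimDensity/2})$ noise inflation through the variance, and under \texttt{RAPPOR} one must additionally handle the statistic's bias (whose magnitude shrinks in $\privacyParameter$ as in Lemma~\ref{appendix:rappor_moment}) tightly enough to recover the exponent $-2\smoothness/(4\smoothness+3\dimDensity)$ rather than a weaker exponent that naïve bounds would produce. A further subtlety for the Besov case is that piecewise-constant approximation corresponds to truncating the multivariate Haar expansion at resolution $\log_2\kappa$, so one must verify that the $\EllTwo$ bias over $\besovBall{2}{\besovParamMicroscope}$ is $\lesssim \kappa^{-\smoothness}$ uniformly in $\besovParamMicroscope$ within the stated restriction $\smoothness<1$, which is precisely where that restriction in Definition~\ref{def:besov_norm} is used.
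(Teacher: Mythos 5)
Your upper bound plan matches the paper's exactly: bin $\domainTs$ into $\kappa^\dimDensity$ cells, apply the private permutation test of Section~\ref{subsection:twosample_disc_upperbound}, observe that $\|\probVec_{\vectorize{\rvTwo}}\|_2^2 \leq R\kappa^{-\dimDensity}$ so the non-private multinomial term scales as $\kappa^{-\dimDensity/4}/n_1^{1/2}$, invoke the discretization-error bounds (Lemma~\ref{lemma:arias} for H\"older and Lemma~\ref{lemma:besov_discretization_error} for Besov), and balance with $\kappa$ as in~\eqref{eq:kappaValueTwosample}. No concerns there.

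The lower bound sketch has a real gap in the mixture construction. You propose perturbing the uniform density by a \emph{uniform amplitude} $\rho$ across the bump/wavelet family, $f_{\boldsymbol\eta} = 1 + \rho\sum_\ell\eta_\ell\psi_\ell$, and then say the chi-square divergence is ``controlled via the spectral decomposition of the LDP channel.'' But if you keep the amplitude uniform, the quadratic form $\langle g_\eta, K g_{\eta'}\rangle$ does not factorize over independent Rademacher coordinates, and the only channel-free bound available is through $\max_t\lambda_t(K)$, which an adversarial channel can push as large as $z_\privacyParameter^2$ while keeping $\mathrm{tr}(K)\leq z_\privacyParameter^2$. That route yields $\rho\lesssim (n_1 z_\privacyParameter^2)^{-1/2}$ and an $\EllTwo$ separation of order $\kappa^{\dimDensity/2}/(n_1 z_\privacyParameter^2)^{1/2}$, which after balancing gives the exponent $-2s/(4s+2\dimDensity)$ --- a strictly weaker lower bound than~\eqref{rates:twosample_conti}. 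The paper (following \citet{Lam-Weil2021MinimaxConstraint}) instead perturbs by the eigenfunctions $u_t$ of the operator $K$ built from the channel, each scaled by $\tilde\lambda_t^{-1/2}$ with $\tilde\lambda_t = (\lambda_t/z_\privacyParameter^2)\vee\kappa^{-\dimDensity}$ as in~\eqref{eq:eigen_expansion}. This reweighting (i) makes the chi-square bound a clean product of $\cosh$'s so that $\omega\lesssim(n_1 z_\privacyParameter^2)^{-1/2}\kappa^{-\dimDensity/4}$ suffices, and (ii) inflates the $\EllTwo$ separation to order $\kappa^\dimDensity\omega$ (Lemma~\ref{lemma:mixture_separation}), yielding the extra $\kappa^{\dimDensity/4}$ that closes the gap with the upper bound. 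You need this reweighting; without it the construction proves a valid but non-matching lower bound.

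Two smaller points. First, your explanation of the $\smoothness<1$ restriction is off: the Besov tail bound in Lemma~\ref{lemma:besov_discretization_error} holds for any $\smoothness>0$ given the Haar-coefficient seminorm, and $q$ enters only through the monotonicity $\besovBall{2}{\besovParamMicroscope}\subset\besovBall{2}{\infty}$; the restriction $\smoothness<1$ is imposed so that the Haar-based seminorm of Definition~\ref{def:besov_norm} actually characterizes the classical Besov class. Second, the logarithmic factor in the lower bound arises from the high-probability membership lemmas (Lemmas~\ref{lemma:nonneg}, \ref{lemma:holder_ball}, \ref{lemma:besovball}), which constrain $\omega$ by a factor $1/\sqrt{\log(\kappa^\dimDensity)}$; this is worth making explicit when you optimize $\omega$.
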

\noindent
Theorem~\ref{theorem:twosample_conti_rate} indicates that the private minimax separation rate for two-sample multivariate H\"{o}lder and Besov densities is noticeably different from its non-private counterpart $\sampleSize_1^{-2s/(4\smoothness+\dimDensity)}$ with respect to the $\EllTwo$ distance~\citep{Arias-Castro2018RememberDimension, kim_minimax_2022}.
We point out that in the high privacy regime, a polynomial degradation on the minimax rate is observed,
and this degradation becomes worse as the data dimension $\dimDensity$ increases.
The result also implies the privacy guarantee can be secured at no additional charge in the low privacy regime where $\sampleSize_1^{-2s/(4\smoothness+\dimDensity)}$ dominates the other term.

The bounds~\eqref{rates:twosample_conti} are tight up to a logarithmic factor in the denominator of the lower bound,
which can be omitted when $n_1 z_\privacyParameter^2 \geq 1$.
As already noted in Theorem~\ref{theorem:twosample_multinomial_rates},
$\privacyParameter$ and $z_\privacyParameter$ are the same, up to a constant factor, as long as $\alpha$ is bounded. Hence, in most practical scenarios where a small value of $\alpha$ is of interest, the upper bound matches the lower bound.

We now discuss the proof of Theorem~\ref{theorem:twosample_conti_rate}, and a detailed analysis can be found in Appendix~\ref{proof:twosample_conti_rates}.
\begin{itemize}
	\item \textbf{Lower bound.} 
	To obtain the lower bound, we once again use the observation that two-sample testing is harder than goodness-of-fit testing. Based on this observation, we employ the lower bound result for goodness-of-fit testing, mirroring the approach employed in Theorem~\ref{theorem:twosample_multinomial_rates} for multinomial testing. We then extend the strategy presented in \citet{Lam-Weil2021MinimaxConstraint} from the univariate case to the multivariate case. The same proof strategy also applies to the multivariate H\"{o}lder ball with only minor modifications, and details can be found in Appendix~\ref{proof:twosample_conti_lower_bound}.
	\item \textbf{Upper bound.} To prove the upper bound, we leverage the private multinomial permutation test in Section~\ref{subsection:twosample_disc_upperbound}, which achieves the private minimax rate.
	For that purpose, we divide each side of the support $[0,1]^\dimDensity$ into $\binNum$ equally-sized subintervals, effectively transforming the initially continuous observations into multinomial observations. The detailed procedure is outlined in Section~\ref{subsection:twosample_conti_upperbound}. We then apply the same privacy mechanism and permutation testing procedure as outlined in Section~\ref{subsection:twosample_disc_upperbound}.
	We exploit the H\"{o}lder and Besov smoothness conditions to find the optimal number of bins $\binNum$ that effectively controls the discretization error and thus leads to a tight upper bound.
\end{itemize}

\subsection{Minimax Optimal Privacy Mechanism and Permutation Testing Procedure}\label{subsection:twosample_conti_upperbound}
Our proposed density test applies our proposed private multinomial test to data discretized by binning the density support into a certain sample-size dependent number of bins, defined as follows: 
\begin{align*}
	\binNum 
	:=
	\begin{cases}
		\lfloor
		(\sampleSize_1^{2/(4\smoothness+\dimDensity)})
		\wedge
		(\sampleSize_1 \privacyParameter^2)^{2/(4\smoothness+3\dimDensity)}
		\rfloor
		& \text{if}~(P_{\vectorize{\rvTwo}}, P_{\vectorize{\rvThree}}) \in \pHolderTs,
		\\
		\sup 
		\bigl\{
		2^\primResLev
		: \primResLev \in \mathbb{N}_0
		\quad \text{and} \quad
		2^\primResLev
		\leq
		(\sampleSize_1^{2/(4\smoothness+\dimDensity)})
		\wedge
		(\sampleSize_1 \privacyParameter^2)^{2/(4\smoothness+3\dimDensity)}
		\bigr\}
		& \text{if}~(P_{\vectorize{\rvTwo}}, P_{\vectorize{\rvThree}}) \in \pBesovTs.
	\end{cases}
	\numberthis \label{eq:kappaValueTwosample}
\end{align*}
Let $\{B_1,...,B_{ \binNum^{\dimDensity}} \}$ be an enumeration of $\dimDensity$-dimensional hypercubes whose length is set to $1/\binNum$.
Each data owner bins their raw sample using the equal-sized binning function $\binner_{\binNum} : \domainTs \mapsto [\binNum^\dimDensity]$, such that $\binner_{\binNum}(\vectorize{x}) = \vectorIndex$ if and only if $\vectorize{x} \in B_\vectorIndex$.
Then, based on the discretized data $\{\binner_{\binNum}(\mathbf{Y}_1), \dots, \binner_{\binNum}(\mathbf{Y}_{n_1})\}$ and $\{\binner_{\binNum}(\mathbf{Z}_1), \dots, \binner_{\binNum}(\mathbf{Z}_{n_2})\}$, we carry out our proposed private multinomial test in Section~\ref{subsection:twosample_disc_upperbound}; the permutation test with test statistic~\eqref{def:statistic_elltwo}. The resulting test achieves the tight upper bound in \eqref{rates:twosample_conti}.

We now outline how this test maintains the privacy and testing error guarantees.
Since the discretized data remain i.i.d.~under the null, the permutation procedure maintains the type I error at $\gamma$.
As for the privacy guarantee, the reduced distinguishability between samples due to discretization, combined with any of $\{\texttt{LapU},\texttt{DiscLapU},\texttt{RAPPOR}\}$, guarantees $\privacyParameter$-LDP.
Regarding the type II error, first note that the test compares two multinomial distributions with $\binNum^\dimDensity$ categories. The corresponding probability vectors, $\probVec_{\mathbf{\rvTwo}}$ and $\probVec_{\mathbf{\rvThree}}$, are defined as 
$\probVec_{\mathbf{\rvTwo}}(\vectorIndex) :=
\int_{B_\vectorIndex}
f_{\vectorize{Y}}(\vectorize{t})
\;
d\vectorize{t}$
and
$\probVec_{\mathbf{\rvThree}}(\vectorIndex)
:=
\int_{B_\vectorIndex} 
f_{\mathbf{\rvThree}}
(\vectorize{t})
\;
d\vectorize{t}$
for $\vectorIndex \in [\binNum^\dimDensity]$.
The boldface subscripts in $\probVec_{\mathbf{\rvTwo}}$ and $\probVec_{\mathbf{\rvThree}}$ indicate that these vectors correspond to multivariate data, distinguishing them from $\mathbf{p}_Y$ and $\mathbf{p}_Z$ in Section~\ref{section:twosample_disc}. Thus the test statistic~\eqref{def:statistic_elltwo} estimates
$
\binNum^{\dimDensity}
\|
\probVec_{\vectorize{\rvTwo}} - \probVec_{\vectorize{\rvThree}}
\|_2^2$ instead of $\normEllp{ f_{\vectorize{Y}} - f_{\vectorize{Z}} }{2}{2}$, introducing an approximation error that depends on the number of bins $\binNum$.  This error is controlled using smoothness conditions on densities and techniques from \citet{Arias-Castro2018RememberDimension} and \citet{Lam-Weil2021MinimaxConstraint}~(see Appendix~\ref{appendix:disc_error} for details). We show that when the number of bins $\binNum$ is chosen as in \eqref{eq:kappaValueTwosample} and the $\EllTwo$ distance between $f_{\vectorize{Y}}$ and $f_{\vectorize{Z}}$ exceeds the threshold in \eqref{rates:twosample_conti}, the multinomial test applied to the binned data controls the type II error to be at most $\beta$. A detailed proof is provided in Appendix~\ref{proof_theorem:twosample_conti_rate}. 

We note that our choice of $\binNum$ in~\eqref{eq:kappaValueTwosample} depends on the smoothness parameter $\smoothness$, typically unknown. In the next subsection, we introduce an adaptive procedure that accommodates this unknown $\smoothness$ without significant loss of power.

\subsection{Adaptive Procedure for Private Two-Sample Density Testing }\label{subsection:adaptive}
To achieve the tight upper bound in Theorem~\ref{theorem:twosample_conti_rate} via our proposed test in Section~\ref{subsection:twosample_conti_upperbound}, the number of bins must be determined based on the unknown smoothness parameter~$\smoothness$. To circumvent this requirement, this section introduces a multiscale permutation testing procedure that  adapts to the unknown $\smoothness$. 
Following \citet{Ingster2000AdaptiveTests}, it aggregates test results from different bin numbers using a Bonferroni-corrected significance level. This procedure does not significantly sacrifice power compared to the one relying on the true value of $s$.

Fix the privacy parameter $\privacyParameter>0$. Denote the number of the test for the Bonferroni-type procedure as:
\begin{equation}\label{def:n_test_adaptive}
	\mathcal{N}
	:=
	\left\lceil \bigg\{
	\dfrac{2}{\dimDensity}
	\log_2{
		\left(
		\dfrac
		{\sampleSize_1}
		{\log \log \sampleSize_1}
		\right) \bigg\}
	}
	\wedge
	\bigg\{\dfrac{2}{3\dimDensity}
	\log_2{
		\left(
		\dfrac
		{\sampleSize_1 \privacyParameter^2}
		{(\log \sampleSize_1)^2\log \log \sampleSize_1}
		\right) \bigg\}
	}	
	\right\rceil.
\end{equation}
For each ${\adaptiveBinNumIndex} \in [\nTest],$
let $\adaptiveSingleTest{\adaptiveBinNumIndex}_{\maxErrorTypeOne/\nTest}$ denote the test function of our proposed method in Section~\ref{subsection:twosample_conti_upperbound},
using $2^{\adaptiveBinNumIndex}$  bins, significance level $\gamma/\nTest$,
and $( \privacyParameter/\nTest)$-LDP guarantee.
The $\privacyParameter$-LDP adaptive test is formally defined as follows:
\begin{align*}
	\Delta^{\text{adapt}}_{\maxErrorTypeOne}
	:=
	\max_{{\adaptiveBinNumIndex} \in [\nTest]} \adaptiveSingleTest{\adaptiveBinNumIndex}_{\maxErrorTypeOne/\nTest}.
	\numberthis \label{adaptive_test_def}
\end{align*}
\noindent
This adaptive procedure queries $\nTest$ number of $( \privacyParameter/\nTest)$-LDP views per observation, each using a different number of bins for discretization.
By the composition theorem of differential privacy~\citep{mcsherry_mechanism_2007},  releasing $\nTest$ number of $(\privacyParameter/\nTest)$-LDP views satisfies the LDP constraint with a privacy level of $\nTest \times \privacyParameter/\nTest  = \privacyParameter$. The type I error is at most $\gamma$ by the union bound, and for the type II error, Theorem~\ref{theorem:twosample_adaptive_upper} states that the adaptive procedure achieves the same testing rate up to logarithmic factors.
\begin{theorem}[Minimax upper bound for the adaptive private testing procedure]\label{theorem:twosample_adaptive_upper}
	For the problems and conditions stated in Theorem~\ref{theorem:twosample_conti_rate},
	further assume that $\sampleSize_1 \asymp \sampleSize_2$,
	$\maxErrorTypeOne \leq e^{-1},
	\sampleSize_1 > e^e$ and
	$\privacyParameter \leq n_1 \leq \sampleSize_2$.
	Then there exists a positive constant
	$C_u \equiv C(\smoothness, \dimDensity, \ballRadius, \gamma, \beta)$ such that the condition
	\begin{align*}
		\rhoTwosample
		\geq
		C_u
		\left[
		\left(
		\dfrac
		{n}
		{\log \log \sampleSize_1 }
		\right)
		^{
			\frac{-2s}{4\smoothness+\dimDensity}
		}
		\vee
		\left(
		\dfrac
		{\sampleSize_1 \privacyParameter^2}
		{
			(\log^2{\sampleSize_1})
			\log \log \sampleSize_1 
		}
		\right)^{
			\frac{-2s}{4\smoothness+3\dimDensity}
		}
		\right]
	\end{align*} 
	implies that the testing errors of the adaptive test $\Delta^{\mathrm{adapt}}_{\maxErrorTypeOne}$ in~\eqref{adaptive_test_def} are uniformly bounded as in~\eqref{eq:uniform_control}.
\end{theorem}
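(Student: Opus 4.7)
The plan is to verify the three requirements of~(\ref{eq:uniform_control}) separately: $\privacyParameter$-LDP, uniform type I error control at $\maxErrorTypeOne$, and uniform type II error control at $\maxErrorTypeTwo$ under the stated separation condition. For privacy, each of the $\nTest$ subtests queries $(\privacyParameter/\nTest)$-LDP views of every observation, so by the sequential composition theorem of differential privacy the combined release from all $\nTest$ subtests is $\nTest\cdot(\privacyParameter/\nTest)=\privacyParameter$-LDP. For type I error, under $H_0$ the pooled sample is i.i.d.~and therefore remains exchangeable after applying any marginally identical LDP mechanism; thus each subtest $\adaptiveSingleTest{\adaptiveBinNumIndex}_{\maxErrorTypeOne/\nTest}$ controls its own type I error at $\maxErrorTypeOne/\nTest$ non-asymptotically by the permutation principle of Section~\ref{subsection:permutation}, and a union bound gives
\[
\mE[\Delta^{\mathrm{adapt}}_{\maxErrorTypeOne}]
\;\leq\;
\sum_{\adaptiveBinNumIndex=1}^{\nTest}\mE[\adaptiveSingleTest{\adaptiveBinNumIndex}_{\maxErrorTypeOne/\nTest}]
\;\leq\;
\nTest\cdot\frac{\maxErrorTypeOne}{\nTest}
\;=\;\maxErrorTypeOne.
\]

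The heart of the argument is the uniform type II error analysis. Since the aggregated statistic is the maximum of the subtests, it suffices to exhibit a single oracle resolution $\adaptiveBinNumIndexProof\in[\nTest]$ such that $\adaptiveSingleTest{\adaptiveBinNumIndexProof}_{\maxErrorTypeOne/\nTest}$ already attains type II error at most $\maxErrorTypeTwo$ under the separation condition. I would take $2^{\adaptiveBinNumIndexProof}$ to be within a constant factor of the oracle bin number $\optimalKappaTwosample$ from~(\ref{eq:kappaValueTwosample}) corresponding to the unknown smoothness $\smoothness$. A preliminary check is that under the stated assumptions $\privacyParameter\leq\sampleSize_1\leq\sampleSize_2$ and $\sampleSize_1>e^{e}$, the upper bound $\nTest$ in~(\ref{def:n_test_adaptive}) is large enough that $\adaptiveBinNumIndexProof\in[\nTest]$; the two terms inside the minimum in~(\ref{def:n_test_adaptive}) are tuned precisely so that both the non-private and the private regimes are covered.

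The main obstacle is propagating the Bonferroni factor $1/\nTest$ through the power analysis of Theorem~\ref{theorem:twosample_conti_rate} at a rate cost that is only logarithmic in $\nTest$. A naive reuse of the proof of Theorem~\ref{theorem:twosample_conti_rate}, which controls the tail of the permuted U-statistic via its first two moments and Chebyshev's inequality, would incur a multiplicative factor of $\sqrt{\nTest/\maxErrorTypeOne}$ in the separation; since $\nTest\asymp\log\sampleSize_1$, this would spoil the rate. To avoid this, I would invoke the exponential concentration inequality for the permuted U-statistic from \citet{kim_minimax_2022}, as foreshadowed in Section~\ref{subsection:techniques}. This replaces the polynomial dependence on $\nTest/\maxErrorTypeOne$ by $\log(\nTest/\maxErrorTypeOne)\asymp\log\log\sampleSize_1$, which accounts for the $\log\log\sampleSize_1$ factor appearing in \emph{both} components of the stated separation rate.

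Finally, the additional $\log^2\sampleSize_1$ factor in the private component is traced back to the substitution $\privacyParameter\mapsto\privacyParameter/\nTest$, so that the private sub-rate $(\sampleSize_1\privacyParameter^2)^{-2\smoothness/(4\smoothness+3\dimDensity)}$ from Theorem~\ref{theorem:twosample_conti_rate} becomes $(\sampleSize_1\privacyParameter^2/\nTest^2)^{-2\smoothness/(4\smoothness+3\dimDensity)}$ at the oracle level $\adaptiveBinNumIndexProof$; the non-private sub-rate is unaffected by this replacement. Substituting $\binNum=2^{\adaptiveBinNumIndexProof}$ into the bias-variance decomposition for the binned permuted U-statistic developed in the proof of Theorem~\ref{theorem:twosample_conti_rate}, bounding the variance tail via the exponential inequality above, bounding the discretization bias using the H\"{o}lder or Besov smoothness condition as in Section~\ref{subsection:twosample_conti_upperbound}, and simplifying using $\nTest\asymp\log\sampleSize_1$ then yields the claimed separation condition and closes the proof.
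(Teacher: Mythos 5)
Your proposal mirrors the paper's proof: type I error by union bound, privacy by composition, type II error by selecting an oracle bin number $2^{t^\ast}$ within $[\nTest]$, and replacing the Chebyshev-based two-moments method with the refined version of \citet{kim_minimax_2022} (their Lemma~C.1) that has $\log(1/\gamma)$ rather than $\sqrt{1/\gamma}$ dependence on the significance level, which is exactly what lets the Bonferroni correction contribute only $\log\log n_1$. You also correctly attribute the $\log^2 n_1$ factor in the private component to the $\privacyParameter\mapsto\privacyParameter/\nTest$ degradation combined with $\nTest\lesssim\log n_1$, so the plan is sound and matches the paper's argument in all essential respects.
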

\noindent
Comparing Theorems~\ref{theorem:twosample_adaptive_upper} and~\ref{theorem:twosample_conti_rate},
in the high privacy regime,
we find that the adaptive procedure incurs an additional cost of $\bigl( (\log^2{\sampleSize_1})  \log{\log{\sampleSize_1}} \bigr) ^{2s/(4s + 3\dimDensity)}$ . In the low privacy regime, the additional cost $(\log\log \sampleSize_1)^{2s/(4s + \dimDensity)}$ matches the adaptivity cost for non-private testing rates found in \citet{Fromont2006AdaptiveModel} and \citet{kim_minimax_2022}. Whether these additional logarithmic factors are necessary or can be improved upon remains an open question for future work.
We briefly discuss the proof of Theorem~\ref{theorem:twosample_adaptive_upper}; a detailed analysis is provided in Appendix~\ref{proof:twosample_adaptive}.
\begin{itemize}
	\item \textbf{Type I error control.} 
	The overall type I error is at most $\gamma$ by the union bound, though this approach usually is conservative in practice.   Introducing an additional layer of calibration, as in \citet{Schrab2021MMDTest}, could mitigate this issue but would increase the noise level of privacy mechanisms. Developing an adaptive test with precise type I error control and robust power guarantee is an interesting avenue for future research.
	
	\item \textbf{Type II error control.} 
	Since  the significance level now depends on $\dimDensity$ and $\sampleSize_1$ via $\mathcal{N}$, we use a refined version of the two moments method presented in \citet{kim_minimax_2022}. 
	It improves the dependence on the significance level $\gamma$
	from $\sqrt{1/{\gamma}}$ of Theorem~\ref{theorem:twosampleTwoMomentsMethod} to $\log(1/\gamma)$,
	at the cost of an additional requirement of $\sampleSize_1 \asymp \sampleSize_2$ on the sample sizes.
	At the heart of this refinement is an exponential concentration inequality for permuted U-statistics. This technique allows us to improve the adaptivity result of \citet{Lam-Weil2021MinimaxConstraint} in their Theorem 5.2, replacing logarithmic factors with iterated logarithmic factors. 
\end{itemize}



\section{Numerical Results}\label{section:simulation}
In this section, we conduct a series of simulation studies to illustrate the finite sample performance of our proposed tests. Specifically, we investigate the privacy-utility trade-offs by varying the privacy level parameter $\privacyParameter$. These simulation studies aim to confirm our theoretical results regarding these trade-offs, and also to demonstrate the rate at which the power diminishes as the privacy parameter $\privacyParameter$ decreases in practical scenarios.

It is worth pointing out that there is no baseline method available in the literature for the problem we tackle.
Therefore, we create the baseline methods by extending the LDP goodness-of-fit tests~\citep{Gaboardi2018LDPChisq} into a two-sample setting.
The first method combines the generalized randomized response  \citep[\texttt{GenRR};][]{Gaboardi2018LDPChisq} privacy mechanism and the classical chi-square statistic~(\texttt{Chi}).
The second method combines \texttt{RAPPOR} privacy mechanism and projected chi-square statistic \citep[\texttt{ProjChi};][]{Gaboardi2018LDPChisq}.
The third method combines the generalized randomized response mechanism with the $\ell_2$-type U-statistic in~\eqref{def:statistic_elltwo}.
A formal description and asymptotic properties of these extensions can be found in Appendix~\ref{appendix:baseline}.
From now on, we refer to the LDP two-sample testing methods as ``privacy mechanism+test statistic'', for example, \texttt{RAPPOR}+\texttt{ProjChi} or \texttt{GenRR}+\texttt{Chi} for ease of reference.

Recall that the proposed method for density testing, defined as a multinomial test applied to equal-sized binned data, requires the original data to lie within the unit hypercube $[0,1]^\dimDensity$. In order to apply the multinomial test to continuous data with larger and potentially unbounded domains, we transform the data through a map \(T : \mathbb{R}^d \mapsto [0,1]^d\), which is applied on a component-wise basis. A specific transformation that we focus on in this simulation is given as:
\begin{equation}\label{def:transform_copula}
	T(\mathbf{x}) = \bigl(\Phi(x_1), \Phi(x_2), \ldots, \Phi(x_d)\bigr),   
\end{equation}
where \(\Phi(x) := (2\pi)^{-1/2} \int_{-\infty}^{x} e^{-t^2/2} \, dt\) is the standard normal cumulative distribution function. We then apply our procedure in Section~\ref{subsection:twosample_conti_upperbound} to these transformed observations through the map in~\eqref{def:transform_copula}. 
Recall that the minimax-optimal value of \( \binNum \) in~\eqref{eq:kappaValueTwosample} depends on the unknown smoothness parameter \( \smoothness \). Since this value is not directly usable, practitioners must select \( \binNum \) to balance two competing effects: a large $\binNum$ reduces discretization error, but potentially weakens the signal and becomes vulnerable to the impact of added noise due to the privacy mechanism. Our simulation results indicate that $\binNum=4$ is a reasonable choice in most of the scenarios considered in this section, and thus we stick with the equal-sized binning scheme with $\binNum=4$ for density testing. 

In all simulation scenarios, we consider equal sample sizes, denoted as \(\sampleSize := \sampleSize_1 = \sampleSize_2\), and fix the significance level at \(\maxErrorTypeOne = 0.05\). We estimate the power by independently repeating the test 2000 times, and calculating the rejection ratio of the null hypothesis. The permutation procedure employs the Monte Carlo $p$-value given in \eqref{eq:monte_carlo_perm_p_value} with \(\sizeMonteCarlo = 999\). The code for replicating the numerical results is available at \url{https://github.com/Jong-Min-Moon/optimal-local-dp-two-sample.git}.

\begin{figure}[t!]
	\centering
	\includegraphics[width=0.85\linewidth]{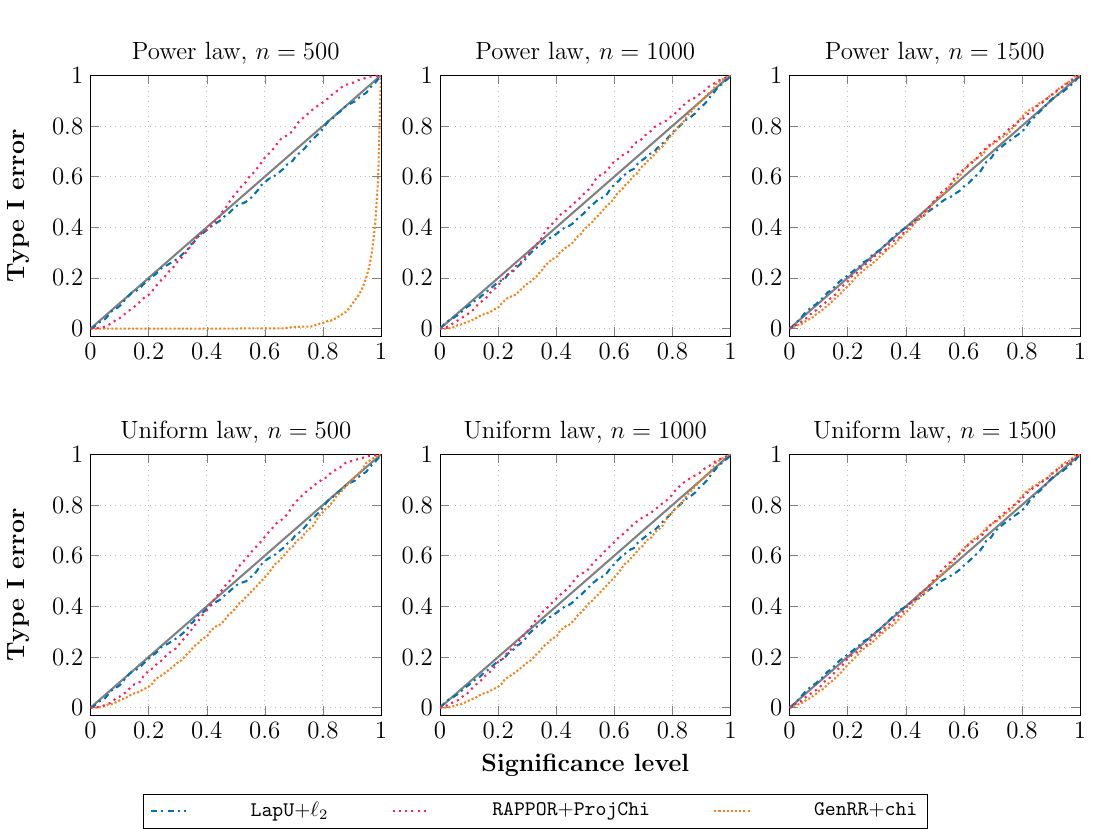}
	\caption{Comparison of type I error control between a permutation-calibrated method~(\texttt{LapU}+$\ell_2$) and methods calibrated through asymptotic chi-square null distribution (\texttt{RAPPOR}+\texttt{ProjChi} and \texttt{GenRR}+\texttt{Chi}),
		across configurations of uniform (top row) and power-law (bottom row) null distributions as described in~\eqref{nulldist}. The gray solid lines represent the $y=x$ line, indicating perfect type I error control.
	}
	\label{fig:typeone}
\end{figure}

\vskip 1em

\noindent \textit{Type I error control.}
First, we compare the type I error rates of three methods for multinomial testing: \texttt{LapU}+\texttt{$\ell_2$}, \texttt{GenRR}+\texttt{Chi}, and \texttt{RAPPOR}+\texttt{ProjChi}, to highlight the advantages of the permutation approach. We consider two null distributions, where for \( m \in [\alphabetSize] \), the \( m \)th elements of \( \vectorize{p}_Y \) and \( \vectorize{p}_Z \) are defined as:
\begin{equation}\label{nulldist}
	\text{(a) power law: }\probVecElement{\rvTwo}{\vectorIndex} = \probVecElement{\rvThree}{\vectorIndex} \propto 1/\vectorIndex \quad \text{and} \quad \text{(b) uniform law: }\probVecElement{\rvTwo}{\vectorIndex} = \probVecElement{\rvThree}{\vectorIndex}=1/\alphabetSize.
\end{equation}
We set \(\alphabetSize = 500\) and \(\privacyParameter = 0.1\), and investigate how the type I error rate varies with sample sizes $n \in \{500, 1000, 1500\}$. The results, displayed in Figure~\ref{fig:typeone}, indicate that the type I error of the permutation test, \texttt{LapU}+\texttt{$\ell_2$}, is well controlled at any sample size and significance level (up to a small numerical error) as expected. In contrast, we see that the asymptotic tests, namely \texttt{RAPPOR}+\texttt{ProjChi} and \texttt{GenRR}+\texttt{Chi}, have the size significantly deviated from the straight baseline, especially when the sample size is small or moderate. This indicates that the resulting asymptotic test can be either conservative or anti-conservative depending on the significance level.
\begin{figure}[t!]
	\centering
	\includegraphics[width=0.95\linewidth]{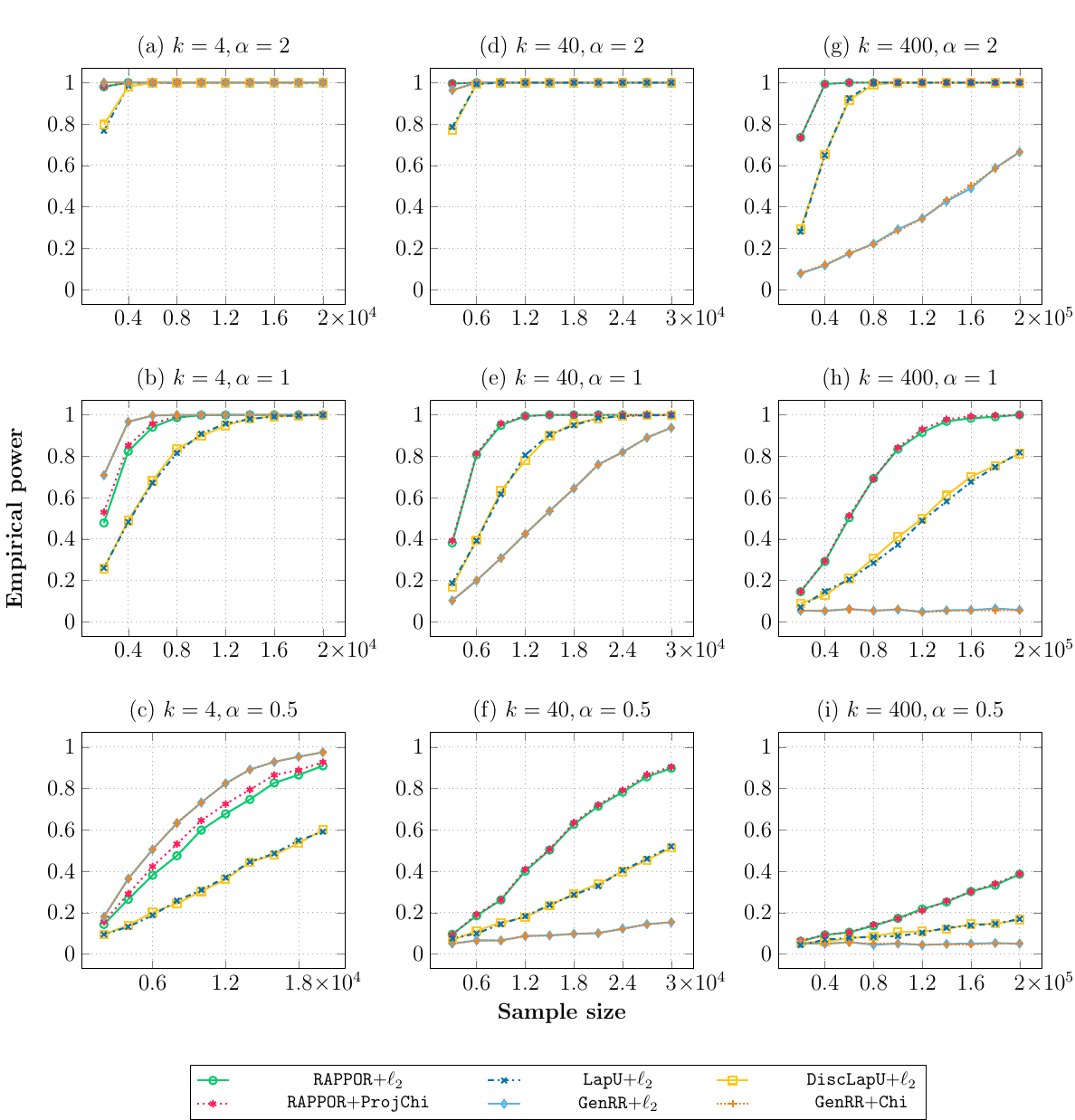}
	\caption{
		Comparison of the testing power between our proposed methods (first row in the legend) and baseline methods (second row in the legend) under the perturbed uniform alternatives~\eqref{simulation_setting:perturbed_uniform}. To ensure a fair comparison, all methods are calibrated using permutation procedures at level $\gamma = 0.05$.
	}
	\label{fig:power_multinomial}
\end{figure}
\vskip 1em

\noindent \textit{Simulation settings for power comparison in multinomial testing.}
We next compare the power of our proposed methods (\texttt{RAPPOR}+$\ell_2$, \texttt{LapU}+$\ell_2$, and \texttt{DiscLapU}+$\ell_2$) with baseline methods (\texttt{GenRR}+$\texttt{Chi}$, \texttt{RAPPOR}+\texttt{ProjChi}, and \texttt{GenRR}+$\ell_2$) for distinguishing between two multinomial distributions. As observed in Figure~\ref{fig:typeone}, both \texttt{GenRR}+\texttt{Chi} and \texttt{RAPPOR}+\texttt{ProjChi} can be miscalibrated when their thresholds are determined by the asymptotic null distributions. In order to ensure a fair power comparison, \texttt{GenRR}+\texttt{Chi} and \texttt{RAPPOR}+\texttt{ProjChi} were calibrated using permutation procedures in this power simulation. We aim to illustrate how the testing power varies with changes in the number of categories  $\alphabetSize$ and the privacy parameter $\privacyParameter$. The analysis is conducted under a perturbed uniform distribution scenario, where for \( m \in [\alphabetSize] \), the \( m \)th elements of \( \vectorize{p}_Y \) and \( \vectorize{p}_Z \) are defined as:
\begin{equation}\label{simulation_setting:perturbed_uniform}
	\probVecElement{\rvTwo}{\vectorIndex} = \frac{1}{\alphabetSize} + (-1)^\vectorIndex \eta~\quad \text{and} \quad~	\probVecElement{\rvThree}{\vectorIndex} = \frac{1}{\alphabetSize} + (-1)^{(\vectorIndex+1)} \eta, \quad \text{for $\vectorIndex \in [\alphabetSize]$.}
\end{equation}
The simulation considers the following combinations of three parameters, namely the number of categories $\alphabetSize$, the perturbation size $\eta$ and the privacy parameter $\privacyParameter$: 
$$(\alphabetSize, \eta, \privacyParameter) \in \{(4,0.04), (40,0.015), (400,0.002)\} \times \{2, 1, 0.5\}.$$ The simulation results for this setting are provided in Figure~\ref{fig:power_multinomial}. 
\begin{figure}[t!]
	\centering
	\includegraphics[width=0.95\linewidth]{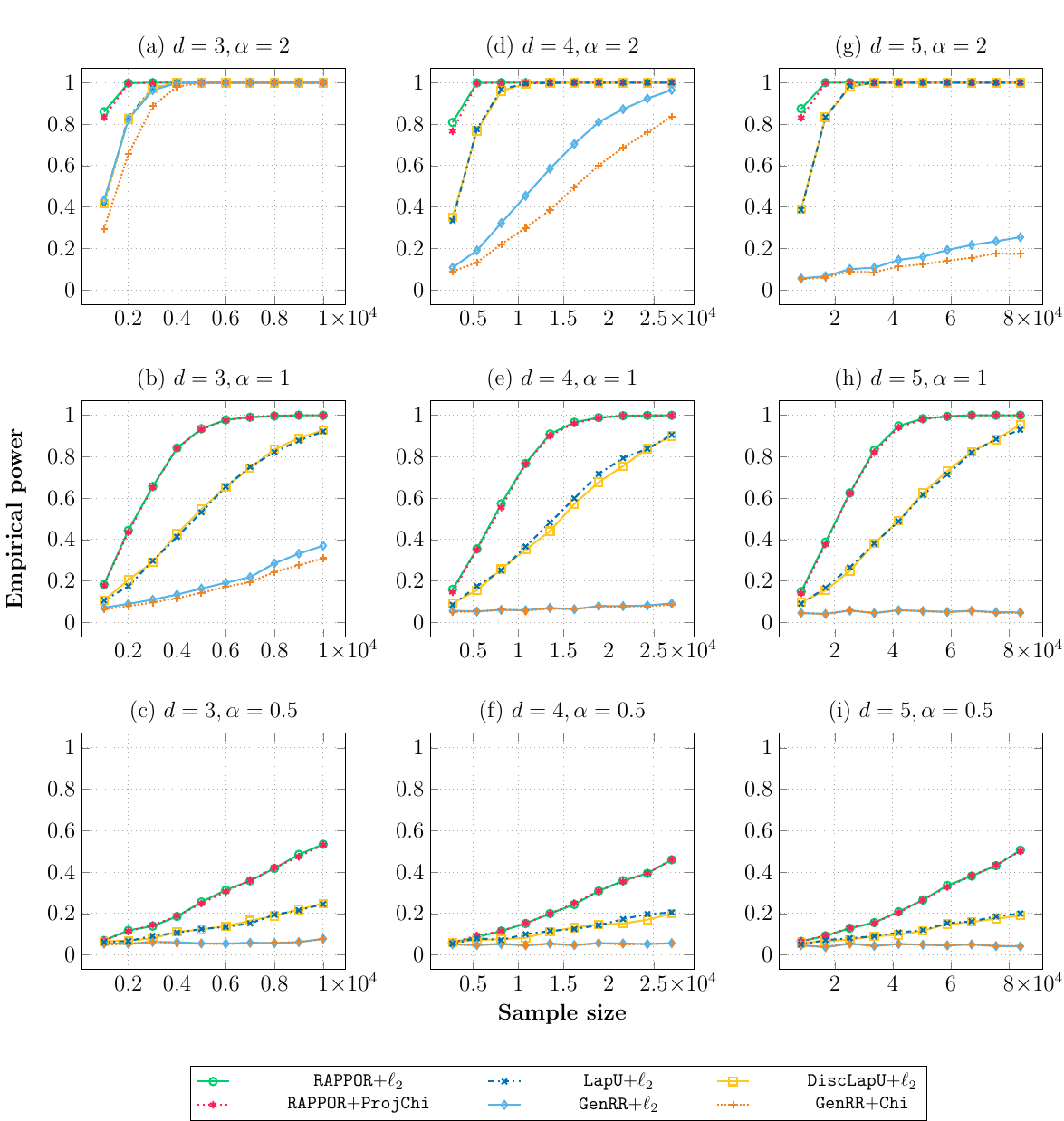}
	\caption{Comparison of the density testing power between our proposed methods (first row in the legend) and baseline methods (second row in the legend) under the location alternatives in~\eqref{alternative:density_location}. To ensure a fair comparison, all methods are calibrated using permutation procedures at level $\gamma = 0.05$.
	}\label{fig:power_density_location}
\end{figure}

\vskip 1em

\noindent \textit{Simulation settings for power comparison in density testing.}
We also evaluate the density testing power of the same methods used in the simulations for multinomial testing. We consider two scenarios where two density functions differ in their location parameters or scale parameters. Since the results for scale difference show trend similar to that of location difference, we present these results in Appendix~\ref{simul_scale}.
For the location difference, we analyze scenarios involving mean differences between two $\dimDensity$-dimensional Gaussian distributions
$P_{\vectorize{\rvTwo}} = \mathcal{N}(
\boldsymbol{\mu}_{\vectorize{\rvTwo}}
,
\Sigma_{\vectorize{\rvTwo}}
)$
and
$P_{\vectorize{\rvThree}} = \mathcal{N}(
\boldsymbol{\mu}_{\vectorize{\rvThree}}
,
\Sigma_{\vectorize{\rvThree}}
)$. 
Let 
$
\vecOne{\dimDensity}:= (1, \ldots , 1)^\top 
\in \numbersetReal^\dimDensity
$,
$
\vecZero{\dimDensity} := (0, \ldots , 0)^\top 
\in \numbersetReal^\dimDensity$, 
$\matOne{\dimDensity}
:=
\vecOne{\dimDensity} {\mathbf{1}_d}^\top
\in
\numbersetReal^{\dimDensity \times \dimDensity}
$, and $\mathbf{I}_d$  denote the identity matrix in
$\mathbb{R}^{d\times d}$.
We set the mean vectors and covariance matrices of the Gaussian distributions as:
\begin{equation}\label{alternative:density_location}
	\boldsymbol{\mu}_{\vectorize{\rvTwo}}=
	0.5 \times \mathbf{1}_{\dimDensity},~
	\boldsymbol{\mu}_{\vectorize{\rvThree}}=
	-\boldsymbol{\mu}_{\vectorize{\rvTwo}},
	\quad \text{and} \quad
	\Sigma_{\vectorize{\rvTwo}}
	=
	\Sigma_{\vectorize{\rvThree}}
	=
	0.5 \times\mathbf{J}_{\dimDensity}
	+
	0.5 \times \mathbf{I}_{\dimDensity}.
\end{equation}
In simulations, the dimension of the original data is chosen as $\dimDensity \in \{3,4,5\}$, and after binning through the map $T$ in \eqref{def:transform_copula} with $\kappa = 4$, the number of categories becomes $k \in \{64,256,1024\}$.

\vskip 1em

\noindent \textit{Simulation results for power comparisons.}
The simulation results in this section are illustrated in Figures~\ref{fig:power_multinomial} and \ref{fig:power_density_location}. We first consistently observe in all of 
the figures that the power tends to decrease as the privacy parameter $\privacyParameter$ decreases meaning a stronger privacy guarantee.
These trade-offs are all predictable from the minimax rates in~\eqref{rate:twosample_disc} and~\eqref{rates:twosample_conti}.
We next highlight the differences in trends related to the number of categories. For multinomial distributions with a small number of categories (\(\alphabetSize=4\)), the generalized randomized response, a natural extension of the classical mechanism \citep{randomizedresponse}, outperforms all other methods. Following the generalized randomized response mechanism is the \texttt{RAPPOR} mechanism, while Laplace-noise-based mechanisms performing the least. However, in scenarios with a larger number of categories, or in density testing scenarios (which also correspond to a large number of categories), the testing power of the generalized randomized response diminishes, and \texttt{RAPPOR} emerges as the method with the highest power. The suboptimal performance of the generalized randomized response in high-dimensional settings, as theoretically explored in Appendix~\ref{genrr_suboptimal_theory} and numerically observed by \citet{Gaboardi2018LDPChisq}, aligns with a simple intuition: since the generalized randomized response modifies data by shifting it from one category to another, the difference between the original sample and its corresponding $\privacyParameter$-LDP view becomes more pronounced as the number of categories increases. These simulation results prove the superiority of \texttt{RAPPOR} over the other mechanisms, especially in multinomial testing with large $k$ and density testing, and we therefore recommend using \texttt{RAPPOR} in practical applications. 
Within the tests built upon \texttt{RAPPOR}, we observe in Figures~\ref{fig:power_multinomial} and \ref{fig:power_density_location}  that \texttt{RAPPOR}+\texttt{ProjChi} and \texttt{RAPPOR}+$\ell_2$ perform comparably to each other, and in some cases, \texttt{RAPPOR}+\texttt{ProjChi} is slightly more powerful. 
It is also possible that the difference between \texttt{RAPPOR}+\texttt{ProjChi} and \texttt{RAPPOR}+$\ell_2$ is more pronounced in some settings. For example, if the signal is large in terms of chi-square divergence but relatively small in terms of the $\ell_2$ distance, we would expect \texttt{RAPPOR}+\texttt{ProjChi} to perform better than \texttt{RAPPOR}+$\ell_2$. Conversely, if the signal is large in the $\ell_2$ distance, the opposite holds true;
we present the numerical results that confirm this in Appendix~\ref{appendix:powerlaw}.
This suggests that when practitioners have insights into the nature of the deviation between two distributions, selecting a statistic that aligns with that specific deviation might be more effective. We leave it as future work to conduct more extensive simulations in diverse settings and using various types of test statistics.
\section{Discussion}
In this work, we studied minimax separation rates for two-sample testing under LDP constraint. Moving beyond the univariate Besov ball with $q=\infty$ considered in \citet{Lam-Weil2021MinimaxConstraint}, our work encompasses a larger Besov class of densities in a multivariate setting without restriction on $q$. We also considered the H\"{o}lder class and extended the non-private results of \citet{kim_minimax_2022} to a locally private setting. By noting the equivalence between the binning approach in~\citet{kim_minimax_2022} and the projection approach in~\citet{Lam-Weil2021MinimaxConstraint}, we proposed an integrated private testing framework that provides an optimal test for a large class of smooth densities. We proved our results using three distinct LDP mechanisms, thereby extending the toolkit available for practitioners. Additionally, an adaptive test is introduced that retains optimality up to a log factor without the knowledge of the smoothness parameter. Echoing prior work~\citep{Aliakbarpour2018DPgof, Aliakbarpour2019PrivatePermutations, Cai2017DPGofPrivit, Lam-Weil2021MinimaxConstraint}, our results reaffirm that there exists an inevitable trade-off between data privacy and statistical efficiency that data analysts should bear in mind.

Our paper leaves several open questions for future investigation. Throughout the paper, we focused on equal-sized binning scheme that returns minimax optimal procedures. However, this framework may be problematic in high-dimensional settings as many bins would be empty. To address this issue, one can develop a data-adaptive binning scheme and improve the high-dimensional performance. In terms of smoothness classes, future work can be dedicated to extending our minimax result to a more general Besov class and other smoothness classes. Another interesting direction of future work is to develop optimal tests of conditional independence under privacy constraints, building on the recent work of \citet{neykov2021minimax} and \citet{kim2022local}. Finally, one can attempt to improve the $\log^2 n_1 \log \log n_1$ cost for adaptivity or find the matching lower bound. We leave all of these interesting but challenging problems for future work.

\bibliographystyle{apalike}
\bibliography{reference}

\begin{appendix}
	\section{Overview of Appendix}\label{appendix}
	This supplementary material provides the technical proofs deferred in the main text, along with some additional results of interest. The content is organized as follows:
	
	\begin{itemize}
		\item Appendix~\ref{section:appendix_prelim} presents the technical lemmas, constructions, and calculations used in the main proofs of our results.
		\item Appendix~\ref{proof:LapUdiscLapULDP} provides the proof of the $\privacyParameter$-LDP guarantees for our proposed privacy mechanisms.
		\item Appendix~\ref{proof:twosample_multinomial_rates} proves the minimax rate result for multinomial testing, as presented in Theorem~\ref{theorem:twosample_multinomial_rates}.
		\item Appendix~\ref{proof:twosample_conti_rates} proves the minimax rate result for density testing, as presented in Theorem~\ref{theorem:twosample_conti_rate}.
		\item Appendix~\ref{proof:twosample_adaptive} establishes the minimax upper bound for adaptive density testing, as shown in Theorem~\ref{theorem:twosample_adaptive_upper}.
		\item Appendix~\ref{appendix:simulation} provides additional information on the numerical studies presented in Section~\ref{section:simulation}.
		\item Appendix~\ref{genrr_suboptimal_theory} discusses the asymptotic suboptimality of the \texttt{GenRR}+$\ell_2$ multinomial test mentioned in Section~\ref{section:simulation}.
	\end{itemize}

	\section{Preliminary Results}\label{section:appendix_prelim}
	This section presents the technical lemmas, constructions, and calculations used in the main proofs.
	\subsection{First Two Moments of Discrete Laplace Noise}\label{proof:discLapUVar}
	We analyze the first two moments of a discrete Laplace noise random variable to establish the upper bound for the separation rate of our private test in Theorem~\ref{theorem:twosample_multinomial_rates}. The next lemma shows that the discrete noise in discrete Laplace mechanism (Definition \ref{def:DiscLapU_formal}) has mean zero and variance at most $8\alphabetSize/\privacyParameter^2$, which matches the variance of the continuous noise in Laplace mechanism (Definition \ref{def:LapU_formal}) with the same privacy guarantee.
	\begin{lemma}\label{lemma:discLapU_Variance}
		If $W$ follows $ \mathrm{DL}(\discLapUParam)$ defined in~\eqref{def:discrete_laplace_distribution} and Definition~\ref{def:DiscLapU_formal}, we have
		$$
		\mathbb{E}(W) = 0\quad \text{and} \quad\mathrm{Var}(W) \leq \frac{8\alphabetSize}{\privacyParameter^2}.
		$$

\vskip 1em

		\begin{proof}
			From Proposition 2.2 of \citet{inusah_discrete_2006}, we have
			$\mathbb{E}(W) = 0$
			and 
			$\mathrm{Var}(W) = 2\discLapUParam / (1-\discLapUParam)^2$.
			Therefore, it suffices to show that
			$$ \frac{2\discLapUParam}{(1-\discLapUParam)^2} \leq \frac{8\alphabetSize}{\privacyParameter^2}.$$
			For notational convenience, let $v :=  \privacyParameter/\sqrt{4\alphabetSize}>0$ for $\privacyParameter>0$, so that we have $
			\discLapUParam
			=
			\exp({
				-
				\privacyParameter/\sqrt{4\alphabetSize}
			})
			=
			\exp({-v}).$
			The proof then reduces to showing that
			$$
			v^2
			\leq
			\frac{(1 - \discLapUParam)^2}{\discLapUParam} = 
			\bigl(
			e^{v/2} - e^{-v/2}
			\bigr)^2.
			$$
			Since $v>0$, both sides of the above inequality is positive. Therefore it suffices to show that
			\begin{align*}
				v \leq 
				e^{v/2} - e^{-v/2}.
				\numberthis \label{eq:cond_for_disc_var}
			\end{align*}
			Note that Taylor expansions of $\exp(v/2)$ and $\exp(-v/2)$ have the same even order terms, and their odd order terms have the same absolute values with opposite signs. Therefore, it holds that
			\begin{align*}
				(e^{v/2} - e^{-v/2} ) = 2 \cdot \sum_{n=0}^{\infty} \frac{\left(v/2\right)^{2n+1}}{(2n+1)!} = 
				2\cdot
				\left(
				(v/2)
				+ 
				\frac{(v/2)^3}{3!}
				+
				\frac{(v/2)^5}{5!}
				+
				\frac{(v/2)^7}{7!}
				+ \cdots 
				\right).
			\end{align*}
			Since $v>0$, all the terms of above are positive. Thus, the condition~\eqref{eq:cond_for_disc_var} is satisfied. This concludes the proof of Lemma~\ref{lemma:discLapU_Variance}. 
		\end{proof}
	\end{lemma}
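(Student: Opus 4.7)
The plan is to split the claim into two parts. For the mean, I would first observe that the pmf $P(W=w) \propto \zeta^{|w|}$ depends on $w$ only through $|w|$, so the distribution is symmetric about zero. Since $W$ has finite first moment (the tails decay geometrically as $|w| \to \infty$ because $\zeta \in (0,1)$), symmetry immediately yields $\mathbb{E}(W) = 0$.

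For the variance, I would compute $\mathbb{E}(W^2)$ by evaluating the series $\sum_{w \in \mathbb{Z}} w^2 \zeta^{|w|}$ via standard geometric-series differentiation tricks; this yields the closed form $\mathrm{Var}(W) = 2\zeta/(1-\zeta)^2$ (which is the well-known variance of the discrete Laplace law). The target bound then reduces to showing
\begin{equation*}
\frac{2\zeta_\alpha}{(1-\zeta_\alpha)^2} \leq \frac{8k}{\alpha^2}, \qquad \text{where } \zeta_\alpha = e^{-\alpha/(2\sqrt{k})}.
\end{equation*}
Setting $v := \alpha/(2\sqrt{k}) > 0$ so that $\zeta_\alpha = e^{-v}$, I would rewrite $(1-\zeta_\alpha)^2/\zeta_\alpha = (e^{v/2} - e^{-v/2})^2 = 4\sinh^2(v/2)$, turning the bound into the purely analytic inequality $2\sinh(v/2) \geq v$ for all $v > 0$.

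The final step is proving this elementary inequality, which I would do by comparing Taylor series: $2\sinh(v/2) = v + 2\sum_{n\geq 1} (v/2)^{2n+1}/(2n+1)!$, and every term in the sum is nonnegative for $v>0$, so $2\sinh(v/2) \geq v$. Chaining the reductions back gives the desired variance bound. I do not anticipate a serious obstacle here — the argument is essentially a direct calculation plus a one-line convexity/Taylor observation — the only mild care needed is handling the two-sided geometric sum in the variance computation and ensuring that the substitution $v = \alpha/(2\sqrt{k})$ is monotonic so that the inequality transfers cleanly from all $v>0$ to all $(\alpha, k)$ with $\alpha > 0$ and $k \geq 2$.
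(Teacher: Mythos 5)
Your proposal is correct and follows essentially the same route as the paper: reduce to $2\zeta_\alpha/(1-\zeta_\alpha)^2 \leq 8k/\alpha^2$, substitute $v = \alpha/(2\sqrt{k})$ to rewrite the right-hand side as $(e^{v/2}-e^{-v/2})^2/v^2 \geq 1$, and verify $2\sinh(v/2) \geq v$ term by term in the Taylor expansion. The only cosmetic difference is that you derive the mean and variance of the discrete Laplace law directly (by symmetry and geometric-series differentiation) where the paper simply cites Proposition 2.2 of Inusah and Kozubowski (2006); both yield the same closed form $\mathrm{Var}(W) = 2\zeta/(1-\zeta)^2$.
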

	\subsection{Summary Statistics  of \texttt{RAPPOR} Mechanism}\label{appendix:rappor_moment}
	This section provides calculations for the summary statistics of the \texttt{RAPPOR} private views, which are used in Appendix~\ref{proof:rappor_optimal}. First, Lemma~\ref{lemma:rappor_var} presents the exact calculation of the expectation of the U-statistic in~\eqref{def:statistic_elltwo}, as well as the variance and covariance of the entries of a \texttt{RAPPOR} private view.
	
	\begin{lemma}[Summary statistics  of \texttt{RAPPOR} private views]
		\label{lemma:rappor_var}
		Let $\{\rvTwo_\sampleIndexOne\}_{\sampleIndexOne \in [\sampleSize_1]}$  be i.i.d.~multinomial sample with $\alphabetSize$ categories. Let $\{\tilde{\rVecY}_i\}_{i \in [n_1]}$ represent the corresponding $\privacyParameter$-LDP views obtained through the \textnormal{\texttt{RAPPOR}} mechanism, where for each $m \in [k]$, the $m$th entry is distributed as 
		\begin{equation*}\tilde{\rvTwo}_{\sampleIndexOne \vectorIndex}
			\sim
			\mathrm{Ber}
			\bigl(
			\privacyParameterrappor \mathds{1}
			(
			\rvTwo_{\sampleIndexOne} = \vectorIndex
			) + \smallNumberrappor
			\bigr),~\text{where}~
			\privacyParameterrappor := \frac{e^{\privacyParameter /2} -1}{e^{\privacyParameter /2} +1} \quad \text{and} \quad
			\smallNumberrappor := \frac{1}{e^{\privacyParameter /2} +1}.
		\end{equation*}
		Then we have $\mE[U_{\sampleSize_1, \sampleSize_2}]
		=
		\privacyParameterrappor^2 \|
		\probVec_\rvTwo - \probVec_\rvThree
		\|_2^2$. Also, we have
		\begin{equation*}
			\mathrm{Var}(\tilde{Y}_{1\vectorIndex})
			=
			\bigl(
			\privacyParameterrappor 
			\probVecElement{\rvTwo}{\vectorIndex}
			+ \smallNumberrappor
			\bigr)
			\bigl(
			1
			-\privacyParameterrappor 
			\probVecElement{\rvTwo}{\vectorIndex}
			- \smallNumberrappor
			\bigr)
			~\text{and}~	\mathrm{Cov}(\tilde{Y}_{1\vectorIndex}, \tilde{Y}_{1\vectorIndex'}) = -
			\privacyParameterrappor^2 \probVecElement{\rvTwo}{\vectorIndex} \probVecElement{\rvTwo}{\vectorIndex'}
			,
		\end{equation*}
			for each $\vectorIndex \in [\alphabetSize]$ and
		for $\vectorIndex,\vectorIndex' \in [\alphabetSize]$ such that $m \neq m'$, respectively.	
	\end{lemma}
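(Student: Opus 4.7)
The approach is a direct calculation in which every quantity is obtained by the tower property of conditional expectation, exploiting two structural features of the \texttt{RAPPOR} mechanism: (i) conditionally on the raw sample $Y_1$, the $k$ components of $\tilde{\mathbf{Y}}_1$ are independent Bernoulli variables with success probabilities $\zeta_{\mathrm{bf}} \mathds{1}(Y_1=m) + \delta_{\mathrm{bf}}$; and (ii) the private views for distinct observations are fully independent.

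First I would establish the one-point identity $\mathbb{E}[\tilde{Y}_{1m}] = \zeta_{\mathrm{bf}} p_Y(m) + \delta_{\mathrm{bf}}$ by conditioning on $Y_1$ and averaging. The variance claim is then immediate: since $\tilde{Y}_{1m}$ is a $\{0,1\}$-valued Bernoulli variable marginally, we have $\mathrm{Var}(\tilde{Y}_{1m}) = \mathbb{E}[\tilde{Y}_{1m}]\bigl(1 - \mathbb{E}[\tilde{Y}_{1m}]\bigr)$, which matches the stated form upon substituting the one-point identity.

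For the covariance with $m \neq m'$, I would again condition on $Y_1$. By feature (i), $\mathbb{E}[\tilde{Y}_{1m}\tilde{Y}_{1m'} \mid Y_1] = \bigl(\zeta_{\mathrm{bf}}\mathds{1}(Y_1=m)+\delta_{\mathrm{bf}}\bigr)\bigl(\zeta_{\mathrm{bf}}\mathds{1}(Y_1=m')+\delta_{\mathrm{bf}}\bigr)$. The key algebraic observation is that $\mathds{1}(Y_1=m)\mathds{1}(Y_1=m') = 0$ whenever $m \neq m'$, which kills the quadratic $\zeta_{\mathrm{bf}}^2$ term inside the conditional expectation. Averaging and subtracting $\mathbb{E}[\tilde{Y}_{1m}]\mathbb{E}[\tilde{Y}_{1m'}]$ produces the claimed $-\zeta_{\mathrm{bf}}^2 p_Y(m)p_Y(m')$ after the cross terms $\zeta_{\mathrm{bf}}\delta_{\mathrm{bf}} p_Y(m)$ and $\zeta_{\mathrm{bf}}\delta_{\mathrm{bf}} p_Y(m')$ and the $\delta_{\mathrm{bf}}^2$ constant cancel.

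Finally, for $\mathbb{E}[U_{n_1,n_2}]$, I would use feature (ii) to reduce each sum in \eqref{def:statistic_elltwo} to an inner product of mean vectors: for $i_1 \neq i_2$, $\mathbb{E}[\tilde{\mathbf{Y}}_{i_1}^\top \tilde{\mathbf{Y}}_{i_2}] = \|\mathbb{E}[\tilde{\mathbf{Y}}_1]\|_2^2$, analogously for the $\tilde{\mathbf{Z}}$ sum, and $\mathbb{E}[\tilde{\mathbf{Y}}_i^\top \tilde{\mathbf{Z}}_j] = \mathbb{E}[\tilde{\mathbf{Y}}_1]^\top \mathbb{E}[\tilde{\mathbf{Z}}_1]$. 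The three sums are weighted to assemble $\|\mathbb{E}[\tilde{\mathbf{Y}}_1] - \mathbb{E}[\tilde{\mathbf{Z}}_1]\|_2^2$. Substituting the one-point identity componentwise, the additive $\delta_{\mathrm{bf}}$ shifts cancel inside each squared difference, yielding $\zeta_{\mathrm{bf}}^2 \|p_Y - p_Z\|_2^2$. The entire lemma is essentially a bookkeeping exercise with no genuine obstacle; the only step requiring attention is tracking the cancellations in the covariance calculation, where the orthogonality of one-hot indicators for $m \neq m'$ is what reduces an otherwise messy expression to a clean product form.
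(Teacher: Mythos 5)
Your proposal is correct and follows essentially the same direct moment computation as the paper: the Bernoulli identity $\mathrm{Var}(\tilde{Y}_{1m})=\mathbb{E}[\tilde{Y}_{1m}](1-\mathbb{E}[\tilde{Y}_{1m}])$ for the variance, and factorization of $\mathbb{E}[U_{n_1,n_2}]$ across independent private views into $\|\mathbb{E}[\tilde{\mathbf{Y}}_1]-\mathbb{E}[\tilde{\mathbf{Z}}_1]\|_2^2=\zeta_{\mathrm{bf}}^2\|\mathbf{p}_Y-\mathbf{p}_Z\|_2^2$. The only cosmetic difference is in the covariance step, where the paper imports the joint probability $\mathbb{P}(\tilde{Y}_{1m}=1,\tilde{Y}_{1m'}=1)$ from Fact 1 of Acharya et al.\ (2019), while you derive it directly by conditioning on $Y_1$ and using $\mathds{1}(Y_1=m)\mathds{1}(Y_1=m')=0$ for $m\neq m'$; both yield the same cancellation.
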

\vskip 0.1224mm
	\begin{proof}
		The expectation is verified as follows:
		\allowdisplaybreaks
		\begin{align*}
			\mE[U_{\sampleSize_1, \sampleSize_2}]
			&=
			\mE[
			(
			\tilde{\vectorize{\rvTwo}}_1
			-
			\tilde{\vectorize{\rvThree}}_1
			)^\top
			(
			\tilde{\vectorize{\rvTwo}}_2
			-
			\tilde{\vectorize{\rvThree}}_2
			)
			]
			\\
			&= 
			\mE[
			\tilde{\vectorize{\rvTwo}}_1 - \tilde{\vectorize{\rvThree}}_1]^\top
			\mE[
			\tilde{\vectorize{\rvTwo}}_2
			-
			\tilde{\vectorize{\rvThree}}_2
			]	
			\\&=
			(\privacyParameterrappor \; \probVec_\rvTwo - \privacyParameterrappor\probVec_\rvThree)^\top
			(\privacyParameterrappor \; \probVec_\rvTwo - \privacyParameterrappor\probVec_\rvThree)
			\\&=
			\privacyParameterrappor^2 \|
			\probVec_\rvTwo - \probVec_\rvThree
			\|_2^2.
		\end{align*} 
		The variance is verified as follows:
		\begin{equation}
			\mathrm{Var}(\tilde{Y}_{1\vectorIndex})
			=
			\mE[\tilde{\rvTwo}_{1\vectorIndex}^2] - \mE[\tilde{Y}_{1\vectorIndex}]^2
			\stackrel{(a)}{=}
			\mE[\tilde{Y}_{1\vectorIndex}] - \mE[\tilde{Y}_{1\vectorIndex}]^2
			=
			\bigl(
			\privacyParameterrappor 
			\probVecElement{\rvTwo}{\vectorIndex}
			+ \smallNumberrappor
			\bigr)
			\bigl(
			1
			-\privacyParameterrappor 
			\probVecElement{\rvTwo}{\vectorIndex}
			- \smallNumberrappor
			\bigr),
		\end{equation}
		where step $(a)$ uses the fact that $\tilde{Y}_{1\vectorIndex}^2 = \tilde{Y}_{1\vectorIndex}$ since $\tilde{Y}_{1\vectorIndex}$ is either 0 or 1.	
		Finally, for $\vectorIndex,\vectorIndex' \in [\alphabetSize]$ such that $\vectorIndex \neq \vectorIndex'$, the covariance is calculated as:
		\begin{align*}
			\mathrm{Cov}(\tilde{Y}_{1\vectorIndex}, \tilde{Y}_{1\vectorIndex'})
			= ~&
			\mE[\tilde{Y}_{1\vectorIndex} \tilde{Y}_{1\vectorIndex'}] - \mE[\tilde{Y}_{1\vectorIndex}] \mE[\tilde{Y}_{1\vectorIndex'}]
			\\ \stackrel{(a)}{=} \hskip 1.57mm &
			\mP[\tilde{Y}_{1\vectorIndex}=1, \tilde{Y}_{1\vectorIndex'}=1] - \mE[\tilde{Y}_{1\vectorIndex}] \mE[\tilde{Y}_{1\vectorIndex'}]
			\\ \stackrel{(b)}{=} \hskip 1.62mm &
			\bigl(
			\privacyParameterrappor 
			\probVecElement{\rvTwo}{\vectorIndex}
			+ \smallNumberrappor
			\bigr) \bigl(
			\privacyParameterrappor 
			\probVecElement{\rvTwo}{\vectorIndex'}
			+ \smallNumberrappor	
			\bigr)
			-
			\privacyParameterrappor^2 \probVecElement{\rvTwo}{\vectorIndex} \probVecElement{\rvTwo}{\vectorIndex'}
			-
			\bigl(
			\privacyParameterrappor 
			\probVecElement{\rvTwo}{\vectorIndex}
			+ \smallNumberrappor
			\bigr) \bigl(
			\privacyParameterrappor 
			\probVecElement{\rvTwo}{\vectorIndex'}
			+ \smallNumberrappor	
			\bigr)
			\\ = \hskip 1.96mm &
			-
			\privacyParameterrappor^2
			\probVecElement{\rvTwo}{\vectorIndex}
			\probVecElement{\rvTwo}{\vectorIndex'},
		\end{align*}
		where step $(a)$ uses the fact that each entry is 0 or 1, and step $(b)$ is from Fact 1 of~\citet{acharya_test_2019}, which holds only when $m \neq m'.$
	\end{proof}
	We next give upper bounds for the sum of  entrywise variances and covariances of an \texttt{RAPPOR} $\privacyParameter$-LDP view.
	\begin{lemma}\label{rappor:inequalities}
		For any $\vectorIndex, \vectorIndex' \in [\alphabetSize]$ such that $\vectorIndex' \neq \vectorIndex'$, the following inequalities hold:
		\begin{align*}
			& \quad \ \sum_{\vectorIndex=1}^\alphabetSize
			\mathrm{Var}
			( 
			\private{\rvTwo}_{1 \vectorIndex}
			) 
			\bigl( 
			\probVecElement{\rvTwo}{\vectorIndex}
			- 
			\probVecElement{\rvThree}{\vectorIndex}
			\bigr)^2
			\leq
			\privacyParameterrappor
			\normSqMultinomMax^{1/2}
			\|\probVec_{\rvTwo} - \probVec_{\rvThree}\|_2^2
			+
			\privacyParameterrappor
			\smallNumberrappor
			\|\probVec_{\rvTwo} - \probVec_{\rvThree}\|_2^2,
			\numberthis
			\label{inequality:var_diff_squared}
			\\& \quad \ 
			\sum_{\vectorIndex = 1}^\alphabetSize
			\mathrm{Var}(\private{Y}_{1 \vectorIndex})^2
			\leq
			2
			\privacyParameterrappor^2
			\normSqMultinomMax
			+
			2
			\smallNumberrappor^2
			\alphabetSize,~\text{and}
			\numberthis \label{inequality:var_squared}
			\\&
			\sum_{1 \leq \vectorIndex \neq \vectorIndex' \leq \alphabetSize}
			\mathrm{Cov}(\private{Y}_{1 \vectorIndex}, \private{Y}_{1 \vectorIndex'})^2
			\leq
			\privacyParameterrappor^2
			\normSqMultinomMax,
			\numberthis
			\label{inequality:cov_squared}
		\end{align*}
		where
		$\normSqMultinomMax
		=
		\max \{
		\|\probVec_{\rvTwo}\|_2^2,
		\|\probVec_{\rvThree}\|_2^2\} 
		$.
		The same type of inequalities also hold for $\private{\rvThree}_{1 \vectorIndex}$ and $\private{\rvThree}_{1 \vectorIndex'}$.
	\end{lemma}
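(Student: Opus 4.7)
All three inequalities should follow by plugging in the explicit formulas for $\mathrm{Var}(\tilde{Y}_{1m})$ and $\mathrm{Cov}(\tilde{Y}_{1m},\tilde{Y}_{1m'})$ from Lemma~\ref{lemma:rappor_var} and then exploiting two structural facts: (i) $\privacyParameterrappor p_Y(m)+\smallNumberrappor$ is a valid probability, hence at most $1$; and (ii) for any probability vector, $\|p_Y\|_\infty \leq \|p_Y\|_2 \leq b^{1/2}$ and $\|p_Y\|_2^2 \leq 1$. The same derivations apply verbatim to $\tilde{Z}_{1m}$ by symmetry, so I will only write the $\tilde{Y}$ version.

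Inequality~\eqref{inequality:cov_squared} is the easiest: by Lemma~\ref{lemma:rappor_var}, $\mathrm{Cov}(\tilde{Y}_{1m},\tilde{Y}_{1m'})^2=\privacyParameterrappor^4 p_Y(m)^2 p_Y(m')^2$, so summing gives
$\sum_{m\neq m'}\mathrm{Cov}(\tilde{Y}_{1m},\tilde{Y}_{1m'})^2\leq \privacyParameterrappor^4\|p_Y\|_2^4\leq \privacyParameterrappor^4 b^2$,
and because $\|p_Y\|_2^2\leq 1$ implies $b\leq 1$ and $\privacyParameterrappor\leq 1$, this is in turn bounded by $\privacyParameterrappor^2 b$. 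For inequality~\eqref{inequality:var_squared}, I would upper-bound the variance by the first factor alone, $\mathrm{Var}(\tilde{Y}_{1m})\leq \privacyParameterrappor p_Y(m)+\smallNumberrappor$, square using $(a+c)^2\leq 2a^2+2c^2$, and sum to obtain $2\privacyParameterrappor^2\|p_Y\|_2^2+2\smallNumberrappor^2 k\leq 2\privacyParameterrappor^2 b+2\smallNumberrappor^2 k$.

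Inequality~\eqref{inequality:var_diff_squared} is the main obstacle because one must extract the factor $b^{1/2}$ (not $b$) on the dominant term while also controlling the $\smallNumberrappor$-dependent residual. My plan is to again start from $\mathrm{Var}(\tilde{Y}_{1m})\leq \privacyParameterrappor p_Y(m)+\smallNumberrappor$, split the sum into a $\privacyParameterrappor$-weighted piece and a $\smallNumberrappor$-weighted piece, and handle the former by the elementary inequality
$\sum_m p_Y(m)(p_Y(m)-p_Z(m))^2\leq \|p_Y\|_\infty\|p_Y-p_Z\|_2^2\leq b^{1/2}\|p_Y-p_Z\|_2^2$,
which uses $\|p_Y\|_\infty \leq \|p_Y\|_2 \leq b^{1/2}$. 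The residual $\smallNumberrappor \|p_Y-p_Z\|_2^2$ can then be absorbed into $\privacyParameterrappor \smallNumberrappor\|p_Y-p_Z\|_2^2$ provided one uses the sharper identity from Lemma~\ref{lemma:rappor_var}, namely $\mathrm{Var}(\tilde{Y}_{1m})=\privacyParameterrappor^2 p_Y(m)(1-p_Y(m))+\privacyParameterrappor\smallNumberrappor+\smallNumberrappor^2$ (which follows from the symmetric decomposition $1-p_m=\privacyParameterrappor(1-p_Y(m))+\smallNumberrappor$); bounding $p_Y(m)(1-p_Y(m))\leq p_Y(m)$ and $\smallNumberrappor+\smallNumberrappor^2/\privacyParameterrappor$-type factors carefully then gives exactly the stated bound. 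The hard part is this last bookkeeping step, since one must balance $\privacyParameterrappor$ versus $\smallNumberrappor$ without introducing an extra $\smallNumberrappor^2\|p_Y-p_Z\|_2^2$ term; the symmetric identity for $1-p_m$ is the key tool that makes this work out.
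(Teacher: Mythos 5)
Your treatment of \eqref{inequality:var_squared} and \eqref{inequality:cov_squared} is correct and essentially identical to the paper's: drop the factor $1-(\privacyParameterrappor\probVecElement{\rvTwo}{\vectorIndex}+\smallNumberrappor)\le 1$, apply $(a+c)^2\le 2a^2+2c^2$, and use $\privacyParameterrappor\le 1$, $\|\probVec_\rvTwo\|_2^2\le 1$. For the main term of \eqref{inequality:var_diff_squared}, your bound $\sum_\vectorIndex \probVecElement{\rvTwo}{\vectorIndex}(\probVecElement{\rvTwo}{\vectorIndex}-\probVecElement{\rvThree}{\vectorIndex})^2\le \|\probVec_\rvTwo\|_\infty\|\probVec_\rvTwo-\probVec_\rvThree\|_2^2\le \normSqMultinomMax^{1/2}\|\probVec_\rvTwo-\probVec_\rvThree\|_2^2$ is a valid alternative to the paper's route (Cauchy--Schwarz giving $\|\probVec_\rvTwo\|_2\,\|\probVec_\rvTwo-\probVec_\rvThree\|_4^2$ followed by $\ell_4\le\ell_2$); both yield the same factor $\normSqMultinomMax^{1/2}$.

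The gap is in your plan for the residual of \eqref{inequality:var_diff_squared}. Your ``sharper identity'' is correct: since $\privacyParameterrappor+2\smallNumberrappor=1$, one has $1-(\privacyParameterrappor \probVecElement{\rvTwo}{\vectorIndex}+\smallNumberrappor)=\privacyParameterrappor(1-\probVecElement{\rvTwo}{\vectorIndex})+\smallNumberrappor$ and hence
$\mathrm{Var}(\private{\rvTwo}_{1\vectorIndex})=\privacyParameterrappor^2\probVecElement{\rvTwo}{\vectorIndex}(1-\probVecElement{\rvTwo}{\vectorIndex})+\privacyParameterrappor\smallNumberrappor+\smallNumberrappor^2$. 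But this exact identity shows that no bookkeeping can absorb the residual into $\privacyParameterrappor\smallNumberrappor\|\probVec_\rvTwo-\probVec_\rvThree\|_2^2$: the constant part of the variance is $\privacyParameterrappor\smallNumberrappor+\smallNumberrappor^2$, and $\smallNumberrappor^2>\privacyParameterrappor\smallNumberrappor$ whenever $e^{\privacyParameter/2}<2$, i.e.\ in the entire high-privacy regime. Indeed, the displayed inequality \eqref{inequality:var_diff_squared} as printed is false there: with $k=2$, $\probVec_\rvTwo=(1,0)$, $\probVec_\rvThree=(0,1)$, the left side equals $2\smallNumberrappor(\privacyParameterrappor+\smallNumberrappor)\to 1/2$ as $\privacyParameter\to 0$, while the right side is $2\privacyParameterrappor+2\privacyParameterrappor\smallNumberrappor\to 0$.

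What is actually provable --- and what the paper's own derivation establishes and then uses downstream --- is the bound with $\smallNumberrappor\|\probVec_\rvTwo-\probVec_\rvThree\|_2^2$ in place of $\privacyParameterrappor\smallNumberrappor\|\probVec_\rvTwo-\probVec_\rvThree\|_2^2$ (the extra $\privacyParameterrappor$ in the lemma statement is a typo). For that version your argument closes immediately: the crude bound $\mathrm{Var}(\private{\rvTwo}_{1\vectorIndex})\le\privacyParameterrappor\probVecElement{\rvTwo}{\vectorIndex}+\smallNumberrappor$ plus your sup-norm step gives exactly $\privacyParameterrappor\normSqMultinomMax^{1/2}\|\probVec_\rvTwo-\probVec_\rvThree\|_2^2+\smallNumberrappor\|\probVec_\rvTwo-\probVec_\rvThree\|_2^2$, and the symmetric identity is not needed at all. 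You should target that corrected statement rather than attempt the unprovable one.
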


    \vskip 1em
    
	\begin{proof}
		For~\eqref{inequality:var_diff_squared}, we have 
		\begin{align*}
			\sum_{\vectorIndex=1}^\alphabetSize
			\mathrm{Var}
			( 
			\private{\rvTwo}_{1 \vectorIndex}
			) 
			\bigl( 
			\probVecElement{\rvTwo}{\vectorIndex}
			- 
			\probVecElement{\rvThree}{\vectorIndex}
			\bigr)^2
			&\stackrel{(a)}{=}
			\sum_{\vectorIndex=1}^\alphabetSize
			\bigl(
			\privacyParameterrappor 
			\probVecElement{\rvTwo}{\vectorIndex}
			+ \smallNumberrappor
			\bigr)
			\bigl\{
			1
			-
			\bigl(
			\privacyParameterrappor 
			\probVecElement{\rvTwo}{\vectorIndex}
			+ \smallNumberrappor
			\bigr)
			\bigr\}
			\bigl( 
			\probVecElement{\rvTwo}{\vectorIndex}
			- 
			\probVecElement{\rvThree}{\vectorIndex}
			\bigr)^2 
			\\&\stackrel{(b)}{\leq}
			\sum_{\vectorIndex=1}^\alphabetSize
			\bigl(
			\privacyParameterrappor 
			\probVecElement{\rvTwo}{\vectorIndex}
			+ \smallNumberrappor
			\bigr)
			\bigl( 
			\probVecElement{\rvTwo}{\vectorIndex}
			- 
			\probVecElement{\rvThree}{\vectorIndex}
			\bigr)^2 
			\\&\stackrel{(c)}{\leq}
			\privacyParameterrappor
			\normSqMultinomMax^{1/2}
			\|\probVec_{\rvTwo} - \probVec_{\rvThree}\|_4^2
			+
			\smallNumberrappor
			\|\probVec_{\rvTwo} - \probVec_{\rvThree}\|_2^2
			\\&\stackrel{(d)}{\leq}
			\privacyParameterrappor
			\normSqMultinomMax^{1/2}
			\|\probVec_{\rvTwo} - \probVec_{\rvThree}\|_2^2
			+
			\smallNumberrappor
			\|\probVec_{\rvTwo} - \probVec_{\rvThree}\|_2^2,
		\end{align*}
		where step $(a)$ uses Lemma~\ref{lemma:rappor_var},
		step $(b)$ uses the fact that
		\( 0 < \privacyParameterrappor \probVecElement{\rvTwo}{\vectorIndex} + \smallNumberrappor < 1 \) for any $\vectorIndex \in [\alphabetSize]$,
		step $(c)$ uses the Cauchy--Schwarz inequality,
		and step $(d)$ uses the monotonicity of the $\ell_p$ norm, specifically $\ell_4 \leq \ell_2$.

		Next, for~\eqref{inequality:var_squared}, we have 
		\begin{align*}
			\sum_{\vectorIndex = 1}^\alphabetSize
			\mathrm{Var}(\private{Y}_{1 \vectorIndex})^2
			\leq ~&
			\sum_{\vectorIndex = 1}^\alphabetSize
			\bigl(
			\privacyParameterrappor 
			\probVecElement{\rvTwo}{\vectorIndex}
			+ \smallNumberrappor
			\bigr)^2
			\bigl\{
			1
			-
			\bigl(
			\privacyParameterrappor 
			\probVecElement{\rvTwo}{\vectorIndex}
			+ \smallNumberrappor
			\bigr)
			\bigr\}^2	
			\\\stackrel{(a)}{\leq} \hskip 1.57mm &
			\sum_{\vectorIndex = 1}^\alphabetSize
			\bigl(
			\privacyParameterrappor 
			\probVecElement{\rvTwo}{\vectorIndex}
			+ \smallNumberrappor
			\bigr)^2
			\\\stackrel{(b)}{\leq} \hskip 1.57mm &
			\sum_{\vectorIndex = 1}^\alphabetSize
			\bigl(
			2\privacyParameterrappor 
			\probVecElement{\rvTwo}{\vectorIndex}^2
			+ 2\smallNumberrappor^2
			\bigr)
			\\= \hskip 1.8mm &
			2
			\privacyParameterrappor^2
			\|\probVec_\rvTwo\|_2^2
			+
			2
			\smallNumberrappor^2
			\alphabetSize
			\\ \leq \hskip 1.75mm &
			2
			\privacyParameterrappor^2
			\normSqMultinomMax
			+
			2
			\smallNumberrappor^2
			\alphabetSize
			,
		\end{align*}
		where step $(a)$ uses the fact that
		\( 0 < \privacyParameterrappor \probVecElement{\rvTwo}{\vectorIndex} + \smallNumberrappor < 1 \) for any $\vectorIndex \in [\alphabetSize]$,
		and step $(b)$ uses the inequality $(a+b)^2 \leq 2a^2 + 2b^2.$
		Finally, for~\eqref{inequality:cov_squared}, we have
		\begin{align*}
			\sum_{1 \leq \vectorIndex \neq \vectorIndex' \leq \alphabetSize}
			\mathrm{Cov}(\private{Y}_{1 \vectorIndex}, \private{Y}_{1 \vectorIndex'})^2
			\stackrel{(a)}{=} \hskip 1.57mm &
			\sum_{1 \leq \vectorIndex \neq \vectorIndex' \leq \alphabetSize}
			\privacyParameterrappor^4
			\;
			\probVecElement{\rvTwo}{\vectorIndex}^2
			\probVecElement{\rvTwo}{\vectorIndex'}^2
			\\ \stackrel{(b)}{\leq} \hskip 1.57mm &
			\privacyParameterrappor^2
			\sum_{\vectorIndex=1}^\alphabetSize
			\probVecElement{\rvTwo}{\vectorIndex}^2
			\sum_{\vectorIndex'=1}^\alphabetSize
			\probVecElement{\rvTwo}{\vectorIndex'}^2
			\\ = \hskip 1.77mm &
			\privacyParameterrappor^2 
			\|\probVec_\rvTwo\|_2^4
			\\ \stackrel{(c)}{\leq} \hskip 1.36mm &
			\privacyParameterrappor^2
			\|\probVec_\rvTwo\|_2^2
			\\ \leq \hskip 1.72mm&
			\privacyParameterrappor^2
			\normSqMultinomMax,
		\end{align*}
		where
		step $(a)$ uses Lemma~\ref{lemma:rappor_var},
		step $(b)$ uses the fact that $0 < \privacyParameterrappor <1$ for any $\privacyParameter>0$,
		and
		step $(c)$ uses the fact that $\|\probVec_\rvTwo\|_2^2 \leq 1$.
	\end{proof}
	\subsection{Construction of Multivariate Haar Wavelet Basis}\label{appendix:basis}
	We  outline the construction of multivariate Haar wavelets, following \citet{Gine2015Besov}, Section 4.3.6, and \citet{Autin2010Wavlet}, Section 2.
	For  $f:[0,1] \to \mathbb{R}$ and integers $u, v$, define the re-scaled and shifted version as
	$f_{u,v}(x) := 2^{u/2}f(2^u x - v)$.
	Define the univariate Haar scaling and wavelet functions from $[0,1]$ to $\mathbb{R}$ as
	\begin{align*}
		\wavFatherFunc(x)
		:=
		\mathds{1}(0 \leq x < 1)
		\quad
		\text{and}
		\quad
		\wavMotherFunc(x)
		:=
		\mathds{1}(0 \leq x < 1/2) - \mathds{1}(1/2\leq x < 1),
	\end{align*}
	respectively.
	A multivariate Haar wavelet basis is indexed by its prime resolution level $\primResLev \in \mathbb{N}_0$, with each element characterized by up to three (multi) indices that range over:
	\begin{enumerate}
		\item Re-scaling: $\mathbb{N} \setminus [\primResLev-1]$,
		\item Shifting: $\Lambda(j) := \{0, 1, \ldots, (2^j - 1)\}^\dimDensity$, for an  integer $j \geq \primResLev$,
		\item On-off: $\mathcal{I} := \{0, 1\}^\dimDensity \setminus \mathbf{0}$,
	\end{enumerate}
	respectively.
	For any $\wavFatherIndex = (k_1, \ldots, k_\dimDensity) \in \Lambda(\primResLev)$, let $\wavFatherFunc_{\primResLev, \wavFatherIndex} : [0,1]^\dimDensity \to \mathbb{R}$ be a tensor product of the re-scaled and shifted $\wavFatherFunc$'s, evaluated at $\mathbf{x} = (x_1, \ldots, x_\dimDensity) \in [0,1]^\dimDensity$:
	\begin{equation*}
		\wavFatherFunc_{\primResLev, \wavFatherIndex}(\vectorize{x})
		:=
		\prod_{\dimensionIndex = 1}^{\dimDensity}
		\wavFatherFunc_{\primResLev, \wavFatherUnivIndex_\dimensionIndex}(x_\dimensionIndex).
	\end{equation*}
	For a resolution level $\resLev \geq \primResLev$, and for any $\wavMotherIndex = (\ell_1, \ldots, \ell_\dimDensity) \in \Lambda(\resLev)$,
	let $
	\wavMotherFunc_{\resLev, \wavMotherIndex}^{\wavMotherBooleanIndex}
	:
	[0,1]^\dimDensity \to \mathbb{R}
	$
	be a mixed tensor product of re-scaled and shifted $\wavFatherFunc$'s and $\wavMotherFunc$'s, evaluated at $\mathbf{x} = (x_1, \ldots, x_\dimDensity) \in [0,1]^\dimDensity$:
	\begin{equation*}
		\wavMotherFunc_{\resLev, \wavMotherIndex}^{\wavMotherBooleanIndex}
		(\vectorize{x})
		:=
		\prod_{{\dimensionIndex}=1}^\dimDensity
		\bigl\{
		\wavFatherFunc_{\resLev, \wavMotherUnivIndex_\dimensionIndex}
		(x_\dimensionIndex)
		\bigr\}^{1-\wavMotherBooleanUnivIndex_\dimensionIndex}
		\bigl\{
		\wavMotherFunc_{\resLev, \wavMotherUnivIndex_\dimensionIndex}
		(x_\dimensionIndex)
		\bigr\}^{\wavMotherBooleanUnivIndex_{\dimensionIndex}}.
	\end{equation*}
	The Haar multivariate wavelet basis at prime resolution level $\primResLev$ is defined as 
	\begin{equation}\label{appendix:def:haar_daughter}
		\Phi_{\primResLev}
		\cup
		\bigl(
		\bigcup_{\resLev \geq \primResLev}
		\Psi_{\resLev} 
		\bigr),
		\text{ where }\Phi_{\primResLev}
		:=
		\{\wavFatherFunc_{\primResLev, \wavFatherIndex}\}_{\wavFatherIndex \in \Lambda(\primResLev)}
		\text{ and }
		\Psi_{\resLev}
		:=
		\{
		\wavMotherFunc_{\resLev, \wavMotherIndex}^{\wavMotherBooleanIndex}
		\}_{
			\wavMotherIndex \in \Lambda(\resLev),
			\wavMotherBooleanIndex \in \mathcal{I}
		}.
	\end{equation}

	\subsection{Density Discretization Error Analysis}\label{appendix:disc_error}
	This section analyzes the error arising from comparing discretized multinomial probability vectors instead of the original multivariate densities.
	Given the number of bins $\binNum$, let ${B_1, \dots, B_{\binNum^{\dimDensity}}}$ enumerate  $\dimDensity$-dimensional hypercubes with side length $1/\binNum$. For a density $f_{\vectorize{Y}}$, its step function approximation is defined as:
	\begin{equation}\label{step_function_approx}
		\hat{f}_{\vectorize{Y}}(\vectorize{y})
		:=
		\sum_{ \vectorIndex \in [\binNum^\dimDensity] }
		\mathds{1}(\mathbf{y} \in B_\vectorIndex)
		\binNum^\dimDensity
		\int_{
			B_\vectorIndex
		}
		f_{\vectorize{Y}}(\vectorize{t})\;
		d\vectorize{t}.
	\end{equation}
	Similarly, define $\hat{f}_{\vectorize{Z}}$ from $f_{\vectorize{Z}}$. 
	Then it holds that $\| \probVec_{\vectorize{\rvTwo}} - \probVec_{\vectorize{\rvThree}}\|_2^2 = \binNum^{-\dimDensity}
	\vert\kern-0.25ex
	\vert\kern-0.25ex
	\vert
	\hat{f}_{\vectorize{Y}} - \hat{f}_{\vectorize{Z}}
	\vert\kern-0.25ex
	\vert\kern-0.25ex
	\vert_{\EllTwo}
	^2$.
	We analyze the differenece between 
	$\vert\kern-0.25ex
	\vert\kern-0.25ex
	\vert
	\hat{f}_{\vectorize{Y}} - \hat{f}_{\vectorize{Z}}
	\vert\kern-0.25ex
	\vert\kern-0.25ex
	\vert_{\EllTwo}$
	and
	$\vert\kern-0.25ex
	\vert\kern-0.25ex
	\vert
	f_{\vectorize{Y}} - f_{\vectorize{Z}}
	\vert\kern-0.25ex
	\vert\kern-0.25ex
	\vert_{\EllTwo}$
	when $(f_{\vectorize{Y}} - f_{\vectorize{Z}})$ lies in $\holderBall$ or $\besovBall{}{\infty}$.
	For both cases, the discretization error scales as $\binNum^{-\smoothness}$, where we recall that $\smoothness$ is the smoothness parameter. For the H\"{o}lder density case, the analysis is a simple application of  Lemma 7.2 of~\citet{Arias-Castro2018RememberDimension}, rephrased below:
	\begin{lemma}\label{lemma:arias}
		For a function $h \in \holderBall$, let $\hat{h}$ be its step function approximation as in~\eqref{step_function_approx}, with $\kappa$ bins.
		Then there exist positive constants $C_1$ and $C_2$ depending only on $\dimDensity, \smoothness$ and  $\ballRadius$, but not on $h$, such that
		\begin{align*}
			\vert\kern-0.25ex
			\vert\kern-0.25ex
			\vert 
			\hat{h}_\binNum
			\vert\kern-0.25ex
			\vert\kern-0.25ex
			\vert_{\EllTwo}
			\geq
			C_1 \vertiii{h}_{\EllTwo}-C_2 \binNum^{-s}.
		\end{align*}
	\end{lemma}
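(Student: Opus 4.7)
The plan is to derive the lemma by combining a Pythagorean decomposition of $h$ with respect to its bin-averaged version $\hat{h}_\binNum$ with a Hölder-class bound on the discretization error, following Lemma 7.2 of \citet{Arias-Castro2018RememberDimension}. First I would observe that $\hat{h}_\binNum$, as defined in~\eqref{step_function_approx}, is exactly the $\mathbb{L}_2(\domainTs)$-orthogonal projection of $h$ onto the $\binNum^\dimDensity$-dimensional subspace of functions that are constant on each bin $B_\vectorIndex$: on each bin, the value $\binNum^\dimDensity \int_{B_\vectorIndex} h(\mathbf{t})\,d\mathbf{t}$ minimizes the $\mathbb{L}_2$ distance among constants. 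Consequently $\langle \hat{h}_\binNum,\, h - \hat{h}_\binNum\rangle_{\mathbb{L}_2} = 0$, and the Pythagorean identity yields
\[
\vertiii{\hat{h}_\binNum}_{\mathbb{L}_2}^{2} \;=\; \vertiii{h}_{\mathbb{L}_2}^{2} \;-\; \vertiii{h - \hat{h}_\binNum}_{\mathbb{L}_2}^{2}.
\]

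Next I would bound the discretization error $\vertiii{h - \hat{h}_\binNum}_{\mathbb{L}_2}$ using the Hölder hypothesis. For $\mathbf{y} \in B_\vectorIndex$ I would write
\[
\hat{h}_\binNum(\mathbf{y}) - h(\mathbf{y}) \;=\; \binNum^\dimDensity \int_{B_\vectorIndex} \bigl[h(\mathbf{t}) - h(\mathbf{y})\bigr]\, d\mathbf{t},
\]
and control the inner increment via Definition~\ref{def:holder_ball}. Because each bin $B_\vectorIndex$ has diameter of order $\sqrt{\dimDensity}/\binNum$, a Taylor expansion of $h$ around the bin midpoint $\mathbf{c}_\vectorIndex$ up to order $\lfloor \smoothness \rfloor$, combined with the Hölder continuity of $h^{(\lfloor \smoothness \rfloor)}$, delivers a pointwise estimate of the form $|\hat{h}_\binNum(\mathbf{y}) - h(\mathbf{y})| \leq C(\dimDensity, \smoothness, \ballRadius)\, \binNum^{-\smoothness}$. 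Integrating this bound over $\domainTs$ (of unit Lebesgue measure) yields
\[
\vertiii{h - \hat{h}_\binNum}_{\mathbb{L}_2} \;\leq\; C_2\, \binNum^{-\smoothness}
\]
for an explicit constant $C_2 = C_2(\dimDensity, \smoothness, \ballRadius)$.

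Finally, combining the two displays and using the elementary inequality $\sqrt{a^2 - b^2} \geq a - b$ for $a \geq b \geq 0$ gives
\[
\vertiii{\hat{h}_\binNum}_{\mathbb{L}_2} \;\geq\; \vertiii{h}_{\mathbb{L}_2} - C_2\, \binNum^{-\smoothness}
\]
in the regime $\vertiii{h}_{\mathbb{L}_2} \geq C_2 \binNum^{-\smoothness}$; in the complementary regime the claimed inequality is vacuous since the right-hand side is nonpositive. This yields the lemma with $C_1 = 1$ (and the statement with any $C_1 \in (0,1]$ follows by enlarging $C_2$).

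The main obstacle lies in the pointwise bound of the second step. A naive use of the Hölder condition only yields $O(\binNum^{-\min(\smoothness,1)})$, which falls short of the target $\binNum^{-\smoothness}$ once $\smoothness > 1$. The correct argument requires expanding both $h(\mathbf{t})$ and $h(\mathbf{y})$ around the bin midpoint and exploiting the symmetry of $B_\vectorIndex$ about $\mathbf{c}_\vectorIndex$ to annihilate the contributions of odd multi-indices upon bin averaging, so that only the remainder term, of Hölder order $\smoothness$, survives. Verifying this cancellation at every order up to $\lfloor \smoothness \rfloor$ while keeping the dependence of the constants on $\dimDensity$, $\smoothness$, and $\ballRadius$ explicit is the technical heart of the argument.
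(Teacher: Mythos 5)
Your core argument---identify $\hat{h}_\binNum$ as the $\mathbb{L}_2$-orthogonal projection of $h$ onto piecewise constants, invoke the Pythagorean identity $\vertiii{\hat{h}_\binNum}_{\mathbb{L}_2}^{2} = \vertiii{h}_{\mathbb{L}_2}^{2} - \vertiii{h - \hat{h}_\binNum}_{\mathbb{L}_2}^{2}$, bound the residual via the H\"older property, and finish with $\sqrt{a^2-b^2}\geq a-b$---is exactly the standard route and is correct and complete for $\smoothness \in (0,1]$. In that regime $\lfloor \smoothness\rfloor = 0$, so the H\"older condition applies directly to $h$ and yields the pointwise bound $|\hat{h}_\binNum(\vectorize{y}) - h(\vectorize{y})| \leq \binNum^{\dimDensity}\int_{B_\vectorIndex} R\|\vectorize{y}-\vectorize{t}\|^\smoothness\,d\vectorize{t} \leq R\,\dimDensity^{\smoothness/2}\binNum^{-\smoothness}$, which integrates to the required $\mathbb{L}_2$ bound and gives the lemma with $C_1 = 1$, $C_2 = R\dimDensity^{\smoothness/2}$. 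This is the range in which piecewise-constant binning saturates the H\"older order and is the setting of the Arias-Castro lemma the paper cites.

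Your final paragraph, however, contains a genuine error. The claimed symmetry cancellation for $\smoothness > 1$ does not occur. Writing $h(\vectorize{y}) - \hat{h}_\binNum(\vectorize{y}) = \binNum^{\dimDensity}\int_{B_\vectorIndex}\bigl[h(\vectorize{y}) - h(\vectorize{t})\bigr]\,d\vectorize{t}$ and Taylor-expanding both arguments about the bin center $\vectorize{c}_\vectorIndex$, the symmetry of $B_\vectorIndex$ kills the odd moments of the $\vectorize{t}$-integral, but the odd-order terms of the expansion of $h(\vectorize{y})$ itself are evaluated at the fixed point $\vectorize{y}$ and are not averaged away. What survives is
\[
h(\vectorize{y}) - \hat{h}_\binNum(\vectorize{y}) \;=\; \nabla h(\vectorize{c}_\vectorIndex)\cdot(\vectorize{y} - \vectorize{c}_\vectorIndex) \;+\; O\bigl(\binNum^{-\min(\smoothness,2)}\bigr),
\]
and the linear term is generically of order $|\nabla h(\vectorize{c}_\vectorIndex)|/\binNum$. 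Both the pointwise and the $\mathbb{L}_2$ discretization errors therefore scale like $\binNum^{-1}$, not $\binNum^{-\smoothness}$ (take $h$ a fixed low-frequency sinusoid: $\vertiii{h-\hat{h}_\binNum}_{\mathbb{L}_2}\asymp \vertiii{\nabla h}_{\mathbb{L}_2}/\binNum$). Hence the bound $\vertiii{h-\hat{h}_\binNum}_{\mathbb{L}_2}\leq C_2\binNum^{-\smoothness}$, and with it the constant $C_1=1$, cannot hold once $\smoothness > 1$; any extension to that regime would require a multiplicative slack $C_1 < 1$ from a fundamentally different decomposition, not from a cancellation argument. The clean fix is to restrict the proof to $\smoothness\leq 1$, which is both the range in which your argument is watertight and the range the cited reference covers.
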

	\noindent
	Substituting
	$(f_{\vectorize{Y}} - f_{\vectorize{Z}}) \in \holderBall $ into $h$ in Lemma~\ref{lemma:arias}, the discretization error is characterized as follows:
	\begin{align*}
		\|\probVec_{\vectorize{\rvTwo}}- \probVec_{\vectorize{\rvThree}}\|_2
		\geq
		\binNum^{-\dimDensity/2}
		\bigl(
		C_1
		\vert\kern-0.25ex
		\vert\kern-0.25ex
		\vert
		f_{\vectorize{Y}} - f_{\vectorize{Z}}
		\vert\kern-0.25ex
		\vert\kern-0.25ex
		\vert_{\EllTwo}
		-
		C_2 \binNum^{-\smoothness}
		\bigr).
		\numberthis\label{equation:disc_error_holder_utilize}
	\end{align*}
	\noindent
	For the Besov density case, we derive a similar lemma as follows:
	\begin{lemma}\label{lemma:besov_discretization_error}
		Define $\besovBall{}{\infty}$ using Haar multivariate wavelet basis at prime resolution level $\primResLev$ constructed in Appendix~\ref{appendix:basis}.
		For a function $(f_{\vectorize{Y}} - f_{\vectorize{Z}}) \in \besovBall{}{\infty}$,   the following error bound holds:
		\begin{equation}\label{equation:disc_error_besov_utilize}
			\|\probVec_{\vectorize{\rvTwo}}- \probVec_{\vectorize{\rvThree}}\|_2
			\geq
			\binNum^{-\dimDensity/2}
			\bigl(
			\vert\kern-0.25ex
			\vert\kern-0.25ex
			\vert
			f_{\vectorize{Y}} - f_{\vectorize{Z}}
			\vert\kern-0.25ex
			\vert\kern-0.25ex
			\vert_{\EllTwo}
			-
			\ballRadius \binNum^{-\smoothness}
			\bigr),
		\end{equation}
		where $\kappa = 2^\primResLev$ and $\probVec_{\vectorize{\rvTwo}}$ and $\probVec_{\vectorize{\rvThree}}$ are binned with side length $1/\kappa$.
	\end{lemma}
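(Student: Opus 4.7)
\textbf{Proof plan for Lemma~\ref{lemma:besov_discretization_error}.} The strategy mirrors Lemma~\ref{lemma:arias} but replaces the Taylor-based smoothness estimate by a wavelet-tail estimate that exploits the precise alignment between equal-sized binning and the Haar basis. From~\eqref{step_function_approx} the step function $\hat{f}_{\vectorize{Y}}$ is constant on each bin $B_\vectorIndex$ with value $\binNum^\dimDensity \int_{B_\vectorIndex} f_{\vectorize{Y}}(\vectorize{t})\, d\vectorize{t}$, and likewise for $\hat{f}_{\vectorize{Z}}$. Integrating the squared difference over $\domainTs$ and using $|B_\vectorIndex| = \binNum^{-\dimDensity}$ yields the identity
\[
\vertiii{\hat{f}_{\vectorize{Y}} - \hat{f}_{\vectorize{Z}}}_{\EllTwo}^2 \;=\; \binNum^\dimDensity \, \| \probVec_{\vectorize{\rvTwo}} - \probVec_{\vectorize{\rvThree}} \|_2^2.
\]
Combined with a reverse triangle inequality, this reduces the proof to showing that $h := f_{\vectorize{Y}} - f_{\vectorize{Z}}$ satisfies $\vertiii{h - \hat{h}}_{\EllTwo} \leq \ballRadius \binNum^{-\smoothness}$, where $\hat{h} := \hat{f}_{\vectorize{Y}} - \hat{f}_{\vectorize{Z}}$.

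The key observation is that, since $\binNum = 2^{\primResLev}$ and the bins $\{B_\vectorIndex\}$ are exactly the supports of the Haar scaling functions $\wavFatherFunc_{\primResLev, \wavFatherIndex}$ constructed in Appendix~\ref{appendix:basis}, the step-function approximation $\hat{h}$ coincides with the $\EllTwo$-orthogonal projection of $h$ onto the span of $\Phi_{\primResLev}$. The Haar multiresolution identity then implies that $h - \hat{h}$ lies in the closed span of the mother wavelets at resolution levels $\resLev \geq \primResLev$, giving the expansion
\[
h - \hat{h} \;=\; \sum_{\resLev \geq \primResLev}\ \sum_{\wavMotherIndex, \wavMotherBooleanIndex} \wavCoefMother{\resLev}{\wavMotherIndex}{\wavMotherBooleanIndex}(h)\, \wavMotherFunc_{\resLev, \wavMotherIndex}^{\wavMotherBooleanIndex}.
\]
By Parseval's identity, $\vertiii{h - \hat{h}}_{\EllTwo}^2$ equals the sum of squared coefficients on the right-hand side, and the hypothesis $h \in \besovBall{2}{\infty}$ gives $\sum_{\wavMotherIndex, \wavMotherBooleanIndex} |\wavCoefMother{\resLev}{\wavMotherIndex}{\wavMotherBooleanIndex}(h)|^2 \leq \ballRadius^2 \, 2^{-2\resLev \smoothness}$ at each level $\resLev$ by the definition of the $\|\cdot\|_{\smoothness, 2, \infty}$-seminorm. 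Summing the resulting geometric series in $\resLev \geq \primResLev$ yields $\vertiii{h - \hat{h}}_{\EllTwo} \leq \ballRadius (1-2^{-2\smoothness})^{-1/2} \binNum^{-\smoothness}$.

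Putting the two steps together with the reverse triangle inequality yields the stated bound, after absorbing the bounded factor $(1-2^{-2\smoothness})^{-1/2}$ into $\ballRadius$, which is harmless for any fixed $\smoothness > 0$. The main obstacle is the identification in the second paragraph: recognizing that the elementary equal-sized binning in~\eqref{step_function_approx} coincides exactly with the $\EllTwo$-orthogonal projection onto the Haar scaling subspace at level $\primResLev$. Once this identification is secured, Parseval plus a geometric sum close out the argument, with no further analytic work beyond what is already packaged into the Haar basis and the Besov seminorm.
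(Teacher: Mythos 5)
Your proof takes essentially the same route as the paper's: identify $\hat{h}$ (the step-function approximation of $h := f_{\vectorize{Y}} - f_{\vectorize{Z}}$) with the $\EllTwo$-orthogonal projection onto the span of $\Phi_{\primResLev}$, so that $\vertiii{\hat{h}}_{\EllTwo}^2 = \binNum^{\dimDensity}\|\probVec_{\vectorize{\rvTwo}}-\probVec_{\vectorize{\rvThree}}\|_2^2$ and $\vertiii{h-\hat{h}}_{\EllTwo}^2$ equals the tail sum of squared wavelet coefficients over resolutions $\resLev \geq \primResLev$, each level bounded by $\ballRadius^2 2^{-2\resLev\smoothness}$ via the Besov seminorm, and a geometric series closes the argument. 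The constant you flag is a real point: the sum is $\ballRadius^2\, 2^{-2\primResLev\smoothness}/(1-2^{-2\smoothness})$, so the error term you actually obtain is $\ballRadius(1-2^{-2\smoothness})^{-1/2}\binNum^{-\smoothness}$ rather than the lemma's $\ballRadius\binNum^{-\smoothness}$, and you cannot simply absorb the extra factor into $\ballRadius$ since $\ballRadius$ is the fixed Besov radius appearing in the hypothesis. The paper's own proof has exactly this gap --- it asserts $\ballRadius^2\,\frac{2^{-2\primResLev\smoothness}}{1-2^{-2\smoothness}} \leq \ballRadius^2\, 2^{-2\primResLev\smoothness}$, which goes the wrong way since $0 < 1-2^{-2\smoothness} < 1$ --- so your explicit caveat is the more careful version. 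Because the lemma is only invoked modulo an $\smoothness$-dependent constant in the upper-bound proof of Theorem~\ref{theorem:twosample_conti_rate}, the discrepancy is harmless there, but the lemma as stated really produces $C(\smoothness)\ballRadius\binNum^{-\smoothness}$ under the argument you and the paper both give.
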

    
    \vskip 1em
    
	\begin{proof}
		Let $\Phi_{\primResLev}
		\cup
		(
		\bigcup_{\resLev \geq \primResLev}
		\Psi_\resLev
		)$ denote the  Haar multivariate wavelet basis that defines $ \besovBall{2}{\infty}$.
		The analysis proceeds in two steps. First, we show that the sum of squared coefficients from projecting $(f_{\vectorize{Y}} - f_{\vectorize{Z}})$ onto span($\Phi_{\primResLev}$) equals the scaled $\ell_2$ distance between the probability vectors:
		\begin{align*}
			\sum_{\phi \in \Phi_{\primResLev}}
			\coef^2_\wavFatherFunc
			(f_{\vectorize{Y}} - f_{\vectorize{Z}})
			=~&
			\sum_{(k_1, \ldots k_\dimDensity) \in \Lambda(\primResLev)}
			\left(
			\int_{[0,1]^\dimDensity}
			\bigl(
			f_{\vectorize{Y}}(\vectorize{x}) - f_{\vectorize{Z}}(\vectorize{x})
			\bigr)
			\prod_{\dimensionIndex = 1}^{\dimDensity}
			\wavFatherFunc_{\primResLev, \wavFatherUnivIndex_\dimensionIndex}(x_\dimensionIndex)
			\;
			d\vectorize{x}
			\right)^2
			\\\stackrel{(a)}{=} \hskip 1.57mm &
			\binNum^\dimDensity
			\sum_{(k_1, \ldots k_\dimDensity) \in \Lambda(\primResLev)}
			\left(
			\int_{[0,1]^\dimDensity}
			\bigl(
			f_{\vectorize{Y}}(\vectorize{x}) - f_{\vectorize{Z}}(\vectorize{x})
			\bigr)
			\prod_{p = 1}^{\dimDensity}
			\mathds{1}
			\biggl(
			\frac{k_p}{\binNum}
			\leq
			x_p
			<
			\frac{k_p + 1}{\binNum}
			\biggr)
			\;
			d\vectorize{x}
			\right)^2
			\\=~&
			\binNum^\dimDensity
			\sum_{\vectorIndex \in [\binNum^\dimDensity]}
			\left(
			\int_{B_\vectorIndex}
			\bigl(
			f_{\vectorize{Y}}(\vectorize{x}) - f_{\vectorize{Z}}(\vectorize{x})
			\bigr)
			\;
			d\vectorize{x}
			\right)^2
			\\=~&
			\binNum^\dimDensity
			\|\probVec_{\rvTwo} - \probVec_{\rvThree}\|_2^2,
			\numberthis \label{eq:equiv_wavCoef_discProb_full}
		\end{align*}
		where step $(a)$ uses the definition $ \wavFatherFunc_{\primResLev, \wavFatherIndex}(\vectorize{x}) = \prod_{\dimensionIndex = 1}^{\dimDensity} \wavFatherFunc_{\primResLev, \wavFatherUnivIndex_\dimensionIndex}(x_\dimensionIndex)$, with $\wavFatherFunc: [0,1] \to \{0,1\}$ defined as $\wavFatherFunc(x) = \mathds{1}(0 \leq x < 1)$,
		and $\wavFatherFunc_{\primResLev, \wavFatherUnivIndex_\dimensionIndex}(x_\dimensionIndex) = 2^{\primResLev/2}f(2^\primResLev x_p - k_p)$ indicates its re-scaled and shifted version. See Appendix~\ref{appendix:basis} for details.
		Since ${\Phi}_{\primResLev} \cup \bigl( \bigcup_{\resLev \geq \primResLev} {\Psi}_\resLev \bigr)$ forms an orthonormal basis of $\LtwoSpace$, the approximation error is given by the sum of squared projection coefficients of $(f_{\vectorize{Y}} - f_{\vectorize{Z}})$ onto $\text{span}\bigl(\bigcup_{\resLev \geq \primResLev} \Psi_\resLev \bigr)$.
		Bounding this term is the second step of our analysis, which begins by noting that from Definition~\ref{def:besov_norm}, for any $j \in \mathbb{N}_0$, the sum of squared wavelet coefficients is bounded as
		\begin{align*}
			\sum_{\wavMotherFunc \in {\Psi}_{\resLev}}
			\wavGenericMotherCoef^2(
			f_{\vectorize{Y}}
			-
			f_{\vectorize{Z}}
			)
			\leq
			2^{ -2 \resLev \smoothness}
			R^2.
			\numberthis\label{equation:besov_bin_error_bound}
		\end{align*}
		Then the approximation error is bounded as follows:
		\begin{align*}
			\vert\kern-0.25ex
			\vert\kern-0.25ex
			\vert 
			f_{\vectorize{Y}} - f_{\vectorize{Z}}
			\vert\kern-0.25ex
			\vert\kern-0.25ex
			\vert_{\EllTwo}^2
			-
			\binNum^\dimDensity
			\|\probVec_{\rvTwo} - \probVec_{\rvThree}\|_2^2
			&=
			\sum_{\resLev = \primResLev}^\infty
			\sum_{\psi \in {\Psi}_{\resLev} }
			\wavGenericMotherCoef^2(
			f_{\vectorize{Y}}
			-
			f_{\vectorize{Z}}
			)
			\\ &
			\stackrel{(a)}{\leq}
			\sum_{\resLev = \primResLev}^\infty
			2^{-2\resLev \smoothness} R^2
			\\ &
			\stackrel{(b)}{=}
			R^2 
			\frac{2^{-2 \primResLev \smoothness}
			}{
				1 - 2^{-2\smoothness}
			}
			\leq
			R^2 2^{-2 \primResLev \smoothness}
			=
			R^2 \binNum^{-2\smoothness}
			,
		\end{align*}
		where step~$(a)$ uses~\eqref{equation:besov_bin_error_bound},
		and step~$(b)$ uses an infinite geometric series.
		Applying $\sqrt{x+y} \leq \sqrt{x} + \sqrt{y}$ to the inequality above, we get
		\begin{equation*}
			\vert\kern-0.25ex
			\vert\kern-0.25ex
			\vert 
			f_{\vectorize{Y}} - f_{\vectorize{Z}}
			\vert\kern-0.25ex
			\vert\kern-0.25ex
			\vert_{\EllTwo} 
			= 
			\binNum^{\dimDensity/2}
			\|\probVec_{\rvTwo} - \probVec_{\rvThree}\|_2 + R 2^{- \primResLev \smoothness}
			\binNum^{\dimDensity/2}
			\|\probVec_{\rvTwo} - \probVec_{\rvThree}\|_2 + R \binNum^{- \smoothness}.
		\end{equation*}
	\end{proof}
	\subsection{Gaussian Approximation of One-Sample U-statistic}
	The theorems presented here are used to establish a negative result for our statistic in~\eqref{def:statistic_elltwo} when applied with the generalized randomized response mechanism. First, Theorem~\ref{appendix:genrr_powerless:theorem_null} restates a result from \citet{kim_multinomial_2020}, which demonstrates the asymptotic normality of a one-sample U-statistic, which is a special case of our statistic in~\eqref{def:statistic_elltwo} with either $n_1 = \infty$ or $n_2 = \infty$ under the uniform null hypothesis.
	\begin{theorem}[{\citealp[Corollary 3.3]{kim_multinomial_2020}}]\label{appendix:genrr_powerless:theorem_null}
		Consider a multinomial goodness-of-fit test where the null hypothesis is a discrete uniform distribution with $\alphabetSize$ categories, whose probability vector is denoted as $\boldsymbol{\pi}_0$. Let $\{\mathbf{Y}_i \}_{i=1}^\sampleSize$ be a random sample represented in one-hot vector form drawn from $\boldsymbol{\pi}_0$. Define the U-statistic as:
		\begin{equation}\label{def:U_stat_kim_2020}
			U_I
			:=
			{\binom{n}{2}}^{-1} \sum_{1 \leq i < j \leq n} (\mathbf{Y}_i - \boldsymbol{\pi}_0)^\top (\mathbf{Y}_j - \boldsymbol{\pi}_0). 
		\end{equation}
		If $\sampleSize / \sqrt{\alphabetSize} \to \infty$, then the test statistic $U_I$ has the following asymptotic normality:
		\begin{equation*}
			\sqrt{\binom{n}{2}} \frac{U_I}{
				\sqrt{
					(1 - 1/\alphabetSize)/
					\alphabetSize
			}} \overset{d}{\longrightarrow} \mathcal{N}(0, 1).   
		\end{equation*}
	\end{theorem}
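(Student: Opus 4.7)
The statistic $U_I$ is a completely degenerate U-statistic whose kernel simplifies dramatically under the uniform null, and the natural route is to apply the martingale CLT for degenerate U-statistics of Hall (1984). The condition $\sampleSize/\sqrt{\alphabetSize}\to\infty$ will arise precisely as what is needed to control the fourth-moment term in Hall's sufficient condition; every other piece is easier.

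First, I would simplify the kernel. Writing $\mathbf{Y}_i$ as the one-hot encoding of a category $X_i\in[\alphabetSize]$ and expanding,
$$h(\mathbf{Y}_i,\mathbf{Y}_j) := (\mathbf{Y}_i-\boldsymbol{\pi}_0)^\top(\mathbf{Y}_j-\boldsymbol{\pi}_0) = \mathbf{Y}_i^\top\mathbf{Y}_j - 2/\alphabetSize + 1/\alphabetSize = \mathds{1}(X_i=X_j) - 1/\alphabetSize,$$
which is symmetric with $\mathbb{E}[h(\mathbf{Y}_1,\mathbf{y})\mid\mathbf{y}]=0$ for every fixed $\mathbf{y}$, confirming complete degeneracy. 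Under the uniform null, $h$ takes the value $1-1/\alphabetSize$ with probability $1/\alphabetSize$ and $-1/\alphabetSize$ with probability $(\alphabetSize-1)/\alphabetSize$, which gives $\mathbb{E}[h^2]=(1-1/\alphabetSize)/\alphabetSize$ and $\mathbb{E}[h^4]=O(1/\alphabetSize)$ by a one-line calculation. This already identifies the normalization, since $\mathrm{Var}(U_I)=\binom{\sampleSize}{2}^{-1}(1-1/\alphabetSize)/\alphabetSize$.

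Next I would decompose $\binom{\sampleSize}{2}U_I = \sum_{j=2}^{\sampleSize} W_j$ with $W_j := \sum_{i<j} h(\mathbf{Y}_i,\mathbf{Y}_j)$, which by degeneracy is a martingale-difference sequence with respect to $\mathcal{F}_j=\sigma(\mathbf{Y}_1,\ldots,\mathbf{Y}_j)$. Hall's sufficient condition for asymptotic normality of such a clean quadratic form is that
$$\frac{\mathbb{E}[G(\mathbf{Y}_1,\mathbf{Y}_2)^2] + \sampleSize^{-1}\mathbb{E}[h^4]}{\sampleSize\,\{\mathbb{E}[h^2]\}^2} \longrightarrow 0, \qquad G(\mathbf{y},\mathbf{z}) := \mathbb{E}[h(\mathbf{Y}_1,\mathbf{y})\,h(\mathbf{Y}_1,\mathbf{z})].$$
A short conditioning argument gives $G(\mathbf{y},\mathbf{z})=(1/\alphabetSize)(1-1/\alphabetSize)$ when $\mathbf{y},\mathbf{z}$ encode the same category and $-1/\alphabetSize^2$ otherwise, hence $\mathbb{E}[G^2]=O(1/\alphabetSize^3)$. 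Plugging in the orders yields
$$\frac{O(1/\alphabetSize^3)}{\sampleSize\cdot O(1/\alphabetSize^2)} \;+\; \frac{O(1/\alphabetSize)}{\sampleSize^2\cdot O(1/\alphabetSize^2)} \;=\; O\!\Bigl(\tfrac{1}{\sampleSize\alphabetSize}\Bigr) + O\!\Bigl(\tfrac{\alphabetSize}{\sampleSize^2}\Bigr),$$
and both terms vanish exactly under the hypothesis $\sampleSize/\sqrt{\alphabetSize}\to\infty$.

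The hardest part is really the fourth-moment contribution $O(\alphabetSize/\sampleSize^2)$: the conditional-variance term $\mathbb{E}[G^2]$ is harmless at rate $1/(\sampleSize\alphabetSize)$, but $\mathbb{E}[h^4]$ scales like $1/\alphabetSize$ (not $1/\alphabetSize^2$) because $h$ is concentrated on an $O(1)$ spike occurring with probability $1/\alphabetSize$. That spike is exactly what enforces the threshold $\sampleSize\gg\sqrt{\alphabetSize}$, and any approach that does not track fourth moments carefully (for instance, a naive Lindeberg--Feller argument applied pair by pair) will miss this phase transition.
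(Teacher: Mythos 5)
The paper does not prove this statement itself; it is imported verbatim from Corollary 3.3 of \citet{kim_multinomial_2020}, so there is no internal proof to compare against. Your route — reduce the kernel to $\mathds{1}(X_i=X_j)-1/\alphabetSize$, note complete degeneracy under the uniform null, compute the second and fourth moments and the function $G$, and invoke Hall's (1984) CLT for degenerate U-statistics — is exactly how this result is established in the literature, and your moment computations ($\mathbb{E}[h^2]=(1-1/\alphabetSize)/\alphabetSize$, $\mathbb{E}[h^4]=O(1/\alphabetSize)$, $\mathbb{E}[G^2]=O(1/\alphabetSize^{3})$, and the variance identity) are all correct.

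There is, however, one concrete error: you have misquoted Hall's sufficient condition. The correct condition is
\begin{equation*}
\frac{\mathbb{E}\bigl[G(\mathbf{Y}_1,\mathbf{Y}_2)^2\bigr]+\sampleSize^{-1}\,\mathbb{E}\bigl[h^4\bigr]}{\bigl\{\mathbb{E}[h^2]\bigr\}^2}\longrightarrow 0,
\end{equation*}
without the extra factor of $\sampleSize$ in the denominator; this matches de Jong's fourth-moment decomposition, in which the cycle term contributes $\sampleSize^4\,\mathbb{E}[G^2]$ and must be $o\bigl(\sampleSize^4\{\mathbb{E}[h^2]\}^2\bigr)$. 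With the correct normalization the first term is $O(1/\alphabetSize)$, not $O(1/(\sampleSize\alphabetSize))$, so it does \emph{not} vanish under $\sampleSize/\sqrt{\alphabetSize}\to\infty$ alone — one additionally needs $\alphabetSize\to\infty$. This is not cosmetic: for fixed $\alphabetSize$ the degenerate U-statistic converges to a weighted sum of centered chi-squares rather than a Gaussian, so the conclusion genuinely fails and no correct argument can dispense with $\alphabetSize\to\infty$. (The paper's restatement is itself loose on this point — the original corollary is a high-dimensional result, and $\alphabetSize\to\infty$ does hold where it is applied in Appendix~\ref{genrr_suboptimal_theory} — but your bookkeeping should show that $\sampleSize\gg\sqrt{\alphabetSize}$ controls only the fourth-moment spike, while $\mathbb{E}[G^2]/\{\mathbb{E}[h^2]\}^2=O(1/\alphabetSize)$ separately forces $\alphabetSize\to\infty$.) The fix is to state Hall's condition correctly and add $\alphabetSize\to\infty$ to the hypotheses.
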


	The next theorem studies the limiting distribution of $U_I$ under the alternative hypothesis. 
	\begin{theorem}[{\citealp[Theorem 3.3]{kim_multinomial_2020}}]\label{appendix:genrr_powerless:theorem:alternative_dist}
		Consider the same testing problem as Theorem~\ref{appendix:genrr_powerless:theorem_null}.
		Let $\Sigma := \mathrm{Cov}(\vectorize{Y}_1)$. Under the alternative
		where the data is generated from a multinomial distribution with a probability vector $\boldsymbol{\pi} \neq \boldsymbol{\pi}_0$,
		suppose that the following conditions hold as $\sampleSize, \alphabetSize \to \infty$:
		\begin{enumerate}
			\item[C1.] 
			$
			\dfrac{
				\mathrm{tr}(\Sigma^4)
			}{
				\bigl\{
				\mathrm{tr}(\Sigma^2)
				\bigr\}^2   
			}
			\to
			0.
			$
			\item[C2.]
			$
			\dfrac{
				\mathbb{E}[\bigl\{ (\vectorize{\rvTwo}_1 - \boldsymbol{\pi}_0)^\top (\vectorize{\rvTwo}_2
				-
				\boldsymbol{\pi}_0)
				\bigr\}^4]
				+
				\sampleSize_1
				\mathbb{E}
				[\bigl\{ (\vectorize{\rvTwo}_1 - \probVec_{\rvThree})^\top (\vectorize{\rvTwo}_2
				-
				\probVec_{\rvThree}
				\bigr\}^2
				\bigl\{ (\vectorize{\rvTwo}_1 - \probVec_{\rvThree})^\top (\vectorize{\rvTwo}_3
				-
				\probVec_{\rvThree})
				\bigr\}^2]
			}{
				\sampleSize^2
				\left\{
				\mathrm{tr}(\Sigma^2)
				\right\}^2
			} \to 0.
			$
			\item[C3.]
			$(\boldsymbol{\pi} - \boldsymbol{\pi}_0)^\top  \Sigma  (\boldsymbol{\pi} - \boldsymbol{\pi}_0) < \infty$.
		\end{enumerate}
		Then the test statistic $U_I$ in~\eqref{def:U_stat_kim_2020} has the following asymptotic normality:
		\begin{equation}\label{one_sample_asymptotic_normal_theorem}
			\sqrt{
				\frac{
					\sampleSize(\sampleSize-1)
				}{
					2
				} 
			}
			\frac{
				U_{I}
				-
				\| \boldsymbol{\pi}  - \boldsymbol{\pi}_0\|_2^2}{
				\sqrt{
					\mathrm{tr}(\Sigma^2)
					+
					2(\sampleSize - 1)
					(\boldsymbol{\pi} - \boldsymbol{\pi}_0)^\top 
					\Sigma
					(\boldsymbol{\pi} - \boldsymbol{\pi}_0)	
				}
			}
			\stackrel{d}{\to}
			\mathcal{N}(0,1).
		\end{equation}
	\end{theorem}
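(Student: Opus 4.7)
The plan is to apply a Hoeffding-style decomposition so that $U_I$ splits into a linear part and a degenerate quadratic part, and then to establish joint asymptotic normality via a martingale central limit theorem with conditions (C1)--(C3) used respectively to control the quadratic part, the Lindeberg/Lyapunov condition, and the linear part. Set $\xi_i := \mathbf{Y}_i - \boldsymbol{\pi}$ and $\boldsymbol{\delta} := \boldsymbol{\pi} - \boldsymbol{\pi}_0$, so that $\mathbb{E}[\xi_i]=\mathbf{0}$ and $\mathrm{Cov}(\xi_i)=\Sigma$. Expanding $(\mathbf{Y}_i-\boldsymbol{\pi}_0)^\top(\mathbf{Y}_j-\boldsymbol{\pi}_0) = \xi_i^\top\xi_j + \xi_i^\top\boldsymbol{\delta} + \xi_j^\top\boldsymbol{\delta} + \|\boldsymbol{\delta}\|_2^2$ gives the decomposition
\begin{equation*}
U_I - \|\boldsymbol{\delta}\|_2^2 \;=\; W_n + L_n, \qquad W_n := \binom{n}{2}^{-1}\!\!\sum_{1\le i<j\le n}\!\xi_i^\top\xi_j, \quad L_n := \tfrac{2}{n}\sum_{i=1}^n \xi_i^\top\boldsymbol{\delta}.
\end{equation*}
A direct second-moment calculation, using $\mathbb{E}[\xi_i^\top\xi_j]=0$ for $i\neq j$ and $\mathbb{E}[(\xi_1^\top\xi_2)^2]=\mathrm{tr}(\Sigma^2)$, yields $\mathrm{Var}(W_n)=\tfrac{2}{n(n-1)}\mathrm{tr}(\Sigma^2)$, $\mathrm{Var}(L_n)=\tfrac{4}{n}\boldsymbol{\delta}^\top\Sigma\boldsymbol{\delta}$, and $\mathrm{Cov}(W_n,L_n)=0$, producing the denominator in \eqref{one_sample_asymptotic_normal_theorem}.

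Next I would realize $U_I - \|\boldsymbol{\delta}\|_2^2 = \sum_{i=1}^n D_{n,i}$ as a martingale difference sequence with respect to $\mathcal{F}_i=\sigma(\xi_1,\dots,\xi_i)$, where
\begin{equation*}
D_{n,i} \;:=\; \tfrac{2}{n}\,\xi_i^\top\boldsymbol{\delta} \;+\; \binom{n}{2}^{-1}\,\xi_i^\top S_{i-1}, \qquad S_{i-1}:=\sum_{j<i}\xi_j,
\end{equation*}
and apply the martingale CLT of Hall and Heyde. Writing $\sigma_n^2:=\mathrm{Var}(U_I)$, I need to verify (a) $\sum_i \mathbb{E}[D_{n,i}^2\mid\mathcal{F}_{i-1}]/\sigma_n^2 \to 1$ in probability and (b) the conditional Lindeberg condition $\sum_i \mathbb{E}[D_{n,i}^2\mathds{1}(|D_{n,i}|>\epsilon\sigma_n)\mid\mathcal{F}_{i-1}]/\sigma_n^2 \to 0$. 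For (a), the conditional second moment equals $(\tfrac{2}{n}\boldsymbol{\delta}+\binom{n}{2}^{-1}S_{i-1})^\top\Sigma(\tfrac{2}{n}\boldsymbol{\delta}+\binom{n}{2}^{-1}S_{i-1})$; its expectation matches $\sigma_n^2$, and concentration of its quadratic-in-$S_{i-1}$ piece reduces to showing $\mathrm{Var}\bigl(\binom{n}{2}^{-2}\sum_i S_{i-1}^\top\Sigma S_{i-1}\bigr)=o(\sigma_n^4)$. This variance expands into fourth-order cumulants of $\xi_i^\top\Sigma\xi_j$ whose leading terms are proportional to $\mathrm{tr}(\Sigma^4)$, so the ratio is driven by $\mathrm{tr}(\Sigma^4)/\mathrm{tr}(\Sigma^2)^2$, which vanishes by \textbf{C1}; the cross term involving $\boldsymbol{\delta}^\top\Sigma S_{i-1}$ is handled using \textbf{C3}.

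Condition (b) is verified through a Lyapunov bound $\sum_i\mathbb{E}[D_{n,i}^4]/\sigma_n^4 \to 0$. Expanding $D_{n,i}^4$ and taking expectations produces two dominant contributions: a linear-part piece controlled by \textbf{C3} (plus a moment bound for $\xi_i^\top\boldsymbol{\delta}$), and a quadratic-part piece whose order-of-magnitude matches $\mathbb{E}[(\xi_1^\top\xi_2)^4]$ together with $n\,\mathbb{E}[(\xi_1^\top\xi_2)^2(\xi_1^\top\xi_3)^2]$ divided by $n^2\mathrm{tr}(\Sigma^2)^2$ (after multiplying by $n$ for the sum and by $\binom{n}{2}^{-4}$ for the coefficient); this is exactly the quantity whose vanishing is posited in \textbf{C2}. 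Combining (a) and (b) yields $(U_I - \|\boldsymbol{\delta}\|_2^2)/\sigma_n \Rightarrow \mathcal{N}(0,1)$, which is the claim after substituting $\sigma_n^2$ from Step~1.

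\emph{Main obstacle.} The delicate step is the concentration in (a) of the quadratic conditional variance $\binom{n}{2}^{-2}\sum_i S_{i-1}^\top\Sigma S_{i-1}$, because $\Sigma$ changes with $k$ and the bookkeeping of fourth moments of quadratic forms in centred multinomial vectors must be executed cleanly enough that \textbf{C1} suffices; the interaction between the linear and degenerate contributions (which move at possibly different rates depending on whether $\mathrm{tr}(\Sigma^2)$ or $n\,\boldsymbol{\delta}^\top\Sigma\boldsymbol{\delta}$ dominates) is what forces the composite normalization in \eqref{one_sample_asymptotic_normal_theorem} and makes the joint CLT, rather than two separate CLTs, essential.
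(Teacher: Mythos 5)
The paper does not prove this statement at all: it is imported verbatim as Theorem~3.3 of \citet{kim_multinomial_2020} and used as a black box in the suboptimality argument of Appendix~\ref{genrr_suboptimal_theory}, so there is no internal proof to compare against. Your sketch reconstructs the standard route behind the cited result{---}Hoeffding decomposition of $U_I-\|\boldsymbol{\delta}\|_2^2$ into the degenerate part $W_n$ and the linear part $L_n$, exact computation of $\mathrm{Var}(W_n)$, $\mathrm{Var}(L_n)$ and the vanishing covariance (which correctly reproduces the composite normalization in \eqref{one_sample_asymptotic_normal_theorem}), followed by the Hall--Heyde martingale CLT applied to $D_{n,i}=\tfrac{2}{n}\xi_i^\top\boldsymbol{\delta}+\binom{n}{2}^{-1}\xi_i^\top S_{i-1}$. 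The roles you assign to the three conditions are the right ones: C1 controls the fluctuation of the conditional variance of the degenerate part via $\mathrm{tr}(\Sigma^4)/\{\mathrm{tr}(\Sigma^2)\}^2$, C2 is precisely the Lyapunov-type bound for the quadratic contribution, and C3 governs the linear part. Two caveats are worth recording. First, C3 as literally printed in the statement ($\boldsymbol{\delta}^\top\Sigma\boldsymbol{\delta}<\infty$) is too weak to do the work you assign it; the paper's own later verification (Appendix~\ref{genrr_suboptimal_theory}) reveals the intended condition is $\boldsymbol{\delta}^\top\Sigma\boldsymbol{\delta}=o\bigl(n^{-1}\mathrm{tr}(\Sigma^2)\bigr)$, i.e.\ negligibility of the linear-part variance, and your argument should be read against that corrected hypothesis. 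Second, the fourth-moment bookkeeping in steps (a) and (b){---}in particular bounding the cross terms between $\boldsymbol{\delta}^\top\Sigma S_{i-1}$ and the degenerate piece by Cauchy--Schwarz against the two diagonal contributions{---}is left at sketch level; it is exactly where the cited proof spends its effort, but nothing in your outline would fail there.
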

	
	\section{Proof of Lemma~\ref{lemma:LapUdiscLapULDP}}
	\label{proof:LapUdiscLapULDP}
	This section provides privacy guarantee proof for the mechanisms proposed in Section~\ref{subsection:twosample_disc_upperbound}. The privacy proof for the \texttt{RAPPOR} mechanism is given in Section 3.2 of \citet{duchi2013local}. The privacy proof for the Laplace mechanism is a minor modification of the proof of Lemma 4.2 from \citet{Lam-Weil2021MinimaxConstraint}, adjusting the domain of the raw samples. Thus, we only present the privacy proof for the discrete Laplace mechanism, defined in Definition~\ref{def:DiscLapU_formal}.
	Its outline  resembles the proof of Lemma 4.2 from \citet{Lam-Weil2021MinimaxConstraint}.

    \vskip 1em
    	
	\begin{proof}
	We note that the conditional distribution of the $\vectorIndex$th entry of its $\privacyParameter$-LDP view,
	denoted as $\private{X}_{im}$,
	is a discrete Laplace distribution with parameter
	$\discLapUParam = \exp({
		-\privacyParameter/(2\sqrt{\alphabetSize}}))$,
	shifted by $\sqrt{\alphabetSize}\mathds{1}(\rvX_i = \vectorIndex)$.
	Note again that for $m \neq m'$, $\private{X}_{im}$ and $\private{X}_{im'}$ are independent.
	With slight abuse of notation, we denote the conditional density function of $\rVecXPriv_{i}$ as $\privacyDensity_\sampleIndexOne(\cdot \,|\, \cdot)$, which can be written as 
	\begin{align*}
		\privacyDensity_\sampleIndexOne(
		\rVecXPrivRealized
		\,|\,
		\rvXRealized
		)
		= 
		\prod_{\vectorIndex=1}^\alphabetSize
		\frac{
			1 - \discLapUParam
		}{
			1 + \discLapUParam
		}
		\discLapUParam^{
			|
			\private{x}_{m}
			-
			\sqrt{\alphabetSize} \mathds{1}(\rvXRealized = \vectorIndex)
			|
		}.
	\end{align*}
	Then for all $\rVecXPrivRealized =(\rVecXPrivRealized_1,\ldots,\rVecXPrivRealized_\alphabetSize)^\top \in \mathbb{R}^\alphabetSize$
	and all $\rvXRealized, \rvXRealized' \in [\alphabetSize]$, we have
	\begin{align*}
		\frac{
			\privacyDensity_\sampleIndexOne(
			\rVecXPrivRealized
			\,|\,
			\rvXRealized
			)
		}{
			\privacyDensity_\sampleIndexOne(
			\rVecXPrivRealized
			\,|\,
			\rvXRealized'
			)
		}
		~=~ &
		\prod_{\vectorIndex=1}^\alphabetSize
		\discLapUParam^{
			|
			\private{x}_{m}
			-
			\sqrt{\alphabetSize} \mathds{1}(\rvXRealized = \vectorIndex)
			|
			-
			|
			\private{x}_{m}
			-
			\sqrt{\alphabetSize} \mathds{1}(\rvXRealized' = \vectorIndex)
			|
		}
		%
		\\ = ~ &
		\discLapUParam^{
			\sum_{\vectorIndex=1}^\alphabetSize
			|
			\private{x}_{m}
			-
			\sqrt{\alphabetSize} \mathds{1}(\rvXRealized = \vectorIndex)
			|
			-
			|
			\private{x}_{m}
			-
			\sqrt{\alphabetSize} \mathds{1}(\rvXRealized' = \vectorIndex)
			|
		}
		%
		\\ \stackrel{(a)}{\leq} \hskip 1.57mm &
		\bigl(
		\discLapUParam^{-1}
		\bigr)^{\sum_{\vectorIndex=1}^\alphabetSize
			\bigl|
			|
			\private{x}_{m}
			-
			\sqrt{\alphabetSize} \mathds{1}(\rvXRealized = \vectorIndex)
			|
			-
			|
			\private{x}_{m}
			-
			\sqrt{\alphabetSize} \mathds{1}(\rvXRealized' =\vectorIndex)
			|
			\bigr|
		}
		\\ \stackrel{(b)}{\leq} \hskip 1.57mm &
		\bigl(
		\discLapUParam^{-1}
		\bigr)^{\sum_{\vectorIndex=1}^\alphabetSize 
			\bigl\{
			\sqrt{\alphabetSize} \mathds{1}(\rvXRealized = \vectorIndex)
			+
			\sqrt{\alphabetSize} \mathds{1}(\rvXRealized' =\vectorIndex)
			\bigr\}
		}
		\\ \stackrel{(c)}{\leq} \hskip 1.57mm &
		\bigl(
		\discLapUParam^{-1}
		\bigr)^{2\sqrt{\alphabetSize}} = e^{\privacyParameter},
	\end{align*}
	where step~$(a)$ uses the fact that $\discLapUParam^x \leq \discLapUParam^{-|x|}$ for all $x \in \mathbb{R}$ since $\discLapUParam \in (0,1)$, step~$(b)$ uses the reverse triangle inequality,
	and step~$(c)$ holds since
	$\mathds{1}(\rvXRealized = \vectorIndex) \neq 0$ for only a single value of $\vectorIndex$. Using  the inequality above, for any Borel set $A \in \mathcal{B}(\mathbb{R}^d)$, and for any $x, x' \in [k]$, we have:
	\begin{equation*}
		\frac{Q_i( A \,|\,  x)}{Q_i(A \,|\,  x')}
		= 
		\frac{\int_A q_{i}(\private{\vectorize{x}} \,|\, x) \, d\private{\vectorize{x}} 
		}{
			\int_A q_{i}(\private{\vectorize{x}} \,|\, x') \, d\private{\vectorize{x}}}
		\leq 
		\frac{\int_A q_{i}(\private{\vectorize{x}} \,|\, x') e^{\alpha} \, d\private{\vectorize{x}}
		}{
			\int_A q_{i}(\private{\vectorize{x}} \,|\, x) e^{-\alpha} \, d\private{\vectorize{x}}} = 
		\frac{Q_i( A \,|\,  x')}{Q_i(A \,|\,  x)}e^{2\alpha}.
	\end{equation*}
	This completes the proof of the guarantee for the discrete Laplace mechanism. 
\end{proof}
\section{Proof of Theorem~\ref{theorem:twosample_multinomial_rates}}\label{proof:twosample_multinomial_rates}
	In this section, we prove the minimax rate result for multinomial testing presented in Theorem~\ref{theorem:twosample_multinomial_rates}, first focusing on the upper bound result followed by the lower bound result. 
	
	\subsection{Upper Bound} \label{sec: upper bound}
	This section demonstrates that the permutation test with the U-statistic proposed in~\eqref{def:statistic_elltwo}, in conjunction with one of the mechanisms $\{\texttt{LapU}, \texttt{DiscLapU}, \texttt{RAPPOR}\}$, achieves a tight upper bound presented in Theorem~\ref{theorem:twosample_multinomial_rates}.
	Since the permutation procedure controls the type I error, it suffices to prove that the condition in Theorem~\ref{theorem:twosample_multinomial_rates} guarantees type II error control.
	Let 
	\begin{equation}\label{definition:maximum_norm_squared}
		\normSqMultinomMax
		:=
		\max \{
		\|\probVec_{\rvTwo}\|_2^2,
		\|\probVec_{\rvThree}\|_2^2
		\}.
	\end{equation}
	The control of the type II error  is then rephrased in the following lemma.
	\begin{lemma}\label{appendix:lemma:discrete_upper_bound}
		Assume the settings of Theorem~\ref{theorem:twosample_multinomial_rates}.
		For each of the mechanisms $\{\textnormal{\texttt{LapU}}, \textnormal{\texttt{DiscLapU}}, \textnormal{\texttt{RAPPOR}}\}$, there exists a constant $C_u(\gamma, \beta)$ such that the type II error of the permutation test of size $\maxErrorTypeOne$ with U-statistic in~\eqref{def:statistic_elltwo} is uniformly controlled by $\maxErrorTypeTwo$ over $\mathcal{P}_{1,\mathrm{multi}}$ if
		\begin{align*}
			\separation_{\sampleSize_1, \sampleSize_2}
			\geq
			C_u(\gamma, \beta)
			\left(
			\frac
			{\alphabetSize^{1/4}}
			{(\sampleSize_1 \privacyParameter^2)^{1/2}}
			\vee
			\frac
			{{b}^{1/4}}
			{\sampleSize_1^{1/2}}
			\right).
			\numberthis \label{eq:two_moments_two_sample_cond_merge}
		\end{align*}
	\end{lemma}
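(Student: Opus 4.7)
\textbf{Proof proposal for Lemma~\ref{appendix:lemma:discrete_upper_bound}.}
The plan is to apply the two moments method of \citet{kim_minimax_2022} (referenced in the paper as Theorem~\ref{theorem:twosampleTwoMomentsMethod}), which reduces the type II error control of a permutation test to bounding the first moment of $U_{n_1,n_2}$ under the alternative from below and the second moment (or variance, together with a conditional variance under a random permutation) from above. Because the permutation procedure automatically controls type I error at $\gamma$ under exchangeability, the entire argument reduces to verifying the moment condition
\[
\mathbb{E}[U_{n_1,n_2}] \;\geq\; C(\gamma,\beta)\,\bigl\{\sqrt{\mathrm{Var}(U_{n_1,n_2})}\,+\,\sqrt{\mathbb{E}[V_\pi]}\bigr\},
\]
where $V_\pi$ is the conditional variance of $U_{n_1,n_2}$ under a uniform random permutation of the pooled privatized sample; translating this condition into a lower bound on $\rho_{n_1,n_2}$ will yield \eqref{eq:two_moments_two_sample_cond_merge}.

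The key step is to compute or bound these moments for each of the three mechanisms. For \texttt{LapU} and \texttt{DiscLapU}, unbiasedness gives $\mathbb{E}[U_{n_1,n_2}] = k\|\mathbf{p}_Y - \mathbf{p}_Z\|_2^2$ by \eqref{equation:lapu_elltwo_unbiased}; here Lemma~\ref{lemma:discLapU_Variance} shows the discrete noise has variance bounded by that of the Laplace noise, so the two cases admit a unified analysis. Decomposing $\tilde{\mathbf{Y}}_i = \sqrt{k}\,\mathbf{e}(Y_i) + \sigma_\alpha \mathbf{W}_i$ with $\mathbf{W}_i$ independent across $i$ and centered, I would expand $\mathrm{Var}(U_{n_1,n_2})$ into a sum of a ``signal'' variance of order $k\|\mathbf{p}_Y - \mathbf{p}_Z\|_2^2 \cdot b/n_1$ and a ``noise'' variance of order $k \sigma_\alpha^4/n_1^2 \asymp k^3/(n_1 \alpha^2)^2$; a similar decomposition bounds $\mathbb{E}[V_\pi]$ because the noise components remain i.i.d.\ after permutation. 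Substituting these into the two moments condition and solving for $\|\mathbf{p}_Y - \mathbf{p}_Z\|_2$ yields the rate $k^{1/4}/(n_1 \alpha^2)^{1/2} \vee b^{1/4}/n_1^{1/2}$.

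For \texttt{RAPPOR} the analysis is more delicate because the statistic is no longer unbiased for a scaled $\ell_2$ distance: Lemma~\ref{lemma:rappor_var} gives $\mathbb{E}[U_{n_1,n_2}] = \alpha_{\mathrm{bf}}^2 \|\mathbf{p}_Y - \mathbf{p}_Z\|_2^2$, and for small $\alpha$ we have $\alpha_{\mathrm{bf}} \asymp \alpha$ while $\delta_{\mathrm{bf}} \asymp 1$, so the ``noise floor'' of the coordinates does not vanish. I would use Lemma~\ref{rappor:inequalities} to control the three quantities $\sum_m \mathrm{Var}(\tilde{Y}_{1m})(p_{Y,m}-p_{Z,m})^2$, $\sum_m \mathrm{Var}(\tilde{Y}_{1m})^2$, and $\sum_{m\neq m'} \mathrm{Cov}(\tilde{Y}_{1m},\tilde{Y}_{1m'})^2$, which together bound $\mathrm{Var}(U_{n_1,n_2})$ and $\mathbb{E}[V_\pi]$ by a term of order $\alpha^2 b \|\mathbf{p}_Y - \mathbf{p}_Z\|_2^2/n_1$ plus a noise-dominated term of order $(\alpha^2 b + k)/n_1^2$. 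Dividing by $\mathbb{E}[U_{n_1,n_2}]^2 = \alpha^4 \|\mathbf{p}_Y - \mathbf{p}_Z\|_2^4$ and solving gives the same separation rate.

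The main obstacle will be the \texttt{RAPPOR} variance analysis: because each coordinate $\tilde Y_{im}$ has variance bounded below by $\delta_{\mathrm{bf}}(1-\delta_{\mathrm{bf}}) \asymp 1$, naive bounds inflate the variance by a factor of $k$ and destroy the target rate. Obtaining the sharp dependence requires simultaneously exploiting (i) the $\alpha_{\mathrm{bf}}^2$ scaling of the cross-covariances in Lemma~\ref{rappor:inequalities} to cancel against the noise floor, and (ii) the Cauchy--Schwarz bound $\sum_m \mathrm{Var}(\tilde Y_{1m})(p_{Y,m}-p_{Z,m})^2 \lesssim b^{1/2}\|\mathbf{p}_Y-\mathbf{p}_Z\|_2^2$ so that the signal variance inherits the correct dependence on $b$. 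A secondary technical point is verifying that the permutation-conditional variance $\mathbb{E}[V_\pi]$ admits the same bound as $\mathrm{Var}(U_{n_1,n_2})$ up to constants depending only on the ratio $n_1/n_2 \leq 1$, which follows from standard U-statistic permutation identities but must be checked for the non-centered \texttt{RAPPOR} kernel.
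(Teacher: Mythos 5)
Your proposal takes essentially the same route as the paper's proof: invoke the two moments method (Theorem~\ref{theorem:twosampleTwoMomentsMethod}) to reduce type~II error control to bounding $\mathbb{E}[U_{n_1,n_2}]$ from below and the moments $M_{Y,1}$, $M_{Z,1}$, $M_{YZ,2}$ from above for each mechanism, then invert the resulting inequality in $\|\mathbf{p}_Y-\mathbf{p}_Z\|_2$ to obtain the rate. The one issue you correctly flag as a ``secondary technical point'' for \texttt{RAPPOR} — that the kernel is not centered and Lemma~\ref{rappor:inequalities} controls only centered second-moment quantities — is resolved in the paper by the algebraic identity
\[
(\tilde{\vectorize{Y}}_1-\tilde{\vectorize{Z}}_1)^\top(\tilde{\vectorize{Y}}_2-\tilde{\vectorize{Z}}_2)
=
(\bar{\vectorize{Y}}_1-\bar{\vectorize{Z}}_1)^\top(\bar{\vectorize{Y}}_2-\bar{\vectorize{Z}}_2),
\quad
\bar{\vectorize{Y}}_i:=\tilde{\vectorize{Y}}_i-\mathbb{E}[\tilde{\vectorize{Y}}_i],\;
\bar{\vectorize{Z}}_j:=\tilde{\vectorize{Z}}_j-\mathbb{E}[\tilde{\vectorize{Y}}_j],
\]
which centers \emph{both} samples by the same constant vector, leaves $U_{n_1,n_2}$ unchanged, and reduces $M_{YZ,2}$ exactly to the three variance/covariance sums you propose to bound; without this step the expectation-of-square term would pick up an uncontrolled mean-squared contribution.
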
 
	Since ${b} \leq 1$, Lemma~\ref{appendix:lemma:discrete_upper_bound} proves the upper bound result of Theorem~\ref{theorem:twosample_multinomial_rates}. To prove Lemma~\ref{appendix:lemma:discrete_upper_bound}, we leverage the two moments method \citep[Theorem 4.1 of][]{kim_minimax_2022}, rephrased in Lemma~\ref{theorem:twosampleTwoMomentsMethod}.
	This method states that if the test statistic’s expectation is sufficiently larger than a variance proxy, then uniform type II error control is achieved for the permutation test with a two-sample U-statistic of order 2.
	Its  key advantage is that the condition  bypasses the randomness arising from permutations. It lets us avoid the complex analysis typically required for permuted statistics.
	
	To proceed, consider the following kernel:
	\begin{equation}\label{equation:twosample_kernel}
		h_{ts}(\vectorize{y}_1, \vectorize{y}_2; \vectorize{z}_1,\vectorize{z}_2) = \vectorize{y}_1^\top \vectorize{y}_2 + \vectorize{z}_1^\top \vectorize{z}_2 - \vectorize{z}_1^\top \vectorize{y}_2 - \vectorize{z}_2^\top \vectorize{y}_1,
	\end{equation}
	which defines our two-sample U-statistic $U_{\sampleSize_1,\sampleSize_2}$ in~\eqref{def:statistic_elltwo}.
	Its symmetrized version, denoted $h_{ts}$, is defined as: 
	\begin{align*}
		\kernelTwoSampleSym(\vectorize{y}_1,\vectorize{y}_2;\vectorize{z}_1,\vectorize{z}_2
		) := 
		\frac{1}{2!2!}
		\sum_{1 \leq i_1 \neq  i_2 \leq n_1}
		\sum_{1 \leq j_1 \neq  j_2 \leq n_2}
		\kernelTwoSample(\vectorize{y}_{i_1}, \vectorize{y}_{i_2}; \vectorize{z}_{j_1}, \vectorize{z}_{j_2}
		).
	\end{align*}
	For the upper bound analysis, we use a U-statistic represented by $\kernelTwoSampleSym$, which is equivalent to our original statistic in $U_{\sampleSize_1,\sampleSize_2}$~\eqref{def:statistic_elltwo}.
	Recall that under the LDP constraint, the raw samples
	$\sampleSets{\rvY}{\sampleIndexOne}{\sampleSize_1}$
	and
	$\sampleSets{\rvZ}{\sampleIndexTwo}{\sampleSize_2}$
	are generated from $P = (P_\rvY, P_\rvZ)$ and then transformed into LDP-views
	$\sampleSets{\tilde{\vectorize{\rvTwo}}}{\sampleIndexOne}{\sampleSize_1}$
	and
	$\sampleSets{\rVecZPriv}{\sampleIndexTwo}{\sampleSize_2}$
	through an LDP mechanism $Q$.
	Let us denote the associated moments as
	\begin{align*}
		\momentTwosampleVarCondexpY
		&:=
		\mVPQ[\mE\{ \kernelTwoSampleSym (
		\tilde{\vectorize{\rvTwo}}_1,
		\vecRandomPrivTwoSampleYNumber{2}
		;
		\vecRandomPrivTwoSampleZNumber{1},
		\vecRandomPrivTwoSampleZNumber{2}
		)
		\,|\,
		\tilde{\vectorize{\rvTwo}}_1
		\}],
		\\[.5em]
		%
		%
		\momentTwosampleVarCondexpZ
		&:= 
		\mVPQ[\mE\{ \kernelTwoSampleSym (
		\tilde{\vectorize{\rvTwo}}_1,
		\vecRandomPrivTwoSampleYNumber{2}
		;
		\vecRandomPrivTwoSampleZNumber{1},
		\vecRandomPrivTwoSampleZNumber{2}
		)
		\,|\,
		\vecRandomPrivTwoSampleZNumber{1}
		\}],
		\\[.5em]
		\momentTwosampleExpSquare
		&:=
		\max
		\{
		\mE[
		(
		\private{\vectorize{\rvTwo}}_1^\top 
		\vecRandomPrivTwoSampleYNumber{2}
		)^2
		], \, 
		\mE[
		(
		\private{\vectorize{\rvTwo}}_1^\top
		\private{\vectorize{\rvThree}}_1)^2], \,
		\mE[
		(
		\private{\vectorize{\rvThree}}_1^\top
		\private{\vectorize{\rvThree}}_2)^2
		]
		\}.
		\numberthis
		\label{def:moment_terms}
	\end{align*}
	\noindent
	Using these moments, we rephrase the two moments method under the setting of LDP. 
	\begin{lemma}[Two moments method]\label{theorem:twosampleTwoMomentsMethod}
		Let $U_{\sampleSize_1,\sampleSize_2}$ be a two-sample U-statistic based on the kernel given in~\eqref{equation:twosample_kernel}.
		Assume that the samples are privatized through an $\privacyParameter$-LDP mechanism $\privacyMechanism$.
		Then there exists a sufficiently large constant $C > 0$ 
		such that if
		\begin{align*}
			\mE[U_{\sampleSize_1, \sampleSize_2}]
			\geq
			C
			\sqrt{
				\max
				\left\{
				\frac{ \momentTwosampleVarCondexpY }{\beta \sampleSize_1},
				\frac{ \momentTwosampleVarCondexpZ }{\beta \sampleSize_2},
				\frac{ \momentTwosampleExpSquare }{\gamma \beta}
				\left(
				\frac{1}{\sampleSize_1} + \frac{1}{\sampleSize_2}
				\right)^2
				\right\}
			}
			\numberthis \label{eq:two_moments_original_cond_two_sample}
		\end{align*}
		for all pairs of distributions $P = (P_{Y}, P_{Z}) \in \mathcal{P}_{1,\mathrm{multi}}(\rho_{\sampleSize_1,\sampleSize_2})$,	
		then the type II error of the permutation test over $\mathcal{P}_{1,\mathrm{multi}}(\rho_{\sampleSize_1,\sampleSize_2})$ is uniformly bounded by $\beta$ as in~\eqref{eq:uniform_control}.
	\end{lemma}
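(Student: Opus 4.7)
The plan is to invoke Theorem 4.1 of \citet{kim_minimax_2022} directly to the U-statistic computed on the privatized views $\{\tilde{\vectorize{Y}}_i\}_{i \in [n_1]}$ and $\{\tilde{\vectorize{Z}}_j\}_{j \in [n_2]}$, rather than on the raw data. The key observation is that once the LDP mechanism $Q$ is fixed and restricted to have identical marginals (as assumed throughout the paper to guarantee exchangeability under $H_0$), the views $\{\tilde{\vectorize{Y}}_i\}_{i \in [n_1]}$ form an i.i.d.\ sample from the induced marginal $\tilde{P}_Y(A) := \int Q(A \mid y)\, P_Y(dy)$, and similarly $\{\tilde{\vectorize{Z}}_j\}_{j \in [n_2]}$ are i.i.d.\ from $\tilde{P}_Z$. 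Under the null $P_Y = P_Z$, this gives $\tilde{P}_Y = \tilde{P}_Z$, so the pooled private sample is i.i.d.\ and the exchangeability assumption required for the two-sample permutation framework is met.

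Given this reduction, the statistic $U_{n_1,n_2}$ in \eqref{def:statistic_elltwo} is a standard two-sample U-statistic of order $(2,2)$ with kernel $h_{ts}$ from \eqref{equation:twosample_kernel}, evaluated on these i.i.d.\ private views. First I would verify that the three variance proxies $M_{Y,1}(P,Q)$, $M_{Z,1}(P,Q)$, and $M_{YZ,2}(P,Q)$ defined in \eqref{def:moment_terms} correspond exactly to the first-moment conditional variances and the second-moment pairwise product bounds that appear in the hypothesis of Kim et al.'s Theorem 4.1 (for instance, $M_{YZ,2}$ dominates $\mathbb{E}[h_{ts}^2]$ after symmetrization, up to an absolute constant, via the trivial bound $(a+b+c+d)^2 \leq 4(a^2+b^2+c^2+d^2)$). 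Next I would plug the moments, evaluated under the joint law of the privatized observations, into their condition and observe that the stated lower bound on $\mathbb{E}[U_{n_1,n_2}]$ is precisely the requirement of \citet[Theorem 4.1]{kim_minimax_2022} rewritten in our notation.

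Once the hypothesis is matched, the conclusion of \citet[Theorem 4.1]{kim_minimax_2022}, which controls the type II error of the permutation test at level $\gamma$ by $\beta$, transfers verbatim to our setting because it depends on the sample only through (i) exchangeability under $H_0$ and (ii) the first two moments of the kernel. Both properties are preserved after privatization, and the uniformity over $\mathcal{P}_{1,\mathrm{multi}}(\rho_{n_1,n_2})$ follows because the condition \eqref{eq:two_moments_original_cond_two_sample} is postulated uniformly over all $(P_Y,P_Z)$ in that class.

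The main obstacle is not analytic but bookkeeping: one must be careful that the kernel $h_{ts}$, which is not itself symmetric in its first or second pair of arguments, yields the same U-statistic as its symmetrization $\check{h}_{ts}$, and that the three moment quantities in \eqref{def:moment_terms} upper-bound the corresponding moments of $\check{h}_{ts}$ up to constants absorbed into $C$. Once this matching is done, the proof is a direct citation, and nothing about the LDP structure (noise distribution, support of $\tilde{X}_i$, etc.) plays a role beyond the i.i.d.\ and identical-marginals property of the views.
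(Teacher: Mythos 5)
Your proposal is correct and matches the paper's approach: the paper presents this lemma as a direct rephrasing of Theorem 4.1 of \citet{kim_minimax_2022} applied to the i.i.d.\ private views, with the moments in \eqref{def:moment_terms} computed under the law of the privatized data, exactly as you describe. The symmetrization bookkeeping you flag is the only content beyond citation, and the paper handles it the same way by working with $\kernelTwoSampleSym$.
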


	\noindent
	Having stated the two moments method, our goal is to verify that inequality~\eqref{eq:two_moments_original_cond_two_sample} holds under the separation conditions described in Section~\ref{subsection:twosample_disc_upperbound}.
	We provide separate proofs for the Laplace-noise based mechanisms (\texttt{LapU} and \texttt{DiscLapU}) and the \texttt{RAPPOR} mechanism. However, both proofs follow the same two steps:
	\begin{enumerate}
		\item Derive upper bounds for the moments $\momentTwosampleVarCondexpY, \momentTwosampleVarCondexpZ$, and $\momentTwosampleExpSquare$ in~\eqref{def:moment_terms}.
		\item  Using the upper bounds established in Step 1,  show that condition~\eqref{eq:two_moments_original_cond_two_sample} in the two moments method is fulfilled as long as inequality~\eqref{eq:two_moments_two_sample_cond_merge} in Lemma~\ref{appendix:lemma:discrete_upper_bound} holds.
	\end{enumerate}
	
	We now verify the previous preliminary steps in order, first assuming the Laplace-noise based mechanisms.
	
	\subsubsection{Proof of Lemma \ref{appendix:lemma:discrete_upper_bound} Through Laplace or Discrete Laplace Mechanism}\label{proof:multinomial_upper_bound_first_moment}
	Since the analysis for Laplace and discrete Laplace mechanism is similar, we present the proof using the former (see Remark~\ref{remark: discrete laplace noise} for details).
    
    \vskip 1em
    
	\begin{proof} We follow the two steps mentioned above.
    
\vskip 1em

	\noindent \textit{Step 1: Bound  the moments from above.}
	We start by bounding the variance of conditional expectation terms.
	Recall the notation
	$
	\momentTwosampleVarCondexpY
	=
	\mVPQ[\mE\{ \kernelTwoSampleSym(
	\tilde{\vectorize{\rvTwo}}_1,
	\vecRandomPrivTwoSampleYNumber{2}
	;
	\vecRandomPrivTwoSampleZNumber{1},
	\vecRandomPrivTwoSampleZNumber{2}
	)
	\,|\,
	\tilde{\vectorize{\rvTwo}}_1
	\}]$.
	To upper bound $\momentTwosampleVarCondexpY$, we first calculate the conditional expectation of the kernel function, namely
	$A:=\mE\{ \kernelTwoSampleSym(
	\tilde{\vectorize{\rvTwo}}_1,
	\vecRandomPrivTwoSampleYNumber{2}
	;
	\vecRandomPrivTwoSampleZNumber{1},
	\vecRandomPrivTwoSampleZNumber{2}
	)
	\,|\,
	\tilde{\vectorize{\rvTwo}}_1
	\}$.
	Then we  bound the variance of $A$.
	\\
	
	For the conditional expectation, recall
	the $\vectorIndex$th component of
	$\probVec_\rvTwo, \probVec_\rvThree,
	\tilde{\vectorize{\rvTwo}}_{\sampleIndexOne},
	\tilde{\vectorize{\rvThree}}_{\sampleIndexTwo}$
	are denoted as
	$
	\probVecElement{\rvTwo}{\vectorIndex},
	\probVecElement{\rvThree}{\vectorIndex},
	\tilde{\rvTwo}_{\sampleIndexOne \vectorIndex}$
	and $\tilde{\rvThree}_{\sampleIndexTwo, \vectorIndex}$,
	respectively.
	Due to i.i.d.~assumptions, regardless of privacy mechanism, we have the following equalities and inequalities.
	First we calculate the conditional expectation:
	\begin{align*}
		A&:=
		\mE
		\bigl[
		\kernelTwoSampleSym(
		\tilde{\vectorize{\rvTwo}}_1,
		\tilde{\vectorize{\rvTwo}}_2
		;
		\tilde{\vectorize{\rvThree}}_1,
		\tilde{\vectorize{\rvThree}}_2)
		\,|\,
		\tilde{\vectorize{\rvTwo}}_1
		\bigr]
		=
		\bigl(
		\tilde{\vectorize{\rvTwo}}_1 - \mE [ \tilde{\vectorize{\rvThree}}_{1}]
		\bigr)^\top
		\bigl(
		\mE [ \tilde{\vectorize{\rvTwo}}_{1}] - \mE [ \tilde{\vectorize{\rvThree}}_{1}]
		\bigr).
	\end{align*}
	Then we calculate the unconditional expectation:
	\begin{align*}
		\mE[A]
		=
		\bigl(
		\mE [ \tilde{\vectorize{\rvTwo}}_{1}]  - \mE [ \tilde{\vectorize{\rvThree}}_{1}]
		\bigr)^\top
		\bigl(
		\mE [ \tilde{\vectorize{\rvTwo}}_{1}] - \mE [ \tilde{\vectorize{\rvThree}}_{1}]
		\bigr)
		=
		\|\mE [ \tilde{\vectorize{\rvTwo}}_{1}]
		-
		\mE [ \tilde{\vectorize{\rvThree}}_{1}]
		\|_2^2.
	\end{align*}
	Then the squared and centered conditional expectation is calculated as follows:
	\begin{align*}
		A - \mE[A]
		&=
		\bigl(
		\tilde{\vectorize{\rvTwo}}_1 - \mE [ \tilde{\vectorize{\rvTwo}}_{1}]
		\bigr)^\top
		\bigl(
		\mE [ \tilde{\vectorize{\rvTwo}}_{1}] - \mE [ \tilde{\vectorize{\rvThree}}_{1}]
		\bigr).
	\end{align*}
	Based on this, the variance is calculated as
	\begin{align*}
		 \mE
		\bigl[
		\bigl(
		A - \mE[A]
		\bigr)^2
		\bigr]
		=~&
		\mE
		\bigl[
		\bigl\{
		\bigl(
		\tilde{\vectorize{\rvTwo}}_1 - \mE [ \tilde{\vectorize{\rvTwo}}_{1}]
		\bigr)^\top
		\bigl(
		\mE [ \tilde{\vectorize{\rvTwo}}_{1}] - \mE [ \tilde{\vectorize{\rvThree}}_{1}]
		\bigr)
		\bigr\}^2
		\bigr]
		\numberthis \label{appendix:equation:multinomial:upper:forrappor}
		\\
		= ~ &
		\mE
		\bigl[
		\bigl\{
		\tilde{\vectorize{\rvTwo}}_1^\top
		\bigl(
		\mE [ \tilde{\vectorize{\rvTwo}}_{1}] - \mE [ \tilde{\vectorize{\rvThree}}_{1}]
		\bigr)
		-
		\mE [ \tilde{\vectorize{\rvTwo}}_{1}]^\top
		\bigl(
		\mE [ \tilde{\vectorize{\rvTwo}}_{1}] - \mE [ \tilde{\vectorize{\rvThree}}_{1}]
		\bigr)
		\bigr\}^2
		\bigr]
		%
		\\  \leq ~ &
		2 \mE 
		\bigl[
		\bigl\{
		\tilde{\vectorize{\rvTwo}}_1^\top 
		\bigl(
		\mE [ \tilde{\vectorize{\rvTwo}}_{1}] - \mE [ \tilde{\vectorize{\rvThree}}_{1}]
		\bigr)
		\bigr\}^2
		\bigr]
		+
		2 
		\bigl[
		\bigl\{
		\mE [ \tilde{\vectorize{\rvTwo}}_{1}]^\top
		\bigl(
		\mE [ \tilde{\vectorize{\rvTwo}}_{1}] - \mE [ \tilde{\vectorize{\rvThree}}_{1}]
		\bigr)
		\bigr\}^2
		\bigr],
		\numberthis \label{appendix:equation:multinomial:upper:cond_var_before_expec}
	\end{align*}
	where the last inequality uses $(x+y)^2 \leq 2x^2 + 2y^2$. Using $\|\probVec_\rvTwo\|_2 \leq 1$, we bound the second term in \eqref{appendix:equation:multinomial:upper:cond_var_before_expec} by
	\begin{align*}
		2\alphabetSize^2 \|\probVec_\rvTwo\|_2
		\|\probVec_\rvTwo - \probVec_\rvThree\|_2^2.
		\numberthis \label{appendix:equation:multinomial:upper:cond_var_term_2}
	\end{align*}
	For the first term, we leverage the structure of our privacy mechanism, which adds independent and centered noises. Applying the Cauchy--Schwarz inequality then provides an upper bound, reducing the order of $\alphabetSize$ compared to a direct application of Cauchy--Schwarz. More formally,
	\begin{align*}
		2 \mE
		\Bigl\{
		\tilde{\vectorize{\rvTwo}}_1^\top&
		\bigl(
		\mE [ \tilde{\vectorize{\rvTwo}}_{1}] - \mE [ \tilde{\vectorize{\rvThree}}_{1}]
		\bigr)
		\Bigr\}^2
		\\ = ~ &
		2\sum_{\vectorIndex=1}^\alphabetSize
		\sum_{\vectorIndex'=1}^\alphabetSize
		\mE[
		\tilde{\rvTwo}_{1 \vectorIndex}
		\tilde{\rvTwo}_{1 \vectorIndex'}
		]
		\bigl(
		\mE [ \tilde{\rvTwo}_{1 \vectorIndex}]
		-
		\mE [ \tilde{\rvThree}_{1 \vectorIndex}]
		\bigr)
		\bigl(
		\mE [ \tilde{\rvTwo}_{1 \vectorIndex'}]
		-
		\mE [ \tilde{\rvThree}_{1 \vectorIndex'}]
		\bigr) 
		\\ \stackrel{(a)}{=} \hskip 1.57mm &
		2\sum_{\vectorIndex=1}^\alphabetSize
		\mE[ \tilde{\rvTwo}_{1 \vectorIndex}^2 ]
		\bigl(
		\mE [ \tilde{\rvTwo}_{1 \vectorIndex}]
		-
		\mE [ \tilde{\rvThree}_{1 \vectorIndex}]
		\bigr)^2 
		\\ \stackrel{(b)}{=} \hskip 1.57mm &
		2\sum_{\vectorIndex=1}^\alphabetSize
		\bigl\{ \alphabetSize \probVecElement{\rvTwo}{\vectorIndex} + \LapUParam^2 \bigr\}
		\bigl(
		\mE [ \tilde{\rvTwo}_{1 \vectorIndex}]
		-
		\mE [ \tilde{\rvThree}_{1 \vectorIndex}]
		\bigr)^2
		\\ =~&
		2\sum_{\vectorIndex=1}^\alphabetSize
		\alphabetSize \probVecElement{\rvTwo}{\vectorIndex}
		\bigl(
		\mE [ \tilde{\rvTwo}_{1 \vectorIndex}]
		-
		\mE [ \tilde{\rvThree}_{1 \vectorIndex}]
		\bigr)^2
		+ 
		2 \alphabetSize \LapUParam^2
		\|\probVec_\rvTwo - \probVec_\rvThree\|_2^2
		\\\stackrel{(c)}{\leq}\hskip 1.57mm &
		2
		\alphabetSize^2 \|\probVec_\rvTwo\|_2
		\|\probVec_\rvTwo - \probVec_\rvThree\|_2^2
		+ 
		2 \alphabetSize \LapUParam^2
		\|\probVec_\rvTwo - \probVec_\rvThree\|_2^2,
		\numberthis \label{appendix:equation:multinomial:upper:cond_var_term_1}
	\end{align*}
	where step~$(a)$ uses $\mE[ \tilde{\rvTwo}_{1 \vectorIndex}  \tilde{\rvTwo}_{1 \vectorIndex'}] = 0$ for $m \neq m'$, 
	step~$(b)$ uses 
	$
	\mE[ \tilde{\rvTwo}_{1 \vectorIndex}^2] =\alphabetSize \probVecElement{\rvTwo}{\vectorIndex} +  \LapUParam^2$,
	and step~$(c)$ uses 
	the Cauchy--Schwarz inequality, 
	$\|\probVec_\rvTwo\|_2^2 \leq \|\probVec_\rvTwo\|_2$,
	and monotonicity of $\ell_p$ norm, specifically $\ell_4 \leq \ell_2$.
	Combining the results in~\eqref{appendix:equation:multinomial:upper:cond_var_term_2} and~\eqref{appendix:equation:multinomial:upper:cond_var_term_1}, we achieve an upper bound for $\momentTwosampleVarCondexpY $ given as 
	\begin{align*}
		\momentTwosampleVarCondexpY 
		\leq
		\bigl(
		4 \alphabetSize^2 \|\probVec_\rvTwo\|_2 
		+
		2 \alphabetSize \LapUParam^2
		\bigr)
		\|\probVec_\rvTwo - \probVec_\rvThree\|_2^2.
		\numberthis \label{two_moments_intermediate_1}
	\end{align*}
	By symmetry, we also have
	\begin{align*}
		\momentTwosampleVarCondexpZ
		\leq
		\bigl(
		4 \alphabetSize^2 \|\probVec_\rvThree\|_2 
		+
		2 \alphabetSize \LapUParam^2
		\bigr)
		\|\probVec_\rvTwo - \probVec_\rvThree\|_2^2.
		\numberthis \label{two_moments_intermediate_2}
	\end{align*}
	Combining the upper bound results of~\eqref{two_moments_intermediate_1} and~\eqref{two_moments_intermediate_2}, and keeping in mind that
	${b} = \max\{ \|\probVec_Y\|_2^2, \|\probVec_Z\|_2^2 \}$,
	we obtain the following upper bound:
	\begin{equation}\label{upper_lapu_moment1}
		\max
		\{
		\momentTwosampleVarCondexpY,
		\momentTwosampleVarCondexpZ
		\}
		\leq
		(4{\alphabetSize}^2 {b}^{1/2} + {2\alphabetSize}
		\LapUParam^2) \|\probVec_\rvTwo - \probVec_\rvThree\|_2^2.
	\end{equation}
	We now turn our attention to the expectation of square terms:
	\begin{align*}
		\momentTwosampleExpSquare:=
		\max \bigl\{
		\mE
		\bigl[
		(
		{\tilde{\vectorize{\rvTwo}}^\top }_1
		\tilde{\vectorize{\rvTwo}}_2
		)^2
		\bigr],~
		\mE
		\bigl[
		(
		\tilde{\vectorize{\rvThree}}_1^\top
		\rVecZPriv_2)^2
		\bigr]
		,~
		\mE
		\bigl[
		(
		\tilde{\vectorize{\rvTwo}}_1^\top
		\rVecZPriv_1)^2
		\bigr]
		\bigr\}.
	\end{align*}
	First, we calculate
	$\mE \bigl[(\tilde{\vectorize{\rvTwo}}_1^\top \tilde{\vectorize{\rvTwo}}_2)^2 \bigr]$
	and
	$\mE \bigl[ (
	\tilde{\vectorize{\rvThree}}_1^\top
	\rVecZPriv_2
	)^2\bigr]$.
	Note that
	\begin{align*}
		\mE
		\bigl[
		(\tilde{\vectorize{\rvTwo}}_1^\top \tilde{\vectorize{\rvTwo}}_2)^2
		\bigr]
		= 
		\mE
		\biggl[
		\biggl( 
		\sum_{\vectorIndex=1}^{\alphabetSize}
		\tilde{\rvTwo}_{1 \vectorIndex}
		\tilde{\rvTwo}_{2 \vectorIndex}
		\biggr)^2
		\biggr]
		\stackrel{(a)}{=}\hskip 1.57mm & 
		\sum_{\vectorIndex=1}^{\alphabetSize}
		\sum_{\vectorIndex'=1}^{\alphabetSize}
		\mE
		\bigl[
		\tilde{\rvTwo}_{1 \vectorIndex}
		\tilde{\rvTwo}_{1 \vectorIndex'}
		\bigr]
		\mE
		\bigl[
		\tilde{\rvTwo}_{2 \vectorIndex}
		\tilde{\rvTwo}_{2 \vectorIndex'}
		\bigr]
		%
		\\ \stackrel{(b)}{=}\hskip 1.57mm &
		\sum_{\vectorIndex=1}^{\alphabetSize}
		\bigl\{
		{\alphabetSize} \probVecElement{\rvTwo}{\vectorIndex} + \LapUParam^2
		\bigr\}^2
		%
		\\ =~&
		\sum_{\vectorIndex=1}^{\alphabetSize}
		\bigl\{
		{\alphabetSize}^2 p^2_Y(\vectorIndex)
		+
		2 {\alphabetSize} \LapUParam^2 \probVecElement{\rvTwo}{\vectorIndex}
		+
		\LapUParam^4
		\bigr\}
		%
		\\ = ~&
		{\alphabetSize}^2
		\|\probVec_\rvTwo \|_2^2
		~+~
		2{\alphabetSize} \LapUParam^2
		~+~
		\alphabetSize \LapUParam^4	
		\numberthis \label{eq:MYZTwoOne},	
	\end{align*}
	\noindent
	where step~$(a)$ uses the independence between observations, and
	step~$(b)$ uses $
	\mE
	\bigl[
	\tilde{\rvTwo}_{1 \vectorIndex}
	\tilde{\rvTwo}_{1 \vectorIndex'}
	\bigr]
	=
	\bigl(
	\alphabetSize
	\probVecElement{\rvTwo}{\vectorIndex} + \LapUParam^2
	\bigr)
	\mathds{1}(\vectorIndex = \vectorIndex')$~(and similar equalitiy for $\tilde{\vectorize{\rvTwo}}_2$).
	By symmetry, we also have:
	\begin{align*}
		\mE
		\bigl[
		(
		\tilde{\vectorize{\rvThree}}_1^\top 
		\tilde{\vectorize{\rvThree}}_2
		)^2
		\bigr]
		=
		{\alphabetSize}^2 \|\probVec_{\rvThree}\|_2^2
		~+~
		2 {\alphabetSize} \LapUParam^2
		~+~
		\alphabetSize  \LapUParam^4.
		\numberthis \label{eq:MYZTwoTwo}
	\end{align*}
	Moving on, we upper bound $\mE \bigl[ ( \tilde{\vectorize{\rvTwo}}_1^\top \tilde{\vectorize{\rvThree}}_1 )^2 \bigr]$ as follows:
	\begin{align*}
		\mE
		\bigl[
		(
		\tilde{\rvTwo}_{1 \vectorIndex}^\top 
		\tilde{\rvThree}_{1 \vectorIndex}
		)^2
		\bigr]
		= ~&
		\mE
		\biggl[
		\biggl( 
		\sum_{\vectorIndex=1}^{\alphabetSize}
		\tilde{\rvTwo}_{1 \vectorIndex}
		\tilde{\rvThree}_{1 \vectorIndex}
		\biggr)^2
		\biggr]
		%
		\\ \stackrel{(a)}{=} \hskip 1.57mm &
		\sum_{\vectorIndex=1}^{\alphabetSize}
		\sum_{\vectorIndex'=1}^{\alphabetSize}
		\mE
		[
		\tilde{\rvTwo}_{1 \vectorIndex}
		\tilde{\rvTwo}_{1 \vectorIndex'}
		]
		\mE
		[
		\tilde{\rvThree}_{1 \vectorIndex}
		\tilde{\rvThree}_{1 \vectorIndex'}
		]
		%
		\\ \stackrel{(b)}{=} \hskip 1.57mm &
		\sum_{\vectorIndex=1}^{\alphabetSize}
		\bigl\{
		{\alphabetSize} \probVecElement{\rvTwo}{\vectorIndex} + \LapUParam^2
		\bigr\}
		\bigl\{
		{\alphabetSize} \probVecElement{\rvThree}{\vectorIndex} + \LapUParam^2
		\bigr\}
		%
		\\ \stackrel{(c)}{\leq} \hskip 1.57mm &
		\frac{1}{2}
		\sum_{\vectorIndex=1}^{\alphabetSize}
		\bigl[
		\bigl\{
		{\alphabetSize} \probVecElement{\rvTwo}{\vectorIndex} + \LapUParam^2
		\bigr\}^2
		+
		\bigl\{
		{\alphabetSize} \probVecElement{\rvThree}{\vectorIndex} + \LapUParam^2
		\bigr\}^2
		\bigr]
		\\ =~&
		\frac{{\alphabetSize}^2}{2} 
		\|\probVec_\rvTwo\|^2_2
		~+~
		\frac{{\alphabetSize}^2}{2} 
		\|\probVec_\rvThree\|^2_2
		~+~
		2 {\alphabetSize} \LapUParam^2 
		~+~
		\alphabetSize \LapUParam^4, \numberthis \label{eq:MYZTwoThree}	
	\end{align*}
	\noindent
	where step~$(a)$ uses the independence between $\tilde{\vectorize{\rvTwo}}_1$ and $\tilde{\vectorize{\rvThree}}_1$,
	step~$(b)$ uses $
	\mE
	\bigl[
	\tilde{\rvTwo}_{1 \vectorIndex}
	\tilde{\rvTwo}_{1 \vectorIndex'}
	\bigr]
	=
	\bigl(
	\alphabetSize
	\probVecElement{\rvTwo}{\vectorIndex} + \LapUParam^2
	\bigr)
	\mathds{1}(\vectorIndex = \vectorIndex')$~(and similar equality for $\tilde{\vectorize{\rvThree}}_1$), and step~$(c)$ applies the inequality $xy \leq x^2/2 + y^2/2$.
	\\
	
	Finally, by combining~\eqref{eq:MYZTwoOne},~\eqref{eq:MYZTwoTwo} and~\eqref{eq:MYZTwoThree}, we obtain the following upper bound:
	\begin{equation}\label{upper_lapu_moment2}
		\momentTwosampleExpSquare
		\leq
		2
		(
		{\alphabetSize}^2
		{b} 
		+
		{\alphabetSize} \LapUParam^2
		+
		{\alphabetSize} \LapUParam^4	
		).
	\end{equation}
	\begin{remark}[Proving the upper bound using \texttt{DiscLapU}] \label{remark: discrete laplace noise}
		In our proof with continuous Laplace noise from \textnormal{\texttt{LapU}},  we use the independence and the equality  
		$\
		\mE
		\bigl[
		\tilde{\rvTwo}_{1 \vectorIndex}
		\tilde{\rvTwo}_{1 \vectorIndex'}
		\bigr]
		=
		\bigl(
		{\alphabetSize}\probVecElement{\rvTwo}{\vectorIndex} + \LapUParam^2
		\bigr)
		\mathds{1}(\vectorIndex = \vectorIndex')$,
		which holds due to the Laplace noise's moments: mean zero and variance $\sigma_\alpha^2 = 8k/\alpha^2$. The discrete Laplace noise of \textnormal{\texttt{DiscLapU}} also satisfies these independence and moment conditions, with variance upper bounded by $8 \alphabetSize/\privacyParameter^2$ (Lemma~\ref{lemma:discLapU_Variance}).
		Due to these properties, the use of \textnormal{\texttt{DiscLapU}} also leads to
		$\
		\mE
		\bigl[
		\tilde{\rvTwo}_{1 \vectorIndex}
		\tilde{\rvTwo}_{1 \vectorIndex'}
		\bigr]
		\leq
		\bigl(
		{\alphabetSize}\probVecElement{\rvTwo}{\vectorIndex} + \LapUParam^2
		\bigr)
		\mathds{1}(\vectorIndex = \vectorIndex').$
		Given this inequality, the entire proof of this section remains valid for \textnormal{\texttt{DiscLapU}} as well.
	\end{remark}

\vskip 1em

	\noindent \textit{Step 2: Apply the two moments method.}\label{proof:twosample_upper_disc_conclusion}
	Using the bounds \eqref{upper_lapu_moment1} and \eqref{upper_lapu_moment2},  we show that condition \eqref{eq:two_moments_original_cond_two_sample} in the two moments method holds if the separation condition~\eqref{eq:two_moments_two_sample_cond_merge} in Lemma~\ref{appendix:lemma:discrete_upper_bound} is met.
	Since $\mE [U_{\sampleSize_1, \sampleSize_2}] = \alphabetSize \| \probVec_\rvTwo - \probVec_\rvThree \|_2^2$, assuming $\sampleSize_1 \leq \sampleSize_2$, condition~\eqref{eq:two_moments_original_cond_two_sample} of Lemma~\ref{theorem:twosampleTwoMomentsMethod} is satisfied when
	\begin{align*}
		\alphabetSize \| \probVec_\rvTwo - \probVec_\rvThree \|_2^2
		&\geq
		C_1
		\sqrt{
			\frac{({\alphabetSize}^2 {b}^{1/2} + {\alphabetSize}
				\LapUParam^2) \| \probVec_\rvTwo - \probVec_\rvThree \|_2^2}{ \beta \sampleSize_1}} \quad \text{from}~\eqref{upper_lapu_moment1},~\text{and} 
		\numberthis\label{twoSampleUpperboundConclusionOne}
		\\
		\alphabetSize \| \probVec_\rvTwo - \probVec_\rvThree \|_2^2
		&\geq
		C_2
		\sqrt{
			\frac{{\alphabetSize}^2
				{b} + \alphabetSize \LapUParam^2 + \alphabetSize \LapUParam^4
			}{ \gamma \beta \sampleSize_1^2}}	
		\numberthis\label{twoSampleUpperboundConclusionTwo} \quad \text{from}~\eqref{upper_lapu_moment2},
	\end{align*}
	for any  $P = (P_{Y}, P_{Z}) \in \mathcal{P}_{1,\mathrm{multi}}(\rho_{\sampleSize_1,\sampleSize_2})$.
	Since $\LapUParam = 2 \sqrt{2 \alphabetSize}  / \privacyParameter$,
	condition~\eqref{twoSampleUpperboundConclusionOne} is satisfied when
	$$
	\alphabetSize \| \probVec_\rvTwo - \probVec_\rvThree \|_2^2
	\geq
	C_3
	\sqrt{
		\frac
		{({\alphabetSize}^2 {b}^{1/2}
			+
			{\alphabetSize}^2 /\privacyParameter^2) \| \probVec_\rvTwo - \probVec_\rvThree \|_2^2}
		{\beta \sampleSize_1}}.
	$$
	Using  $\sqrt{x+y} \leq \sqrt{x} + \sqrt{y} \leq 2 \max\{\sqrt{x}, \sqrt{y}\}$~for $x,y \geq 0$, the condition above is implied by:
	\begin{align*}
		\| \probVec_\rvTwo - \probVec_\rvThree \|_2^2 
		\geq
		C_4(\beta)
		\;
		\frac{
			\max \bigl\{ {b}^{1/4},  1/\privacyParameter \bigr\}
		}{
			\sampleSize_1^{1/2}}.
		\numberthis\label{equation:prelim_max_1}
	\end{align*}
	On the other hand,
	condition~\eqref{twoSampleUpperboundConclusionTwo} is satisfied when 
	$$
	\alphabetSize \| \probVec_\rvTwo - \probVec_\rvThree \|_2^2
	\geq
	C_5 \sqrt{
		\frac{{\alphabetSize}^2
			{b} + 
			\alphabetSize^2 / \privacyParameter^2 +
			\alphabetSize^3 / \privacyParameter^4
		}{\gamma \beta \sampleSize_1^2}}.
	$$ 
	Using $\sqrt{x+y+z} \leq 3 \max\{ \sqrt{x},  \sqrt{y}, \sqrt{z}\}$ for $x, y, z \geq 0$, the condition above is implied by:
	\begin{align*}
		\| \probVec_\rvTwo - \probVec_\rvThree \|_2^2 
		\geq 
		C_6(\gamma, \beta) \;
		\frac{
			\max \{{b}^{1/4}, 1/\sqrt{\privacyParameter}, \alphabetSize^{1/4} / \privacyParameter \}
		}{ \sampleSize_1^{1/2}}.
		\numberthis\label{equation:prelim_max_2}
	\end{align*}
	Then, by combining~\eqref{equation:prelim_max_1} and~\eqref{equation:prelim_max_2}, the condition~\eqref{eq:two_moments_original_cond_two_sample} of Lemma~\ref{theorem:twosampleTwoMomentsMethod} is satisfied when
	\begin{align*}
		\rho_{\sampleSize_1,\sampleSize_2}
		\geq ~&
		C_6(\gamma, \beta)
		\frac{1}{\sampleSize_1^{1/2}}
		\max
		\biggl\{
		\max
		\biggl(
		{b}^{1/4}, \frac{1}{\privacyParameter}
		\biggr), \,
		\max
		\biggl(
		{b}^{1/4},
		\frac{1}{\sqrt{\privacyParameter}},
		\frac{\alphabetSize^{1/4}}{\privacyParameter}
		\biggr)
		\biggr\}
		%
		\\= ~ &
		C_6(\gamma, \beta) 
		\frac{1}{\sampleSize_1^{1/2}}
		\max
		\biggl\{
		{b}^{1/4}, \,
		\max
		\bigg(
		\frac{1}{\privacyParameter},
		\frac{1}{\sqrt{\privacyParameter}},
		\frac{\alphabetSize^{1/4}}{\privacyParameter}
		\biggr)
		\biggr\}
		\\= \hskip 1.91mm &
		\begin{cases}
			\displaystyle C_6(\gamma, \beta) 
			\frac{1}{\sampleSize_1^{1/2}}
			\max
			\biggl(
			{b}^{1/4}, \,
			\frac{\alphabetSize^{1/4}}{\privacyParameter}
			\biggr) & \text{if}~\alpha \leq \alphabetSize^{1/2},
			\\[1em]
			\displaystyle C_6(\gamma, \beta)
			\frac{1}{\sampleSize_1^{1/2}}
			\max
			\biggl(
			{b}^{1/4}, \,
			\frac{1}{\sqrt{\privacyParameter}}
			\biggr) & \text{if}~\alpha \geq \alphabetSize^{1/2}
		\end{cases}
		\\ \stackrel{(a)}{=} \hskip 1.37mm &
		C_6(\gamma, \beta) 
		\frac{1}{\sampleSize_1^{1/2}}
		\max
		\biggl(
		{b}^{1/4}, \,
		\frac{\alphabetSize^{1/4}}{\privacyParameter}
		\biggr)
		\\ = \hskip 1.74mm  &
		C_6(\gamma, \beta) 
		\max\left(
		\frac
		{{b}^{1/4}}
		{\sampleSize_1^{1/2}}, \,
		\frac
		{\alphabetSize^{1/4}}
		{(\sampleSize_1 \privacyParameter^2)^{1/2}}
		\right),
	\end{align*}
	where the step $(a)$ holds because if $\alpha \geq \alphabetSize^{1/2}$, then we have
	$
	{b}^{1/4}
	\geq
	1 / \alphabetSize^{1/4}
	\geq 
	1 / \sqrt{\privacyParameter}
	\geq
	\alphabetSize^{1/4} / \privacyParameter.
	$
		This completes the proof of the upper bound through the (discrete) Laplace  mechanism.
	\end{proof}
	\subsubsection{Proof of Lemma \ref{appendix:lemma:discrete_upper_bound} Through \texttt{RAPPOR} Mechanism}\label{proof:rappor_optimal}
	While we follow the same two main steps as in Appendix~\ref{proof:multinomial_upper_bound_first_moment}, proving a tight upper bound under the \texttt{RAPPOR} mechanism requires a more delicate analysis due to the dependence and bias inherent in private views.  
	To elaborate, recall from Lemma~\ref{lemma:rappor_var} that the entries of an $\privacyParameter$-LDP view under the \texttt{RAPPOR} mechanism are dependent Bernoulli random variables. Specifically, 
	the $\vectorIndex$th entry of $\tilde{\vectorize{\rvTwo}}_{i}$,
	denoted as $\tilde{\rvTwo}_{\sampleIndexOne \vectorIndex}$, 
	and
	the $\vectorIndex'$th entry of $\tilde{\vectorize{\rvThree}}_{\sampleIndexTwo}$,
	denoted as $\tilde{\rvThree}_{\sampleIndexTwo \vectorIndex'}$, 
	follow the following distributions:
	\begin{align*}
		\tilde{\rvTwo}_{\sampleIndexOne \vectorIndex}
		\sim
		\mathrm{Ber}
		\bigl(
		\privacyParameterrappor \mathds{1}
		(
		\rvTwo_{\sampleIndexOne} = \vectorIndex
		) + \smallNumberrappor
		\bigr)
		\quad \text{and} \quad 
		\tilde{\rvThree}_{\sampleIndexTwo \vectorIndex'}
		\sim
		\mathrm{Ber}
		\bigl(
		\privacyParameterrappor \mathds{1}
		(
		\rvThree_{\sampleIndexTwo} = \vectorIndex'
		) + \smallNumberrappor
		\bigr),
	\end{align*}
	where 
	\begin{equation*}
		\privacyParameterrappor = \frac{e^{\privacyParameter /2} -1}{e^{\privacyParameter /2} +1}\quad \text{and} \quad
		\smallNumberrappor = \frac{1}{e^{\privacyParameter /2} +1}.
	\end{equation*}
	As in Appendix~\ref{proof:multinomial_upper_bound_first_moment}, we employ the two moments method (Lemma~\ref{theorem:twosampleTwoMomentsMethod}), which compares the the expectation of the U-statistic with a variance proxy.
	The challenge is that, unlike \texttt{LapU}, for each $m \in [k]$, the $m$th entry of each $\privacyParameter$-LDP view in \texttt{RAPPOR} is not centered at the scaled multinomial probability:
	\begin{equation}\label{equation:rappor_expecation}
		\mE
		\bigl[
		\tilde{\rvTwo}_{\sampleIndexOne \vectorIndex}
		\bigr]
		=
		\mE
		\bigl[
		\mE 
		\bigl[
		\tilde{\rvTwo}_{\sampleIndexOne \vectorIndex} 
		\,\big|\, 
		\rvTwo_{\sampleIndexOne \vectorIndex}] 
		\bigr]
		=
		\privacyParameterrappor 
		\probVecElement{\rvTwo}{\vectorIndex}
		+ \smallNumberrappor,	
	\end{equation}
	and pairwise negatively correlated for $m, m' \in [k]$ such that $m \neq m'$:
	\begin{equation}\label{cov_neg}
		\mathrm{Cov}(\tilde{Y}_{1\vectorIndex}, \tilde{Y}_{1\vectorIndex'})
		=
		-
		\privacyParameterrappor^2
		\probVecElement{\rvTwo}{\vectorIndex}
		\probVecElement{\rvTwo}{\vectorIndex'},
	\end{equation}
	\noindent
	which is proved in Lemma~\ref{lemma:rappor_var}.
	Keeping these facts in mind, we proceed to the upper bound proof.

    \vskip 1em
    
	\begin{proof}
	We follow the two steps of Appendix~\ref{proof:multinomial_upper_bound_first_moment}.

\vskip 1em

\noindent \textit{Step 1: Bound the moments from above.}
	We start by bounding the variance of conditional expectation terms. We begin with the intermediate form of $\momentTwosampleVarCondexpY$~\eqref{appendix:equation:multinomial:upper:forrappor} found in Appendix~\ref{proof:multinomial_upper_bound_first_moment}:
	\begin{align*}
		\momentTwosampleVarCondexpY
		= ~ &
		\mE 
		\Bigl[ 
		\bigl\{ 
		\bigl( 
		\tilde{\vectorize{\rvTwo}}_1
		- 
		\mE [ \tilde{\vectorize{\rvTwo}}_{1}]
		\bigr)^\top 
		\bigl( 
		\mE [ \tilde{\vectorize{\rvTwo}}_{1}]
		- 
		\mE [ \tilde{\vectorize{\rvThree}}_{1}]
		\bigr) 
		\bigr\}^2 
		\Bigr] 
		\\ \stackrel{(a)}{=} \hskip 1.57mm &
		\mE 
		\Bigl[ 
		\bigl\{ 
		\bigl( 
		\tilde{\vectorize{\rvTwo}}_1
		- 
		\mE [ \tilde{\vectorize{\rvTwo}}_{1}]
		\bigr)^\top 
		\privacyParameterrappor
		\bigl( 
		\probVec_{\rvTwo}
		- 
		\probVec_{\rvThree}
		\bigr) 
		\bigr\}^2 
		\Bigr] 
		\\= ~&
		\privacyParameterrappor^2
		\mE 
		\biggl[ 
		\biggl\{ 
		\sum_{\vectorIndex=1}^\alphabetSize
		\bigl( 
		\private{\rvTwo}_{1 \vectorIndex}
		- 
		\mE [ \private{\rvTwo}_{1 \vectorIndex}]
		\bigr) 
		\bigl( 
		\probVecElement{\rvTwo}{\vectorIndex}
		- 
		\probVecElement{\rvThree}{\vectorIndex}
		\bigr) 
		\biggr\}^2 
		\biggr] 
		\\= ~&
		\privacyParameterrappor^2
		\mE 
		\biggl[ 
		\sum_{\vectorIndex=1}^\alphabetSize
		\bigl( 
		\private{\rvTwo}_{1 \vectorIndex}
		- 
		\mE [ \private{\rvTwo}_{1 \vectorIndex}]
		\bigr)^2 
		\bigl( 
		\probVecElement{\rvTwo}{\vectorIndex}
		- 
		\probVecElement{\rvThree}{\vectorIndex}
		\bigr)^2 
		\biggr] 
		\\ &+~
		\privacyParameterrappor^2
		\mE 
		\biggl[ 
		\sum_{1 \leq \vectorIndex \neq \vectorIndex' \leq \alphabetSize}
		\bigl( 
		\private{\rvTwo}_{1 \vectorIndex}
		- 
		\mE [ \private{\rvTwo}_{1 \vectorIndex}]
		\bigr) 
		\bigl( 
		\private{\rvTwo}_{1 \vectorIndex'}
		- 
		\mE [ \private{\rvTwo}_{1 \vectorIndex'}]
		\bigr) 
		\bigl( 
		\probVecElement{\rvTwo}{\vectorIndex}
		- 
		\probVecElement{\rvThree}{\vectorIndex}
		\bigr) 
		\bigl( 
		\probVecElement{\rvTwo}{\vectorIndex'}
		- 
		\probVecElement{\rvThree}{\vectorIndex'}
		\bigr) 
		\biggr] 
		\\= ~&
		\privacyParameterrappor^2
		\biggl[ 
		\sum_{\vectorIndex=1}^\alphabetSize
		\mathrm{Var}
		( 
		\private{\rvTwo}_{1 \vectorIndex}
		) 
		\bigl( 
		\probVecElement{\rvTwo}{\vectorIndex}
		- 
		\probVecElement{\rvThree}{\vectorIndex}
		\bigr)^2 
		\biggr] 
		\\ &+~
		\privacyParameterrappor^2
		\biggl[ 
		\sum_{1 \leq \vectorIndex \neq \vectorIndex' \leq \alphabetSize}
		\mathrm{Cov}
		(
		\private{\rvTwo}_{1 \vectorIndex},
		\private{\rvTwo}_{1 \vectorIndex'}
		) 
		\bigl( 
		\probVecElement{\rvTwo}{\vectorIndex}
		- 
		\probVecElement{\rvThree}{\vectorIndex}
		\bigr) 
		\bigl( 
		\probVecElement{\rvTwo}{\vectorIndex'}
		- 
		\probVecElement{\rvThree}{\vectorIndex'}
		\bigr) 
		\biggr] 
		\\
		\stackrel{(b)}{\leq} \hskip 1.57mm &
		\privacyParameterrappor^3
		\normSqMultinomMax^{1/2}
		\|\probVec_{\rvTwo} - \probVec_{\rvThree}\|_2^2
		+
		\privacyParameterrappor^2
		\smallNumberrappor
		\|\probVec_{\rvTwo} - \probVec_{\rvThree}\|_2^2,
		\numberthis
		\label{term:multinomial:upperbound:rappor_elltwo:var_of_cond_expec}
	\end{align*}
	where step 
	$(a)$ uses \eqref{equation:rappor_expecation}, and step $(b)$ uses \eqref{inequality:var_diff_squared} in Lemma~\ref{rappor:inequalities} for the first term and \eqref{cov_neg} for the second term.
	Therefore we have
	\begin{equation}\label{inequality:rappor_elltwo_var_condexp_y}
		\momentTwosampleVarCondexpY
		\leq
		\privacyParameterrappor^3
		\normSqMultinomMax^{1/2}
		\|\probVec_{\rvTwo} - \probVec_{\rvThree}\|_2^2
		+
		\privacyParameterrappor^2
		\smallNumberrappor
		\|\probVec_{\rvTwo} - \probVec_{\rvThree}\|_2^2,
	\end{equation}
	and by symmetry, we also have
	\begin{equation}\label{inequality:rappor_elltwo_var_condexp_z}
		\momentTwosampleVarCondexpZ
		\leq
		\privacyParameterrappor^3
		\normSqMultinomMax^{1/2}
		\|\probVec_{\rvThree} - \probVec_{\rvThree}\|_2^2
		+
		\privacyParameterrappor^2
		\smallNumberrappor
		\|\probVec_{\rvTwo} - \probVec_{\rvThree}\|_2^2.
	\end{equation}
	Next, we examine the expectation of square terms \(\momentTwosampleExpSquare\). The proof's key technique involves rewriting the kernel 
	$
	(\tilde{\vectorize{\rvTwo}}_{1} - \tilde{\vectorize{\rvThree}}_{1})^\top (\tilde{\vectorize{\rvTwo}}_{2} - \tilde{\vectorize{\rvThree}}_{2})
	$~\eqref{equation:twosample_kernel}
	of the U-statistic~\eqref{def:statistic_elltwo} using
	$\bar{\vectorize{\rvTwo}}_{1}
	:=
	\tilde{\vectorize{\rvTwo}}_{1}
	-
	\mE [\tilde{\vectorize{Y}}_1]$,
	$
	\bar{\vectorize{\rvThree}}_{1}
	:=
	\tilde{\vectorize{\rvThree}}_{1}
	-
	\mE [\tilde{\vectorize{\rvTwo}}_1],
	$
	$\bar{\vectorize{\rvTwo}}_{2}
	:=
	\tilde{\vectorize{\rvTwo}}_{2}
	-
	\mE [\tilde{\vectorize{Y}}_2]$, and
	$
	\bar{\vectorize{\rvThree}}_{2}
	:=
	\tilde{\vectorize{\rvThree}}_{2}
	-
	\mE [\tilde{\vectorize{\rvTwo}}_2].
	$
	Formally, 
	we re-express the kernel~\eqref{equation:twosample_kernel} as the following:
	\begin{align*}
		(\tilde{\vectorize{\rvTwo}}_{1} - \tilde{\vectorize{\rvThree}}_{1})^\top (\tilde{\vectorize{\rvTwo}}_{2} - \tilde{\vectorize{\rvThree}}_{2})
		&=
		(\tilde{\vectorize{\rvTwo}}_{1}
		-
		\mE [\tilde{\vectorize{Y}}_1]
		+
		\mE [\tilde{\vectorize{Y}}_1]
		-
		\tilde{\vectorize{\rvThree}}_{1})^\top (
		\tilde{\vectorize{\rvTwo}}_{2} 
		-
		\mE [\tilde{\vectorize{Y}}_2]
		+
		\mE [\tilde{\vectorize{Y}}_2]
		-
		\tilde{\vectorize{\rvThree}}_{2})
		%
		\\&=
		\bigl\{
		(
		\tilde{\vectorize{\rvTwo}}_{1}
		-
		\mE [\tilde{\vectorize{Y}}_1]
		)
		-
		(
		\tilde{\vectorize{\rvThree}}_{1}
		-
		\mE [\tilde{\vectorize{Y}}_1]
		)
		\bigr\}^\top \bigl\{
		(
		\tilde{\vectorize{\rvTwo}}_{2}
		-
		\mE [\tilde{\vectorize{Y}}_2]
		)
		-
		(
		\tilde{\vectorize{\rvThree}}_{2}
		-
		\mE [\tilde{\vectorize{Y}}_2]
		)
		\bigr\}
		\\&=
		(\bar{\vectorize{\rvTwo}}_{1} - \bar{\vectorize{\rvThree}}_{1})^\top (\bar{\vectorize{\rvTwo}}_{2} - \bar{\vectorize{\rvThree}}_{2})
		\numberthis
		\label{statistic:rappor_elltwo_kernel_debiased}
	\end{align*}
	Let \(\bar{U}_{\sampleSize_1, \sampleSize_2}\) denote the U-statistic defined by the kernel $(\bar{\vectorize{\rvTwo}}_{1} - \bar{\vectorize{\rvThree}}_{1})^\top (\bar{\vectorize{\rvTwo}}_{2} - \bar{\vectorize{\rvThree}}_{2})$ in~\eqref{statistic:rappor_elltwo_kernel_debiased}. Then \(\bar{U}_{\sampleSize_1, \sampleSize_2}\) is equal to our original U-statistic~\eqref{def:statistic_elltwo}. Thus, it suffices to prove Lemma~\ref{appendix:lemma:discrete_upper_bound} with respect to \(\bar{U}_{\sampleSize_1, \sampleSize_2}\). Let \(\momentTwosampleVarCondexpBarY\), \(\momentTwosampleVarCondexpBarZ\), and \(\momentTwosampleExpSquareBar\) be the moments defined as in~\eqref{def:moment_terms} using the kernel $(\bar{\vectorize{\rvTwo}}_{1} - \bar{\vectorize{\rvThree}}_{1})^\top (\bar{\vectorize{\rvTwo}}_{2} - \bar{\vectorize{\rvThree}}_{2})$ . Since \(\momentTwosampleVarCondexpBarY = \momentTwosampleVarCondexpY\) and \(\momentTwosampleVarCondexpBarZ = \momentTwosampleVarCondexpZ\), the bounds \eqref{inequality:rappor_elltwo_var_condexp_y} and \eqref{inequality:rappor_elltwo_var_condexp_z} are also valid for \(\momentTwosampleVarCondexpBarY\) and \(\momentTwosampleVarCondexpBarZ\). Now, we move on to bounding \(\momentTwosampleExpSquareBar\) given as
	\begin{equation*}
		\momentTwosampleExpSquareBar
		=
		\max
		\{
		\mE[
		(
		\bar{\vectorize{\rvTwo}}_1^\top 
		\bar{\vectorize{\rvTwo}}_2
		)^2
		], \, 
		\mE[
		(
		\bar{\vectorize{\rvTwo}}_1^\top
		\bar{\vectorize{\rvThree}}_1)^2], \,
		\mE[
		(
		\bar{\vectorize{\rvThree}}_1^\top
		\bar{\vectorize{\rvThree}}_2)^2
		]
		\}.
	\end{equation*}
	Let us start by upper bounding $\mE[
	(
	\bar{\vectorize{\rvTwo}}_1^\top 
	\bar{\vectorize{\rvTwo}}_2
	)^2
	]$:
	\begin{align*}
		\mE[
		(
		\bar{\vectorize{\rvTwo}}_1^\top 
		\bar{\vectorize{\rvTwo}}_2
		)^2
		]
		&=
		\mE
		\biggl[ 
		\biggl\{
		\sum_{\vectorIndex = 1}^\alphabetSize 
		\bigl( 
		\private{\rvTwo}_{1 \vectorIndex}
		- 
		\mE
		\bigl[
		\private{\rvTwo}_{1 \vectorIndex}
		\bigr]
		\bigr)
		\bigl( 
		\private{\rvTwo}_{2 \vectorIndex}
		-
		\mE
		\bigl[
		\private{\rvTwo}_{2 \vectorIndex}
		\bigr]
		\bigr)
		\biggl\}^2 
		\biggr]
\\&= 
		\mE
		\biggl[ 
		\sum_{\vectorIndex = 1}^\alphabetSize 
		\bigl( 
		\private{\rvTwo}_{1 \vectorIndex}
		- 
		\mE
		\bigl[
		\private{\rvTwo}_{1 \vectorIndex}
		\bigr]
		\bigr)^2
		\bigl( 
		\private{\rvTwo}_{2 \vectorIndex}
		-
		\mE
		\bigl[
		\private{\rvTwo}_{2 \vectorIndex}
		\bigr]
		\bigr)^2
		\biggr]
		\\  & \quad+ 
		\mE
		\biggl[ 
		\sum_{1 \leq \vectorIndex \neq \vectorIndex' \leq \alphabetSize} 
		\hskip -5mm 
		\bigl( 
		\private{\rvTwo}_{1 \vectorIndex}
		- 
		\mE
		\bigl[
		\private{\rvTwo}_{1 \vectorIndex}
		\bigr]
		\bigr)
		\bigl( 
		\private{\rvTwo}_{1 \vectorIndex'}
		- 
		\mE
		\bigl[
		\private{\rvTwo}_{1 \vectorIndex'}
		\bigr]
		\bigr)
		\bigl( 
		\private{\rvTwo}_{2 \vectorIndex}
		-
		\mE
		\bigl[
		\private{\rvTwo}_{2 \vectorIndex}
		\bigr]
		\bigr)
		\bigl( 
		\private{\rvTwo}_{2 \vectorIndex'}
		-
		\mE
		\bigl[
		\private{\rvTwo}_{2 \vectorIndex'}
		\bigr]
		\bigr)
		\biggr]
		\\&= 
		\sum_{\vectorIndex = 1}^\alphabetSize
		\mathrm{Var}(\private{Y}_{1 \vectorIndex})^2
		+
		\hskip -2mm
		\sum_{1 \leq \vectorIndex \neq \vectorIndex' \leq \alphabetSize}
\hskip -5mm 
		\mathrm{Cov}(\private{Y}_{1 \vectorIndex}, \private{Y}_{1 \vectorIndex'})^2
		\\&\leq
		3
		\privacyParameterrappor^2
		\normSqMultinomMax
		+
		2
		\smallNumberrappor^2
		\alphabetSize,
		\numberthis
		\label{term:multinomial:upper:rappor_elltwo:EY1Y2:bound}
	\end{align*}
	where the last inequality uses  \eqref{inequality:var_squared} and \eqref{inequality:cov_squared} in Lemma~\ref{rappor:inequalities}. 
	By symmetry, we also have:
	\begin{equation}\label{term:multinomial:upper:rappor_elltwo:EZ1Z2:bound}
		\mE[
		(
		\bar{\vectorize{\rvThree}}_1^\top 
		\bar{\vectorize{\rvThree}}_2
		)^2
		]
		\leq
		3
		\privacyParameterrappor^2
		\normSqMultinomMax
		+
		2
		\smallNumberrappor^2
		\alphabetSize.	
	\end{equation}
	\noindent
	Now we move on to bounding $
	\mE[
	(
	\bar{\vectorize{\rvTwo}}_1^\top
	\bar{\vectorize{\rvThree}}_1)^2]
	$, which is expanded as:
	\begin{align*}
		\mE[
		(
		\bar{\vectorize{\rvTwo}}_1^\top 
		\bar{\vectorize{\rvThree}}_1
		)^2
		]
		&=
		\mE
		\biggl[ 
		\biggl\{
		\sum_{\vectorIndex = 1}^\alphabetSize 
		\bigl( 
		\private{\rvTwo}_{1 \vectorIndex}
		- 
		\mE
		\bigl[
		\private{\rvTwo}_{1 \vectorIndex}
		\bigr]
		\bigr)
		\bigl( 
		\private{\rvThree}_{1 \vectorIndex}
		-
		\mE
		\bigl[
		\private{\rvTwo}_{1 \vectorIndex}
		\bigr]
		\bigr)
		\biggl\}^2 
		\biggr]
		\\&=
		\mE
		\biggl[ 
		\sum_{\vectorIndex = 1}^\alphabetSize 
		\bigl( 
		\private{\rvTwo}_{1 \vectorIndex}
		- 
		\mE
		\bigl[
		\private{\rvTwo}_{1 \vectorIndex}
		\bigr]
		\bigr)^2
		\bigl( 
		\private{\rvThree}_{1 \vectorIndex}
		-
		\mE
		\bigl[
		\private{\rvTwo}_{1 \vectorIndex}
		\bigr]
		\bigr)^2
		\biggr]
		%
		\\  & \quad+
		\mE
		\biggl[ 
		\sum_{1 \leq \vectorIndex \neq \vectorIndex' \leq \alphabetSize} 
		\hskip -5mm 
		\bigl( 
		\private{\rvTwo}_{1 \vectorIndex}
		- 
		\mE
		\bigl[
		\private{\rvTwo}_{1 \vectorIndex}
		\bigr]
		\bigr)
		\bigl( 
		\private{\rvTwo}_{1 \vectorIndex'}
		- 
		\mE
		\bigl[
		\private{\rvTwo}_{1 \vectorIndex'}
		\bigr]
		\bigr)
		\bigl( 
		\private{\rvThree}_{1 \vectorIndex}
		-
		\mE
		\bigl[
		\private{\rvTwo}_{1 \vectorIndex}
		\bigr]
		\bigr)
		\bigl( 
		\private{\rvThree}_{1 \vectorIndex'}
		-
		\mE
		\bigl[
		\private{\rvTwo}_{1 \vectorIndex'}
		\bigr]
		\bigr)
		\biggr]
		\\&=
		\sum_{\vectorIndex = 1}^\alphabetSize
		\mathrm{Var}(\private{Y}_{1 \vectorIndex})
		\mE
		\bigl( 
		\private{\rvThree}_{1 \vectorIndex}
		-
		\mE
		\bigl[
		\private{\rvTwo}_{1 \vectorIndex}
		\bigr]
		\bigr)^2
		\numberthis
		\label{term:multinomial:upper:rappor_elltwo:EY1Z2:var}
		\\
		&\quad+
		\hskip -3mm
		\sum_{1 \leq \vectorIndex \neq \vectorIndex' \leq \alphabetSize}
		\hskip -5mm
		\mathrm{Cov}(\private{Y}_{1 \vectorIndex}, \private{Y}_{1 \vectorIndex'})
		\mE
		\bigl( 
		\private{\rvThree}_{1 \vectorIndex}
		-
		\mE
		\bigl[
		\private{\rvTwo}_{1 \vectorIndex}
		\bigr]
		\bigr)
		\bigl( 
		\private{\rvThree}_{1 \vectorIndex'}
		-
		\mE
		\bigl[
		\private{\rvTwo}_{1 \vectorIndex'}
		\bigr]
		\bigr),
		\numberthis
		\label{term:multinomial:upper:rappor_elltwo:EY1Z2:cov}
	\end{align*}
	where the last equality uses Lemma~\ref{lemma:rappor_var}.
	We bound the terms~\eqref{term:multinomial:upper:rappor_elltwo:EY1Z2:var} and~\eqref{term:multinomial:upper:rappor_elltwo:EY1Z2:cov} separately.
	For the term~\eqref{term:multinomial:upper:rappor_elltwo:EY1Z2:var}, we use the following equality that holds for each of $\vectorIndex \in [\alphabetSize]$:
	\begin{align*}
		\mE
		\bigl[ 
		\private{\rvThree}_{1 \vectorIndex}
		&-
		\mE
		[
		\private{\rvTwo}_{1 \vectorIndex}
		]
		\bigr]^2
		\\&=
		\mE
		\bigl[ 
		\private{\rvThree}_{1 \vectorIndex}
		-
		\mE
		[
		\private{\rvThree}_{1 \vectorIndex}
		]
		+
		\mE
		[
		\private{\rvThree}_{1 \vectorIndex}
		]
		-
		\mE
		[
		\private{\rvTwo}_{1 \vectorIndex}
		]
		\bigr]^2
		\\&=
		\mE
		\bigl[ 
		\private{\rvThree}_{1 \vectorIndex}
		-
		\mE
		[
		\private{\rvThree}_{1 \vectorIndex}
		]
		\bigr]^2
		+
		\bigl[
		\mE
		[
		\private{\rvThree}_{1 \vectorIndex}
		]
		-
		\mE
		[
		\private{\rvTwo}_{1 \vectorIndex}
		]
		\bigr]^2
		+2
		\mE
		\bigl[ 
		\private{\rvThree}_{1 \vectorIndex}
		-
		\mE
		[
		\private{\rvThree}_{1 \vectorIndex}
		]
		\bigr]
		\bigl(
		\mE
		[
		\private{\rvThree}_{1 \vectorIndex}
		]
		-
		\mE
		[
		\private{\rvTwo}_{1 \vectorIndex}
		]
		\bigr)
		\\&=
		\mathrm{Var}(
		\private{\rvThree}_{1 \vectorIndex}
		)
		+
		\bigl[
		\mE
		[
		\private{\rvThree}_{1 \vectorIndex}
		]
		-
		\mE
		[
		\private{\rvTwo}_{1 \vectorIndex}
		]
		\bigr]^2
		%
		\\&=
		\mathrm{Var}(
		\private{\rvThree}_{1 \vectorIndex}
		)
		+
		\privacyParameterrappor ^2
		\bigl(  \probVecElement{\rvTwo}{\vectorIndex} - \probVecElement{\rvThree}{\vectorIndex}\bigr)^2,
		\numberthis
		\label{term:multinomial:upper:rappor_elltwo:EY1Z2:var:trick}
	\end{align*}
	where the last equality uses \eqref{equation:rappor_expecation}.
	Using this equality, the term~\eqref{term:multinomial:upper:rappor_elltwo:EY1Z2:var} is bounded as:
	\begin{align*}
		\sum_{\vectorIndex = 1}^\alphabetSize
		&
		\mathrm{Var}(\private{Y}_{1 \vectorIndex})
		\mE
		\bigl( 
		\private{\rvThree}_{1 \vectorIndex}
		-
		\mE
		\bigl[
		\private{\rvTwo}_{1 \vectorIndex}
		\bigr]
		\bigr)^2
		\\= ~ &
		\sum_{\vectorIndex = 1}^\alphabetSize
		\mathrm{Var}(\private{Y}_{1 \vectorIndex})
		\mathrm{Var}(
		\private{\rvThree}_{1 \vectorIndex}
		)
		+
		\sum_{\vectorIndex = 1}^\alphabetSize
		\mathrm{Var}(\private{Y}_{1 \vectorIndex})
		\privacyParameterrappor ^2
		\bigl(  \probVecElement{\rvTwo}{\vectorIndex} - \probVecElement{\rvThree}{\vectorIndex}\bigr)^2
		\\ \stackrel{(a)}{\leq} \hskip 1.57mm &
		\frac{1}{2}
		\sum_{\vectorIndex = 1}^\alphabetSize
		\mathrm{Var}(\private{Y}_{1 \vectorIndex})^2
		+
		\frac{1}{2}
		\sum_{\vectorIndex = 1}^\alphabetSize
		\mathrm{Var}(
		\private{\rvThree}_{1 \vectorIndex}
		)^2
		+
		\privacyParameterrappor ^2
		\sum_{\vectorIndex = 1}^\alphabetSize
		\mathrm{Var}(\private{Y}_{1 \vectorIndex})
		\bigl(  \probVecElement{\rvTwo}{\vectorIndex} - \probVecElement{\rvThree}{\vectorIndex}\bigr)^2
		\\ \stackrel{(b)}{\leq} \hskip 1.57mm &
		\privacyParameterrappor^2 \normSqMultinomMax
		+
		\privacyParameterrappor^3
		\normSqMultinomMax^{1/2}
		\|\probVec_{\rvTwo} - \probVec_{\rvThree}\|_2^2
		+
		\privacyParameterrappor^2
		\smallNumberrappor
		\|\probVec_{\rvTwo} - \probVec_{\rvThree}\|_2^2,
	\end{align*}
	where step $(a)$ uses $2ab \leq a^2 + b^2$, and step $(b)$ uses Lemma~\ref{rappor:inequalities}.
	For the term~\eqref{term:multinomial:upper:rappor_elltwo:EY1Z2:cov}, 
	note that:
	\begin{align*}
		\mE&
		\bigl[ 
		(
		\private{\rvThree}_{1 \vectorIndex}
		-
		\mE
		[
		\private{\rvTwo}_{1 \vectorIndex}
		]
		)
		(
		\private{\rvThree}_{1 \vectorIndex'}
		-
		\mE
		[
		\private{\rvTwo}_{1 \vectorIndex'}
		]
		)
		\bigr]
		\\&=
		\mE
		\bigl[ 
		(
		\private{\rvThree}_{1 \vectorIndex}
		-
		\mE
		[
		\private{\rvThree}_{1 \vectorIndex}
		]
		+
		\mE
		[
		\private{\rvThree}_{1 \vectorIndex}
		]
		-
		\mE
		[
		\private{\rvTwo}_{1 \vectorIndex}
		]
		)
		(
		\private{\rvThree}_{1 \vectorIndex'}
		-
		\mE
		[
		\private{\rvThree}_{1 \vectorIndex'}
		]
		+
		\mE
		[
		\private{\rvThree}_{1 \vectorIndex'}
		]
		-
		\mE
		[
		\private{\rvTwo}_{1 \vectorIndex'}
		]
		)
		\bigr]
		\\&=
		\mE
		\bigl[ 
		(
		\private{\rvThree}_{1 \vectorIndex}
		-
		\mE
		[
		\private{\rvThree}_{1 \vectorIndex}
		])
		(
		\private{\rvThree}_{1 \vectorIndex'}
		-
		\mE
		[
		\private{\rvThree}_{1 \vectorIndex'}
		])
		\bigr]
		+
		(
		\mE
		[
		\private{\rvThree}_{1 \vectorIndex}
		]
		-
		\mE
		[
		\private{\rvTwo}_{1 \vectorIndex}
		]
		)
		(
		\mE
		[
		\private{\rvThree}_{1 \vectorIndex'}
		]
		-
		\mE
		[
		\private{\rvTwo}_{1 \vectorIndex'}
		]
		)
		\\&=
		\mathrm{Cov}
		(
		\private{\rvThree}_{1 \vectorIndex}
		,
		\private{\rvThree}_{1 \vectorIndex'}
		)
		+
		\privacyParameterrappor^2
		\bigl(  \probVecElement{\rvTwo}{\vectorIndex} - \probVecElement{\rvThree}{\vectorIndex}\bigr)
		\bigl(  \probVecElement{\rvTwo}{\vectorIndex'} - \probVecElement{\rvThree}{\vectorIndex'}\bigr),
	\end{align*}
	for each $\vectorIndex \in [\alphabetSize]$.
	Using this equality, the term~\eqref{term:multinomial:upper:rappor_elltwo:EY1Z2:cov} is bounded as:
	\begin{align*}
		\sum_{1 \leq \vectorIndex \neq \vectorIndex' \leq \alphabetSize}
		\hskip -5mm
		&
		\mathrm{Cov}(\private{Y}_{1 \vectorIndex}, \private{Y}_{1 \vectorIndex'})
		\mE
		\bigl( 
		\private{\rvThree}_{1 \vectorIndex}
		-
		\mE
		\bigl[
		\private{\rvTwo}_{1 \vectorIndex}
		\bigr]
		\bigr)
		\bigl( 
		\private{\rvThree}_{1 \vectorIndex'}
		-
		\mE
		\bigl[
		\private{\rvTwo}_{1 \vectorIndex'}
		\bigr]
		\bigr)
		\\ = ~ &
		\hskip -3mm
		\sum_{1 \leq \vectorIndex \neq \vectorIndex' \leq \alphabetSize}
		\hskip -5mm
		\mathrm{Cov}(\private{Y}_{1 \vectorIndex}, \private{Y}_{1 \vectorIndex'})
		\mathrm{Cov}
		(
		\private{\rvThree}_{1 \vectorIndex}
		,
		\private{\rvThree}_{1 \vectorIndex'}
		)
		\\
		&~+ \hskip -3mm
		\sum_{1 \leq \vectorIndex \neq \vectorIndex' \leq \alphabetSize}
		\hskip -5mm
		\mathrm{Cov}(\private{Y}_{1 \vectorIndex}, \private{Y}_{1 \vectorIndex'})
		\privacyParameterrappor^2
		\bigl(  \probVecElement{\rvTwo}{\vectorIndex} - \probVecElement{\rvThree}{\vectorIndex}\bigr)
		\bigl(  \probVecElement{\rvTwo}{\vectorIndex'} - \probVecElement{\rvThree}{\vectorIndex'}\bigr)
		\\ \stackrel{(a)}{\leq} \hskip 1.57mm &
		\frac{1}{2}\sum_{1 \leq \vectorIndex \neq \vectorIndex' \leq \alphabetSize}
		\hskip -5mm
		\mathrm{Cov}^2(\private{Y}_{1 \vectorIndex}, \private{Y}_{1 \vectorIndex'})
		+
		\frac{1}{2}\sum_{1 \leq \vectorIndex \neq \vectorIndex' \leq \alphabetSize}
		\hskip -5mm
		\mathrm{Cov}^2
		(
		\private{\rvThree}_{1 \vectorIndex}
		,
		\private{\rvThree}_{1 \vectorIndex'}
		)
				\\ &
		~+
		\frac{1}{2}\sum_{1 \leq \vectorIndex \neq \vectorIndex' \leq \alphabetSize}
		\hskip -5mm
		\mathrm{Cov}^2(\private{Y}_{1 \vectorIndex}, \private{Y}_{1 \vectorIndex'})
~ +
		\frac{1}{2}
		\sum_{1 \leq \vectorIndex \neq \vectorIndex' \leq \alphabetSize}
		\hskip -5mm
		\privacyParameterrappor^4
		\bigl(  \probVecElement{\rvTwo}{\vectorIndex} - \probVecElement{\rvThree}{\vectorIndex}\bigr)^2
		\bigl(  \probVecElement{\rvTwo}{\vectorIndex'} - \probVecElement{\rvThree}{\vectorIndex'}\bigr)^2
		\\ \stackrel{(b)}{\leq}\hskip 1.57mm &
		\frac{3}{2} \privacyParameterrappor^2
		\normSqMultinomMax
		+
		\frac{\privacyParameterrappor^4
		}{2}
		\|\probVec_{\rvTwo} - \probVec_{\rvThree}\|_2^2
		\\ \stackrel{(c)}{\leq} \hskip 1.57mm &
		\frac{5}{2} \privacyParameterrappor^2
		\normSqMultinomMax,
	\end{align*}
	where step $(a)$ uses $ab \leq a^2/2 + b^2/2$, step $(b)$ uses Lemma~\ref{rappor:inequalities}, and step $(c)$ uses the fact that
	$0 < \privacyParameterrappor <1$ for any $\privacyParameter>0$
	and
	$\probVec_{\rvTwo}^\top \probVec_{\rvThree} > 0$.
	Collecting the bounds for~\eqref{term:multinomial:upper:rappor_elltwo:EY1Z2:var} and~\eqref{term:multinomial:upper:rappor_elltwo:EY1Z2:cov}, we finally bound $\mE[
	(
	\bar{\vectorize{\rvTwo}}_1^\top
	\bar{\vectorize{\rvThree}}_1)^2]
	$ as 
	\begin{equation}\label{term:multinomial:upper:rappor_elltwo:EY1Z2:bound}
		\mE[
		(
		\bar{\vectorize{\rvTwo}}_1^\top
		\bar{\vectorize{\rvThree}}_1)^2]
		\leq
		\frac{7}{2}
		\privacyParameterrappor^2 \normSqMultinomMax
		+
		\privacyParameterrappor^3
		\normSqMultinomMax^{1/2}
		\|\probVec_{\rvTwo} - \probVec_{\rvThree}\|_2^2
		+
		\privacyParameterrappor^2
		\smallNumberrappor
		\|\probVec_{\rvTwo} - \probVec_{\rvThree}\|_2^2.
	\end{equation}
	Collecting the bounds
	\eqref{term:multinomial:upper:rappor_elltwo:EY1Y2:bound},
	\eqref{term:multinomial:upper:rappor_elltwo:EZ1Z2:bound},
	and
	\eqref{term:multinomial:upper:rappor_elltwo:EY1Z2:bound}, we finally bound $\momentTwosampleExpSquareBar$ as
	\begin{equation}\label{inequality:rappor_elltwo_exp_square}
		\momentTwosampleExpSquareBar
		\leq
		\frac{7}{2}
		\privacyParameterrappor^2 \normSqMultinomMax
		+
		\privacyParameterrappor^3
		\normSqMultinomMax^{1/2}
		\|\probVec_{\rvTwo} - \probVec_{\rvThree}\|_2^2
		+
		\privacyParameterrappor^2
		\smallNumberrappor
		\|\probVec_{\rvTwo} - \probVec_{\rvThree}\|_2^2
		+
		2
		\smallNumberrappor^2 \alphabetSize.
	\end{equation}

    \vskip 1em
	
	\noindent \textit{Step 2: Apply the two moments method.}
	Using the bounds in
	\eqref{inequality:rappor_elltwo_var_condexp_y}, 
	\eqref{inequality:rappor_elltwo_var_condexp_z} 
	and
	\eqref{inequality:rappor_elltwo_exp_square},
	we show that condition~\eqref{eq:two_moments_original_cond_two_sample} in the two moments method holds if the separation condition~\eqref{eq:two_moments_two_sample_cond_merge} in Lemma~\ref{appendix:lemma:discrete_upper_bound} is met.
	Since $\mE[U_{n_1, n_2}]=
	\privacyParameterrappor^2 \|
	\probVec_\rvTwo - \probVec_\rvThree
	\|_2^2$, assuming $\sampleSize_1 \leq \sampleSize_2$ and $\maxErrorTypeOne = \maxErrorTypeTwo$ for simplicity, the condition~\eqref{eq:two_moments_original_cond_two_sample} of Lemma~\ref{theorem:twosampleTwoMomentsMethod} is satisfied if, 	for all pairs of distributions $P = (P_{Y}, P_{Z}) \in \mathcal{P}_{1,\mathrm{multi}}(\rho_{\sampleSize_1,\sampleSize_2})$, the following conditions hold:
	\begin{align*}
		\privacyParameterrappor^2 \| \probVec_\rvTwo - \probVec_\rvThree \|_2^2
		&\geq
		\sqrt{ 
			\frac{ 
				\privacyParameterrappor^3
				\normSqMultinomMax^{1/2}
				\| \probVec_\rvTwo - \probVec_\rvThree \|_2^2
				+
				\privacyParameterrappor^2
				\smallNumberrappor		
				\| \probVec_\rvTwo - \probVec_\rvThree \|_2^2
			}{
				\beta \sampleSize_1}
		}~\bigl(\text{from}~\eqref{inequality:rappor_elltwo_var_condexp_y}, 
		\eqref{inequality:rappor_elltwo_var_condexp_z}\bigr),~\text{and} 
		\\
		\privacyParameterrappor^2 \| \probVec_\rvTwo - \probVec_\rvThree \|_2^2
		&\geq
		\sqrt{
			\frac{
				\frac{7}{2}
				\privacyParameterrappor^2 \normSqMultinomMax
				+
				\privacyParameterrappor^3
				\normSqMultinomMax^{1/2}
				\|\probVec_{\rvTwo} - \probVec_{\rvThree}\|_2^2
				+
				\privacyParameterrappor^2
				\smallNumberrappor
				\|\probVec_{\rvTwo} - \probVec_{\rvThree}\|_2^2
				+
				\smallNumberrappor^2 \alphabetSize
			}{ \gamma \beta \sampleSize_1^2}}	~\bigl(\text{from}~\eqref{inequality:rappor_elltwo_exp_square}\bigr).
	\end{align*}
	Since $\sqrt{a} + \sqrt{b} \geq \sqrt{a+b}$ for any nonnegative $a$ and $b$, the conditions above are satisfied when
	\begin{align*}
		\privacyParameterrappor^2 \| \probVec_\rvTwo - \probVec_\rvThree \|_2^2
		&\geq
		\sqrt{ 
			\frac{ 
				\privacyParameterrappor^3
				\normSqMultinomMax^{1/2}
				\| \probVec_\rvTwo - \probVec_\rvThree \|_2^2
				+
				\privacyParameterrappor^2
				\smallNumberrappor		
				\| \probVec_\rvTwo - \probVec_\rvThree \|_2^2
			}{
				\beta \sampleSize_1}
		} ,~\text{and} \\
		\privacyParameterrappor^2 \| \probVec_\rvTwo - \probVec_\rvThree \|_2^2
		&\geq
		\sqrt{
			\frac{
				\frac{7}{2}
				\privacyParameterrappor^2 \normSqMultinomMax
				+
				\smallNumberrappor^2 \alphabetSize
			}{ \gamma \beta \sampleSize_1^2}}.
	\end{align*}
	Since $\alphabetSize \geq 2$, the above inequalities are further satisfied when
	\begin{align*}
		\| \probVec_\rvTwo - \probVec_\rvThree \|_2^2 
		\geq
		C_1(\beta)
		\max \;
		\biggl\{
		\frac{
			\normSqMultinomMax^{1/4}
		}{
			\privacyParameterrappor^{1/2}
			\sampleSize_1^{1/2}}
		,
		\frac{
			\alphabetSize^{1/4}
			\smallNumberrappor^{1/2} 
		}{
			\privacyParameterrappor
			\sampleSize_1^{1/2}}
		\biggr\}.
	\end{align*}
	Therefore, to obtain our desire result, it suffices to show that 
	\begin{equation}\label{rappor_final_goal}
		\max \;
		\biggl\{
		\frac{
			\normSqMultinomMax^{1/4}
		}{
			\privacyParameterrappor^{1/2}
			\sampleSize_1^{1/2}}
		,
		\frac{
			\alphabetSize^{1/4}
			\smallNumberrappor^{1/2} 
		}{
			\privacyParameterrappor
			\sampleSize_1^{1/2}}
		\biggr\}
		\leq 
		6
		\max
		\biggl\{
		\frac{
			\normSqMultinomMax^{1/4}
		}{
			{\sampleSize_1}^{1/2}
		},
		\frac{
			\alphabetSize^{1/4}
		}{
			\sampleSize_1^{1/2}\privacyParameter
		}
		\biggr\},
	\end{equation}
	and set $C_u(\gamma, \beta) = 6 C_1(\beta)$.
	Using an indicator function, we separately consider the cases of high privacy \((\privacyParameter \leq 1)\) and low privacy \((\privacyParameter > 1)\).
	Then we derive \eqref{rappor_final_goal} for both cases.
	First, assuming $0 < \privacyParameter \leq 1$, we have
	\begin{align*}
		\max \;
		\biggl\{
		\frac{
			\normSqMultinomMax^{1/4}
		}{
			\privacyParameterrappor^{1/2}
			\sampleSize_1^{1/2}}
		,
		\frac{
			\alphabetSize^{1/4}
			\smallNumberrappor^{1/2} 
		}{
			\privacyParameterrappor
			\sampleSize_1^{1/2}}
		\biggr\}
		&\leq
		\max \;
		\biggl\{
		\frac{
			\normSqMultinomMax^{1/4}
		}{
			\sampleSize_1^{1/2}}
		,
		\frac{
			\alphabetSize^{1/4}
			\smallNumberrappor^{1/2} 
		}{
			\privacyParameterrappor
			\sampleSize_1^{1/2}}
		\biggr\}
		\leq
		\max \;
		\biggl\{
		\frac{
			\normSqMultinomMax^{1/4}
		}{
			\sampleSize_1^{1/2}}
		,
		\frac{
			6\alphabetSize^{1/4}
		}{
			\privacyParameter
			\sampleSize_1^{1/2}}
		\biggr\},
		\numberthis
		\label{rappor_final_high_privacy}
	\end{align*}
	where the last inequality uses the fact that for $0 \leq \privacyParameter \leq 1$, we have
	\begin{equation*}
		\frac{\smallNumberrappor^{1/2}}{
			\privacyParameterrappor}
		= 
		\frac{\sqrt{e^{\alpha/2}+1}}{
			(e^{\alpha/2}-1)}
		<
		\frac{6}{\privacyParameter}.
	\end{equation*}
	Next, assuming $ \privacyParameter > 1$, we have
	\begin{align*}
		\max \;
		\biggl\{
		\frac{
			\normSqMultinomMax^{1/4}
		}{
			\privacyParameterrappor^{1/2}
			\sampleSize_1^{1/2}}
		,
		\frac{
			\alphabetSize^{1/4}
			\smallNumberrappor^{1/2} 
		}{
			\privacyParameterrappor
			\sampleSize_1^{1/2}}
		\biggr\}
		= ~&
		\max \;
		\biggl\{
		\frac{b^{1/4}}{\sampleSize_1^{1/2}}\sqrt{\frac{e^{\alpha/2}+1}{e^{\alpha/2}-1}} + \frac{k^{1/4}}{\sampleSize_1^{1/2}}
		\frac{1}{
			\sqrt{
				e^{\alpha/2 }+1
		} }
		\frac{e^{\alpha/2}+1}{e^{\alpha/2}-1}
		\biggr\}
		\\
		= ~&
		\sqrt{\frac{e^{\alpha/2}+1}{e^{\alpha/2}-1}} 
		\max \;
		\biggl\{
		\frac{b^{1/4}}{\sampleSize_1^{1/2}},
		\frac{k^{1/4}}{
			n^{1/2}
			\sqrt{
				e^{\alpha/2 }-1
			}
		}
		\biggr\}
		\\
		\stackrel{(a)}{\leq} \hskip 1.57mm &
		3
		\max \;
		\biggl\{
		\frac{b^{1/4}}{\sampleSize_1^{1/2}},
		\frac{k^{1/4}}{
			n^{1/2}
			\sqrt{
				e^{\alpha/2 }-1
			}
		}
		\biggr\}
		\\
		\stackrel{(b)}{\leq} \hskip 1.57mm &
		3
		\max \;
		\biggl\{
		\frac{b^{1/4}}{\sampleSize_1^{1/2}},
		\frac{2k^{1/4}}{
			n^{1/2}
			\privacyParameter
		}
		\biggr\},
		\numberthis
		\label{rappor_final_low_privacy}
	\end{align*}
	where step $(a)$ and step $(b)$ use the following inequalities:
	\begin{equation*}
		\sqrt{\frac{e^{\alpha/2}+1}{e^{\alpha/2}-1}}
		\leq 3,~\text{and}~
		\frac{1}{\sqrt{e^{\privacyParameter/2}-1}} < \frac{2}{\privacyParameter}
		,
	\end{equation*}
	respectively, both of which holds for $\privacyParameter>1$.
	Combining \eqref{rappor_final_high_privacy} and \eqref{rappor_final_low_privacy}, we obtain the following inequality:
	\begin{align*}
		\max \;&
		\biggl\{
		\frac{
			\normSqMultinomMax^{1/4}
		}{
			\privacyParameterrappor^{1/2}
			\sampleSize_1^{1/2}}
		,
		\frac{
			\alphabetSize^{1/4}
			\smallNumberrappor^{1/2} 
		}{
			\privacyParameterrappor
			\sampleSize_1^{1/2}}
		\biggr\}
		\\&=
		\mathds{1}(\alpha \leq 1) \cdot
		\max \;
		\biggl\{
		\frac{
			\normSqMultinomMax^{1/4}
		}{
			\privacyParameterrappor^{1/2}
			\sampleSize_1^{1/2}}
		,
		\frac{
			\alphabetSize^{1/4}
			\smallNumberrappor^{1/2} 
		}{
			\privacyParameterrappor
			\sampleSize_1^{1/2}}
		\biggr\}
		+
		\mathds{1}(\alpha > 1) \cdot
		\max \;
		\biggl\{
		\frac{
			\normSqMultinomMax^{1/4}
		}{
			\privacyParameterrappor^{1/2}
			\sampleSize_1^{1/2}}
		,
		\frac{
			\alphabetSize^{1/4}
			\smallNumberrappor^{1/2} 
		}{
			\privacyParameterrappor
			\sampleSize_1^{1/2}}
		\biggr\}
		\\&\leq
		\mathds{1}(\alpha \leq 1) \cdot
		6
		\max \;
		\biggl\{
		\frac{b^{1/4}}{\sampleSize_1^{1/2}},
		\frac{k^{1/4}}{
			n^{1/2}
			\privacyParameter
		}
		\biggr\}
		+
		\mathds{1}(\alpha > 1) \cdot
		6
		\max \;
		\biggl\{
		\frac{b^{1/4}}{\sampleSize_1^{1/2}},
		\frac{k^{1/4}}{
			n^{1/2}
			\privacyParameter
		}
		\biggr\}
		\\&=
		6
		\max \;
		\biggl\{
		\frac{b^{1/4}}{\sampleSize_1^{1/2}},
		\frac{k^{1/4}}{
			n^{1/2}
			\privacyParameter
		}
		\biggr\}.
	\end{align*}
	This completes the proof of the upper bound through \texttt{RAPPOR} mechanism.
	\end{proof}
	\subsection{Lower Bound}
	The lower bound result follows by combining the lower bound results for the one-sample problem under LDP \citep[Theorem 3.2 in][]{Lam-Weil2021MinimaxConstraint} with results for the two-sample problem without privacy constraints  \citep{chan2014optimal,kim_minimax_2022}. Hence we omit the details. 
	
	\section{Proof of Theorem~\ref{theorem:twosample_conti_rate}}\label{proof:twosample_conti_rates}
	Here we prove the upper bound and lower bound results for density two-sample testing.
	We start with the upper bound~(Appendix~\ref{proof_theorem:twosample_conti_rate}) and move onto the lower bound~(Appendix~\ref{proof:twosample_conti_lower_bound}).
	\subsection{Upper Bound}\label{proof_theorem:twosample_conti_rate}
	The proof of the upper bound  proceeds in two steps, where step 1 leverage the result from multinomial testing (Lemma~\ref{appendix:lemma:discrete_upper_bound}), and step 2 uses the discretization error analysis (Appendix~\ref{appendix:disc_error}).
    
    \vskip 1em
    
		\begin{proof} We verify the two steps mentioned above in order.

\vskip 1em

	\noindent \textit{Step 1: Separation condition for the probability vectors.}
	We derive a separation condition for the probability vectors $\probVec_{\vectorize{\rvTwo}}$ and $\probVec_{\vectorize{\rvThree}}$ of $\binNum^\dimDensity$ 
	categories, obtained by binning the densities $f_{\vectorize{\rvTwo}}$ and $f_{\vectorize{\rvThree}}$, that ensures our multinomial test distinguishes between them using the samples. By substituting $(\probVec_{{\rvTwo}}, \probVec_{{\rvThree}})$ with $(\probVec_{\vectorize{\rvTwo}}, \probVec_{\vectorize{\rvThree}})$ 
	and $\alphabetSize$ with $\binNum^\dimDensity$ in the separation condition~\eqref{eq:two_moments_two_sample_cond_merge} of Lemma~\ref{appendix:lemma:discrete_upper_bound}, we obtain the  following separation condition:
	\begin{align*}
		\| \probVec_{\vectorize{\rvTwo}} - \probVec_{\vectorize{\rvThree}} \|_2
		\geq
		C_u( \maxErrorTypeOne, \maxErrorTypeTwo)
		\biggl(
		\frac
		{
			\binNum^{\dimDensity/4}
		}
		{(\sampleSize_1 \privacyParameter^2)^{1/2}} 
		\vee
		\frac
		{ \max\{
			\| \probVec_{\vectorize{\rvTwo}} \|^{1/2}_2, 
			\| \probVec_{\vectorize{\rvThree}} \|^{1/2}_2
			\}}
		{{\sampleSize_1}^{1/2}}
		\biggr).
		\numberthis \label{twosample_upper_conti_start}
	\end{align*}
	Since we assume that 
	$\vertiii{f_{\vectorize{Y}}}_{\mathbb{L}_\infty} < \ballRadius$ and
	$\vertiii{f_{\vectorize{Z}}}_{\mathbb{L}_\infty} < \ballRadius$ in Definition~\ref{def:smooth_distribution_class},
	$\| \probVec_{\vectorize{\rvTwo}} \|_2^2$ is upper bounded as
	\begin{equation*}
		\|
		\probVec_{\vectorize{\rvTwo}}
		\|_2^2
		=
		\sum_{\vectorIndex \in [\binNum^\dimDensity]}
		\left(
		\int_{B_\vectorIndex}
		f_{\vectorize{Y}}(\vectorize{t})\;
		d\vectorize{t}
		\right)^2
		\leq
		\ballRadius
		\sum_{\wavFatherIndex \in [\binNum]^\dimDensity}
		\left(
		\int_{B_\vectorIndex}
		f_{\vectorize{Y}}(\vectorize{t})\;
		d\vectorize{t}
		\right)^2 =
		\ballRadius
		\binNum^\dimDensity
		(\binNum^{-\dimDensity})^2
		=
		\ballRadius
		\binNum^{-\dimDensity},
	\end{equation*}
	and in a similar manner, we also have $\| \probVec_{\vectorize{\rvThree}} \|_2^2 \leq \ballRadius
	\binNum^{-\dimDensity}$.
	Thus, the condition~\eqref{twosample_upper_conti_start} is implied by:
	\begin{equation}\label{twosample_upper_conti_2}
		\| \probVec_{\vectorize{\rvTwo}} - \probVec_{\vectorize{\rvThree}} \|_2
		\geq
		C_1( \ballRadius, \maxErrorTypeOne, \maxErrorTypeTwo)
		\biggl(
		\frac
		{\binNum^{\dimDensity/4}}
		{(\sampleSize_1 \privacyParameter^2)^{1/2}} 
		\vee
		\frac
		{\binNum^{-\dimDensity/4}}
		{{\sampleSize_1}^{1/2}}
		\biggr).
	\end{equation}

    \vskip 1em

	\noindent \textit{Step 2: Separation condition for the densities.}
	Now we find a density separation condition  that ensures the probability vector separation condition \eqref{twosample_upper_conti_2}. 
	From the discretization error analysis~\eqref{equation:disc_error_holder_utilize} and~\eqref{equation:disc_error_besov_utilize} in Appendix~\ref{appendix:disc_error}, both for the H\"{o}lder and Besov case, we have the following inequality:
	\begin{equation*}
		\|\probVec_{\vectorize{\rvTwo}}- \probVec_{\vectorize{\rvThree}}\|_2
		\geq
		C_2(\smoothness, \ballRadius, \dimDensity, \maxErrorTypeOne, \maxErrorTypeTwo)
		\;
		\binNum^{-\dimDensity/2}
		\bigl(
		\vert\kern-0.25ex
		\vert\kern-0.25ex
		\vert
		f_{\vectorize{Y}} - f_{\vectorize{Z}}
		\vert\kern-0.25ex
		\vert\kern-0.25ex
		\vert_{\EllTwo}
		-
		\binNum^{-\smoothness}
		\bigr).
	\end{equation*}
	Using the above inequality,  we derive the following sufficient condition for~\eqref{twosample_upper_conti_2}:
	\begin{align*}
		\vert\kern-0.25ex
		\vert\kern-0.25ex
		\vert
		f_{\vectorize{Y}} - f_{\vectorize{Z}}
		\vert\kern-0.25ex
		\vert\kern-0.25ex
		\vert_{\EllTwo}
		\geq
		C_3 (\smoothness, \ballRadius, \dimDensity, \maxErrorTypeOne, \maxErrorTypeTwo)
		\max
		\biggl\{
		\frac
		{\binNum^{3\dimDensity/4}}
		{(\sampleSize_1 \privacyParameter^2)^{1/2}} + \binNum^{-\smoothness}
		,
		\frac
		{\binNum^{\dimDensity/4}}
		{{\sampleSize_1}^{1/2}} + \binNum^{-\smoothness}
		\biggr\}.
	\end{align*}
	Recalling from~\eqref{eq:kappaValueTwosample} that $\binNum \leq \bigl( n^{2/(4\smoothness+\dimDensity)}\wedge (n \privacyParameter^2)^{2/(4\smoothness+3\dimDensity)} \bigr)$, 
	and utilizing
	\begin{equation*}
		\frac{2}{4\smoothness+3\dimDensity}
		\frac{3\dimDensity}{4}
		- 
		\frac{1}{2}
		= 
		\frac{-2\smoothness}{4\smoothness+3\dimDensity}
		~\text{and}~
		\frac{2}{4\smoothness+\dimDensity}
		\frac{d}{4} 
		-
		\frac{1}{2}
		= 
		\frac{-2\smoothness}{4\smoothness+\dimDensity},
	\end{equation*}
	the condition above is implied by:
	\begin{equation*}
		\vert\kern-0.25ex
		\vert\kern-0.25ex
		\vert
		f_{\vectorize{Y}} - f_{\vectorize{Z}}
		\vert\kern-0.25ex
		\vert\kern-0.25ex
		\vert_{\EllTwo}
		\geq
		C_3 (\smoothness, \ballRadius, \dimDensity, \maxErrorTypeOne, \maxErrorTypeTwo)
		\max
		\biggl\{
		(\sampleSize_1 \privacyParameter^2)^{\frac{-2\smoothness}{4\smoothness+3\dimDensity}}
		+
		\binNum^{-\smoothness}
		,
		{\sampleSize_1}^{
			\frac
			{-2\smoothness}
			{4\smoothness+\dimDensity}
		}+
		\binNum^{-\smoothness}
		\biggr\}.
	\end{equation*}
	Since $\bigl( n^{2/(4\smoothness+\dimDensity)}\wedge (n \privacyParameter^2)^{2/(4\smoothness+3\dimDensity)} \bigr) = \binNum + \delta$, where $0 \leq \delta < 1$ and $\binNum \geq 1$, we have $2 \binNum \geq \bigl( n^{2/(4\smoothness+\dimDensity)}\wedge (n \privacyParameter^2)^{2/(4\smoothness+3\dimDensity)} \bigr)$.
	Therefore we have $\binNum^{-\smoothness} \leq 2^\smoothness \bigl( n^{2/(4\smoothness+\dimDensity)}\wedge (n \privacyParameter^2)^{2/(4\smoothness+3\dimDensity)} \bigr)$. Thus the condition above is implied by:
	\begin{equation*}
		\vert\kern-0.25ex
		\vert\kern-0.25ex
		\vert
		f_{\vectorize{Y}} - f_{\vectorize{Z}}
		\vert\kern-0.25ex
		\vert\kern-0.25ex
		\vert_{\EllTwo}
		\geq
		C_4 (\smoothness, \ballRadius, \dimDensity, \maxErrorTypeOne, \maxErrorTypeTwo)
		\max
		\biggl\{
		(\sampleSize_1 \privacyParameter^2)^{\frac{-2\smoothness}{4\smoothness+3\dimDensity}}
		+
		{\sampleSize_1}^{
			\frac
			{-2\smoothness}
			{4\smoothness+\dimDensity}
		}
		\biggr\}.
	\end{equation*}
This concludes the proof of the upper bound result of Theorem~\ref{theorem:twosample_conti_rate}.
	\end{proof}
	\subsection{Lower Bound}\label{proof:twosample_conti_lower_bound}
	By the argument in \citet{Arias-Castro2018RememberDimension}, the lower bound for the minimax testing rate in the one-sample problem, denoted as $\separation^\ast_{n_1, \privacyParameter}$, also provides a lower bound for the minimax rate in two-sample case, denoted as $\separation^\ast_{n_1, n_2, \privacyParameter}$. 
	Thus, we focus on bounding $\separation^\ast_{n_1, \privacyParameter}$ in the following proof.
    
    \vskip 1em
    
	\begin{proof}
	In the one-sample setting,
	let $f_0$ denote the  known density $f_{\vectorize{\rvThree}}$, and assume that the data-generating distribution $P_{\vectorize{Y}}$ lies within the following class:
	\begin{definition}[Smooth distribution classes]\label{def:smooth_distribution_class_gof}
		Let $\pHolderGof$ denote the set of distributions $P_{\vectorize{Y}}$ whose density function $f_{\vectorize{Y}}$ satisfies $(f_{\vectorize{\rvTwo}} - f_0) \in \holderBall$ and
		$
		\vert\kern-0.25ex \vert\kern-0.25ex \vert f_{\vectorize{\rvTwo}} - f_0 \vert\kern-0.25ex \vert\kern-0.25ex \vert_{\EllTwo} \leq \ballRadius.
		$
		Similarly, define $\pBesovGof$ by replacing $\holderBall$ with $\besovBall{2}{\besovParamMicroscope}$.
	\end{definition}
	\noindent
	The one-sample density testing problem is defined as follows: 
	Given  $\sampleSets{\tilde{\vectorize{\rvTwo}}}{\sampleIndexOne}{\sampleSize_1}$ generated from $P_{\vectorize{Y}}$ and privatized via $\privacyParameter$-LDP mechanism $\privacyMechanism$, decide whether $P_{\vectorize{Y}}$ came from
	\begin{equation}\label{one_sample_hypothesis}
		\distClassGeneric_0 = \{ P_{\vectorize{Y}} \in \distClassGeneric : f_{\vectorize{Y}} = f_0 \}
		\quad \text{or} \quad
		\distClassGeneric_1(\rho_{n_1}) = \{ P_{\vectorize{Y}} \in \distClassGeneric : \normEllp{f_{\vectorize{Y}} - f_0}{2}{} \geq \rho_{n_1} \}.  
	\end{equation}
	We consider two problems: $(i)~\distClassGeneric = \pHolderGof$, and $(ii)~\distClassGeneric = \pBesovGof$, with significant overlap in most of the proof steps.
	
	We derive the lower bound by testing the uniform null hypothesis against a carefully constructed mixture alternative. This alternative balances two objectives regarding the distances between the distributions: achieving sufficient separation in \(\EllTwo\) distance without the \(\privacyParameter\)-LDP constraint, and ensuring indistinguishability in the total variation distance under the \(\privacyParameter\)-LDP constraint.
	The mixture construction follows the approach in Appendix A.2.2 of \citet{Lam-Weil2021MinimaxConstraint}, where uniform densities are perturbed by eigenfunctions of the integral operator created from the $\privacyParameter$-LDP mechanism. We generalize this construction to multivariate Besov and H\"{o}lder smoothness classes.
	
	As defined in Definition~\ref{def:LDP},
	let $\privacyMechanism \in \mathcal{\privacyMechanism}_\privacyParameter$ be a non-interactive $\privacyParameter$-LDP mechanism, and
	$
	\{\privacyMechanism_\sampleIndexOne\}_{
		\sampleIndexOne \in [\sampleSizeOne]
	}$ be its marginals.
	By Lemma B.1 of \citet{Lam-Weil2021MinimaxConstraint}, for each $i \in [n_1]$, there exists a probability measure $\mu_i$ such that  $\privacyMechanism_i(\cdot \,|\, \vectorize{y}_i)$ is absolutely continuous with respect to $\mu_i$ for all $\vectorize{y}_i \in [0,1]^d$. Denote its density with respect to $\mu_i$ by $\privacyDensity_i(\tilde{\vectorize{y}}_i \,|\, \vectorize{y}_i)$.
	Using this, we define the private counterpart of a density function on $[0,1]^d$:
	\begin{definition}[Private counterpart of density]\label{def:private_density}
		Let \( f \) be a probability density on \( [0,1]^d \) and $Q$ be an $\privacyParameter$-LDP mechanism with marginal conditional  densities \( q_i(\tilde{\vectorize{y}}_i \mid \vectorize{y}) \) with respect to  \( \mu_i \). The private counterpart of \( f \) is defined as
		\[
		\tilde{f}_i(\tilde{\vectorize{y}}_i) := \int_{[0,1]^d} q_i(\tilde{\vectorize{y}}_i \mid \vectorize{y}) f(\vectorize{y}) \, d\vectorize{y}.
		\]
	\end{definition}
	Let \( f_0 \) denote the uniform density supported on \( \domainTs \). Using its private counterpart, we introduce the following integral operator whose kernel resembles the likelihood ratio of the privatized densities:
	\begin{definition}[Privacy mechanism intergral operator]\label{integral_operator}
		For each  $\sampleIndexOne \in [\sampleSize_1],$
		define an operator $\kernelOperator_\sampleIndexOne:
		[0,1]^\dimDensity
		\times
		\rvXPrivCodomain
		\to \mathbb{R}$ as:
		\begin{equation*}
			\kernelOperator_\sampleIndexOne(
			\vectorize{s},
			\vectorize{t}
			)
			:= 
			\frac{
				\privacyDensity_\sampleIndexOne(
				\vectorize{t} |\vectorize{s})
			}{
				\sqrt{
					\private{f}_{0,\sampleIndexOne}(\vectorize{t})
				}
			}.
		\end{equation*}
		Using $L_i$ as a kernel function, we define an integral operator $K_i:
		\EllTwo([0,1]^\dimDensity)
		\to
		\EllTwo(\rvXPrivCodomain, d\mu_i)
		$ 
		as:
		\begin{equation*}
			(K_\sampleIndexOne f)(\cdot) :=
			\int_{[0,1]^\dimDensity} 
			\kernelOperator_\sampleIndexOne
			( \vectorize{y}, \cdot)
			f(\vectorize{y})
			d\vectorize{y}.
		\end{equation*}
		Let $K_\sampleIndexOne^\ast$ denote the adjoint of $K_\sampleIndexOne$. Finally, by aggregating all the operators for  $\sampleIndexOne \in [\sampleSize_1],$
		we define a symmetric and positive semidefinite integral operator:
		\begin{equation*}
			K:=\sum_{\sampleIndexOne=1}^{\sampleSize_1}
			\frac{1}{\sampleSize_1}
			K_\sampleIndexOne^\ast K_\sampleIndexOne.
		\end{equation*}
	\end{definition}
	For $f \in \EllTwo \bigl( [0,1]^d \bigr) $, 
	the operator $K$ yields the expected  squared privatized likelihood ratio:
	\begin{align*} 
		\langle K f, f \rangle
		& = 
		\frac{1}{\sampleSize_1}
		\sum_{\sampleIndexOne = 1}^{\sampleSize}
		\int_{
			\private{
				\rvCodomain{\rvOne}
			}_{\sampleIndexOne}}
		\frac{
			\bigl(
			\int_{[0,1]^\dimDensity}
			\privacyDensity_\sampleIndexOne(
			\private{\vectorize{\rvTwoObs}}_\sampleIndexOne
			|
			\vectorize{\rvTwoObs}
			) f(\vectorize{\rvTwoObs})
			d{\vectorize{\rvTwoObs}}
			\bigr)^2
		}{
			\private{f}_{0,\sampleIndexOne}(\private{\vectorize{y}}_\sampleIndexOne)
		}
		d \mu_i (\private{\vectorize{y}}_\sampleIndexOne)
		\\
		& =
		\frac{1}{\sampleSize_1}
		\sum_{\sampleIndexOne = 1}^{\sampleSize}
		\int_{
			\private{
				\rvCodomain{\rvOne}
			}_{\sampleIndexOne}}
		\frac{
			\private{f}_\sampleIndexOne(\private{\vectorize{y}}_\sampleIndexOne)^2
		}{
			\private{f}_{0,\sampleIndexOne}(\private{\vectorize{y}}_\sampleIndexOne)
		}
		\frac{dQ_{f_0}}{
			\private{f}_{0,\sampleIndexOne}(\private{\vectorize{y}}_\sampleIndexOne)
		}
		=
		\frac{1}{\sampleSize_1}
		\sum_{\sampleIndexOne = 1}^{\sampleSize}
		\mE_{Q_{f_0}}
		\left[
		\frac{
			\private{f}^2_\sampleIndexOne(\private{\vectorize{y}}_\sampleIndexOne)
		}{
			\private{f}^2_{0,\sampleIndexOne}(\private{\vectorize{y}}_\sampleIndexOne)
		}
		\right].
		\numberthis
		\label{operator_meaning}
	\end{align*}
	We construct each mixture element by adding carefully chosen eigenfunctions of \( K \) as bumps to \( f_0 \), ensuring the resulting densities belong to the H\"older or Besov classes.
	Specifically, we define two orthonormal sets of smoothness-inducing functions—one for Besov smoothness and one for H\"older smoothness.
	For a positive integer $\primResLev$ and $\omega$>0,
	we set \( \binNum = 2^\primResLev \)
	and determine the number of functions as $\binNum^\dimDensity$.
	The parameter  $\omega$ controls the perturbation height.
	The value of $\primResLev$ and $\omega$ will be optimized later to achieve a tight lower bound.
	Both orthonormal sets have cardinality $\binNum^{\dimDensity}$,
	the supports of their elements partition $[0,1]^d$ without overlap, each element has the $\Ell_\infty$ norm bounded by $\binNum^{\dimDensity/2}$,
	and is orthogonal to $f_0$. We start by defining the orthonormal set for the Besov ball, consisting of bounded step functions from the multivariate Haar basis.
	\begin{definition}[Orthonormal set for the Besov ball]\label{proof:def:ON_set_Besov}
		Consider the Haar wavelet basis on $[0,1]^\dimDensity$ with a prime resolution level $\primResLev \in \mathbb{N}$, as defined in Appendix~\ref{appendix:basis}.
		Set \( \binNum = 2^\primResLev \) and define
		$ \Lambda(\primResLev): = \{0, 1, \dots, (\binNum - 1)\}^\dimDensity $.
		Fix a $\dimDensity$-dimensional on-off multi-index \(\vectorize{\epsilon}^\ast := \{1,0,0,\ldots, 0\}\). Let \(\ONset_{B}\) represent the elements with this multi-index, drawn from the mixed tensor product set at the lowest resolution level \(\primResLev\). Specifically, we define:
		\begin{equation*}
			\ONset_{B}:=\{
			\wavMotherFunc_{\primResLev, \wavMotherIndex}^{\wavMotherBooleanIndex^\ast}
			\}_{
				\wavMotherIndex \in \Lambda(\primResLev)
			}.
		\end{equation*}
	\end{definition}
	\noindent
	Since all results ahead hold for any non-zero \(\vectorize{\epsilon}^\ast\), the specific choice in Definition~\ref{proof:def:ON_set_Besov} is arbitrary.
	Each element of \( \ONset_{B} \) is orthogonal to \( f_0 \) because its multi-index has at least one non-zero entry. In that dimension of non-zero entry, the scaled and shifted Haar wavelet function \( \wavMotherFunc(x) = \mathds{1}_{[0, 1/2)}(x) - \mathds{1}_{[1/2, 1)}(x) \), which integrates to zero, contributes to the integral. By Fubini's theorem, this ensures that the entire integral equals zero.
	Next we define the orthonormal set for H\"{o}lder ball:
	\begin{definition}[Orthonormal set for the H\"{o}lder ball]\label{proof:def:ON_set_Holder}
		For a smoothness parameter \( s > 0 \), let \( \bar{\varphi}: \mathbb{R}^d \to \mathbb{R} \) be an infinitely differentiable function supported on $[0,1]^d$ satisfying
		$\normEllp{\bar{\varphi}}{2}{}=1$,
		$\normEllp{\bar{\varphi}}{\infty}{}<\infty$,
		$\int_{\domainTs} \bar{\varphi} =0$,
		and
		$\vert\kern-0.25ex\vert\kern-0.25ex\vert \bar{\varphi}^{(s')} 
		\vert\kern-0.25ex \vert\kern-0.25ex \vert_\infty < \infty
		$
		for $s' \in  \{ 1, \ldots,  (\lfloor s \rfloor+1)\}$.
		For \( \primResLev \in \mathbb{N} \), set \( \binNum = 2^\primResLev \) and define
		$ \Lambda(\primResLev): = \{0, 1, \dots, (\binNum - 1)\}^\dimDensity $,
		$\bumpHolder_{\primResLev, \wavMotherIndex}(\vectorize{y})
		:=
		\binNum^{\dimDensity/2}
		\bar{\varphi}(
		\binNum \vectorize{y} - \wavMotherIndex
		)
		$
		for \( \wavMotherIndex \in \Lambda(J) \),
		and
		$
		\ONset_{H} := \{ \bumpHolder_{\primResLev, \wavMotherIndex} \}_{\wavMotherIndex \in \Lambda(J)}$, meaning  scaled and shifted copies of $\bar{\varphi}$.
	\end{definition}
	Now we are ready to define the mixtures, one for Besov and one for H\"{o}lder ball.
	\begin{definition}[Mixture alternatives]
		For the H\"older ball, let \( V = \operatorname{Span}(\{f_0\} \cup \ONset_{H}) \). Construct an orthonormal basis of \( V \) as \( \{f_0\} \cup \{u_t\}_{t=1}^{\binNum^\dimDensity}\), where each \( u_t \) is an eigenfunction of \( K \) with eigenvalue \( \lambda_t \). For each $t \in [\binNum^\dimDensity]$, since \( u_t \) is orthogonal to \( f_0 \), it integrates to zero.
		Recall that \( z_\privacyParameter = e^{2\privacyParameter} - e^{-2\privacyParameter} \) for \( \privacyParameter > 0 \),
		and set \( \tilde{\lambda}_t =  (\lambda_t / z_\privacyParameter^2) \vee \binNum^{-d} \).
		Fix \( \omega > 0 \). For any \( \boldsymbol{\eta} = (\eta_1, \ldots, \eta_{\binNum^{d}}) \in \{-1, 1\}^{\binNum^{d}} \), define the perturbed density as:
		\begin{equation}\label{eq:eigen_expansion}
			f^{\boldsymbol{\eta}}_{\omega, \binNum} := f_0 + \omega \sum_{t=1}^{\binNum^{d}} \eta_t \tilde{\lambda}_t^{-1/2} u_t.  
		\end{equation}
		Since  \( u_t \in V \) for each $t \in [\binNum^\dimDensity]$, we can also write:
		\begin{equation}\label{eq:eigenfunction_expansion_by_basis}
			f^{\boldsymbol{\eta}}_{\omega, \binNum} = f_0 + \omega \sum_{t=1}^{\binNum^{d}} \eta_t \tilde{\lambda}_t^{-1/2} \sum_{\psi \in \ONset_{H}} \coef_{\psi}(u_t) \, \psi,  
		\end{equation}
		where \( \coef_{\psi}(u_t) \) is the Fourier coefficient of \( u_t \) with respect to \( \psi \).
		The mixture for H\"{o}lder ball, denoted as $\nu_\rho^H$, is defined as  the uniform probability measure over
		$\{f^{{\boldsymbol{\eta}}}_{\omega, \binNum}: {\boldsymbol{\eta}} \in \{-1, 1\}^{\binNum^{\dimDensity}} \}$.
		The mixture for the Besov ball, denoted as $\nu_\rho^B$, is defined through the same procedure as above, with $\ONset_{H}$ replaced by $\ONset_{B}$.
	\end{definition}
	We now outline the properties of these mixtures, beginning with  their $\EllTwo$ separation from the null:
	\begin{lemma}[$\EllTwo$ separation]\label{lemma:mixture_separation} For any density $f^{{\boldsymbol{\eta}}}_{\omega, \binNum}$ drawn from  $\nu_\rho$,  the $\EllTwo$ distance from $f_0$ is bounded as:
		\begin{align*}\numberthis \label{eq:two_sample_lower_bound_translation}
			\vertiii{ f^{{\boldsymbol{\eta}}}_{\omega, \binNum}- f_0}_{\mathbb{L}_2}
			\geq
			\binNum^{\dimDensity} \omega \sqrt{
				3/4}.
		\end{align*}   
	\end{lemma}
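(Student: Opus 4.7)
The plan is to proceed in three steps. First, because $\{u_t\}_{t=1}^{\kappa^d}$ is an orthonormal system in $\mathbb{L}_2([0,1]^d)$---being the eigenfunctions of the self-adjoint operator $K$ restricted to the finite-dimensional subspace $V \ominus \mathrm{span}(f_0)$---Parseval's identity applied to (\ref{eq:eigen_expansion}), together with $\eta_t^2 = 1$, yields
\begin{align*}
\vertiii{f^{\boldsymbol{\eta}}_{\omega,\kappa} - f_0}_{\mathbb{L}_2}^2 \;=\; \omega^2 \sum_{t=1}^{\kappa^d} \tilde{\lambda}_t^{-1},
\end{align*}
so the lemma reduces to showing $\sum_{t=1}^{\kappa^d} \tilde{\lambda}_t^{-1} \geq \tfrac{3}{4}\,\kappa^{2d}$.

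Second, from the definition $\tilde{\lambda}_t = (\lambda_t/z_\privacyParameter^2) \vee \kappa^{-d}$ one has $\tilde{\lambda}_t^{-1} \leq \kappa^d$ always, with equality precisely when $\lambda_t \leq \kappa^{-d} z_\privacyParameter^2$. Letting $A := \{t : \lambda_t > \kappa^{-d} z_\privacyParameter^2\}$, this gives $\sum_t \tilde{\lambda}_t^{-1} \geq |A^c|\,\kappa^d$, so it is enough to prove $|A| \leq \kappa^d/4$. I would establish this by a Markov-type inequality on the restricted trace of $K$: since $\lambda_t > \kappa^{-d} z_\privacyParameter^2$ for every $t \in A$,
\begin{align*}
|A|\,\kappa^{-d} z_\privacyParameter^2 \;\leq\; \sum_{t\in A} \lambda_t \;\leq\; \sum_{t=1}^{\kappa^d} \langle K u_t, u_t\rangle,
\end{align*}
so the problem reduces to bounding the right-hand side by $z_\privacyParameter^2/4$.

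Third, the main obstacle is this trace bound on $V \ominus \mathrm{span}(f_0)$. Using $\langle K u_t, u_t\rangle = n_1^{-1} \sum_{i=1}^{n_1} \|K_i u_t\|^2$ together with the kernel representation $(K_i u_t)(\mathbf{t}) = \int L_i(\mathbf{y},\mathbf{t})\,u_t(\mathbf{y})\,d\mathbf{y}$, the $\privacyParameter$-LDP condition forces $q_i(\,\cdot\,|\mathbf{y})/q_i(\,\cdot\,|\mathbf{y}')$ to lie in $[e^{-2\privacyParameter}, e^{2\privacyParameter}]$, which yields a second-singular-value bound on $K_i$ of order $z_\privacyParameter^2$ (this is the content of the singular-value argument in Appendix~B of \citet{Lam-Weil2021MinimaxConstraint}). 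The leading singular direction $\sqrt{\tilde f_{0,i}}$ is annihilated by $u_t \perp f_0$; combined with $\dim(V \ominus \mathrm{span}(f_0)) = \kappa^d$ and the normalization $\vertiii{u_t}_{\mathbb{L}_\infty} \leq \kappa^{d/2}$, this delivers the required trace bound up to an absolute constant, which can be absorbed into $\omega$ without affecting the construction. Combining the three steps gives $\vertiii{f^{\boldsymbol{\eta}}_{\omega,\kappa} - f_0}_{\mathbb{L}_2} \geq \kappa^d \omega \sqrt{3/4}$, as claimed.
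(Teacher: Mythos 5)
Your three-step plan---Parseval on the eigenfunction expansion, a Markov argument to bound the number of eigenvalues of $K$ on $V \ominus \mathrm{span}(f_0)$ exceeding $\kappa^{-d} z_\alpha^2$, and a trace bound on that restriction coming from the $\alpha$-LDP constraint---is the same route the paper takes (the paper itself defers to inequality~(31), Lemma~B.4, and inequality~(32) of \citet{Lam-Weil2021MinimaxConstraint}). Steps~1 and~2 are correct, and you correctly identify that the reduction requires $\sum_{t=1}^{\kappa^d}\langle K u_t, u_t\rangle \leq z_\alpha^2/4$.

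Two points in Step~3 need attention. First, the remark that the absolute constant ``can be absorbed into $\omega$'' does not rescue the lemma as stated: the lemma fixes the constant $\sqrt{3/4}$, so a trace bound of the form $C z_\alpha^2$ with unspecified $C$ would only yield a weaker inequality. You genuinely need the factor $1/4$, and it does come out of the Lam-Weil computation once the rank-one direction $\sqrt{\tilde f_{0,i}}$ is subtracted off; that verification is the crux of the step, not something deferrable. Second, invoking $\vertiii{u_t}_{\mathbb{L}_\infty}\leq \kappa^{d/2}$ is a red herring. The $u_t$ are eigenfunctions of $K$ obtained by orthonormalizing within $V$, hence arbitrary orthonormal combinations of the Haar or bump atoms, and there is no reason they should inherit the $\mathbb{L}_\infty$ bound of the atoms. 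More to the point, $\sum_t \langle K u_t, u_t\rangle$ is the trace of $K$ restricted to $V\ominus\mathrm{span}(f_0)$, which is independent of the orthonormal basis used to evaluate it; the only structural inputs are $\dim\bigl(V\ominus\mathrm{span}(f_0)\bigr)=\kappa^d$ and the pointwise LDP bound $e^{-\alpha}\leq q_i(\cdot\mid y)/\tilde f_{0,i}(\cdot)\leq e^{\alpha}$, from which the kernel of $K_i^\ast K_i$ deviates from the rank-one projector by an amount controlled by $z_\alpha^2$.
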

	We next establish a series of conditions on \( \binNum \) and \( \omega \) to ensure that the mixture elements belong to the smooth density class with high probability.
	\begin{lemma}[Nonnegativity]\label{lemma:nonneg}
		For any \( f^{\boldsymbol{\eta}}_{\omega, \binNum} \) drawn from either the distribution \( \nu_\rho^H \) or \( \nu_\rho^B \), we always have $\int_{[0,1]^\dimDensity} f^{{\boldsymbol{\eta}}}_{\omega, \binNum} (\vectorize{y}) d\vectorize{y} = 1$. Also, \( f^{\boldsymbol{\eta}}_{\omega, \binNum} \geq 0 \) with a probability greater than \( 1-\maxErrorTypeOne \) if
		\begin{equation}\label{eq:two_sample_lower_bound_main_suff_2}
			\omega \leq \frac{\binNum^{-\dimDensity}}{\sqrt{2 \log(2 {\binNum^{\dimDensity}}/\maxErrorTypeOne)}}.
		\end{equation} 
	\end{lemma}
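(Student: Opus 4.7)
The integration claim is immediate: each eigenfunction $u_\adaptiveBinNumIndex$ lies in $V \cap \mathrm{Span}(f_0)^\perp = \mathrm{Span}(\ONset)$ and is orthogonal to the constant $f_0 \equiv 1$, so $\int_{\domainTs} u_\adaptiveBinNumIndex = 0$ and hence $\int_{\domainTs} f^{\mixtureIndex}_{\omega, \binNum} = \int_{\domainTs} f_0 = 1$ for every $\mixtureIndex \in \{-1,1\}^{\binNum^\dimDensity}$. For the nonnegativity claim, since $f_0 \equiv 1$ on $\domainTs$ it suffices to show
\begin{equation*}
\mP\Bigl(\omega \sup_{\vectorize{y} \in \domainTs} \Bigl| \sum_{\adaptiveBinNumIndex=1}^{\binNum^\dimDensity} \eta_\adaptiveBinNumIndex \tilde{\lambda}_\adaptiveBinNumIndex^{-1/2} u_\adaptiveBinNumIndex(\vectorize{y}) \Bigr| \leq 1 \Bigr) \geq 1 - \maxErrorTypeOne.
\end{equation*}

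My plan is to use the basis expansion \eqref{eq:eigenfunction_expansion_by_basis} and the disjoint-support structure of $\ONset$ to collapse the uniform-in-$\vectorize{y}$ statement into a maximum over only $\binNum^\dimDensity$ Rademacher sums. Swapping the order of summation, I rewrite
\begin{equation*}
\sum_{\adaptiveBinNumIndex} \eta_\adaptiveBinNumIndex \tilde{\lambda}_\adaptiveBinNumIndex^{-1/2} u_\adaptiveBinNumIndex(\vectorize{y}) = \sum_{\wavMotherFunc \in \ONset} A_\wavMotherFunc \, \wavMotherFunc(\vectorize{y}), \qquad A_\wavMotherFunc := \sum_{\adaptiveBinNumIndex} \eta_\adaptiveBinNumIndex \tilde{\lambda}_\adaptiveBinNumIndex^{-1/2} \coef_\wavMotherFunc(u_\adaptiveBinNumIndex).
\end{equation*}
In both constructions, the elements of $\ONset$ are translates/dilations of a single template supported on a dyadic cube of side $\binNum^{-1}$, and their supports therefore partition $\domainTs$; moreover $\sup_{\vectorize{y}} |\wavMotherFunc(\vectorize{y})| \leq C \binNum^{\dimDensity/2}$ with $C = 1$ in the Besov (Haar) case and $C = \|\bar{\bumpHolder}\|_\infty$ in the H\"{o}lder case. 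Hence at each $\vectorize{y}$ at most one term is non-zero, giving $\sup_{\vectorize{y}} |\sum_\wavMotherFunc A_\wavMotherFunc \wavMotherFunc(\vectorize{y})| \leq C \binNum^{\dimDensity/2} \max_{\wavMotherFunc} |A_\wavMotherFunc|$.

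Next I will bound $\max_{\wavMotherFunc}|A_\wavMotherFunc|$ by Hoeffding plus a union bound. For fixed $\wavMotherFunc$, $A_\wavMotherFunc$ is a weighted Rademacher sum whose squared-coefficient sum is
\begin{equation*}
\sum_{\adaptiveBinNumIndex} \tilde{\lambda}_\adaptiveBinNumIndex^{-1} \coef_\wavMotherFunc(u_\adaptiveBinNumIndex)^2 \,\leq\, \binNum^\dimDensity \sum_{\adaptiveBinNumIndex} \coef_\wavMotherFunc(u_\adaptiveBinNumIndex)^2 \,=\, \binNum^\dimDensity,
\end{equation*}
where the inequality uses the floor $\tilde{\lambda}_\adaptiveBinNumIndex \geq \binNum^{-\dimDensity}$ built into the definition of $\tilde{\lambda}_\adaptiveBinNumIndex$ and the final equality is Parseval applied to $\wavMotherFunc$ in the orthonormal basis $\{u_\adaptiveBinNumIndex\}$ of $\mathrm{Span}(\ONset)$ together with $\|\wavMotherFunc\|_2 = 1$. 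Hoeffding's inequality yields $\mP(|A_\wavMotherFunc| \geq \tau) \leq 2\exp(-\tau^2/(2\binNum^\dimDensity))$, and a union bound over the $\binNum^\dimDensity$ elements of $\ONset$ with $\tau = \sqrt{2 \binNum^\dimDensity \log(2\binNum^\dimDensity/\maxErrorTypeOne)}$ delivers $\max_{\wavMotherFunc}|A_\wavMotherFunc| \leq \tau$ on an event of probability at least $1 - \maxErrorTypeOne$. Combining this with the previous paragraph yields $\sup_{\vectorize{y}} |\sum_{\adaptiveBinNumIndex} \eta_\adaptiveBinNumIndex \tilde{\lambda}_\adaptiveBinNumIndex^{-1/2} u_\adaptiveBinNumIndex(\vectorize{y})| \leq C \binNum^\dimDensity \sqrt{2\log(2\binNum^\dimDensity/\maxErrorTypeOne)}$, and the hypothesis \eqref{eq:two_sample_lower_bound_main_suff_2} on $\omega$ (with the absolute constant $C$ in the H\"{o}lder case absorbed into the radius $\ballRadius$, which does not affect the rate in \eqref{rates:twosample_conti}) completes the argument.

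The main obstacle is the uncountable supremum over $\vectorize{y} \in \domainTs$: a naive union bound is unavailable, and one cannot apply Hoeffding pointwise and stitch the results together. The resolution is the disjoint-support structure of $\ONset$, which localizes the $\vectorize{y}$-dependence so that the uniform bound reduces to a maximum of only $\binNum^\dimDensity$ independent Rademacher sums; the eigenvalue floor $\tilde{\lambda}_\adaptiveBinNumIndex \geq \binNum^{-\dimDensity}$ then produces exactly the matching $\binNum^\dimDensity$ factor that yields the stated bound on $\omega$.
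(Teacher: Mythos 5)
Your proof is correct and follows essentially the same route as the paper's: orthogonality of the eigenfunctions $u_t$ to $f_0$ gives the integration claim, and for nonnegativity you expand $f^{\boldsymbol{\eta}}_{\omega,\binNum} - f_0$ over the disjoint-support set $\ONset$, bound each Rademacher coefficient via a Hoeffding tail (exploiting the eigenvalue floor $\tilde{\lambda}_t \geq \binNum^{-d}$ and Parseval $\sum_t \coef_\psi^2(u_t)=1$ for the variance proxy), and union-bound over the $\binNum^d$ bins; bounding $\max_\psi$ first and then multiplying by the $\mathbb{L}_\infty$ norm is the same computation as the paper's direct union bound over the events $A_\psi$. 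One small point you rightly flag that the paper elides: in the H\"{o}lder case $\sup_{\vectorize{y}}|\bumpHolder_{\primResLev,\wavMotherIndex}(\vectorize{y})| = \binNum^{d/2}\,\vertiii{\bar{\varphi}}_{\mathbb{L}_\infty}$, and since Definition~\ref{proof:def:ON_set_Holder} only requires this norm to be finite (not $\leq 1$), the stated threshold~\eqref{eq:two_sample_lower_bound_main_suff_2} holds for $\nu_\rho^H$ only up to a constant depending on $\bar{\varphi}$, which must be absorbed into the constants already tracked elsewhere (cf.\ $C(\smoothness,\bar{\varphi})$ in Lemma~\ref{lemma:holder_ball}).
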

	\begin{lemma}[H\"{o}lder class]\label{lemma:holder_ball}
		With probability at least \(1 - \gamma\), a function drawn from \(\nu_\rho^H\) lies in $\holderBall$ if
		\begin{equation*}
			\omega \leq \frac{R\kappa^{-(\dimDensity+s)}
			}{
				C(\smoothness, \bar{\varphi}) 
				\sqrt{2 \log({2\binNum^{\dimDensity}}/\gamma)}},  
		\end{equation*}
		where
		$C(\smoothness, \bar{\varphi})  := 
		\max_{s' \in \{1, \ldots, (\floor{s}+1) \}}
		4\vert\kern-0.25ex
		\vert\kern-0.25ex
		\vert
		\bar{\varphi}^{(s')}
		\vert\kern-0.25ex
		\vert\kern-0.25ex
		\vert_{\mathbb{L}_\infty}
		$.
	\end{lemma}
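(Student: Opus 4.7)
The plan is to expand $f^{\boldsymbol{\eta}}_{\omega, \binNum} - f_0 = \omega \sum_{\psi \in \ONset_H} c_\psi \psi$ with $c_\psi := \sum_{t=1}^{\binNum^d} \eta_t \tilde{\lambda}_t^{-1/2} \coef_\psi(u_t)$, then control all the $c_\psi$'s by a single high-probability Hoeffding bound, and finally verify the two conditions of Definition~\ref{def:holder_ball} using the disjoint-support structure of $\ONset_H$.

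For the Hoeffding step, the $\eta_t$'s are i.i.d.~Rademacher, so $c_\psi$ has sub-Gaussian parameter $\sum_t \tilde{\lambda}_t^{-1}\coef_\psi(u_t)^2$. Since each $\psi \in \ONset_H$ is orthogonal to $f_0$ (because $\int \bar{\varphi} = 0$), it lies in the span of $\{u_t\}$, and Parseval gives $\sum_t \coef_\psi(u_t)^2 = \|\psi\|_2^2 = 1$. Combined with $\tilde{\lambda}_t^{-1} \leq \binNum^d$, this yields $|c_\psi| \leq \sqrt{2\binNum^d \log(2\binNum^d/\maxErrorTypeOne)}$ for all $\psi \in \ONset_H$ simultaneously, with probability at least $1-\maxErrorTypeOne$, after a union bound over the $\binNum^d$ coefficients.

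Because $\varphi_{\primResLev, \wavMotherIndex}(\vectorize{y}) = \binNum^{d/2} \bar{\varphi}(\binNum \vectorize{y} - \wavMotherIndex)$ has support on a cube of side $1/\binNum$ and the cubes partition $[0,1]^d$, the $s'$-th derivatives $\varphi_{\primResLev, \wavMotherIndex}^{(s')}$ also have disjoint supports and obey $\vertiii{\varphi_{\primResLev, \wavMotherIndex}^{(s')}}_{\EllInfty} \leq \binNum^{d/2 + s'}\vertiii{\bar{\varphi}^{(s')}}_{\EllInfty}$. On the Hoeffding event, at most one summand is nonzero at any point, giving the uniform bound
\begin{equation*}
\vertiii{(f^{\boldsymbol{\eta}}_{\omega,\binNum})^{(s')}}_{\EllInfty} \leq \omega\,\binNum^{d+s'}\,\vertiii{\bar{\varphi}^{(s')}}_{\EllInfty} \sqrt{2 \log(2\binNum^d/\maxErrorTypeOne)}.
\end{equation*}
Applied with $s' \in [\floor{s}]$ and $\binNum^{s'}\leq\binNum^s$, this verifies condition (2) of Definition~\ref{def:holder_ball} under the stated bound on $\omega$.

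The main obstacle is condition (1), which demands a H\"older bound of exponent $s - \floor{s}$ rather than Lipschitz, and which I will handle by splitting on the scale $\binNum^{-1}$. When $\|\vectorize{x}-\vectorize{x}'\| \leq \binNum^{-1}$, the mean value theorem plus the previous display with $s' = \floor{s}+1$ gives a Lipschitz estimate that I convert to H\"older by writing $\|\vectorize{x}-\vectorize{x}'\| = \|\vectorize{x}-\vectorize{x}'\|^{s-\floor{s}}\|\vectorize{x}-\vectorize{x}'\|^{1-(s-\floor{s})} \leq \binNum^{-(1-(s-\floor{s}))}\|\vectorize{x}-\vectorize{x}'\|^{s-\floor{s}}$; when $\|\vectorize{x}-\vectorize{x}'\| > \binNum^{-1}$, I instead use $2\vertiii{(f^{\boldsymbol{\eta}}_{\omega,\binNum})^{(\floor{s})}}_{\EllInfty}$ as a trivial bound and pay a factor $\binNum^{s-\floor{s}}$. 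Both cases collapse to a requirement of the form $\omega\,\binNum^{d+s}\,\vertiii{\bar{\varphi}^{(s')}}_{\EllInfty}\sqrt{2\log(2\binNum^d/\maxErrorTypeOne)} \lesssim R$ for $s' \in \{\floor{s},\floor{s}+1\}$. Taking the worst of all constraints from conditions (1) and (2) and absorbing the factor of $4$ and the derivative norms into $C(\smoothness, \bar{\varphi})$ yields the lemma.
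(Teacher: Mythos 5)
Your proposal is correct and follows essentially the same route as the paper's proof: the same sub-Gaussian/Hoeffding bound with a union bound over the $\binNum^{\dimDensity}$ coefficients, the same use of disjoint supports and the chain-rule scaling $\binNum^{\dimDensity/2+s'}$ to bound the derivatives, and the same mean-value-theorem argument combined with a split at the scale $\binNum^{-1}$ to upgrade the Lipschitz estimate to H\"older exponent $s-\lfloor s\rfloor$ (the paper packages this split as the inequality $(a\wedge bc)\le(a\vee b)(1\wedge c)\le(a\vee b)c^{s-\lfloor s\rfloor}$). Your use of a single high-probability event for both conditions, rather than the paper's $\gamma/2+\gamma/2$ allocation, is a minor cosmetic simplification.
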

	\begin{lemma}[Besov class]\label{lemma:besovball}
		With probability at least $1-\maxErrorTypeOne$,
		a function drawn from  $\nu_\rho^{\mathrm{B}}$ lies in $\besovBall{}{\besovParamMicroscope}$ if 
		\begin{equation}
			\omega \leq
			\frac{\ballRadius \binNum^{-(\smoothness + \dimDensity)}}{
				\sqrt{
					2\log(\binNum^\dimDensity/\maxErrorTypeOne)
				}
			}.
		\end{equation}
	\end{lemma}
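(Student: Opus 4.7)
The strategy is to exploit the fact that the perturbation $h := f^{\boldsymbol{\eta}}_{\omega,\kappa} - f_0$ lies entirely in $\operatorname{Span}(\ONset_B)$, namely the span of Haar wavelets at the single resolution level $J = \log_2 \kappa$ with the distinguished on-off pattern $\epsilon^\ast$. All wavelet coefficients of $h$ at other resolution levels or on-off patterns therefore vanish, so the Besov seminorm, for every $q \in [1,\infty]$, collapses to
$$\|h\|_{s,2,q} \;=\; 2^{Js}\Bigl(\sum_{\ell \in \Lambda(J)} \theta_\ell^2\Bigr)^{1/2} \;=\; \kappa^s \Bigl(\sum_\ell \theta_\ell^2\Bigr)^{1/2},$$
where $\theta_\ell$ denotes the coefficient of $\wavMotherFunc_{J,\ell}^{\epsilon^\ast}$ in $h$. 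The remaining task is to control $\sum_\ell \theta_\ell^2$ with high probability.

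To do so, I would first expand each eigenfunction as $u_t = \sum_\ell a_{t,\ell}\,\wavMotherFunc_{J,\ell}^{\epsilon^\ast}$; since $\{u_t\}$ and $\{\wavMotherFunc_{J,\ell}^{\epsilon^\ast}\}_\ell$ are both orthonormal bases of the same $\kappa^d$-dimensional subspace, the change-of-basis matrix $A = (a_{t,\ell})$ is orthogonal. Substituting into \eqref{eq:eigen_expansion} expresses each wavelet coefficient as the Rademacher sum $\theta_\ell = \omega\sum_t \eta_t\,\tilde\lambda_t^{-1/2} a_{t,\ell}$ with sub-Gaussian variance proxy
$$\sigma_\ell^2 \;=\; \omega^2 \sum_t \tilde\lambda_t^{-1} a_{t,\ell}^2 \;\leq\; \omega^2\,\kappa^d,$$
using $\tilde\lambda_t \geq \kappa^{-d}$ from the construction and $\sum_t a_{t,\ell}^2 = 1$ from the orthogonality of $A$. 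A Hoeffding tail bound together with a union bound over the $\kappa^d$ locations in $\Lambda(J)$ then yields $\max_\ell |\theta_\ell|^2 \leq 2\omega^2\kappa^d\log(2\kappa^d/\gamma)$ with probability at least $1-\gamma$. Summing crudely over $\ell$ gives $\sum_\ell \theta_\ell^2 \leq 2\omega^2\kappa^{2d}\log(2\kappa^d/\gamma)$, and inserting this into the display above produces $\|h\|_{s,2,q} \leq \omega\,\kappa^{s+d}\sqrt{2\log(2\kappa^d/\gamma)}$. Requiring this right-hand side to be at most $R$ then gives the claimed condition on $\omega$.

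I do not anticipate a serious obstacle: the argument parallels the proofs of Lemmas~\ref{lemma:nonneg} and~\ref{lemma:holder_ball}. The only conceptual point is recognizing that a single resolution level and a single on-off pattern suffice to describe the wavelet expansion of $h$, after which the Besov seminorm reduces to a single-level $\ell_2$ sum and a routine Hoeffding-plus-union-bound takes over. Minor care is needed to verify orthogonality of the change-of-basis matrix $A$ (i.e., $\sum_t a_{t,\ell}^2 = 1$) and to track how the lower bound $\tilde\lambda_t \geq \kappa^{-d}$ built into the construction produces the $\kappa^d$ factor inside the sub-Gaussian variance proxy.
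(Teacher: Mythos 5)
Your proposal is correct and follows essentially the same route as the paper: restrict the Besov seminorm to the single contributing resolution level and on-off pattern, write each wavelet coefficient of $f^{\boldsymbol{\eta}}_{\omega,\binNum}-f_0$ as a Rademacher sum with sub-Gaussian variance proxy bounded via $\tilde{\lambda}_t \geq \binNum^{-\dimDensity}$ and the orthonormality identity $\sum_t \coef_\psi^2(u_t)=1$, then apply Hoeffding plus a union bound over the $\binNum^\dimDensity$ wavelets and solve for $\omega$. The only differences are cosmetic (a factor of $2$ inside the logarithm from two-sided versus one-sided bookkeeping, and treating all $\besovParamMicroscope$ at once rather than splitting the cases $\besovParamMicroscope<\infty$ and $\besovParamMicroscope=\infty$).
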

	Finally, we characterize the condition for $\binNum$ and $\omega$ that ensures any valid private test fails.
	\begin{lemma}[Indistinguishability]\label{lemma:indistinguishability}
		Under the conditions of Lemmas~\ref{lemma:nonneg} and \ref{lemma:holder_ball}, if \begin{equation} \label{eq:two_sample_lower_bound_main_suff_1}
			\omega
			\leq
			(n z_\privacyParameter^2)^{-1/2}
			\left(
			\frac{\log \left[ 1 + 4(1- 2\gamma - \beta)^2\right]}{\binNum^{\dimDensity}}
			\right)^{1/4},
		\end{equation} 
		then no valid $\privacyParameter$-LDP test of level $\gamma$ can distinguish between the uniform density $f_0$ and the alternatives in~\eqref{one_sample_hypothesis} (with $\distClassGeneric = \pHolderGof$) with type II error less than $\beta$. The same conclusion holds when $\pHolderGof$ and the condition in \ref{lemma:holder_ball} is replaced by $\pBesovGof$ and the condition in \ref{lemma:besovball}, respectively. \end{lemma}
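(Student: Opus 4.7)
The plan is to carry out a classical Ingster-type mixture lower bound in the LDP setting, following the strategy of~\citet{Lam-Weil2021MinimaxConstraint}. Suppose for contradiction that some $\privacyParameter$-LDP test $\test_\privacyMechanism$ of level $\gamma$ achieves type II error at most $\beta$ on the alternative class. Let $A$ denote the event that the random draw $f^{\boldsymbol{\eta}}_{\omega,\binNum}$ is simultaneously a valid density on $[0,1]^{\dimDensity}$, a member of $\holderBall$ (respectively $\besovBall{}{\besovParamMicroscope}$), and separated from $f_0$ by at least $\rho_{\sampleSize_1}$ in $\EllTwo$. Under the stated hypotheses on $\omega$, Lemmas~\ref{lemma:nonneg}, \ref{lemma:holder_ball} (resp.~\ref{lemma:besovball}), and~\ref{lemma:mixture_separation} give $\mP(A) \geq 1 - 2\gamma$. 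A short conditional argument then reduces matters to
\begin{equation*}
  1 - 2\gamma - \beta \ \leq \ \| \mP^{(\sampleSize_1)}_{f_0} - \mP^{(\sampleSize_1)}_{\nu_\rho} \|_{\mathrm{TV}},
\end{equation*}
where $\mP^{(\sampleSize_1)}$ denotes the joint law of the $\sampleSize_1$ $\privacyParameter$-LDP views. By Cauchy--Schwarz, $\| \cdot \|_{\mathrm{TV}} \leq \tfrac{1}{2}\sqrt{\chi^2}$, so it suffices to control the chi-squared divergence.

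I would then invoke the standard Fubini--Ingster identity
\begin{equation*}
  \chi^2 \bigl( \mP^{(\sampleSize_1)}_{\nu_\rho} \,\big\|\, \mP^{(\sampleSize_1)}_{f_0} \bigr) + 1 \ = \ \mathbb{E}_{\boldsymbol{\eta}, \boldsymbol{\eta}'} \prod_{\sampleIndexOne=1}^{\sampleSize_1} \int \frac{\private{f}_{\sampleIndexOne}^{\boldsymbol{\eta}} \, \private{f}_{\sampleIndexOne}^{\boldsymbol{\eta}'}}{\private{f}_{0,\sampleIndexOne}} \, d\mu_\sampleIndexOne,
\end{equation*}
where $\boldsymbol{\eta}, \boldsymbol{\eta}'$ are independent uniform draws on $\{-1,1\}^{\binNum^{\dimDensity}}$. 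Since each $u_t$ integrates to zero on $[0,1]^{\dimDensity}$, so does the private counterpart $\private{h}_\sampleIndexOne^{\boldsymbol{\eta}}$ of $h^{\boldsymbol{\eta}} = f^{\boldsymbol{\eta}}_{\omega,\binNum} - f_0$, and each factor collapses to $1 + \int ( \private{h}_\sampleIndexOne^{\boldsymbol{\eta}} \private{h}_\sampleIndexOne^{\boldsymbol{\eta}'} / \private{f}_{0,\sampleIndexOne} ) \, d\mu_\sampleIndexOne$. Applying $1+x \leq e^x$ and summing across $\sampleIndexOne$ recasts the product as the exponential of $\sampleSize_1 \langle K h^{\boldsymbol{\eta}}, h^{\boldsymbol{\eta}'}\rangle$, by Definition~\ref{integral_operator} and identity~\eqref{operator_meaning}.

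This is where the eigenvalue calibration pays off. Since $\{u_t\}$ diagonalizes $K$ with eigenvalue $\lambda_t$ and $h^{\boldsymbol{\eta}} = \omega \sum_t \eta_t \tilde{\lambda}_t^{-1/2} u_t$, the quadratic form collapses to $\omega^{2} \sum_t (\lambda_t/\tilde{\lambda}_t) \, \eta_t \eta'_t$. The rescaling $\tilde{\lambda}_t = (\lambda_t/z_\privacyParameter^2) \vee \binNum^{-\dimDensity}$ is engineered so that $\lambda_t/\tilde{\lambda}_t \leq z_\privacyParameter^2$ uniformly in $t$: when $\lambda_t \geq z_\privacyParameter^2 \binNum^{-\dimDensity}$ the ratio equals $z_\privacyParameter^2$, and otherwise it equals $\lambda_t \binNum^{\dimDensity} < z_\privacyParameter^2$. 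Averaging the independent Rademacher factors via $\mathbb{E}[e^{c \eta_t \eta'_t}] = \cosh(c) \leq e^{c^2/2}$ then yields
\begin{equation*}
  \chi^2 \bigl( \mP^{(\sampleSize_1)}_{\nu_\rho} \,\big\|\, \mP^{(\sampleSize_1)}_{f_0} \bigr) + 1 \ \leq \ \exp\!\left( \tfrac{1}{2} \sampleSize_1^{2} \omega^{4} \sum_{t=1}^{\binNum^{\dimDensity}} (\lambda_t/\tilde{\lambda}_t)^{2} \right) \ \leq \ \exp\!\left( \tfrac{1}{2} \sampleSize_1^{2} \omega^{4} z_\privacyParameter^{4} \binNum^{\dimDensity} \right).
\end{equation*}
Substituting the assumed bound on $\omega$ caps the right-hand side at $1 + 4(1-2\gamma-\beta)^2$, contradicting the total-variation lower bound. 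The Besov case is handled identically after replacing Lemma~\ref{lemma:holder_ball} with Lemma~\ref{lemma:besovball}.

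The main subtlety I anticipate lies not in the chi-squared algebra above but in the first reduction: the mixture $\nu_\rho$ places positive mass on densities that may violate nonnegativity, smoothness, or the $\EllTwo$ separation, and one has to combine the exceptional-event bounds from Lemmas~\ref{lemma:nonneg}, \ref{lemma:holder_ball}/\ref{lemma:besovball}, and~\ref{lemma:mixture_separation} so that the total probabilistic loss is absorbed into the $2\gamma$ appearing in the final inequality. In contrast, the uniform bound $\lambda_t/\tilde{\lambda}_t \leq z_\privacyParameter^2$ depends only on the spectral decomposition of $K$, so that piece of the argument transfers verbatim between the H\"{o}lder and Besov versions.
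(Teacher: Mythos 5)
Your proof is correct and mirrors the paper's own (heavily outsourced, sketch-level) argument: reduce to a total-variation/chi-square bound via the Lam-Weil-type mixture reduction, invoke the Fubini--Ingster identity for the product $\privacyParameter$-LDP views, diagonalize the quadratic form using the operator $K$ of Definition~\ref{integral_operator} together with the calibration $\tilde{\lambda}_t = (\lambda_t/z_{\privacyParameter}^2)\vee\kappa^{-\dimDensity}$ so that $\lambda_t/\tilde{\lambda}_t\leq z_{\privacyParameter}^2$, and average the Rademacher signs with $\cosh(x)\leq e^{x^2/2}$. Two small bookkeeping remarks, neither of which is a gap: substituting \eqref{eq:two_sample_lower_bound_main_suff_1} actually caps $\exp(\tfrac{1}{2}\sampleSize_1^2\omega^4 z_{\privacyParameter}^4\kappa^{\dimDensity})$ at $\bigl(1+4(1-2\gamma-\beta)^2\bigr)^{1/2}$ rather than $1+4(1-2\gamma-\beta)^2$ (this is fine, your $\cosh$-bound is tighter than what the paper records and still closes the argument), and your claim $\mP(A)\geq 1-2\gamma$ would, strictly, force the TV inequality to start from $1-3\gamma-\beta$ once you also deduct $\gamma$ for the test's level --- but the paper's own display \eqref{inf_sup_chisq} makes the same $2\gamma$-versus-$3\gamma$ elision, so you are consistent with the source.
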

	We now gather the results to form a conclusion.
	Recall from Lemma~\ref{lemma:mixture_separation} that $\vertiii{ f^{{\boldsymbol{\eta}}}_{\omega, \binNum}- f_0}_{\mathbb{L}_2}
	\geq
	\binNum^{\dimDensity} \omega \sqrt{
		3/4}$ .
	To maximize this separation while satisfying the conditions of Lemmas \ref{lemma:nonneg}, \ref{lemma:holder_ball}, \ref{lemma:besovball}, and \ref{lemma:indistinguishability}, we choose $\omega$ as the largest possible value:
	\begin{equation*}
		\omega 
		= 
		(n z_\privacyParameter^2)^{-1/2}
		\left(
		\frac{\log \left[ 1 + 4(1- 2\gamma - \beta)^2\right]}{\binNum^{\dimDensity}}
		\right)^{1/4}
		\wedge
		\frac{(R \wedge 1)\kappa^{-(\dimDensity + \smoothness)}
		}{
			C(\smoothness, \bar{\varphi}) 
			\sqrt{2 \log({2 \binNum^{\dimDensity}}/\gamma)}}.
	\end{equation*}
	Substituting this $\omega$ into the $\EllTwo$ separation expression, and collecting the constant terms, 
	we can show that any $\privacyParameter$-LDP test fails if the $\EllTwo$ separation between the hypotheses is larger than:
	\begin{align*}
		C_1(\maxErrorTypeOne, \maxErrorTypeTwo, R, \smoothness)
		\left(
		\frac{\binNum^{3\dimDensity/4}
		}{(n z_\privacyParameter^2)^{1/2}}
		\wedge  
		\frac{\kappa^{-s}
		}{
			\sqrt{ \log({ \binNum^{\dimDensity}})}}
		\right).
	\end{align*}
	Taking $J$ as the largest integer such that
	$2^J \leq C_1(\gamma, \beta, R, \smoothness)(\sampleSize_1 z_\privacyParameter^2)^{2/(4\smoothness+3\dimDensity)}$ 
	and taking $\binNum = 2^J$ leads to the following lower bound for the minimax testing rate:
	\begin{equation*}
		\rho_{\sampleSize_1, \privacyParameter}^\ast
		\geq
		C_2(\gamma, \beta, R, \smoothness)
		\left[
		\frac{
			(n z_\privacyParameter^2)^{
				-2s/(4\smoothness+3\dimDensity)        }
		}{
			\sqrt{
				\log(n z_\privacyParameter^2)
			}
		}\right].
	\end{equation*}
	As justified in \citet{Lam-Weil2021MinimaxConstraint}, the $\privacyParameter$-LDP 
	minimax testing rate is lower bounded by its non-private counterpart.
	Therefore, combining the result above with the non-private minimax testing rate $\sampleSize_1^{
		-2s/(4s + \dimDensity)
	}$ of \citet{Arias-Castro2018RememberDimension},
	we obtain the final result:
	\begin{align*}
		\rho_{\sampleSize_1, \privacyParameter}^\ast
		\geq
		C_3(\gamma, \beta, R, \smoothness)
		\left[
		\frac{
			(\sampleSize_1 z_\privacyParameter^2)^{
				-2s/(4\smoothness+3\dimDensity)        }
		}{
			\sqrt{
				\log(\sampleSize_1 z_\privacyParameter^2)
			}
		}
		\vee
		n^{
			-2s/(4s + \dimDensity)
		}
		\right].
	\end{align*}
	This completes the proof for the lower bound result in Theorem~\ref{theorem:twosample_conti_rate}.
	\end{proof}
	\subsection{Proofs of the Lemmas in Appendix~\ref{proof:twosample_conti_lower_bound}}
	This section provides proofs for the lemmas used in the lower bound proof given in Appendix~\ref{proof:twosample_conti_lower_bound}.
	\subsubsection{Proof of Lemma~\ref{lemma:mixture_separation}}\label{proof:mixture_separation}
	The proof follows the same reasoning as inequality (31), Lemma B.4, and inequality (32) in the Appendix of \citet{Lam-Weil2021MinimaxConstraint}, with the only difference being the number of orthonormal basis functions. Therefore, we only outline the key steps.

    \vskip 1em
    
	\begin{proof}
	Since \( f^{{\boldsymbol{\eta}}}_{\omega, \binNum} - f_0 \) is a weighted sum of orthonormal singular vectors of \(\privacyMechanism\), its \(\EllTwo\) norm \( \vertiii{f_{{\boldsymbol{\eta}}} - f_0}_{\EllTwo} \) is bounded below by a quantity involving the sum of the corresponding singular values of \(\privacyMechanism\). Bounding this sum  by \( z_\privacyParameter^2 = e^{2\alpha} - e^{-2\alpha} \), using the definition of the \(\privacyParameter\)-LDP constraint, completes the proof.
	\end{proof}
	\subsubsection{Proof of Lemma~\ref{lemma:indistinguishability}}\label{proof:indistinguishability}
	As the proof follows Appendix B.4 and Lemma 3.1 in \citet{Lam-Weil2021MinimaxConstraint}, we outline only the key steps.
    
    \vskip 1em
    
	\begin{proof}
	The argument applies to both \((\pHolderGof, \nu_\rho^H)\) and \((\pBesovGof, \nu_\rho^B)\). For simplicity, denote the distribution class and mixture as \(\mathcal{P}\) and \(\nu_\rho\), respectively.
	Let
	$L_{Q_{\nu_\rho}^n}(\tilde{\mathbf{Y}}_1, \dots, \tilde{\mathbf{Y}}_{n_1})$ be the likelihood ratio between  $Q_{\nu_\rho}^n$ and $Q_{f_0}^n$.
	Define the total variation distance as
	$\| \mathbb{P}_{Q_{\nu_\rho}^n} - \mathbb{P}_{Q_{f_0}^n} \|_{\text{TV}} := \frac{1}{2} \int | L Q_{\nu_\rho}^n - 1 |\; d\mathbb{P}_{Q_{f_0}^n}$,
	and chi-square divergence as
	$
	{\chi^2} 
	(
	Q^\sampleSize_{\nu_\rho}
	\|
	Q^\sampleSize_{f_0}
	)
	:=\frac{1}{2} 
	\bigl( \mathbb{E}_{Q_{f_0}^n} 
	\bigl[
	L_{Q_{\nu_\rho}^n}(\tilde{\mathbf{Y}}_1, \dots, \tilde{\mathbf{Y}}_{n_1})^2 - 1 \bigr]
	\bigr)^{1/2}$.
	The minimax type II error is lower bounded as:
	\begin{align*}
		\inf_{\Delta_{\gamma,Q}}
		\sup_{f \in \mathcal{P}}
		\mathbb{P}_{Q^n_f}
		\left( \Delta_{\gamma,Q}
		(
		\tilde{\vectorize{Y}}_1, \dots, \tilde{\vectorize{Y}}_{\sampleSize_1}
		) = 0 \right)
		&\geq
		\inf_{\Delta_{\gamma,Q}} \mathbb{P}_{Q^n_{\nu_\rho}} \left( \Delta_{\gamma,Q}(
		\tilde{\vectorize{Y}}_1, \dots, \tilde{\vectorize{Y}}_{\sampleSize_1}
		) = 0 \right) - \gamma
		\\
		&\geq 
		1-2\maxErrorTypeOne- \|Q^\sampleSize_{\nu_\rho^H}
		-
		Q^\sampleSize_{f_0}\|_{\mathrm{TV}}
		\\&\geq 
		1-2\maxErrorTypeOne- 
		\frac{1}{2}
		\sqrt{
			{\chi^2} 
			(
			Q^\sampleSize_{\nu_\rho}
			\|
			Q^\sampleSize_{f_0}
			) 
		},
		\numberthis
		\label{inf_sup_chisq}
	\end{align*}
	where the last inequality uses the inequality between chi-square divergence and total variation distance $ {\chi^2} 
	(
	Q^\sampleSize_{\nu_\rho}
	\|
	Q^\sampleSize_{f_0}
	) \geq 4 \|Q^\sampleSize_{\nu_\rho}
	-
	Q^\sampleSize_{f_0}\|^2_{\mathrm{TV}}$ \citep[see, for example, Lemma B.2 of][]{Lam-Weil2021MinimaxConstraint}.
	Thus it suffices to verify that the condition in~\eqref{eq:two_sample_lower_bound_main_suff_1} implies that the last right-hand side term in \eqref{inf_sup_chisq} is lower bounded by $\maxErrorTypeTwo$.
	Thus the proof is completed by bounding the chi-square divergence as
	$
	{\chi^2} 
	(
	Q^\sampleSize_{\nu_\rho}
	\|
	Q^\sampleSize_{f_0}
	) \leq  
	1+
	\exp(\sampleSizeOne^2 \omega^4 z_\privacyParameter^4 \binNum^\dimDensity),
	$
	by utilizing the equation~\eqref{operator_meaning} and  the definition in~\eqref{eq:eigen_expansion} involving the orthonormal singular vectors of $\privacyMechanism$.
	\end{proof}
	\subsubsection{Proof of Lemma~\ref{lemma:nonneg}}
	The proof proceeds in two steps: we first verify that \( f^{\boldsymbol{\eta}}_{\omega, \binNum} \) integrates to 1, and prove  the nonnegativeness property,  using Hoeffding's inequality.
    
    \vskip 1em
    
	\begin{proof} We verify the two steps mentioned above in order.

\vskip 1em

			\noindent \textit{Verification of integration to 1.}
	 Using the expansion of \( f^{\boldsymbol{\eta}}_{\omega, \binNum} \) in terms of the singular vectors from~\eqref{eq:eigen_expansion}, which are orthogonal to \( f_0 \) for any \( \boldsymbol{\eta} \), the integral is computed as follows:
	\begin{align*}
		\int_{[0,1]^\dimDensity}
		f^{\boldsymbol{\eta}}_{\omega, \binNum}(\vectorize{y})
		\; d \vectorize{y}
		=
		\langle
		f^{\boldsymbol{\eta}}_{\omega, \binNum},
		f_0
		\rangle_{\EllTwo}
		=
		\langle
		f_0, f_0
		\rangle_{\EllTwo}
		+
		\sum_{t = 1}^{\binNum^\dimDensity}
		\omega \eta_t \tilde{\lambda_t}^{-1/2}
		\langle
		f_0, u_t
		\rangle
		= 1,
	\end{align*}
	where the final equality follows from the orthogonality between \( f_0 \) and the \( u_t \)'s. Note that this equation holds for both \( \nu_\rho^H \) and \( \nu_\rho^B \).

    \vskip 1em
    
	\noindent \textit{Verification of nonnegativeness with high probability.}
	The proof follows the approach  of Lemma B.5 in \citet{Lam-Weil2021MinimaxConstraint}, with an adjustment for the change of basis. 
	Since the proof is the same for \( \nu_\rho^H \) and \( \nu_\rho^B \), we state if only for \( \nu_\rho^B \).
	For each \( \psi \in \ONset_{B} \), define an event \( A_{\psi} \) as:
	\[
	A_{\psi}:= \left[ \left| \omega \sum_{t=1}^{\binNum^{\dimDensity}} \eta_t \tilde{\lambda_t}^{-1/2} \coef_{\psi}(u_t) \right| \geq \binNum^{-\dimDensity/2} \right],
	\]
	which belongs to \( \sigma \)-algebra generated by \( \boldsymbol{\eta} \).
	Since the sum of independent scaled Rademacher variables inside $A_{\psi}$ is  sub-Gaussian  with variance proxy
	$
	\omega^2 \tilde{\lambda}_t^{-1} \sum_{t=1}^{\binNum^\dimDensity} \coef_\psi^2(u_t),
	$
	by the union bound, we have:
	\begin{align*}
		\mP_{\nu_\rho^B}
		\left( \bigcup_{\psi \in \ONset_{B}} A_{\psi} \right)
		\leq  ~ &
		\sum_{\psi \in \ONset_{B}}
		2 \exp
		\left(
		\frac{
			- \binNum^{-\dimDensity}}{
			2\omega^2 \tilde{\lambda}_t^{-1}
			\sum_{t=1}^{\binNum^\dimDensity}
			\coef_\psi^2(u_t)}
		\right)
		\\ \stackrel{(a)}{\leq} \hskip 1.57mm &
		2 \sum_{\psi \in \ONset_{B}} \exp \left( \frac{-\kappa^{-2\dimDensity}}{2 \omega^2 \sum_{t=1}^{\binNum^{\dimDensity}} \coef^2_{\psi}(u_t) } \right) 
		\\ \stackrel{(b)}{=} \hskip 1.57mm &
		2 \binNum^{\dimDensity} \exp \left( \frac{-\kappa^{-2\dimDensity}}{2 \omega^2 } \right),
		\numberthis
		\label{holder_union_bound}
	\end{align*}
	where step $(a)$ uses \( \tilde{\lambda_t} \leq \binNum^{-\dimDensity} \) and step $(b)$ uses the following equality that stems from the fact that $u_t$'s are orthonormal eigenbasis of $V$ and the orthonormality of $\ONset_{B}$:
	\begin{align*}
		\sum_{t=1}^{\binNum^{\dimDensity}} \coef^2_{\psi}(u_t) 
		&=
		\langle
		\psi,
		\sum_{t=1}^{\binNum^\dimDensity} \langle \psi, u_t \rangle_{\EllTwo} \; 
		u_t
		\rangle_{\EllTwo}
		=
		\langle \psi, \psi \rangle_{\EllTwo} = 1.
	\end{align*}
	Utilizing the  expansion in~\eqref{eq:eigenfunction_expansion_by_basis}, the expression \( f^{\boldsymbol{\eta}}_{\omega, \binNum} - f_0 \) can be represented as:
	\[
	f^{\boldsymbol{\eta}}_{\omega, \binNum} - f_0 = \sum_{\psi \in \ONset_{B}} \left( \omega \sum_{t=1}^{\binNum^{\dimDensity}} \eta_t \tilde{\lambda_t}^{-1/2} \coef_{\psi}(u_t) \right) \psi.
	\]
	Each \( \psi \in \ONset_{B} \) has mutually disjoint support, and the maximum magnitude of \( \psi \) across its domain is \( \binNum^{\dimDensity/2} \). Therefore, \( f^{\boldsymbol{\eta}}_{\omega, \binNum} \) is nonnegative if the event \( \bigcap_{\psi \in \ONset_{B}} A_{\psi}^c \) holds, which occurs with a probability exceeding \( 1-2 \binNum^\dimDensity \exp(-\kappa^{-2\dimDensity}/2\omega^2) \).
	Finally, solving \( \gamma \leq 2 \binNum^\dimDensity \exp(-\binNum^{-2\dimDensity}/2\omega^2) \) for \( \omega \) confirms the condition  \eqref{eq:two_sample_lower_bound_main_suff_2}.
	This completes the proof of Lemma~\ref{lemma:nonneg}.
	\end{proof}
	\subsubsection{Proof of Lemma~\ref{lemma:holder_ball}}
	For a $f^{{\boldsymbol{\eta}}}_{\omega, \binNum}$ 
	drawn from $\nu_\rho^H$,
	we use the basis expansion form in~\eqref{eq:eigenfunction_expansion_by_basis} using $\ONset_{H}$~(defined in~\eqref{proof:def:ON_set_Holder}):
	\begin{align*}
		f^{{\boldsymbol{\eta}}}_{\omega, \binNum}
		=
		f_0
		+
		\omega \;
		\sum_{t=1}^{\binNum^{\dimDensity}}
		\; \eta_t \; \tilde{\lambda_t}^{-1/2}
		\sum_{
			\varphi \in \ONset_{H}
		}
		\coef_{\varphi}(u_t) \;\varphi.
	\end{align*}
	The proof proceeds in two steps: we first identify a sufficient condition for the Lipschitz property and then proceed to establish a sufficient condition for bounded derivatives.

    \vskip 1em
    
	\begin{proof} We verify the two steps mentioned above in order.

\vskip 1em

	\noindent \textit{H\"{o}lder continuity of derivatives with high probability.}
	Fix arbitrary $\vectorize{y}_1, \vectorize{y}_2 \in \domainTs$.
	Since each $\varphi \in \ONset_{H}$ has disjoint support,
	we can specify the element of $\ONset_{H}$ whose $\floor{s}$-derivative has support containing  $\vectorize{y}_1$ and  $\vectorize{y}_2$,
	say  $\varphi_{\vectorize{y}_1}$ and  $\varphi_{\vectorize{y}_2}$.
	Then 
	using the fact that
	$\varphi_{\vectorize{y}_1}^{(\floor{s})}(\vectorize{y}_2) 
	-
	\varphi_{\vectorize{y}_2}^{(\floor{s})}(\vectorize{y}_1) = 0$,
	we can write the difference of the $\floor{s}$-derivative of $f^{{\boldsymbol{\eta}}}_{\omega, \binNum}$ as
	\begin{align*}
		|
		{(f^{{\boldsymbol{\eta}}}_{\omega, \binNum})}^{ (\floor{s})}(\vectorize{y}_1)
		&-
		{(f^{{\boldsymbol{\eta}}}_{\omega, \binNum})}^{ (\floor{s})}(\vectorize{y}_2)
		|
		\\=~& 
		\omega
		\bigg|
		\sum_{t=1}^{\binNum^{\dimDensity}}
		\eta_t
		\tilde{\lambda_t}^{-1/2}
		\bigl\{
		\coef_{\varphi_{\vectorize{y}_1}}(u_t)
		\varphi_{\vectorize{y}_1}^{(\floor{s})}(\vectorize{y}_1)
		-
		\coef_{\varphi_{\vectorize{y}_2}}(u_t)
		\varphi_{\vectorize{y}_2}^{(\floor{s})}(\vectorize{y}_2)
		\bigr\}
		\bigg|
		\\
		~\leq~& 
		\omega
		\bigg|
		\sum_{t=1}^{\binNum^{\dimDensity}}
		\coef_{\varphi_{\vectorize{y}_1}}(u_t)
		\eta_t
		\tilde{\lambda_t}^{-1/2}
		\left(
		\varphi^{(\floor{s})}_{\vectorize{y}_1}(\vectorize{y}_1)
		-
		\varphi^{(\floor{s})}_{\vectorize{y}_1}(\vectorize{y}_2)
		\right)
		\bigg|
		\numberthis \label{eq:two_sample_lower_bound_holder_start}
		\\
		&~+~
		\omega
		\bigg|
		\sum_{t=1}^{\binNum^{\dimDensity}}
		\coef_{\varphi_{\vectorize{y}_2}}(u_t)
		\eta_t
		\tilde{\lambda_t}^{-1/2}
		\left(
		\varphi^{(\floor{s})}_{\vectorize{y}_2}(\vectorize{y}_1)
		-
		\varphi^{(\floor{s})}_{\vectorize{y}_2}(\vectorize{y}_2)
		\right)
		\bigg|
		\numberthis \label{eq:two_sample_lower_bound_holder_start_second},
	\end{align*}
	where the last inequality uses the triangle inequality.
	We separately bound the terms \eqref{eq:two_sample_lower_bound_holder_start} and \eqref{eq:two_sample_lower_bound_holder_start_second}, starting with
	\eqref{eq:two_sample_lower_bound_holder_start}.
	For $\vectorize{y} \in \domainTs$,
	recall from~\eqref{proof:def:ON_set_Holder} that  $\varphi_{\vectorize{y}_1}$ has the form
	$
	\binNum^{\dimDensity/2}
	\varphi(
	\binNum \vectorize{y} - \wavMotherIndex
	)$. 
	By the chain rule, its $\floor{s}$-mixed partial derivatives are written as
	\begin{equation}\label{holder_chainrule}
		\varphi_{\vectorize{y}_1}^{(\floor{s})}(\vectorize{y})
		=
		\kappa^{(\dimDensity/2) + \floor{s}}
		\bar{\varphi}
		^{(\floor{s})}
		(
		\binNum
		\vectorize{y} - \wavMotherIndex
		).
	\end{equation}
	Using the above equality, the mean value theorem and the boundedness of all partial mixed derivatives specified in Definition~\ref{proof:def:ON_set_Holder}, the term~\eqref{eq:two_sample_lower_bound_holder_start} is upper bounded by
	\begin{equation*}\label{eq:two_sample_lower_bound_holder_MVT}
		\omega
		\kappa^{(\dimDensity/2) + \floor{s}}
		\bigg|
		\sum_{t=1}^{\binNum^{\dimDensity}}
		\coef_{\varphi_{\vectorize{y}_1}}(u_t)
		\eta_t
		\tilde{\lambda_t}^{-1/2}
		\bigg|
		C_1(\smoothness, \bar{\varphi})
		\;
		\binNum
		\|\vectorize{y}_1 - \vectorize{y}_2\|,
	\end{equation*}
	where $C_1(\smoothness, \bar{\varphi}) := 
	\bigl(
	2 
	\vert\kern-0.25ex
	\vert\kern-0.25ex
	\vert
	\bar{\varphi}^{(\floor{s})}
	\vert\kern-0.25ex
	\vert\kern-0.25ex
	\vert_{L_\infty}
	\wedge
	\vert\kern-0.25ex
	\vert\kern-0.25ex
	\vert
	\bar{\varphi}^{(\floor{s+1})}
	\vert\kern-0.25ex
	\vert\kern-0.25ex
	\vert_{L_\infty}
	\bigr).
	$
	Bounding the term~\eqref{eq:two_sample_lower_bound_holder_start_second} using the same argument, we obtain the following bound:
	\begin{align*}
		&|
		{(f^{{\boldsymbol{\eta}}}_{\omega, \binNum})}^{ (\floor{s})}(\vectorize{y}_1)
		-
		{(f^{{\boldsymbol{\eta}}}_{\omega, \binNum})}^{ (\floor{s})}(\vectorize{y}_2)
		|
		\\&~\leq
		\omega
		\kappa^{(\dimDensity/2) + \floor{s}}
		\left(
		\bigg|
		\sum_{t=1}^{\binNum^{\dimDensity}}
		\coef_{\varphi_{\vectorize{y}_1}}(u_t)
		\eta_t
		\tilde{\lambda_t}^{-1/2}
		\bigg|
		+
		\bigg|
		\sum_{t=1}^{\binNum^{\dimDensity}}
		\coef_{\varphi_{\vectorize{y}_2}}(u_t)
		\eta_t
		\tilde{\lambda_t}^{-1/2}
		\bigg|
		\right)
		C_1(\smoothness, \bar{\varphi})\;
		\kappa
		\|\vectorize{y}_1-\vectorize{y}_2\|.
		\numberthis
		\label{holder_lipschitz_final_bound_start}
	\end{align*}
	For each \( \varphi \in \ONset_{H} \), let \( A_{\varphi} \) be an event within the \( \sigma \)-algebra generated by \( \boldsymbol{\eta} \), defined as follows:
	\begin{equation}\label{holder_event}
		A_{\varphi}:= \left[ \big|  \sum_{t=1}^{\binNum^{\dimDensity}} \eta_t \tilde{\lambda_t}^{-1/2} \coef_{\varphi}(u_t) \bigr|
		\geq
		\sqrt{2 \binNum^\dimDensity \log (2 \binNum^\dimDensity / \maxErrorTypeOne )} \right].  
	\end{equation}
	
	Since the sum of independent scaled Rademacher variables inside $A_{\varphi}$ is sub-Gaussian with variance proxy
	$
	\tilde{\lambda}_t^{-1} \sum_{t=1}^{\binNum^\dimDensity} \coef_\varphi^2(u_t),
	$
	by the same technique as~\eqref{holder_union_bound}, we have $\mP_{\nu_\rho^H}
	\bigl(
	\bigcap_{\varphi \in \ONset_H} A_\varphi^c
	\bigr) \geq 1-\gamma/2$.
	Under the event $\bigcap_{\varphi \in \ONset_H} A_\varphi^c$,
	the bound~\eqref{holder_lipschitz_final_bound_start} continues to
	\begin{align*}
		|
		{(f^{{\boldsymbol{\eta}}}_{\omega, \binNum})}^{ (\floor{s})}(\vectorize{y}_1)
		-
		{(f^{{\boldsymbol{\eta}}}_{\omega, \binNum})}^{ (\floor{s})}(\vectorize{y}_2)
		|
		\stackrel{(a)}{\leq} \hskip 1.57mm &
		\omega
		\kappa^{\frac{\dimDensity}{2} + \floor{s}}
		2 \sqrt{2\binNum^{\dimDensity} \log(2{\binNum^{\dimDensity}}/\gamma)}
		\; C_1(\smoothness, \bar{\varphi})\;
		\kappa
		\|\vectorize{y}_1-\vectorize{y}_2\|
		\\ \stackrel{(b)}{\leq} \hskip 1.57mm &
		\omega
		\kappa^{\frac{\dimDensity}{2} + \floor{s}}
		\sqrt{2\binNum^{\dimDensity} \log(2{\binNum^{\dimDensity}}/\gamma)}
		\; C_2(\smoothness, \bar{\varphi}) \;
		\left(
		1
		\wedge
		\kappa
		\|\vectorize{y}_1-\vectorize{y}_2\|
		\right)
		\\ \stackrel{(c)}{\leq} \hskip 1.57mm &
		\omega
		\kappa^{\frac{\dimDensity}{2} + \floor{s}}
		\sqrt{2\binNum^{\dimDensity} \log(2{\binNum^{\dimDensity}}/\gamma)}
		\; C_2(s, \bar{\varphi}) \;
		\binNum^{(s-\floor{s})} \|\vectorize{y}_1 - \vectorize{y}_2\|^{s-\floor{s}}
		\\ = \hskip 1.73mm&
		\omega
		\kappa^{\dimDensity + s}
		\sqrt{2 \log(2{\binNum^{\dimDensity}}/\gamma)}
		\; C_2(s, \bar{\varphi}) \;
		\|\vectorize{y}_1 - \vectorize{y}_2\|^{s-\floor{s}},
		\numberthis \label{eq:two_sample_lower_bound_Holder_basic_ineqs}
	\end{align*}
	where $C_2(\smoothness, \bar{\varphi}) := 
	\bigl(
	4
	\vert\kern-0.25ex
	\vert\kern-0.25ex
	\vert
	\bar{\varphi}^{(\floor{s})}
	\vert\kern-0.25ex
	\vert\kern-0.25ex
	\vert_{L_\infty}
	\vee
	2\vert\kern-0.25ex
	\vert\kern-0.25ex
	\vert
	\bar{\varphi}^{(\floor{s+1})}
	\vert\kern-0.25ex
	\vert\kern-0.25ex
	\vert_{L_\infty}
	\bigr)$,
	step~$(a)$ uses the event $\bigcap_{\varphi \in \ONset_H} A_\varphi^c$, 
	step~$(b)$ uses the basic inequality that
	\begin{align*}
		(a \times 1) \wedge (b \times c) \leq (a \vee b) \times (1 \wedge c)
		\quad \text{for} \; a,b,c \geq 0,
	\end{align*}
	and
	step~$(c)$ holds by the following fact:
	When $(1 \wedge c) \leq 1$
	and
	$0 \leq s- \floor{s} < 1$,
	it holds that
	\begin{align*}
		(1 \wedge c) \leq (1 \wedge c)^{s - \floor{s}} \leq c^{s - \floor{s}}.
	\end{align*}
	Starting from inequality~\eqref{eq:two_sample_lower_bound_Holder_basic_ineqs}, we solve for \(\omega\) in
	$
	\omega
	\kappa^{\dimDensity + s}
	\sqrt{2 \log(2{\binNum^{\dimDensity}}/\gamma)}
	\; C_2(s, \bar{\varphi}) \leq \ballRadius
	$.
	Solving for \(\omega\), we obtain
	\begin{equation}\label{condition_holder_lipschitz}
		\omega
		\leq
		\frac{\ballRadius \binNum^{-\dimDensity + \smoothness}}{
			C_2(s, \bar{\varphi})  \sqrt{2 \log(2{\binNum^{\dimDensity}}/\gamma)}.
		}
	\end{equation}
	Therefore, if \(\omega\) satisfies this inequality, then with probability at least \(1 - \gamma/2\), the Lipschitz property from Definition~\ref{def:holder_ball} holds for a function \(f^{{\boldsymbol{\eta}}}_{\omega, \binNum}\) drawn from \(\nu_\rho^H\).

    \vskip 1em

    \noindent \textit{Bounded derivatives with high probability.}
	Fix an integer $s' \in \bigl[ \lfloor \smoothness \rfloor \bigr]$ and an arbitrary $\vectorize{y} \in \domainTs$.
	Since each $\varphi \in \ONset_{H}$ has disjoint support,
	we can specify the element of $\ONset_{H}$ whose $s'$-derivative has support containing  $\vectorize{y}$,
	say  $\varphi_{\vectorize{y}}$.
	Utilizing~\eqref{holder_chainrule} to the basis expansion of $f^{{\boldsymbol{\eta}}}_{\omega, \binNum}$ with respect to $\ONset_{H}$,
	we compute the $s'$-derivative of $f^{{\boldsymbol{\eta}}}_{\omega, \binNum}$ as:
	\begin{equation*}
		({f^{{\boldsymbol{\eta}}}_{\omega, \binNum})}^{(s')}(\vectorize{y})
		=~
		\omega
		\kappa^{\frac{\dimDensity}{2} + s'}
		\biggl(
		\sum_{t=1}^{\binNum^{\dimDensity}}
		\eta_t
		\tilde{\lambda_t}^{-1/2}
		\coef_{\varphi_{\vectorize{y}}}(u_t)
		\biggr)
		\bar{\varphi}
		^{(s')}
		(
		\kappa \vectorize{y} - \wavMotherIndex
		).
	\end{equation*}
	\noindent
	Under the event $\bigcap_{\varphi \in \ONset_H} A_\varphi^c$ as defined in~\eqref{holder_event},
	the derivative above is bounded by $$
	({f^{{\boldsymbol{\eta}}}_{\omega, \binNum})}^{(s')}(\vectorize{y}) \leq \kappa^{\dimDensity + s'}
	\vert\kern-0.25ex
	\vert\kern-0.25ex
	\vert
	\varphi^{(s')}
	\vert\kern-0.25ex
	\vert\kern-0.25ex
	\vert_{\mathbb{L}_\infty}
	\omega \sqrt{2\log(2{\binNum^{\dimDensity}}/\gamma)}.$$
	We solve for \(\omega\) in
	$\kappa^{\dimDensity + s'}
	\vert\kern-0.25ex
	\vert\kern-0.25ex
	\vert
	\varphi^{(s')}
	\vert\kern-0.25ex
	\vert\kern-0.25ex
	\vert_{\mathbb{L}_\infty}
	\omega \sqrt{2\log(2{\binNum^{\dimDensity}}/\gamma)} \leq \ballRadius$. Solving for $\omega$, we obtain
	\begin{equation}
		\omega
		\leq
		\frac{R\kappa^{-(\dimDensity+s')}}
		{
			\vert\kern-0.25ex
			\vert\kern-0.25ex
			\vert
			\varphi^{(s')}
			\vert\kern-0.25ex
			\vert\kern-0.25ex
			\vert_{\mathbb{L}_\infty}
			\sqrt{2\log(2{\binNum^{\dimDensity}}/\gamma)}
		}.
	\end{equation}
	\noindent
	Therefore, if \(\omega\) satisfies this inequality, then with probability at least \(1 - \gamma/2\), the $s'$-derivative of  \(f^{{\boldsymbol{\eta}}}_{\omega, \binNum}\) drawn from \(\nu_\rho^H\) is bounded by $\ballRadius$.
	Combining the above condition for  $s' \in \bigl[ \lfloor \smoothness \rfloor \bigr]$, 
	we conclude that if
	\begin{equation}\label{condition_holder_bounded_deriv}
		\omega \leq \frac{R\kappa^{-(\dimDensity+s)}
		}{
			C_3(\smoothness, \bar{\varphi}) 
			\sqrt{2 \log({2\binNum^{\dimDensity}}/\gamma)}},	
	\end{equation}
	where $C_3(\smoothness, \bar{\varphi}) := \max_{s' \in \{1, \ldots, \floor{s} \}}
	\vert\kern-0.25ex
	\vert\kern-0.25ex
	\vert
	\bar{\varphi}^{(s')}
	\vert\kern-0.25ex
	\vert\kern-0.25ex
	\vert_{\mathbb{L}_\infty}
	$,
	then with probability at least \(1 - \gamma/2\), the bounded derivatives property of Definition~\ref{def:holder_ball} holds for a function \(f^{{\boldsymbol{\eta}}}_{\omega, \binNum}\) drawn from \(\nu_\rho^H\).

\vskip 1em

	\noindent\textit{Conclusion.} Combining the conditions~\eqref{condition_holder_lipschitz} and~\eqref{condition_holder_bounded_deriv}, we conclude that if
	\begin{equation}
		\omega \leq \frac{R\kappa^{-(\dimDensity+s)}
		}{
			C_4(\smoothness, \bar{\varphi}) 
			\sqrt{2 \log({2\binNum^{\dimDensity}}/\gamma)}},  
	\end{equation}
	where
	$C_4(\smoothness, \bar{\varphi})  := 
	\max_{s' \in \{1, \ldots, (\floor{s}+1) \}}
	4\vert\kern-0.25ex
	\vert\kern-0.25ex
	\vert
	\bar{\varphi}^{(s')}
	\vert\kern-0.25ex
	\vert\kern-0.25ex
	\vert_{\mathbb{L}_\infty}
	$,
	then with probability at least \(1 - \gamma\),
	for a function \(f^{{\boldsymbol{\eta}}}_{\omega, \binNum}\) drawn from \(\nu_\rho^H\) , we have
	$f^{{\boldsymbol{\eta}}}_{\omega, \binNum} \in \holderBall$.
This completes the proof  Lemma~\ref{lemma:holder_ball}.
\end{proof}
	\subsubsection{Proof of Lemma~\ref{lemma:besovball}}
	Recall from Definition~\ref{proof:def:ON_set_Besov} that \(\ONset_{B}\) is a subset of the multivariate Haar wavelet basis defined in Appendix~\ref{appendix:basis}, specifically the mixed tensor products at resolution level \(\primResLev\) with on-off multi-index \(\vectorize{\epsilon}^\ast = \{1,0,0,\ldots, 0\}\). Using the orthonormality of the Haar wavelet basis and the basis expansion of \(f^{{\boldsymbol{\eta}}}_{\omega, \binNum}\) in~\eqref{eq:eigenfunction_expansion_by_basis}, the Fourier coefficient of \(f^{{\boldsymbol{\eta}}}_{\omega, \binNum} - f_0\) with respect to a Haar wavelet \(\psi\) is calculated as
	\begin{align*}
		\coef_{\psi}
		(
		f^{{\boldsymbol{\eta}}}_{\omega, \binNum}
		-
		f_0
		)
		=
		\begin{cases}
			\omega \sum_{t=1}^{\binNum^{\dimDensity}} \eta_t \tilde{\lambda_t}^{-1/2} \coef_{\psi}(u_t),
			~&\text{if}~\psi \in \ONset_B,
			\\
			0,&\text{otherwise}.
		\end{cases}
	\end{align*}
	\noindent
	due to the orthonormality of Haar wavelets~(see Appendix~\ref{appendix:basis} for details). The following proof uses the calculation above.
    
    \vskip 1em
    
	\begin{proof}
	Recall that we set $\binNum = 2^\primResLev$.
	We proceed the analysis for the case of $1 \leq \besovParamMicroscope < \infty$ and $\besovParamMicroscope = \infty$.

\vskip 1em

	\noindent \textit{The case of $1 \leq \besovParamMicroscope < \infty$.}
	The $q$th power of the Besov seminorm~(Definition~\ref{def:besov_norm}) of $
	f^{{\boldsymbol{\eta}}}_{\omega, \binNum}
	-
	f_0
	$ is:
	\begin{align*}
		\|f^{{\boldsymbol{\eta}}}_{\omega, \binNum}
		-
		f_0\|^\besovParamMicroscope_{
			\smoothness,
			2,
			\besovParamMicroscope
		}
		& =
		\binNum^{\smoothness \besovParamMicroscope}
		\biggl(
		\sum_{
			\psi
			\in 
			\ONset_{B}
		}
		\coef^2_{\psi}
		(
		f^{{\boldsymbol{\eta}}}_{\omega, \binNum}
		-
		f_0
		)
		\biggr)^{\besovParamMicroscope/2}
		\\&=
		\binNum^{\smoothness \besovParamMicroscope}
		\biggl\{
		\sum_{
			\psi
			\in 
			\ONset_{B}
		}
		\biggl(
		\omega
		\sum_{t=1}^{\binNum^{\dimDensity}}
		\; \eta_t \;\tilde{\lambda_t}^{-1/2}
		\coef_{\psi}(u_t)
		\biggr)^2
		\biggr\}^{\besovParamMicroscope/2}.
		\numberthis
		\label{bound_besov_start}
	\end{align*}
	For each \( \psi \in \ONset_{B} \), let \( E_{\psi} \) denote an event within the \( \sigma \)-algebra generated by \( \boldsymbol{\eta} \), defined as follows:
	\begin{equation*}
		E_{\psi}:= \left[ \big|  \sum_{t=1}^{\binNum^{\dimDensity}} \eta_t \tilde{\lambda_t}^{-1/2} \coef_{\psi}(u_t) \bigr|
		\geq
		\sqrt{2 \binNum^\dimDensity \log ( \binNum^\dimDensity / \maxErrorTypeOne )} \right].  
	\end{equation*}
	Since the sum of independent scaled Rademacher variables is sub-Gaussian with variance proxy
	$
	\tilde{\lambda}_t^{-1} \sum_{t=1}^{\binNum^\dimDensity} \coef_\psi^2(u_t),
	$
	by the same technique as~\eqref{holder_union_bound}, we have $\mP_{\nu_\rho^B}
	\bigl(
	\bigcap_{\psi \in \ONset_B} E_\psi^c
	\bigr) \geq 1-\gamma$.
	Under the event $\bigcap_{\psi \in \ONset_B} E_{\psi}^c$, and noting that the cardinality of $\ONset_B$ is $\binNum^\dimDensity$,
	we bound the quantity in~\eqref{bound_besov_start}  as:
	\begin{equation*}
		\|f^{{\boldsymbol{\eta}}}_{\omega, \binNum}
		-
		f_0\|^\besovParamMicroscope_{
			\smoothness,
			2,
			\besovParamMicroscope
		}
		\;
		\leq
		\;
		\binNum^{\smoothness \besovParamMicroscope}
		\biggl(
		\sum_{
			\phi
			\in 
			\ONset_{B}
		}
		\omega^2
		2 \binNum^\dimDensity
		\log(\binNum^\dimDensity/\maxErrorTypeOne)
		\biggr)^{\besovParamMicroscope/2}
		\\
		\;
		=
		\;
		\binNum^{(\smoothness + \dimDensity) \besovParamMicroscope}
		\omega^\besovParamMicroscope
		\bigl(
		2\log(\binNum^\dimDensity/\maxErrorTypeOne)
		\bigr)^{\besovParamMicroscope/2},
	\end{equation*}
	which is equivalent to
	\begin{equation*}
		\|f^{{\boldsymbol{\eta}}}_{\omega, \binNum}
		-
		f_0\|_{
			\smoothness,
			2,
			\besovParamMicroscope
		}
		\;
		\leq
		\;
		\binNum^{(\smoothness + \dimDensity)}
		\omega
		\sqrt{
			2\log(\binNum^\dimDensity/\maxErrorTypeOne)
		}.
	\end{equation*}

    \vskip 1em
    
	\noindent \textit{The case of  $\besovParamMicroscope = \infty$.}
	Under the event $\bigcap_{\psi \in \ONset_B} E_{\psi}^c$, and noting that the cardinality of $\ONset_B$ is $\binNum^\dimDensity$,
	the Besov seminorm~(Definition~\ref{def:besov_norm}) of $(
	f^{{\boldsymbol{\eta}}}_{\omega, \binNum}
	-
	f_0
	)$ is bounded as: 
	\begin{align*}
		\|f^{{\boldsymbol{\eta}}}_{\omega, \binNum}
		-
		f_0\|_{
			\smoothness,
			2,
			\besovParamMicroscope
		}
		& =
		\binNum^{\smoothness}
		\biggl(
		\sum_{
			\psi
			\in 
			\ONset_{B}
		}
		\coef^2_{\psi}
		(
		f^{{\boldsymbol{\eta}}}_{\omega, \binNum}
		-
		f_0
		)
		\biggr)^{1/2}
		\\&=
		\binNum^{\smoothness }
		\biggl\{
		\sum_{
			\psi
			\in 
			\ONset_{B}
		}
		\biggl(
		\omega
		\sum_{t=1}^{\binNum^{\dimDensity}}
		\; \eta_t \;\tilde{\lambda_t}^{-1/2}
		\coef_{\psi}(u_t)
		\biggr)^2
		\biggr\}^{1/2}
		\\&\leq
		\;
		\binNum^{(\smoothness + \dimDensity)}
		\omega
		\sqrt{
			2\log(\binNum^\dimDensity/\maxErrorTypeOne)
		}.
		\numberthis
		\label{bound_besov_start_infq}
	\end{align*}

    \vskip 1em
    
	\noindent \textit{Conclusion.} 
	From~\eqref{bound_besov_start} and~\eqref{bound_besov_start_infq},
	we have
	$
	\|f^{{\boldsymbol{\eta}}}_{\omega, \binNum}
	-
	f_0\|_{
		\smoothness,
		2,
		\besovParamMicroscope
	}
	\;
	\leq
	\;
	\binNum^{(\smoothness + \dimDensity)}
	\omega
	\sqrt{
		2\log(\binNum^\dimDensity/\maxErrorTypeOne)
	}
	$
	for all possible values of $\besovParamMicroscope$.
	We solve for \(\omega\) in
	$
	\binNum^{(\smoothness + \dimDensity)}
	\omega
	\sqrt{
		2\log(\binNum^\dimDensity/\maxErrorTypeOne)
	}
	\;
	\leq
	\;
	\ballRadius.
	$
	Solving for $\omega$, we obtain
	\begin{equation}
		\omega \leq
		\frac{\ballRadius \binNum^{-(\smoothness + \dimDensity)}}{
			\sqrt{
				2\log(\binNum^\dimDensity/\maxErrorTypeOne)
			}
		}.
	\end{equation}
	Therefore, if \(\omega\) satisfies this inequality, then with probability at least \(1 - \gamma\),  a function \(f^{{\boldsymbol{\eta}}}_{\omega, \binNum}\) drawn from \(\nu_\rho^B\) lies in the Besov ball $\besovBall{}{\besovParamMicroscope}$.
	This concludes the proof of Lemma~\ref{lemma:besovball}.
	\end{proof}
	\section{Proof of Theorem~\ref{theorem:twosample_adaptive_upper}}\label{proof:twosample_adaptive}
	This section proves the upper bound for the adaptive  density test  presented in Theorem~\ref{theorem:twosample_adaptive_upper}.
    
    \vskip 1em
    
	\begin{proof}
	For any $(P_{\vectorize{\rvTwo}}, P_{\vectorize{\rvThree}}) \in \hypoNull$, the type I error is controlled through the union bound:
	\begin{equation*}
		\mathbb{E}
		[	\Delta^{\text{adapt}}_{\maxErrorTypeOne}]
		\leq
		\sum_{{\adaptiveBinNumIndex} \in [\nTest]}
		\mathbb{E}
		[
		\adaptiveSingleTest{\adaptiveBinNumIndex}_{\maxErrorTypeOne/\nTest}
		]
		\leq
		\nTest \cdot \frac{\gamma}{\nTest} = \gamma.
	\end{equation*}
	Now we move onto the type II error guarantee, following the proof strategy of Proposition 7.1 of~\citet{kim_minimax_2022} for non-private adaptive two-sample test.
	Recall from~\eqref{def:n_test_adaptive} that the number of tests is defined as
	\begin{align*}
		\mathcal{N}
		=
		\left\lceil
		\dfrac{2}{\dimDensity}
		\log_2{
			\left(
			\dfrac
			{\sampleSize_1}
			{\log \log \sampleSize_1}
			\right)
		}
		\wedge
		\dfrac{2}{3\dimDensity}
		\log_2{
			\left(
			\dfrac
			{\sampleSize_1 \privacyParameter^2}
			{(\log \sampleSize_1)^2\log \log \sampleSize_1}
			\right)
		}	
		\right\rceil.
	\end{align*}
	Since
	$2/\dimDensity > 2/(d+4\smoothness)$
	and
	$2/3\dimDensity > 2/(3d+4\smoothness)$ for $\smoothness>0$, there exists an integer
	$t^\ast \in [\nTest]$
	such that
	\begin{equation}\label{tau_range}
		\binNum^\ast := 
		2^{t^\ast }
		\leq
		2
		\left[
		\left(
		\dfrac
		{\sampleSize_1}
		{\log \log \sampleSize_1}
		\right)^{
			2/(4\smoothness+\dimDensity)
		}
		\wedge
		\left(
		\dfrac
		{ \sampleSize_1 \privacyParameter^2 }
		{ (\log{\sampleSize_1})^2\log \log \sampleSize_1 }
		\right)^{
			2/(4\smoothness+3\dimDensity)
		}
		\right]
		\leq	
		2^{t^\ast+1}.
	\end{equation}
	Since the type II error of $\Delta^{\text{adapt}}_{\maxErrorTypeOne}$ is upper bounded by
	that of a single inner test $\adaptiveSingleTest{\adaptiveBinNumIndex}_{\maxErrorTypeOne/\nTest}$,
	it suffices to  control the type II error of $\test^{t^\ast}_{\maxErrorTypeOne/\nTest}$.
	The dependence of the two-moments method on the significance level as $1/(\maxErrorTypeOne/\nTest)$ significantly affects the resulting rate. Therefore, we use the following improved two-moments method (Lemma~\ref{lemma:two_moments_method_improve}), which has a logarithmic dependence on $1/(\maxErrorTypeOne/\nTest)$, at the cost of assuming $\sampleSize_1 \asymp \sampleSize_2$:
	\begin{lemma}[{\citealp[Lemma C.1]{kim_minimax_2022}}]\label{lemma:two_moments_method_improve}
		For $0 < \gamma < 1/e$, suppose that there is a sufficiently large constant $C>0$ such that	
		\begin{align*}
			\mE[U_{\sampleSize_1,\sampleSize_2}]
			\geq
			C
			\max\bigg\{
			\sqrt{
				\frac
				{\momentTwosampleVarCondexpY}
				{\beta \sampleSize_1}
			}
			, 
			\sqrt{
				\frac
				{\momentTwosampleVarCondexpZ}
				{\beta \sampleSize_2}
			}
			, 
			\sqrt{
				\frac
				{\momentTwosampleExpSquare}
				{\beta}
			}
			\log\left(\frac{1}{\gamma}\right)
			\left( \frac{1}{\sampleSize_1} + \frac{1}{\sampleSize_2} \right)
			\bigg\}
		\end{align*}
		for all pairs of distributions $P = (P_{\vectorize{Y}}, P_{\vectorize{Z}}) \in  \pAlterTwosample$.
		Then under the assumptions that $\sampleSize_1 \asymp \sampleSize_2$, the type II error of the permutation test over $\pAlterTwosample$ is uniformly bounded by $\beta$.
	\end{lemma}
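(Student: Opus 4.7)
The plan is to control the permutation test's type II error by writing it as $\mP(U_{\sampleSize_1,\sampleSize_2} \leq q_{1-\gamma}^{\pi})$, where $q_{1-\gamma}^{\pi}$ is the conditional $(1-\gamma)$ quantile of the permutation distribution of the test statistic, and then to split this probability via the threshold $t^\ast := \mE[U_{\sampleSize_1,\sampleSize_2}]/2$:
\[
\mP\bigl(U_{\sampleSize_1,\sampleSize_2} \leq q_{1-\gamma}^{\pi}\bigr) \;\leq\; \mP\bigl(U_{\sampleSize_1,\sampleSize_2} \leq t^\ast\bigr) + \mP\bigl(q_{1-\gamma}^{\pi} \geq t^\ast\bigr).
\]
I will show each piece is at most $\beta/2$ under the hypothesized separation on $\mE[U_{\sampleSize_1,\sampleSize_2}]$.

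For the first piece I would apply Chebyshev's inequality. Using the Hoeffding decomposition of the symmetrized kernel $\check{h}_{ts}$ one obtains a variance bound of the form $\mathrm{Var}(U_{\sampleSize_1,\sampleSize_2}) \lesssim \momentTwosampleVarCondexpY/\sampleSize_1 + \momentTwosampleVarCondexpZ/\sampleSize_2 + \momentTwosampleExpSquare(1/\sampleSize_1+1/\sampleSize_2)^2$, so $\mP(U_{\sampleSize_1,\sampleSize_2} \leq t^\ast) \leq 4\mathrm{Var}(U_{\sampleSize_1,\sampleSize_2})/(\mE[U_{\sampleSize_1,\sampleSize_2}])^2 \leq \beta/2$ is implied by the first two entries of the max in the hypothesis together with the third entry restricted to its $\sqrt{1/\beta}$ piece.

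The second piece, controlling the permutation quantile, is where the improvement from $1/\sqrt{\gamma}$ (as in Theorem~\ref{theorem:twosampleTwoMomentsMethod}) to $\log(1/\gamma)$ is gained, and is the main obstacle. Because $q_{1-\gamma}^{\pi} \geq t^\ast$ implies $\mP^{\pi}(U_{\sampleSize_1,\sampleSize_2}^{\pi} \geq t^\ast \mid \mathbb{X}_{\sampleSize}) \geq \gamma$, controlling the conditional right tail of $U_{\sampleSize_1,\sampleSize_2}^{\pi}$ translates directly into a quantile bound. The classical $\sqrt{1/\gamma}$ rate is obtained by applying Markov to the conditional permutation variance; to obtain a logarithmic rate, I would instead invoke an exponential (Bernstein-type) concentration inequality for the permuted two-sample U-statistic conditional on the pooled sample, of the schematic form
\[
\mP^{\pi}\bigl(U_{\sampleSize_1,\sampleSize_2}^{\pi} \geq s \,\big|\, \mathbb{X}_{\sampleSize}\bigr) \;\leq\; 2\exp\!\left(-c\min\Bigl\{\tfrac{s^{2}}{V(\mathbb{X}_{\sampleSize})},\tfrac{s}{B(\mathbb{X}_{\sampleSize})}\Bigr\}\right),
\]
where $V$ is a data-dependent variance proxy and $B$ an almost-sure bound on the centered kernel under a random exchange. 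Inverting the tail gives $q_{1-\gamma}^{\pi} \lesssim \sqrt{V(\mathbb{X}_{\sampleSize})\log(1/\gamma)} + B(\mathbb{X}_{\sampleSize})\log(1/\gamma)$, after which Markov in the data reduces $\mE[V]$ and $\mE[B^2]$ to multiples of $\momentTwosampleExpSquare(1/\sampleSize_1+1/\sampleSize_2)^2$. The assumption $\sampleSize_1 \asymp \sampleSize_2$ enters here to collapse the two harmonic rates and, crucially, to ensure the Bernstein ratio $s^2/V$ dominates $s/B$ in the relevant regime so that only the subgaussian piece (which carries the $\log(1/\gamma)$ factor, not $\log^{2}(1/\gamma)$) survives.

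Combining the two pieces, each bounded by $\beta/2$, yields the stated uniform bound, with the condition $\gamma < 1/e$ ensuring $\log(1/\gamma) \geq 1$ so the two regimes of the Bernstein tail can be merged into a single term. The main technical difficulty is establishing the conditional exponential inequality for $U_{\sampleSize_1,\sampleSize_2}^{\pi}$; its proof proceeds by decomposing $\check{h}_{ts}$ into its degenerate and linear Hoeffding components, applying a Hoeffding--Serfling bound for sampling without replacement to the linear part, and handling the degenerate part via a Bernstein inequality for permutation statistics of quadratic order, a route worked out in detail in the exponential U-statistic inequality of \citet{kim_minimax_2022}.
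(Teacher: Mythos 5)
The paper does not prove this lemma at all: it is imported verbatim as Lemma C.1 of \citet{kim_minimax_2022}, so there is no in-paper argument to compare against. Your sketch faithfully reconstructs the strategy of the original proof in that reference — splitting the type II error at the threshold $\mE[U_{\sampleSize_1,\sampleSize_2}]/2$, Chebyshev via the Hoeffding-decomposition variance bound for the unconditional tail, and the exponential (Bernstein-type) concentration inequality for the permuted U-statistic to bound the conditional $(1-\gamma)$ quantile with a $\log(1/\gamma)$ rather than $1/\sqrt{\gamma}$ dependence, with $\sampleSize_1 \asymp \sampleSize_2$ and $\gamma < 1/e$ entering exactly where you say they do — so it is correct and takes essentially the same route as the cited source.
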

	Applying Lemma~\ref{lemma:two_moments_method_improve} on the upper bounds of moments in~\eqref{upper_lapu_moment1} and~\eqref{upper_lapu_moment2} (for \texttt{LapU} or \texttt{DiscLapU}) or the bounds in
	\eqref{inequality:rappor_elltwo_var_condexp_y}, 
	\eqref{inequality:rappor_elltwo_var_condexp_z} 
	and
	\eqref{inequality:rappor_elltwo_exp_square} (for \texttt{RAPPOR}), one can verify that the type II error of the two-sample multinomial test with $\alphabetSize$ categories is at most $\maxErrorTypeTwo$ if
	\begin{equation*}\label{appendix:proof:adaptive:old_two_moment}
		\| \probVec_{\rvTwo} - \probVec_{\rvThree} \|_2
		\geq
		C_1(\maxErrorTypeTwo)
		\sqrt{ \log(\nTest/\maxErrorTypeOne)}
		\biggl(
		\frac
		{
			\alphabetSize^{1/4}
		}
		{(\sampleSize_1 \privacyParameter^2)^{1/2}} 
		\vee
		\frac
		{ \max\{
			\| \probVec_{\rvTwo} \|^{1/2}_2, 
			\| \probVec_{\rvThree}\|^{1/2}_2
			\}}
		{{\sampleSize_1}^{1/2}}
		\biggr).
	\end{equation*}
	Note that since $\adaptiveSingleTest{\adaptiveBinNumIndex^\ast}_{\maxErrorTypeOne/\nTest}$ is $(\privacyParameter/\mathcal{N})$-LDP, the scaling factor for Laplace noise is multiplied by $\nTest$.
	As in Appendix~\ref{proof_theorem:twosample_conti_rate}, 
	we substitute $(\probVec_{{\rvTwo}}, \probVec_{{\rvThree}})$ with $(\probVec_{\vectorize{\rvTwo}}, \probVec_{\vectorize{\rvThree}})$ 
	and $\alphabetSize$ with $(\binNum^\ast)^\dimDensity$ and
	use the discretization error analysis result:
	\begin{equation*}
		\|\probVec_{\vectorize{\rvTwo}}- \probVec_{\vectorize{\rvThree}}\|_2
		\geq
		C_2(\smoothness, \ballRadius, \dimDensity, \maxErrorTypeOne, \maxErrorTypeTwo)
		\;
		(\binNum^\ast)^{-\dimDensity/2}
		\bigl(
		\vert\kern-0.25ex
		\vert\kern-0.25ex
		\vert
		f_{\vectorize{Y}} - f_{\vectorize{Z}}
		\vert\kern-0.25ex
		\vert\kern-0.25ex
		\vert_{\EllTwo}
		-
		(\binNum^\ast)^{-\smoothness}
		\bigr).
	\end{equation*}
	As a result, type II error  of $\adaptiveSingleTest{\adaptiveBinNumIndex^\ast}_{\maxErrorTypeOne/\nTest}$ is at most $\maxErrorTypeTwo$ if:
	\begin{align*}
		\vert\kern-0.25ex
		\vert\kern-0.25ex
		\vert
		f_{\vectorize{Y}} - f_{\vectorize{Z}}
		\vert\kern-0.25ex
		\vert\kern-0.25ex
		\vert_{\EllTwo}
		&\geq
		C_3 (\smoothness, \ballRadius, \dimDensity, \maxErrorTypeOne,\maxErrorTypeTwo)
		\max
		\biggl\{
		\nTest 	\sqrt{\log{\nTest}}
		\frac{
			(\binNum^\ast)^{3\dimDensity/4}
		}{
			(\sampleSize_1 \privacyParameter^2)^{1/2}
		} 
		+(\binNum^\ast)^{-\smoothness}
		,
		\sqrt{\log{\nTest}}
		\frac{
			(\binNum^\ast)^{\dimDensity/4}
		}{
			{\sampleSize_1}^{1/2}
		}	+(\binNum^\ast)^{-\smoothness}
		\biggr\}.
	\end{align*}
	Since we have $\mathcal{N} \leq C(\dimDensity) \log \sampleSize_1$
	for any $\dimDensity \geq 1$
	and
	$\sampleSize_1 \geq e^e$, the condition above is satisfied if $	\vert\kern-0.25ex
	\vert\kern-0.25ex
	\vert
	f_{\vectorize{Y}} - f_{\vectorize{Z}}
	\vert\kern-0.25ex
	\vert\kern-0.25ex
	\vert_{\EllTwo}$ is larger than:
	\begin{equation*}
		C_4 (\smoothness, \ballRadius, \dimDensity, \maxErrorTypeOne, \maxErrorTypeTwo) \max
		\biggl\{
		(\binNum^\ast)^{3\dimDensity/4}
		\biggl(
		\frac{
			(\log{\sampleSize_1})^2
			\log{\log \sampleSize_1}
		}{
			\sampleSize_1 \privacyParameter^2
		}
		\biggr)^{1/2}
		+	(\binNum^\ast)^{-\smoothness}
		,
		(\binNum^\ast)^{\dimDensity/4}
		\biggl(
		\frac{
			\log{\log \sampleSize_1}
		}{
			{\sampleSize_1}
		}
		\biggr)^{1/2}+	(\binNum^\ast)^{-\smoothness}
		\biggr\}	.
	\end{equation*}
	Applying the left part of inequality~\eqref{tau_range}, the condition above is implied by:
	\begin{equation*}
		\vert\kern-0.25ex
		\vert\kern-0.25ex
		\vert
		f_{\vectorize{Y}} - f_{\vectorize{Z}}
		\vert\kern-0.25ex
		\vert\kern-0.25ex
		\vert_{\EllTwo}
		\geq
		C_5 (\smoothness, \ballRadius, \dimDensity, \maxErrorTypeOne, \maxErrorTypeTwo)
		\biggl\{
		(\binNum^\ast)^{-\smoothness}
		+
		\biggl(
		\frac{ \sampleSize_1 \privacyParameter^2
		}{
			(\log{\sampleSize_1})^2
			\log{\log \sampleSize_1}
		}
		\biggr)^{\frac{-2\smoothness}{4\smoothness+3\dimDensity}}
		+
		\biggl(
		\frac{
			\sampleSize_1
		}{
			\log{\log \sampleSize_1}
		}
		\biggr)^{
			\frac{-2\smoothness}{4\smoothness+\dimDensity}
		}
		\biggr\}.
	\end{equation*}
	Applying the right part of inequality~\eqref{tau_range}, the condition above is implied by:
	\begin{equation*}
		\vert\kern-0.25ex
		\vert\kern-0.25ex
		\vert
		f_{\vectorize{Y}} - f_{\vectorize{Z}}
		\vert\kern-0.25ex
		\vert\kern-0.25ex
		\vert_{\EllTwo}
		\geq
		C_6 (\smoothness, \ballRadius, \dimDensity, \maxErrorTypeOne, \maxErrorTypeTwo)
		\biggl\{
		\biggl(
		\frac{ \sampleSize_1 \privacyParameter^2
		}{
			(\log{\sampleSize_1})^2
			\log{\log \sampleSize_1}
		}
		\biggr)^{\frac{-2\smoothness}{4\smoothness+3\dimDensity}}
		\vee
		\biggl(
		\frac{
			\sampleSize_1
		}{
			\log{\log \sampleSize_1}
		}
		\biggr)^{
			\frac{-2\smoothness}{4\smoothness+\dimDensity}
		}
		\biggr\}.
	\end{equation*}
	This completes the proof of~Theorem~\ref{theorem:twosample_adaptive_upper}.
	\end{proof}
	\section{Supplementary Information for the Numerical Results}\label{appendix:simulation}
	This section describes the baseline methods for evaluating our proposed methods in Section \ref{section:simulation} and presents additional numerical results.
	\subsection{Baseline Methods}\label{appendix:baseline}
	To the best of our knowledge, no methods exist for nonparametric two-sample testing under LDP. For a reliable evaluation, we adapt two one-sample testing methods from \citet{Gaboardi2018LDPChisq} to the two-sample setting for our simulations. The first combines the generalized randomized response (\texttt{GenRR}) with a two-sample chi-square statistic (\texttt{Chi}), while the second integrates the \texttt{RAPPOR} mechanism with a projected two-sample chi-square statistic (\texttt{ProjChi}), both calibrated using asymptotic chi-square null distributions. Lastly, we adopt the combination of the generalized randomized response mechanism and $\ell_2$-type U-statistic~\eqref{def:statistic_elltwo} as the third baseline method.

\vskip 1em

	\noindent \textit{Baseline method 1}: \texttt{GenRR}+\texttt{Chi}. We begin by introducing the generalized randomized response mechanism.
	\begin{definition}[Generalized randomized response   for multinomial data: \texttt{GenRR}] \label{def:GenRR_formal}
		Consider a pooled raw multinomial sample $\{X_i\}_{i \in [n]}$ with $\alphabetSize$ categories. Fix the privacy level $\privacyParameter>0$.  Each data owner perturbs their data point, resulting in the privatized sample $\{\tilde{X}_{i}\}_{i \in [n]} = \{\tilde{Y}_i\}_{i\in [n_1]} \cup \{\tilde{Z}_i\}_{i \in [n_2]}$; a  multinomial random sample where the $i$th sample $\tilde{\rvOne}_{i}$ represents a modified version of the original $i$th sample $\rvOne_{i}$, with category change
		from $m \in [\alphabetSize]$ to $\tilde{m} \in [\alphabetSize]$
		occurring according to the following conditional probabilities:
		\begin{equation}\label{def:genrr}
			\mP
			\bigl(
			\tilde{\rvX}_{i} = \tilde{m}
			\;|\;
			\rvX_i = m
			\bigr)
			=
			\dfrac
			{\exp \bigl( \alpha \indicator{	\tilde{m} = m} \bigr) }
			{\exp(\alpha) + \alphabetSize - 1 }.
		\end{equation}
	\end{definition}
	\noindent
	\citet{Gaboardi2018LDPChisq} invoke the exponential mechanism arguement to prove the privacy guarantee. We here present an alternative proof which directly uses the definition of LDP~(Definition~\ref{def:LDP}).
	\begin{lemma}
		The random variables $\{\rvXPriv_{i}\}_{i \in [n]}$ generated by \textnormal{\texttt{GenRR}}  are $\privacyParameter$-LDP views of $\{X_i\}_{i \in [n]}$.
	\end{lemma}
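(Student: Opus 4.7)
The plan is to verify the three conditions of Definition~\ref{def:LDP} directly from the construction in~\eqref{def:genrr}. Since the output alphabet is discrete ($\tilde{X}_i \in [\alphabetSize]$), the conditional distribution $Q_i(\cdot \mid x)$ is well-defined via its probability mass function, so conditions (1) and (2) of Definition~\ref{def:LDP} are immediate. The entire work lies in verifying the multiplicative bound in condition (3), and because $\tilde{\rvXCodomain}_i = [\alphabetSize]$ is discrete and finite, its $\sigma$-algebra is generated by singletons; it will suffice to establish the bound for singleton events $A = \{\tilde{m}\}$ and then extend to arbitrary $A$ by the standard sum-ratio argument (each numerator term is at most $e^{\privacyParameter}$ times the corresponding denominator term, hence the same holds for their sums).

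For the singleton case, fix any $m, m' \in [\alphabetSize]$ and any output value $\tilde{m} \in [\alphabetSize]$. Using~\eqref{def:genrr},
\begin{equation*}
\frac{\mathbb{P}(\tilde{\rvX}_i = \tilde{m} \mid \rvX_i = m)}{\mathbb{P}(\tilde{\rvX}_i = \tilde{m} \mid \rvX_i = m')}
= \frac{\exp\bigl(\privacyParameter \, \mathds{1}(\tilde{m} = m)\bigr)}{\exp\bigl(\privacyParameter \, \mathds{1}(\tilde{m} = m')\bigr)},
\end{equation*}
since the normalizing constant $\exp(\privacyParameter) + \alphabetSize - 1$ cancels. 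Because the two indicator variables each take values in $\{0,1\}$, the numerator is at most $e^{\privacyParameter}$ and the denominator is at least $1$, so the ratio is at most $e^{\privacyParameter}$. This is the worst case, occurring when $\tilde{m} = m \neq m'$.

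To extend from singletons to an arbitrary event $A \subseteq [\alphabetSize]$, I would simply observe
\begin{equation*}
Q_i(A \mid m) = \sum_{\tilde{m} \in A} \mathbb{P}(\tilde{\rvX}_i = \tilde{m} \mid \rvX_i = m)
\leq e^{\privacyParameter} \sum_{\tilde{m} \in A} \mathbb{P}(\tilde{\rvX}_i = \tilde{m} \mid \rvX_i = m')
= e^{\privacyParameter} \, Q_i(A \mid m'),
\end{equation*}
which completes the verification of condition (3) of Definition~\ref{def:LDP}. There is no real obstacle here: the mechanism is constructed precisely so that the privacy ratio is maximized by the indicator gap, and the bound is tight. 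The only minor care needed is in appealing to a discrete $\sigma$-algebra to justify the reduction to singletons, which is why I would note at the outset that $\tilde{\rvXCodomain}_i$ is finite.
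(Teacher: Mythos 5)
Your proof is correct and follows essentially the same route as the paper's: both compute the ratio of conditional probabilities for a fixed output $\tilde{m}$, cancel the normalizing constant $e^{\privacyParameter}+\alphabetSize-1$, and bound $\exp\bigl(\privacyParameter(\indicator{\tilde{m}=m}-\indicator{\tilde{m}=m'})\bigr)$ by $e^{\privacyParameter}$. You are slightly more explicit than the paper in spelling out the sum-ratio extension from singletons to arbitrary events and in noting that conditions (1) and (2) are automatic for a finite output alphabet, but the core calculation is identical.
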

	\begin{proof}
		\begin{align*}
			\sup_{\tilde{m} \in [k], (m, m')\in [k]^2}
			\frac{\mP
				\bigl(
				\tilde{\rvX}_{i} = \tilde{m}
				\;|\;
				\rvX_i = m
				\bigr)
			}{
				\mP
				\bigl(
				\tilde{\rvX}_{i} = \tilde{m}
				\;|\;
				\rvX_i = m'
				\bigr)
			}
			&=
			\sup_{\tilde{m} \in [k], (m, m')\in [k]^2}
			\frac{
				\exp \bigl( \alpha \indicator{	\tilde{m} = m} \bigr)
				/
				\bigl(  \exp(\alpha) + \alphabetSize - 1 \bigr) 
			}{
				\exp \bigl( \alpha \indicator{	\tilde{m} = m'} \bigr)
				/
				\bigl(  \exp(\alpha) + \alphabetSize - 1 \bigr)
			}
			\\&=
			\sup_{\tilde{m} \in [k], (m, m')\in [k]^2}
			\exp \Bigl( \alpha 
			\bigl(
			\indicator{	\tilde{m} = m}
			-
			\indicator{	\tilde{m} = m'}
			\bigr)
			\Bigr)
			\\&= 
			\exp(\privacyParameter).
		\end{align*}
	\end{proof}
	Now we define a chi-square statistic, referred to as \texttt{Chi}, 
	which takes the \texttt{GenRR} views  as inputs:
	\begin{equation}\label{def:chi_statistic}
		T_{\sampleSize_1, \sampleSize_2}
		:=
		\biggl( \dfrac{1}{\sampleSize_1}+\dfrac{1}{\sampleSize_2} \biggr)^{-1}
		(
		\hat{\boldsymbol{\mu}}_{\tilde{\rvY}}
		-
		\hat{\boldsymbol{\mu}}_{\tilde{\rvY}}
		)^\top
		\mathrm{diag}(
		\hat{\probVec}
		)^{-1}
		(
		\hat{\boldsymbol{\mu}}_{\tilde{\rvY}}
		-
		\hat{\boldsymbol{\mu}}_{\tilde{\rvY}}
		),
	\end{equation}
	where for $\vectorIndex \in  [\alphabetSize]$,
	the $\vectorIndex$th elements of $\hat{\boldsymbol{\mu}}_{\tilde{\rvY}}, \hat{\boldsymbol{\mu}}_{\tilde{\rvY}}$ and $\hat{\probVec}$ are defined as:
	\begin{align*}
		\hat{\mu}_{\tilde{\rvTwo} \vectorIndex}
		&:=
		\frac{1}{\sampleSize_1}
		\sum_{i=1}^{n_1}
		\mathds{1}
		({\tilde{\rvY}_i=\vectorIndex}),
		%
		%
		\quad
		\hat{\mu}_{\tilde{\rvThree} \vectorIndex}
		:=
		\frac{1}{\sampleSize_2}
		\sum_{j=1}^{n_2}
		\mathds{1}
		({\tilde{\rvZ}_j=\vectorIndex}),~\text{and}
		\\
		\hat{p}_\vectorIndex
		&:=
		\frac{
			\sum_{i=1}^{n_1}
			\mathds{1}
			({\tilde{\rvY}_i=\vectorIndex})
			+
			\sum_{j=1}^{n_2}
			\mathds{1}
			({\tilde{\rvZ}_j=\vectorIndex})
		}{\sampleSize_1 + \sampleSize_2},
	\end{align*}
	respectively.
	The test statistic $T_{\sampleSize_1, \sampleSize_2}$~\eqref{def:chi_statistic} can  be calibrated using their asymptotic null distributions:
	\begin{lemma}[Asymptotic null distribution of \texttt{GenRR+Chi}] \label{asymp_genrr_chi}
		Fix the number of categories $\alphabetSize$ and the multinomial probability vectors $\mathbf{p}_Y$ and $\mathbf{p}_Z$, each contained in the interior of the set $\Omega := \{\mathbf{p} \in \mathbb{R}^k : p_\vectorIndex > 0, \sum_{m=1}^{k-1} p_m \leq 1, p_{k} = 1 - \sum_{m=1}^{k-1} p_m\}$. Fix the privacy level $\privacyParameter > 0$. 
		Under the null hypothesis $\mathbf{p}_Y = \mathbf{p}_Z$, for each pair of sample sizes $(n_1, n_2)$, compute the statistic $T_{n_1, n_2}$ based on the  \textnormal{\texttt{GenRR}} $\privacyParameter$-LDP views $\{\tilde{Y}_i\}_{i \in [n_1]}$ and $\{\tilde{Z}_i\}_{i \in [n_2]}$, generated from $\mathbf{p}_Y$ and $\mathbf{p}_Z$, respectively. Then, as $n_1, n_2 \to \infty$, we have $T_{n_1, n_2} \stackrel{d}{\to} \chi^2_{\alphabetSize-1}$.
	\end{lemma}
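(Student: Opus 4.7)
The plan is to treat the privatized data as multinomial random variables in their own right and apply the classical multivariate CLT together with Slutsky's theorem. Fix the privacy level $\privacyParameter>0$ and write $\tilde{\probVec}$ for the common marginal distribution of each $\tilde{Y}_i$ and each $\tilde{Z}_j$ under the null; by Definition~\ref{def:GenRR_formal}, its $\vectorIndex$th entry is $\tilde{p}_\vectorIndex = p_\vectorIndex \frac{e^\privacyParameter-1}{e^\privacyParameter+\alphabetSize-1} + \frac{1}{e^\privacyParameter+\alphabetSize-1}$, which lies in the interior of the simplex because $\probVec$ does. Thus $\hat{\boldsymbol{\mu}}_{\tilde{\rvY}}$ and $\hat{\boldsymbol{\mu}}_{\tilde{\rvZ}}$ are empirical means of i.i.d.~one-hot multinomial vectors with common covariance $\Sigma := \mathrm{diag}(\tilde{\probVec}) - \tilde{\probVec}\tilde{\probVec}^\top$.

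First I would apply the multivariate CLT to each sample separately and, by their independence, deduce that
\[
\Bigl(\tfrac{1}{\sampleSize_1}+\tfrac{1}{\sampleSize_2}\Bigr)^{-1/2}\bigl(\hat{\boldsymbol{\mu}}_{\tilde{\rvY}}-\hat{\boldsymbol{\mu}}_{\tilde{\rvZ}}\bigr)\stackrel{d}{\to} W\sim \mathcal{N}(0,\Sigma).
\]
Next, since $\hat{\probVec}$ is the pooled empirical distribution of i.i.d.~draws from $\tilde{\probVec}$ under the null, the weak law of large numbers yields $\hat{\probVec}\stackrel{p}{\to}\tilde{\probVec}$, and by the continuous mapping theorem $\mathrm{diag}(\hat{\probVec})^{-1}\stackrel{p}{\to}\mathrm{diag}(\tilde{\probVec})^{-1}$. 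Slutsky's theorem then gives $T_{\sampleSize_1,\sampleSize_2}\stackrel{d}{\to} W^\top\mathrm{diag}(\tilde{\probVec})^{-1}W$.

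The last step is to identify this quadratic form as $\chi^2_{\alphabetSize-1}$. Let $D:=\mathrm{diag}(\tilde{\probVec})$ and set $U:=D^{-1/2}W$, which is Gaussian with covariance $D^{-1/2}\Sigma D^{-1/2}=I-\sqrt{\tilde{\probVec}}\sqrt{\tilde{\probVec}}^{\,\top}$, where $\sqrt{\tilde{\probVec}}$ denotes the entry-wise square root. Because $\|\sqrt{\tilde{\probVec}}\|_2^2=1$, this covariance is an orthogonal projection of rank $\alphabetSize-1$, so writing $U=PZ$ for $Z\sim\mathcal{N}(0,I_\alphabetSize)$ and $P$ the above projection gives $U^\top U=Z^\top P Z\sim\chi^2_{\alphabetSize-1}$ by the standard quadratic-form-of-Gaussian fact. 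Since $W^\top D^{-1}W=U^\top U$, this concludes the proof.

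I do not expect any serious obstacle: the result is essentially the classical two-sample Pearson chi-square limit applied to the privatized multinomial sample, and the only thing one needs to verify beyond routine calculation is that $\tilde{\probVec}$ remains in the interior of the simplex (so that $\mathrm{diag}(\tilde{\probVec})^{-1}$ is well defined and $\hat{\probVec}$ is eventually a.s.~positive component-wise). The mildly delicate point is making sure that the joint convergence of the two sample means and the consistency of $\hat{\probVec}$ holds simultaneously, which follows from independence of the two samples and a joint CLT argument.
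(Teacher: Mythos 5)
Your proposal is correct and reaches the same conclusion by a slightly different route. The paper's proof packages both samples into a single triangular array indexed by the pooled sample size $N = n_1 + n_2$ and invokes Lindeberg's CLT (Proposition 2.27 of \citet{van_der_vaart_asymptotic_1998}), verifying the convergence-of-covariances condition and the Lindeberg condition directly; you instead apply the classical multivariate CLT to each sample separately and combine them via independence. Both approaches then proceed identically: WLLN for $\hat{\probVec}$, continuous mapping, Slutsky, and the observation that $\mathrm{diag}(\tilde{\probVec})^{-1/2}\Sigma\,\mathrm{diag}(\tilde{\probVec})^{-1/2}$ is a rank-$(\alphabetSize-1)$ projection.

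The one point worth flagging in your version is the step ``by their independence, deduce that $(\tfrac{1}{n_1}+\tfrac{1}{n_2})^{-1/2}(\hat{\boldsymbol{\mu}}_{\tilde{\rvY}}-\hat{\boldsymbol{\mu}}_{\tilde{\rvZ}})\to\mathcal{N}(0,\Sigma)$.'' Writing the left side as $a_N\,\sqrt{n_1}(\hat{\boldsymbol{\mu}}_{\tilde{\rvY}}-\tilde{\probVec}) - b_N\,\sqrt{n_2}(\hat{\boldsymbol{\mu}}_{\tilde{\rvZ}}-\tilde{\probVec})$ with $a_N^2+b_N^2=1$, the coefficients $(a_N,b_N)$ need not converge if $n_1/n_2$ has no limit, so one cannot simply add two fixed limiting normals. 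This is not a fatal gap (a characteristic-function argument using the boundedness of $(a_N,b_N)$ and $a_N^2+b_N^2=1$ closes it, as does a subsequence argument), but it is exactly the step that the Lindeberg triangular-array formulation handles automatically, which is plausibly why the paper chose it. You already gesture at this at the end of your proposal (``a joint CLT argument''), so the idea is present; a polished write-up should spell out that small combination step rather than appealing to independence alone.
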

	The proof given in Appendix~\ref{proof:asymp_genrr_chi} follows directly from Lindberg's central limit theorem (CLT), as outlined in Proposition 2.27 of \citet{van_der_vaart_asymptotic_1998}, and the fact that a chi-square random variable arises from an asymptotic normal distribution with a projection matrix as its covariance. 

    \vskip 1em
\noindent \textit{Baseline method 2}: \texttt{RAPPOR}+\texttt{ProjChi}.
	We propose a projected Hotelling's-type statistic, referred to as \texttt{ProjChi}, which takes the outputs of \texttt{RAPPOR} as input, defined as follows: 
	\begin{equation}\label{def:statistic:projchi}
		T^{\mathrm{proj}}_{\sampleSize_1, \sampleSize_2}
		:=\biggl( \dfrac{1}{\sampleSize_1}+\dfrac{1}{\sampleSize_2} \biggr)^{-1}
		(
		\check{\rVecY} 
		-
		\check{\rVecZ} 
		)^\top
		\Pi
		\hat{\Sigma}^{-1}
		\Pi
		(
		\check{\rVecY} 
		-
		\check{\rVecZ}
		),
	\end{equation}
	where
	$
	\check{\rVecY} 
	:=
	\sum_{i=1}^{n_1}
	\tilde{\rVecY}_i
	/\sampleSize_1$
	and
	$
	\check{\rVecZ} 
	:=
	\sum_{j=1}^{n_2}
	\tilde{\rVecZ}_j
	/\sampleSize_2
	$
	are the sample means of the \texttt{RAPPOR} $\privacyParameter$-LDP views,
	$\hat{\Sigma}$ is the pooled empirical covariance matrix:
	\begin{equation}
		\hat{\Sigma} := \frac{(n_1 - 1) \hat{\Sigma}_1 + (n_2 - 1) \hat{\Sigma}_2 }{n_1 + n_2 - 2},
	\end{equation}
	where
	\begin{align*}
		\hat{\Sigma}_1 &:= \frac{1}{n_1 - 1} \sum_{i=1}^{n_1} (\tilde{\mathbf{Y}}_i - \check{\mathbf{Y}})(\tilde{\mathbf{Y}}_i - \check{\mathbf{Y}})^\top,~\text{and}
		\\
		\hat{\Sigma}_2 &:= \frac{1}{n_2 - 1} \sum_{i=1}^{n_2} (\tilde{\mathbf{Z}}_i - \check{\mathbf{Z}})(\tilde{\mathbf{Z}}_i - \check{\mathbf{Z}})^\top,
	\end{align*}
	and $\Pi = I_\alphabetSize - \mathbf{1} \mathbf{1}^\top$ is the projection matrix to the subspace spanned by the one-vector $\mathbf{1}$.
	The test statistic  $T^{\mathrm{proj}}_{\sampleSize_1, \sampleSize_2}$~\eqref{def:statistic:projchi} can  be calibrated using its asymptotic null distribution:
	
	\begin{lemma}[Asymptotic null distribution of \texttt{RAPPOR+ProjChi}] \label{asymp_rappor_projchi} 
		Assume the same multinomial setting as in Lemma~\ref{asymp_genrr_chi}.
		Fix the privacy level $\privacyParameter>0$.
		Under the null hypothesis $\mathbf{p}_Y = \mathbf{p}_Z$, for each pair of sample sizes $(n_1, n_2)$, compute the statistic $T_{n_1, n_2}$ based on the  \textnormal{\texttt{RAPPOR}} $\privacyParameter$-LDP views $\{\tilde{\mathbf{Y}}_i\}_{i \in [n_1]}$ and $\{\tilde{\mathbf{Z}}_i\}_{i \in [n_2]}$, generated from $\mathbf{p}_Y$ and $\mathbf{p}_Z$ , respectively. Then, as $n_1, n_2 \to \infty$, we have $T^{\mathrm{proj}}_{\sampleSize_1, \sampleSize_2}
		\stackrel{d}{\to} \chi^2_{\alphabetSize-1}$.
	\end{lemma}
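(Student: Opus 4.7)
The plan is to combine a multivariate central limit theorem with a Slutsky argument, and then identify the limiting quadratic form as $\chi^2_{\alphabetSize - 1}$ by exploiting an algebraic identity specific to \texttt{RAPPOR}. Under the null $\probVec_\rvTwo = \probVec_\rvThree =: \probVec$, the privatized views $\{\tilde{\mathbf{Y}}_i\}_{i \in [\sampleSize_1]}$ and $\{\tilde{\mathbf{Z}}_j\}_{j \in [\sampleSize_2]}$ are i.i.d.\ with common mean and common covariance matrix $\Sigma := \mathrm{Cov}(\tilde{\mathbf{Y}}_1)$, whose entries are given by Lemma~\ref{lemma:rappor_var}. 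The multivariate CLT applied to the two independent samples yields
\[
W_{\sampleSize_1, \sampleSize_2} := \Bigl(\tfrac{1}{\sampleSize_1}+\tfrac{1}{\sampleSize_2}\Bigr)^{-1/2}\bigl(\check{\mathbf{Y}} - \check{\mathbf{Z}}\bigr) \stackrel{d}{\to} Z \sim \mathcal{N}(0, \Sigma),
\]
and the weak law of large numbers together with the continuous mapping theorem gives $\hat{\Sigma}^{-1} \stackrel{p}{\to} \Sigma^{-1}$, where invertibility of $\Sigma$ follows from strict positivity of every $\mathrm{Var}(\tilde{Y}_{1\vectorIndex})$ for $\privacyParameter \in (0,\infty)$ and $\probVec$ in the interior of the simplex. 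Slutsky's theorem then gives $T^{\mathrm{proj}}_{\sampleSize_1, \sampleSize_2} \stackrel{d}{\to} Z^\top \Pi \Sigma^{-1} \Pi Z$.

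The substantive step is to show this limiting quadratic form is $\chi^2_{\alphabetSize - 1}$, and my plan is to do so by verifying that $\Sigma$ commutes with $\Pi$, the orthogonal projection onto $\mathbf{1}^\perp$. From Lemma~\ref{lemma:rappor_var},
\[
(\Sigma \mathbf{1})_\vectorIndex = (\privacyParameterrappor \probVecElement{\rvTwo}{\vectorIndex} + \smallNumberrappor)\bigl(1 - \privacyParameterrappor \probVecElement{\rvTwo}{\vectorIndex} - \smallNumberrappor\bigr) - \privacyParameterrappor^2 \probVecElement{\rvTwo}{\vectorIndex}\bigl(1 - \probVecElement{\rvTwo}{\vectorIndex}\bigr),
\]
and after expansion the $\probVecElement{\rvTwo}{\vectorIndex}$-dependent terms collect into $\probVecElement{\rvTwo}{\vectorIndex}\,\privacyParameterrappor\bigl(1 - 2\smallNumberrappor - \privacyParameterrappor\bigr)$. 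This coefficient vanishes because the defining formulas $\privacyParameterrappor = (e^{\privacyParameter/2}-1)/(e^{\privacyParameter/2}+1)$ and $\smallNumberrappor = 1/(e^{\privacyParameter/2}+1)$ yield $\privacyParameterrappor + 2\smallNumberrappor = 1$ by direct inspection. Consequently $\Sigma \mathbf{1} = \smallNumberrappor(1 - \smallNumberrappor)\mathbf{1}$, so $\mathbf{1}$ is an eigenvector of $\Sigma$ and, by the spectral theorem applied to the symmetric matrix $\Sigma$, both $\mathrm{span}(\mathbf{1})$ and its orthogonal complement $\mathbf{1}^\perp$ are $\Sigma$-invariant.

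Given this invariance, write $\Pi = UU^\top$ where $U \in \mathbb{R}^{\alphabetSize \times (\alphabetSize-1)}$ has orthonormal columns spanning $\mathbf{1}^\perp$. The block decomposition induced by the invariance is
\[
\Sigma^{-1} = U(U^\top \Sigma U)^{-1} U^\top + \bigl(\alphabetSize\,\smallNumberrappor(1-\smallNumberrappor)\bigr)^{-1} \mathbf{1}\mathbf{1}^\top,
\]
and $U^\top \mathbf{1} = 0$ immediately gives $U^\top \Sigma^{-1} U = (U^\top \Sigma U)^{-1}$. Setting $V := U^\top Z \sim \mathcal{N}(0, U^\top \Sigma U)$, one obtains $Z^\top \Pi \Sigma^{-1} \Pi Z = V^\top (U^\top \Sigma U)^{-1} V$, which is the squared Mahalanobis norm of a non-degenerate $(\alphabetSize - 1)$-dimensional Gaussian and therefore distributed as $\chi^2_{\alphabetSize - 1}$. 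The main obstacle is precisely the commutation step: without the \texttt{RAPPOR}-specific cancellation $\privacyParameterrappor + 2\smallNumberrappor = 1$, the limit of $T^{\mathrm{proj}}_{\sampleSize_1, \sampleSize_2}$ would degenerate to a weighted sum of independent $\chi^2_1$ variables rather than a clean $\chi^2_{\alphabetSize - 1}$, so this identity is what makes the particular pairing of \texttt{RAPPOR} with \texttt{ProjChi} asymptotically pivotal.
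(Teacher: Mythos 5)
Your proposal is correct and reaches the same conclusion, but it routes the key algebraic step differently from the paper. The paper establishes asymptotic normality via a Lindeberg triangular-array CLT (which cleanly handles the joint limit $n_1, n_2 \to \infty$ without assuming the ratio $n_1/n_2$ converges), works with $\hat{\Sigma}^{-1/2}\Pi$ applied to the normalized mean difference, and then \emph{cites} the analysis of \citet{Gaboardi2018LDPChisq} for the claim that $\mathrm{Cov}(\tilde{\rVecY}_1)^{-1/2}\Pi\,\mathrm{Cov}(\tilde{\rVecY}_1)\,\Pi\,\mathrm{Cov}(\tilde{\rVecY}_1)^{-1/2}$ is a rank-$(\alphabetSize-1)$ projection, as well as Lemma 5.6 therein for invertibility of the covariance. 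You instead prove the decisive fact yourself: expanding $(\Sigma\mathbf{1})_\vectorIndex$ from Lemma~\ref{lemma:rappor_var} and using the identity $\privacyParameterrappor + 2\smallNumberrappor = 1$, you show $\mathbf{1}$ is an eigenvector of $\Sigma$ with eigenvalue $\smallNumberrappor(1-\smallNumberrappor) = e^{\privacyParameter/2}/(e^{\privacyParameter/2}+1)^2$, so $\Sigma$ and $\Pi$ commute and the limiting quadratic form is a nondegenerate $(\alphabetSize-1)$-dimensional Mahalanobis norm. This is the same underlying fact the paper's cited projection-matrix claim rests on, but your version is self-contained and makes transparent \emph{why} the \texttt{RAPPOR}--\texttt{ProjChi} pairing is pivotal; the paper's version buys brevity at the cost of an external reference.

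Two minor points. First, your justification of invertibility (``strict positivity of every $\mathrm{Var}(\tilde{Y}_{1\vectorIndex})$'') does not by itself imply $\Sigma \succ 0$; the clean argument, implicit in the paper's displayed decomposition $\Sigma = \privacyParameterrappor^2\bigl(\mathrm{diag}(\probVec_{\rvTwo}) - \probVec_{\rvTwo}\probVec_{\rvTwo}^\top\bigr) + \smallNumberrappor(1-\smallNumberrappor) I_{\alphabetSize}$, is that $\Sigma$ is a PSD matrix plus a strictly positive multiple of the identity. Second, ``the multivariate CLT applied to the two independent samples'' needs the standard care with the weights $\sqrt{n_2/(n_1+n_2)}$ and $\sqrt{n_1/(n_1+n_2)}$ (or a subsequence/triangular-array argument as in the paper) to conclude the limit is exactly $\mathcal{N}(0,\Sigma)$ regardless of how $n_1/n_2$ behaves. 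Neither issue affects the validity of the conclusion.
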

	The proof, provided in Appendix~\ref{proof:asymp_rappor_projchi}, follows a similar approach to the proof of Lemma~\ref{asymp_genrr_chi}. The key difference is that the \texttt{RAPPOR} $\privacyParameter$-LDP views are no longer multinomial samples.

\vskip 1em
\noindent \textit{Baseline method 3}: \texttt{GenRR}+$\ell_2$. This method applies the $\ell_2$-type U-statistic in~\eqref{def:statistic_elltwo} to the \texttt{GenRR} $\privacyParameter$-LDP views, with calibration via  permutation procedures.
	\subsection{Power Comparison of \texttt{RAPPOR}-Based Methods under Additional Scenarios}\label{appendix:powerlaw}
	Within the tests based on the \texttt{RAPPOR} mechanism, as shown in Figures~\ref{fig:power_multinomial}, \ref{fig:power_density_location}, and~\ref{fig:power_density_scale}, we observe that \texttt{RAPPOR}+\texttt{ProjChi} and \texttt{RAPPOR}+$\ell_2$ exhibit comparable performance. In certain cases, \texttt{RAPPOR}+\texttt{ProjChi} demonstrates slightly greater power. The dominance of one test over the other depends on the specific scenario, particularly on the privacy level $\privacyParameter$ and the relative signal strength of the chi-square divergence versus the $\ell_2$ distance. 
	
	In the perturbed uniform distribution scenario considered for Figure~\ref{fig:power_multinomial}, the chi-square divergence is relatively strong, while the $\ell_2$ distance is relatively weak compared to other potential scenarios. To contrast this, we now present a multinomial testing scenario where the chi-square divergence is relatively weak. In this scenario, the probability vectors follow a power law as in~\eqref{nulldist}, but with different powers. Specifically, we consider the following setup: for $\vectorIndex = 1, \ldots, \alphabetSize=40$, the \( m \)th elements of \( \vectorize{p}_Y \) and \( \vectorize{p}_Z \) are defined as:
	\begin{equation}\label{alternative:powerlaw}
		\probVecElement{\rvTwo}{\vectorIndex}
		\propto 
		\vectorIndex^{2.45},
		~\text{and}~
		\probVecElement{\rvThree}{\vectorIndex}
		\propto 
		\vectorIndex^{2.3}.
	\end{equation}
	We also examine the effect of varying the privacy parameter $\privacyParameter \in \{4, 2, 1\}$ to assess how the privacy level influences the power difference between the two methods. 
	\begin{figure}
		\centering
		\includegraphics[width=0.95\linewidth]{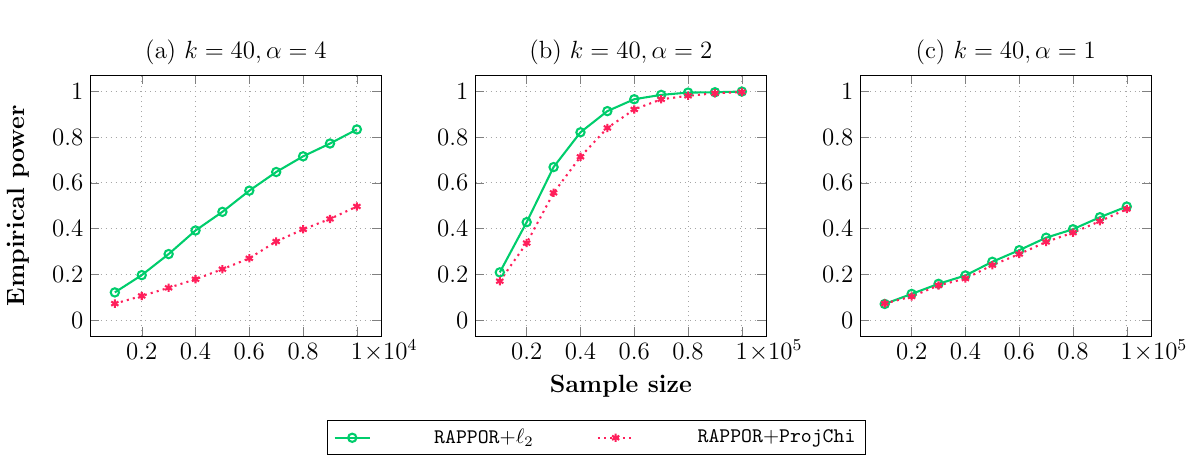}
		\caption{Comparison of the testing power between 
			\texttt{RAPPOR}+$\ell_2$ and \texttt{RAPPOR}+\texttt{ProjChi} under the power law alternatives in~\eqref{alternative:powerlaw}. To ensure a fair comparison, both tests are calibrated using permutation procedures at level $\gamma = 0.05$.}
		\label{fig:power_powerlaw_24}
	\end{figure}
	The results presented in Figure~\ref{fig:power_powerlaw_24} demonstrate that in this power law scenario, where the signal is strong in $\ell_2$ distance and weak in chi-square divergence, \texttt{RAPPOR}+$\ell_2$ outperforms \texttt{RAPPOR}+\texttt{ProjChi}. This difference is more pronounced in the low-privacy setting (meaning  larger $\privacyParameter$ values) and becomes more subtle as the privacy level increases (meaning  smaller $\privacyParameter$ values).
	
	An intuition behind this phenomenon is as follows. For each $\vectorIndex \in [\alphabetSize]$, the $\vectorIndex$th entry of the \texttt{RAPPOR} output follows  
	$
	\mathrm{Ber} \left( \privacyParameterrappor \mathds{1}(\rvTwo_{\sampleIndexOne} = \vectorIndex) + \smallNumberrappor \right),
	$
	where $\privacyParameterrappor = \privacyParameter/4 + o(\privacyParameter)$ and $\smallNumberrappor = 1/2 + o(1)$, with \( o(1) \) representing a term that vanishes as $\privacyParameter \to 0$. Threrfore, as $\privacyParameter$ decreases, the distributions approach uniform multinomial distributions, diminishing the difference between chi-square divergence and $\ell_2$ distance in the signal.
	
	\subsection{Numerical Result for Density Testing for Scale Alternatives}\label{simul_scale}
	Similar to the location alternative scenario~\eqref{alternative:density_location}, we analyze scenarios involving covariance differences between two $\dimDensity$-dimensional Gaussian distributions
	$P_{\vectorize{\rvTwo}} = \mathcal{N}(
	\boldsymbol{\mu}_{\vectorize{\rvTwo}}
	,
	\Sigma_{\vectorize{\rvTwo}}
	)$
	and
	$P_{\vectorize{\rvThree}} = \mathcal{N}(
	\boldsymbol{\mu}_{\vectorize{\rvThree}}
	,
	\Sigma_{\vectorize{\rvThree}}
	)$. 
	We set the mean vectors and covariance matrices of the Gaussian distributions as
	\begin{equation}\label{alternative:density_scale}
		\boldsymbol{\mu}_{\vectorize{\rvTwo}}
		=
		\boldsymbol{\mu}_{\vectorize{\rvThree}}
		= \mathbf{0}_{\dimDensity},
		\Sigma_{\vectorize{\rvTwo}} = 
		0.5\times \mathbf{J}_{\dimDensity}
		+
		0.5 \times \mathbf{I}_{\dimDensity},
		\quad \text{and} \quad
		\Sigma_{\vectorize{\rvThree}}
		=
		5\times \Sigma_{\vectorize{\rvTwo}}.
	\end{equation}
    	The power results against the scale  alternatives, provided in Figure~\ref{fig:power_density_scale}, shows similar patterns to Figure~\ref{fig:power_density_location}.
	\begin{figure}
		\centering
		\includegraphics[width=0.95\linewidth]{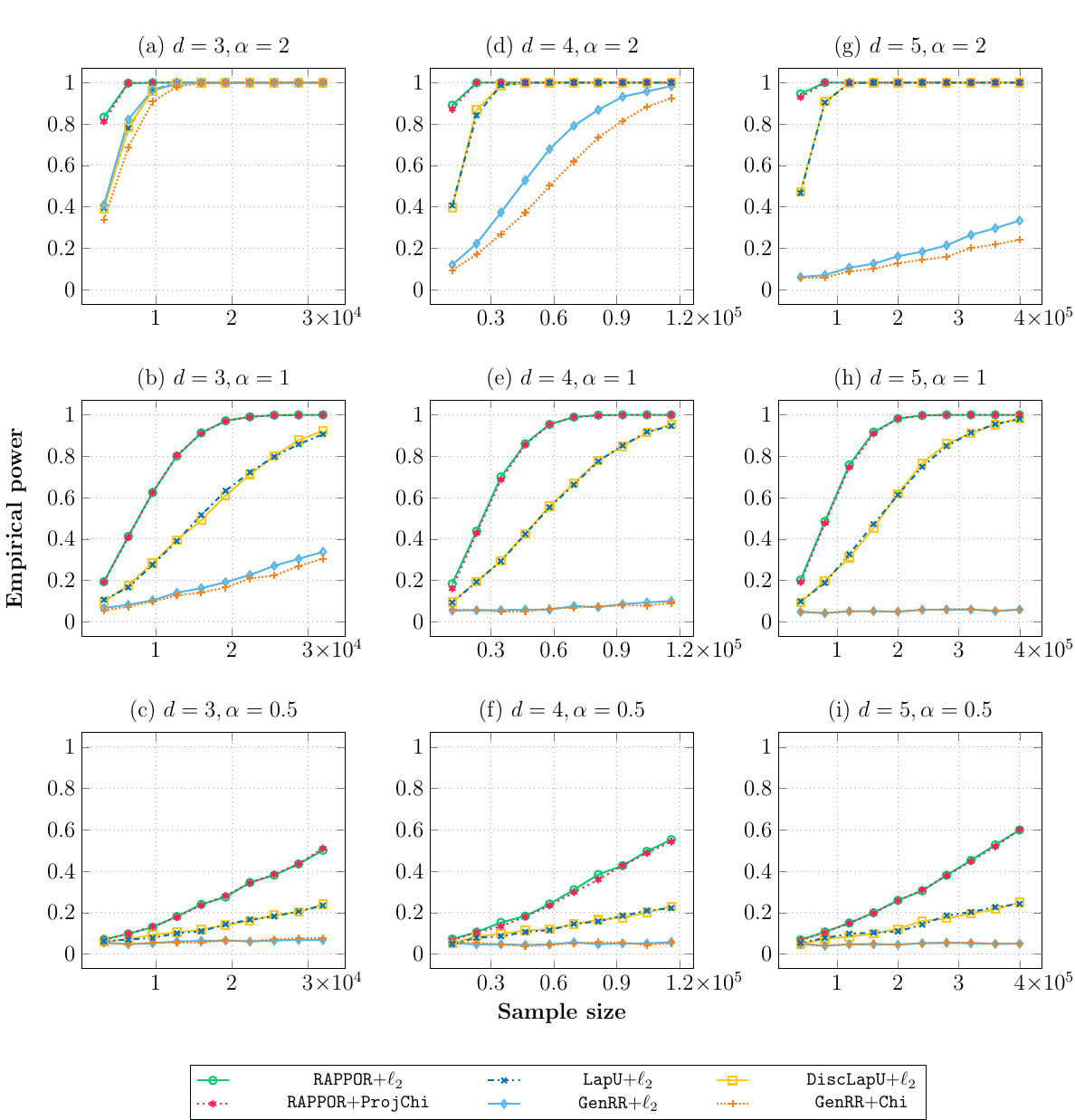}

		\caption{Comparison of the density testing power between our proposed methods (first row in the legend) and baseline methods (second row in the legend) under the scale alternatives in~\eqref{alternative:density_scale}. To ensure a fair comparison, all methods are calibrated using permutation procedures at level $\gamma = 0.05$.
		}
		\label{fig:power_density_scale}
	\end{figure}

	\subsection{Proof of Asymptotic Null Distributions}
	This section presents the derivation of chi-square asymptotic null distributions of \texttt{GenRR+Chi} and \texttt{RAPPOR+Chi} $\privacyParameter$-LDP tests introduced in Appendix~\ref{appendix:baseline}, starting with $\texttt{GenRR+Chi}$.
	\subsubsection{Proof of Lemma~\ref{asymp_genrr_chi}}\label{proof:asymp_genrr_chi}
	The proof is a straighforward application of Lindberg's CLT.
    
    \vskip 1em
    
	\begin{proof}
	Throughout the proof, assume that the null hypothesis $\probVec_{\rvTwo} = \probVec_{\rvZ}$ holds. Thus, for each $\vectorIndex \in [\alphabetSize]$, we have $\probVecElement{\tilde{\rvTwo}}{\vectorIndex} = \probVecElement{\tilde{\rvThree}}{\vectorIndex}$. Without loss of generality, we  denote all $\probVecElement{\tilde{\rvThree}}{\vectorIndex}$ as $\probVecElement{\tilde{\rvTwo}}{\vectorIndex}$.
	For each
	$\sampleIndexOne \in [\sampleSize_1]$
	and
	$\sampleIndexTwo \in [\sampleSize_2]$,
	we can  view $\tilde{\rvY}_{\sampleIndexOne}$ and $\tilde{\rvZ}_{\sampleIndexTwo}$ as  drawn independently from a multinomial distribution with probability vector defined as:
	\begin{align*}
		\probVec_{\tilde{\rvTwo}}
		:=
		\frac
		{\exp(\alpha)}
		{\exp(\alpha) + \alphabetSize - 1 }\probVec_{\rvTwo}
		+
		\frac
		{1}
		{\exp(\alpha) + \alphabetSize - 1 }(\mathbf{1}_\alphabetSize-\probVec_{\rvTwo})	.
		\numberthis \label{genrr_transformed_p}
	\end{align*}
	For each $m \in [\alphabetSize]$, the $\vectorIndex$th term of $
	\hat{\boldsymbol{\mu}}_{\tilde{\rvY}}
	-
	\hat{\boldsymbol{\mu}}_{\tilde{\rvY}}
	$ can be written as
	\begin{equation*}
		\hat{{\mu}}_{ \tilde{\rvY} \vectorIndex}
		-
		\hat{{\mu}}_{ \tilde{\rvZ} \vectorIndex}
		=
		\frac{1}{\sampleSize_1}
		\sum_{\sampleIndexOne=1}^{\sampleSize_1}
		\bigl(
		\mathds{1}(\tilde{\rvTwo}_\sampleIndexOne = \vectorIndex)
		-
		\probVecElement{\tilde{\rvTwo}}{\vectorIndex}
		\bigr)
		-
		\frac{1}{\sampleSize_2}
		\sum_{\sampleIndexTwo=1}^{\sampleSize_2}
		\bigl(
		\mathds{1}(\tilde{\rvThree}_\sampleIndexTwo = \vectorIndex)
		-
		\probVecElement{\tilde{\rvTwo}}{\vectorIndex}
		\bigr).
	\end{equation*}
	To apply Lindeberg's CLT, we construct a triangular array sequence. Each row of the array corresponds to the pooled sample size $N \coloneqq \sampleSize_1 + \sampleSize_2$. Within each row, the columns are indexed by $\ell \in [N]$. For each $N$ and $\ell$, we define a $\alphabetSize$-dimensional random vector $\tilde{\rVecX}_{N[:,\ell]}$, whose $\vectorIndex$th element, denoted as $\tilde{\rVecX}_{N[\vectorIndex,\ell]}$, is given by: 
	\begin{align*}
		\tilde{\rVecX}_{N[\vectorIndex,\ell]} :=
		\begin{cases}
			\sqrt{\biggl( \dfrac{1}{\sampleSize_1}+\dfrac{1}{\sampleSize_2} \biggr)^{-1}}
			\dfrac{1}{\sampleSize_1}
			\bigl(
			\mathds{1}(\tilde{\rvTwo}_\ell = \vectorIndex)
			-
			\probVecElement{\tilde{\rvTwo}}{\vectorIndex}
			\bigr)
			~&\text{if}~
			1 \leq \ell \leq \sampleSize_1,
			\\
			\sqrt{\biggl( \dfrac{1}{\sampleSize_1}+\dfrac{1}{\sampleSize_2} \biggr)^{-1}}
			\dfrac{1}{\sampleSize_2}
			\bigl(
			\probVecElement{\tilde{\rvTwo}}{\vectorIndex}
			-
			\mathds{1}
			(\tilde{\rvThree}_{\ell-\sampleSize_1} = \vectorIndex)
			\bigr)
			~&\text{if}~
			(\sampleSize_1 + 1) \leq \ell \leq ( \sampleSize_1 + \sampleSize_2),
		\end{cases}
	\end{align*}
	for $\vectorIndex \in [\alphabetSize]$. 
	Then for each $\ell \in [N]$, we have $\mE[\tilde{\rVecX}_{N[:,\ell]}] = 0$  and
	$$\sum_{\ell=1}^N \tilde{\rVecX}_{N[:,\ell]} = 
	\sqrt{\biggl( \dfrac{1}{\sampleSize_1}+\dfrac{1}{\sampleSize_2} \biggr)^{-1}}(
	\hat{\boldsymbol{\mu}}_{\tilde{\rvY}}
	-
	\hat{\boldsymbol{\mu}}_{\tilde{\rvZ}}).
	$$
	We also have:
	\begin{align*}
		\mathrm{Cov}(\tilde{\rVecX}_{N[:,\ell]})
		=
		\begin{cases}
			\biggl( \dfrac{1}{\sampleSize_1}+\dfrac{1}{\sampleSize_2} \biggr)^{-1}
			\dfrac{1}{\sampleSize_1^2}
			\bigl(
			\text{diag}(\probVec_{\tilde{\rvTwo}}) - \probVec_{\tilde{\rvTwo}} \probVec_{\tilde{\rvTwo}}^\top
			\bigr)
			~&\text{if}~
			1 \leq \ell \leq \sampleSize_1,
			\\
			\biggl( \dfrac{1}{\sampleSize_1}+\dfrac{1}{\sampleSize_2} \biggr)^{-1}
			\dfrac{1}{\sampleSize_2^2}
			\bigl(
			\text{diag}(\probVec_{\tilde{\rvTwo}}) - \probVec_{\tilde{\rvTwo}} \probVec_{\tilde{\rvTwo}}^\top
			\bigr)
			~&\text{if}~
			(\sampleSize_1 + 1) \leq \ell \leq ( \sampleSize_1 + \sampleSize_2).
		\end{cases}
	\end{align*}
    Now we verify two conditions of  Lindberg's CLT in order.

    \vskip 1em 

    \noindent \textit{Condition 1: Convergence of the sum of covariances.}
	We aim to verify the following condition:
	\begin{equation}\label{lindberg_cond_1}
		\sum_{\ell=1}^N\mathrm{Cov}(\tilde{\rVecX}_{N[:,\ell]}) \to \Sigma~\text{as}~N \to \infty,
	\end{equation}
	for some fixed covariance matrix $\Sigma$. The condition~\eqref{lindberg_cond_1} is verified as follows. For each $N$,
	\begin{align*}
		\sum_{\ell=1}^N \mathrm{Cov}(\tilde{\rVecX}_{N[:,\ell]})
		&=
		\biggl( \dfrac{1}{\sampleSize_1}+\dfrac{1}{\sampleSize_2} \biggr)^{-1}
		\left(	\sum_{i=1}^{\sampleSize_1}
		\frac{1}{\sampleSize_1^2}
		\bigl(
		\text{diag}(\probVec_{\tilde{\rvTwo}}) - \probVec_{\tilde{\rvTwo}} \probVec_{\tilde{\rvTwo}}^\top
		\bigr)
		+
		\sum_{j=1}^{\sampleSize_2}
		\frac{1}{\sampleSize_2^2}
		\bigl(
		\text{diag}(\probVec_{\tilde{\rvTwo}}) - \probVec_{\tilde{\rvTwo}} \probVec_{\tilde{\rvTwo}}^\top
		\bigr)
		\right)
		\\&=
		\text{diag}(\probVec_{\tilde{\rvTwo}}) - \probVec_{\tilde{\rvTwo}} \probVec_{\tilde{\rvTwo}}^\top.
	\end{align*}
	Since the covariance computed above depends on neither $N$ nor $\ell$, the condition~\eqref{lindberg_cond_1} holds for $\Sigma = \text{diag}(\probVec_{\tilde{\rvTwo}}) - \probVec_{\tilde{\rvTwo}} \probVec_{\tilde{\rvTwo}}^\top.$
	Now we check the second condition of Lindberg's CLT:

\vskip 1em

\noindent \textit{Condition 2: Lindberg's condition.} We aim to verify the following condition:
	\begin{equation}\label{lindberg_cond_2}
		\sum_{\ell=1}^N
		\mE
		\bigl[
		\| \tilde{\rVecX}_{N[:,\ell]} \|_2^2
		\indicator{
			\|\tilde{\rVecX}_{N[:,\ell]}\|_2 > \epsilon
		}
		\bigr]
		\to 0~\text{as}~N \to \infty,~\text{for any}~\epsilon > 0.
	\end{equation}
	The condition~\eqref{lindberg_cond_2} is verified as follows. For any fixed $\ell$, we have:
	\begin{align*}
		\|\tilde{\rVecX}_{N[:,\ell]}\|_2
		=
		\begin{cases}
			\sqrt{\biggl( \dfrac{1}{\sampleSize_1}+\dfrac{1}{\sampleSize_2} \biggr)^{-1}}
			\dfrac{1}{\sampleSize_1}
			\bigl\{
			\sum_{\vectorIndex = 1}^\alphabetSize
			\bigl(
			\mathds{1}(\tilde{\rvTwo}_{\ell} = \vectorIndex)
			-  \probVecElement{\tilde{\rvTwo}}{\vectorIndex}
			\bigr)^2
			\bigr\}^{1/2}		
			~&\text{if}~
			\ell=1, \ldots, \sampleSize_1,
			%
			%
			\\
			\sqrt{\biggl( \dfrac{1}{\sampleSize_1}+\dfrac{1}{\sampleSize_2} \biggr)^{-1}}
			\dfrac{1}{\sampleSize_2}
			\bigl\{
			\sum_{\vectorIndex = 1}^\alphabetSize
			\bigl(
			\mathds{1}(\tilde{\rvThree}_{\ell-\sampleSize_1} = \vectorIndex)
			-
			\probVecElement{\tilde{\rvTwo}}{\vectorIndex}
			\bigr)^2
			\bigr\}^{1/2}
			~&\text{if}~
			(\sampleSize_1 + 1) \leq \ell \leq ( \sampleSize_1 + \sampleSize_2).		
		\end{cases}
		\numberthis \label{appendix:proof:genRR:norm}
	\end{align*}
	Note that
	$ | \mathds{1}(\tilde{\rvTwo}_{\ell} = \vectorIndex)  - \probVecElement{ \tilde{\rvY}}{\vectorIndex} |\leq 1$
	and
	$| \mathds{1}(\tilde{\rvThree}_{\ell-\sampleSize_1} = \vectorIndex)   - \probVecElement{ \tilde{\rvY}}{\vectorIndex}
	| \leq 1$ for $\vectorIndex \in [\alphabetSize]$.
	Therefore, for any $\ell \in [N]$, we have the following upper bound:
	\begin{align*} 
		\|\tilde{\rVecX}_{N[:,\ell]}\|_2
		\leq
		\sqrt{\biggl( \dfrac{1}{\sampleSize_1}+\dfrac{1}{\sampleSize_2} \biggr)^{-1}}
		\frac{\sqrt{\alphabetSize}}{\min(\sampleSize_1, \sampleSize_2)},
	\end{align*}
	where the right-hand side is not random and does not depend on $\ell$. Therefore, when verifying the condition~\eqref{lindberg_cond_2}, we can pull the indicator out of the expectation and the sum:
	\begin{align*}
		\sum_{\ell=1}^N
		\mE
		\bigl[
		\| \tilde{\rVecX}_{N[:,\ell]} \|_2^2
		\indicator{
			\|\tilde{\rVecX}_{N[:,\ell]}\|_2 > \epsilon
		}
		\bigr]
		&\leq
		\sum_{\ell=1}^N
		\mE
		\Biggl[
		\| \tilde{\rVecX}_{N[:,\ell]} \|_2^2
		\mathds{1}\Biggl(\sqrt{\biggl( \dfrac{1}{\sampleSize_1}+\dfrac{1}{\sampleSize_2} \biggr)^{-1}}
		\frac{\sqrt{\alphabetSize}}{\min(\sampleSize_1, \sampleSize_2)}
		> \epsilon
		\Biggr)
		\Biggr]
		\\&=
		\indicator{
			\sqrt{\biggl( \dfrac{1}{\sampleSize_1}+\dfrac{1}{\sampleSize_2} \biggr)^{-1}}
			\frac{\sqrt{\alphabetSize}}{\min(\sampleSize_1, \sampleSize_2)}
			> \epsilon
		}
		\sum_{\ell=1}^N
		\mE
		\bigl[
		\| \tilde{\rVecX}_{N[:,\ell]} \|_2^2
		\bigr].
	\end{align*}
	From~\eqref{appendix:proof:genRR:norm}, it can be shown that 	$\sum_{\ell=1}^N
	\mE
	\bigl[
	\| \tilde{\rVecX}_{N[:,\ell]} \|_2^2
	\bigr] = 2\mathrm{tr}\bigl(
	\text{diag}(\probVec_{\tilde{\rvTwo}}) - \probVec_{\tilde{\rvTwo}} \probVec_{\tilde{\rvTwo}}^\top
	\bigr),$ where the right-hand side is finite and do not depend on $N$.
	For any $\sampleSize_1$ and $\sampleSize_2$ large enough, we have
	\begin{align*}
		\sqrt{\biggl( \dfrac{1}{\sampleSize_1}+\dfrac{1}{\sampleSize_2} \biggr)^{-1}}
		\frac{\sqrt{\alphabetSize}}{\min(\sampleSize_1, \sampleSize_2)}
		=
		\sqrt{\frac{\sampleSize_1  \sampleSize_2}{\sampleSize_1 + \sampleSize_2}}
		\frac{\sqrt{\alphabetSize}}{\min(\sampleSize_1, \sampleSize_2)}
		< \epsilon.
	\end{align*}
	Therefore, the condition~\eqref{lindberg_cond_2} is satisfied when $\sampleSize_1, \sampleSize_2 \to \infty$.
	Therefore by Lindberg's CLT, we have
	\begin{align*}
		\sum_{\ell=1}^N \tilde{\rVecX}_{N[:,\ell]} = 
		\sqrt{\biggl( \dfrac{1}{\sampleSize_1}+\dfrac{1}{\sampleSize_2} \biggr)^{-1}}
		(\hat{\mathbf{\mu}}_{ \tilde{\rvY} }
		-
		\hat{\mathbf{\mu}}_{ \tilde{\rvZ} }
		)
		\stackrel{d}{\to}
		\mathcal{N}
		\bigl(0, \text{diag}(\probVec_{\tilde{\rvTwo}}
		\bigr) - \probVec_{\tilde{\rvTwo}} \probVec_{\tilde{\rvTwo}}^\top	).
	\end{align*}
	By the weak law of large numbers and Slutsky's theorem, it can be shown that
	\begin{equation}\label{clt_final}
		\sqrt{\biggl( \dfrac{1}{\sampleSize_1}+\dfrac{1}{\sampleSize_2} \biggr)^{-1}}
		\mathrm{diag}(\hat{\probVec})^{-1/2}
		(\hat{\mathbf{\mu}}_{ \tilde{\rvY} }
		-
		\hat{\mathbf{\mu}}_{ \tilde{\rvZ} }
		)
		\stackrel{d}{\to}
		\mathcal{N}
		\bigl(
		0,
		I_\alphabetSize - 
		\mathrm{diag}(\probVec_{\tilde{\rvTwo}})^{-1/2}
		\probVec_{\tilde{\rvTwo}} \probVec_{\tilde{\rvTwo}}^\top
		\mathrm{diag}(\probVec_{\tilde{\rvTwo}})^{-1/2}
		\bigr).
	\end{equation}
	Since the covariance matrix in~\eqref{clt_final} is an identity matrix minus a rank-one matrix formed by an orthonormal vector, its eigenvalues are 0 (with multiplicity 1) and 1 (with multiplicity $\alphabetSize-1$). Thus, the covariance matrix is a projection matrix of rank $k-1$. By a standard result \citep[for example, Lemma 17.1 of][]{van_der_vaart_asymptotic_1998}, the test statistic in~\eqref{def:chi_statistic}, equivalent to the squared $\ell_2$ norm of the left-hand side of~\eqref{clt_final}, converges to a chi-square distribution with $\alphabetSize - 1$ degrees of freedom.
	This concludes the proof of Lemma~\ref{asymp_genrr_chi}.
	\end{proof}
	\subsubsection{Proof of Lemma~\ref{asymp_rappor_projchi}}\label{proof:asymp_rappor_projchi}
	The proof is a straightforward application of Lindberg's CLT.
    
    \vskip 1em
    
	\begin{proof} 
	Throughout the proof, assume that the null hypothesis $\probVec_{\rvTwo} = \probVec_{\rvZ}$ holds. Thus, for each $\vectorIndex \in [\alphabetSize]$, we have $\mE(\tilde{\rvTwo}_{1\vectorIndex}) = \mE(\tilde{\rvThree}_{1\vectorIndex})$. Without loss of generality, we  denote all $\mE(\tilde{\rvThree}_{1\vectorIndex})$ as $\mE(\tilde{\rvTwo}_{1\vectorIndex})$.
	The proof leverages the content and structure of the argument used in the proof for $T_{\sampleSize_1, \sampleSize_2}$ in Appendex~\ref{proof:asymp_genrr_chi}.
	To apply Lindeberg's CLT, we construct a triangular array sequence. Each row of the array corresponds to the pooled sample size $N \coloneqq \sampleSize_1 + \sampleSize_2$. Within each row, the columns are indexed by $\ell \in [N]$. For each $N$ and $\ell$, we define a $\alphabetSize$-dimensional random vector $\tilde{\rVecX}_{N[:,\ell]}$,whose $\vectorIndex$th element, denoted as $\tilde{\rVecX}_{N[\vectorIndex,\ell]}$, is given by: 
	\begin{align*}
		\tilde{\rVecX}_{N[\vectorIndex,\ell]} :=
		\begin{cases}
			\sqrt{\biggl( \dfrac{1}{\sampleSize_1}+\dfrac{1}{\sampleSize_2} \biggr)^{-1}}
			\dfrac{1}{\sampleSize_1}
			\bigl(
			\tilde{\rvTwo}_{\ell \vectorIndex}
			-
			\mE(\tilde{\rvTwo}_{1\vectorIndex})
			\bigr)
			~&\text{if}~
			1 \leq \ell \leq \sampleSize_1,
			\\
			\sqrt{\biggl( \dfrac{1}{\sampleSize_1}+\dfrac{1}{\sampleSize_2} \biggr)^{-1}}
			\dfrac{1}{\sampleSize_2}
			\bigl(
			\mE(\tilde{\rvTwo}_{1\vectorIndex})
			-
			\tilde{\rvThree}_{(\ell-\sampleSize_1)\vectorIndex}
			\bigr)
			~&\text{if}~
			(\sampleSize_1 + 1) \leq \ell \leq ( \sampleSize_1 + \sampleSize_2),
		\end{cases}
	\end{align*}
	for $\vectorIndex \in [\alphabetSize]$,
	where we recall from~\eqref{equation:rappor_expecation} that
	\begin{equation*}
		\mE(\tilde{\rvTwo}_{1\vectorIndex})
		=
		\frac{
			(e^{\privacyParameter/2}-1)
			\probVecElement{\rvTwo}{\vectorIndex} + 1
		}{
			e^{\privacyParameter/2}+1
		}.
	\end{equation*}
    Now we verify two conditions of  Lindberg's CLT in order.

    \vskip 1em

    \noindent \textit{Condition 1: Convergence of the sum of covariances.}
	To verify  the  condition~\eqref{lindberg_cond_1}, first note that:
	\begin{align*}
		\sum_{\ell=1}^N \mathrm{Cov}(\tilde{\rVecX}_{N[:,\ell]})
		&=
		\biggl( \dfrac{1}{\sampleSize_1}+\dfrac{1}{\sampleSize_2} \biggr)^{-1}
		\left(	\sum_{i=1}^{\sampleSize_1}
		\frac{1}{\sampleSize_1^2}
		\mathrm{Cov}(\tilde{\rVecY_i})
		+
		\sum_{j=1}^{\sampleSize_2}
		\frac{1}{\sampleSize_2^2}
		\mathrm{Cov}(\tilde{\rVecZ_j})
		\right)
		\\&=
		\mathrm{Cov}(\tilde{\rVecY}_1)
		\\&=
		\left(\frac{e^{\privacyParameter/2}-1}{e^{\privacyParameter/2}+1}
		\right)^2
		\bigl(
		\mathrm{diag}(\probVec_{\rvTwo}) - \probVec_{\rvTwo} {\probVec^\top_{\rvY}}
		\bigr)
		+
		\frac{e^{\privacyParameter/2}}{(e^{\privacyParameter/2}+1)^2}
		I_d,
	\end{align*}
	where the last calculation is from Lemma~\ref{lemma:rappor_var}.
	Since the covariance computed above depends on neither \( N \) nor \( \ell \), the condition~\eqref{lindberg_cond_1} is satisfied with $\Sigma = \mathrm{Cov}(\tilde{\rVecY}_1)$.

    \vskip 1em 
    \noindent \textit{Condition 2: Lindberg's condition.}
	To verify  the  condition~\eqref{lindberg_cond_2} , first note that
	$0 \leq \tilde{\rvTwo}_{\ell \vectorIndex}
	-
	\mE(\tilde{\rvTwo}_{1\vectorIndex}) \leq 1$ and
	$0 \leq \mE(\tilde{\rvTwo}_{1\vectorIndex})
	-
	\tilde{\rvThree}_{(\ell-\sampleSize_1)\vectorIndex} \leq 1$.
	Therefore by the same analysis as the proof of $T_{\sampleSize_1, \sampleSize_2}$, the  condition~\eqref{lindberg_cond_2} is also satisfied.
	Therefore, by Linbderg's CLT, we have:
	\begin{align*}
		\sqrt{\biggl( \dfrac{1}{\sampleSize_1}+\dfrac{1}{\sampleSize_2} \biggr)^{-1}}
		(
		\check{\rVecY} 
		-
		\check{\rVecZ} 
		)
		\stackrel{d}{\to}
		\mathcal{N}(0, \mathrm{Cov}(\tilde{\rVecY}_1) ).
		\numberthis \label{appendix:proof:rappor_mean_diff}
	\end{align*}
	Note that $\mathrm{Cov}(\tilde{\rVecY}_1)$ has an eigenvector $\mathbf{1}$, which corresponds to an eigenvalue which is neither $0$ nor $1$.
	To turn the covariance matrix  into a projection matrix, 
	we delete the eigenvector $\mathbf{1}$ by pre-multiplying the random vector in the left-hand side of~\eqref{appendix:proof:rappor_mean_diff} by $\mathrm{Cov}(\tilde{\rVecY}_1)^{-1/2} \Pi$.
	Then we have:
	\begin{align*}
		\mathrm{Cov}(\tilde{\rVecY_1})^{-1/2} \Pi
		\sqrt{\biggl( \dfrac{1}{\sampleSize_1}+\dfrac{1}{\sampleSize_2} \biggr)^{-1}}
		(
		\check{\rVecY} 
		-
		\check{\rVecZ} 
		)
		\stackrel{d}{\to}
		\mathcal{N}
		\bigl(
		0,	\mathrm{Cov}(\tilde{\rVecY}_1)^{-1/2} \Pi \mathrm{Cov}(\tilde{\rVecY}_1) \Pi \mathrm{Cov}(\tilde{\rVecY}_1)^{-1/2}	\bigr).
	\end{align*}
	For a map $h : A \to A^{-1/2}$,  define $\mathrm{Disc}(h) := \{A \in \mathbb{R}^{p \times p}: h~\text{is not continuous at}~A\}$.
	By Lemma 5.6 of \citet{Gaboardi2018LDPChisq}, $\mathrm{Cov}(\tilde{\rVecY}_1)$ is invertible for any $\privacyParameter > 0$ and any $\mathbf{p}_Y \in \mathrm{int}(\Omega)$. Therefore  we have $ \mathrm{Cov}(\tilde{\rVecY}_1 \bigr) \notin \mathrm{Disc}(h)$ almost surely.
	Thus by the weak law of large numbers and the continuous mapping theorem, we have
	$	\hat{\Sigma}^{-1/2} \stackrel{p}{\to} \mathrm{Cov}(\tilde{\rVecY}_1)^{-1/2}$. 
	Then by Slutsky's theorem, 
	we have	\begin{align*}
		\hat{\Sigma}^{-1/2} \Pi
		\sqrt{\biggl( \dfrac{1}{\sampleSize_1}+\dfrac{1}{\sampleSize_2} \biggr)^{-1}}
		(
		\check{\rVecY} 
		-
		\check{\rVecZ} 
		)
		\stackrel{d}{
			\to}
		\mathcal{N}(0,
		\mathrm{Cov}(\tilde{\rVecY}_1)^{-1/2} \Pi \mathrm{Cov}(\tilde{\rVecY}_1) \Pi \mathrm{Cov}(\tilde{\rVecY}_1)^{-1/2}).
		\numberthis \label{appendix:proof:rappor_project_statistic}
	\end{align*}
	By the analysis of~\citet{Gaboardi2018LDPChisq},	
	$\mathrm{Cov}(\tilde{\rVecY}_1)^{-1/2} \Pi \mathrm{Cov}(\tilde{\rVecY_1}) \Pi \mathrm{Cov}(\tilde{\rVecY}_1)^{-1/2}$ is an identity matrix except one of the entries on the diagonal being zero, thus a projection matrix with rank $\alphabetSize - 1$.
	Therefore, by a standard result \citep[for example, Lemma 17.1][]{van_der_vaart_asymptotic_1998}, the test statistic~\eqref{def:statistic:projchi}, which is equivalent to squared $\ell_2$ norm of the random vector on the left-hand side of~\eqref{appendix:proof:rappor_project_statistic}, converges to a chi-square distribution with $\alphabetSize - 1$ degrees of freedom. This concludes the proof of Lemma \ref{asymp_rappor_projchi}.
	\end{proof}
	\section{Suboptimality of Generalized Randomized Response}\label{genrr_suboptimal_theory}
	This section illustrates that  the generalized randomized response mechanism in~\eqref{def:genrr} can lead to suboptimal power. Specifically, we show that the \texttt{GenRR}+$\ell_2$ test performs suboptimally in certain privacy regimes where the number of categories $\alphabetSize$ increases with the sample size $\sampleSize_1$, but the privacy level $\privacyParameter$ is fixed. This negative result is further supported by our numerical studies in Section~\ref{section:simulation}. 
	
	\begin{theorem}[Asymptotic powerlessness of \texttt{GenRR}+$\ell_2$ test]\label{genrr_suboptimal}
		Let $(\probVec_\rvTwo, \probVec_\rvThree)$ be a pair of  data-generating multinomial distributions with $\alphabetSize$ categories, where  $\probVec_\rvThree$ is a uniform distribution and $\probVec_\rvTwo$ is a perturbed uniform distribution parametrized by $\epsilon > 0$. Formally, for each $\vectorIndex \in [ \alphabetSize]$, the $m$th entry of $\probVec_\rvTwo$ and $\probVec_\rvThree$ are defined as:
		\begin{equation}\label{appendix:genrr_powerless:alternative}
			\probVecElement{\rvTwo}{\vectorIndex} := \frac{1}{\alphabetSize} + \frac{(-1)^\vectorIndex \epsilon}{\sqrt{\alphabetSize}}, \quad
			\probVecElement{\rvThree}{\vectorIndex} := \frac{1}{\alphabetSize},
		\end{equation}
		ensuring that $\|\probVec_\rvTwo - \probVec_\rvThree\|_2 = \epsilon$.
		Assume the privacy parameter $ \privacyParameter > 0 $ is sufficiently small so that the minimax testing rate, given in equation~\eqref{rate:twosample_disc}, satisfies
		\begin{equation}
			\rho^\ast_{\sampleSize_1, \sampleSize_2, \privacyParameter} \asymp \frac{\alphabetSize^{1/4}}{(\sampleSize_1 \privacyParameter^2)^{1/2}}.
		\end{equation}
		\noindent
		For a sufficiently large, but fixed constant $C >0$, set
		\begin{equation}\label{epsilon_C}
			\epsilon = \frac{C \cdot \alphabetSize^{1/4}}{(\sampleSize_1 \privacyParameter^2)^{1/2}},
		\end{equation}
		and consider the regime where
		\begin{equation}\label{appendix:genrr_powerless:asymptotic_regime}
			\sqrt{\alphabetSize} \epsilon = \frac{C \cdot \alphabetSize^{3/4}}{(\sampleSize_1 \privacyParameter^2)^{1/2}} \to 0,
		\end{equation}
		which guarantees that $0 < \probVecElement{\rvTwo}{\vectorIndex} < 1$  in~\eqref{appendix:genrr_powerless:alternative} for each $\vectorIndex \in [\alphabetSize]$ and for sufficiently large $\sampleSize_1$. Under this regime, the \textnormal{\texttt{GenRR}}+$\ell_2$ test is asymptotically powerless, meaning  its asymptotic power becomes at most the size. 
	\end{theorem}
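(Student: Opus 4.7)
The plan is to show that, in the specified regime, the two-sample U-statistic $U_{\sampleSize_1,\sampleSize_2}$ in~\eqref{def:statistic_elltwo} applied to the \texttt{GenRR} $\privacyParameter$-LDP views has asymptotically identical distributions under $H_0$ and $H_1$ (up to a negligible mean shift and a negligible variance difference), forcing any level-$\maxErrorTypeOne$ calibrated test---including the permutation-calibrated \texttt{GenRR}$+\ell_2$ test---to attain asymptotic power no greater than $\maxErrorTypeOne$.

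First I would compute the distributional effect of \texttt{GenRR}. Direct algebra from~\eqref{genrr_transformed_p} yields
$$
\probVec_{\tilde{\rvTwo}} - \probVec_{\tilde{\rvThree}} = \frac{e^\privacyParameter-1}{e^\privacyParameter+\alphabetSize-1}\bigl(\probVec_\rvTwo - \probVec_\rvThree\bigr),
$$
so the signal is contracted by $\lambda_\privacyParameter := (e^\privacyParameter-1)/(e^\privacyParameter+\alphabetSize-1) \asymp \privacyParameter/\alphabetSize$ in the small-$\privacyParameter$ regime of interest. Since \texttt{GenRR} views are one-hot random vectors, $\mE[U_{\sampleSize_1,\sampleSize_2}] = \|\probVec_{\tilde{\rvTwo}} - \probVec_{\tilde{\rvThree}}\|_2^2 = \lambda_\privacyParameter^2 \epsilon^2$, while under $H_0$ the pooled private views are i.i.d. uniform multinomial with covariance $\Sigma = \alphabetSize^{-1}\mathbf{I} - \alphabetSize^{-2}\mathbf{1}\mathbf{1}^\top$, whose squared trace is $\mathrm{tr}(\Sigma^2) = \alphabetSize^{-1} - \alphabetSize^{-2}$.

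Second I would identify the order of the null standard deviation. The kernel $h_{ts}$ in~\eqref{equation:twosample_kernel} is first-order Hoeffding-degenerate under $H_0$ (a short calculation gives $\mE[h_{ts}(\mathbf{Y}_1,\mathbf{Y}_2;\mathbf{Z}_1,\mathbf{Z}_2)\mid \mathbf{Y}_1]=0$), so the leading term of $\mathrm{Var}_{H_0}(U_{\sampleSize_1,\sampleSize_2})$ is of order $\mathrm{tr}(\Sigma^2)/\sampleSize_1^2$, yielding $\sigma_0 := \mathrm{sd}_{H_0}(U_{\sampleSize_1,\sampleSize_2}) \asymp 1/(\sqrt{\alphabetSize}\,\sampleSize_1)$. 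Substituting $\epsilon$ from~\eqref{epsilon_C} then gives
$$
\frac{\mE[U_{\sampleSize_1,\sampleSize_2}]}{\sigma_0} \asymp \frac{\privacyParameter^2\epsilon^2/\alphabetSize^2}{1/(\sqrt{\alphabetSize}\,\sampleSize_1)} = \frac{\privacyParameter^2 \sampleSize_1 \epsilon^2}{\alphabetSize^{3/2}} \asymp \frac{C^2}{\alphabetSize} \longrightarrow 0
$$
under regime~\eqref{appendix:genrr_powerless:asymptotic_regime}, which forces $\alphabetSize\to\infty$.

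Third I would apply a two-sample extension of the high-dimensional CLT behind Theorems~\ref{appendix:genrr_powerless:theorem_null}--\ref{appendix:genrr_powerless:theorem:alternative_dist}. Verifying C1 ($\mathrm{tr}(\Sigma^4)/\mathrm{tr}(\Sigma^2)^2 \asymp \alphabetSize^{-1}\to 0$) and the two-sample analogue of C2 in this uniform near-null regime gives $U_{\sampleSize_1,\sampleSize_2}/\sigma_0 \stackrel{d}{\to} \mathcal{N}(0,1)$ under $H_0$. Under $H_1$, the covariance $\tilde\Sigma_1 = \mathrm{Cov}_{H_1}(\tilde{\mathbf{Y}}_1)$ differs from $\Sigma$ by a perturbation of order $\lambda_\privacyParameter\epsilon/\sqrt{\alphabetSize} = o(\alphabetSize^{-1})$, so $\mathrm{tr}(\tilde\Sigma_1^2)/\mathrm{tr}(\Sigma^2)\to 1$ and $\sigma_1/\sigma_0 \to 1$; the same CLT then yields $(U_{\sampleSize_1,\sampleSize_2}-\mE[U_{\sampleSize_1,\sampleSize_2}])/\sigma_1 \stackrel{d}{\to} \mathcal{N}(0,1)$. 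Combined with $\mE[U_{\sampleSize_1,\sampleSize_2}]/\sigma_0 \to 0$, the standardised statistic shares the same Gaussian limit under both hypotheses. Finally, a standard local-alternative argument shows that the permutation distribution of $U_{\sampleSize_1,\sampleSize_2}$ also converges to $\mathcal{N}(0,\sigma_0^2)$ under $H_1$, so its $(1-\maxErrorTypeOne)$-quantile gives rejection probability tending to $\maxErrorTypeOne$. The main technical obstacle is the two-sample generalisation of the high-dimensional U-statistic CLT, in particular verifying C2 in the near-null regime; this reduces to bounding mixed fourth moments of quadratic forms in uniform-multinomial indicators along the template of~\citet{kim_multinomial_2020}, with a secondary subtlety being consistency of the permutation quantile, which follows once the unconditional CLT is in hand.
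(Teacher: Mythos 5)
Your underlying intuition is correct and matches the paper's: the \texttt{GenRR} mechanism contracts the signal by $w := (e^\alpha - 1)/(e^\alpha + k - 1) \asymp \alpha/k$, and the resulting signal-to-noise ratio $w^2\epsilon^2 \big/ \bigl(1/(\sqrt{k}\,n_1)\bigr) \asymp C^2/k \to 0$ under regime~\eqref{appendix:genrr_powerless:asymptotic_regime}. The genuine gap, which you yourself flag, is that you invoke ``a two-sample extension of the high-dimensional CLT'' and ``a standard local-alternative argument'' for the permutation quantile without supplying either. Neither is off-the-shelf in this setting: Theorems~\ref{appendix:genrr_powerless:theorem_null} and~\ref{appendix:genrr_powerless:theorem:alternative_dist} are strictly one-sample results, and a conditional (permutation) CLT for a degenerate two-sample $U$-statistic with growing $k$ is not obtained by a routine local-alternative argument; you would need to verify two-sample analogues of conditions C1--C3 and a quenched CLT for the permuted statistic.

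The paper sidesteps both obstacles by a one-sample reduction: since $\mathbf{p}_Z$ is the known uniform vector, it analyzes the one-sample uniformity statistic $U_{n_1}$ in~\eqref{appendix:genrr_powerless:test_statistic} (the $n_2 = \infty$ specialization of~\eqref{def:statistic_elltwo}) calibrated against the asymptotic Gaussian critical value~\eqref{appendix:genrr_powerless:critical_value}, so the one-sample CLTs of~\citet{kim_multinomial_2020} apply directly. It then verifies C1--C3 for the perturbed-uniform \texttt{GenRR} output via explicit norm computations (see~\eqref{power_of_norms_calculation}) and shows the power function~\eqref{appendix:genrr_powerless:power_function} collapses to $\Phi(-z_\gamma) = \gamma$. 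Your route, if completed, would yield a somewhat stronger conclusion (directly covering the finite-$n_2$ permutation test), but as stated it only sketches the architecture and leaves the two hardest steps---the two-sample degenerate CLT and the permutation-quantile consistency---unproved, whereas the paper's reduction makes every ingredient citable.
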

	Given the setting in~\eqref{epsilon_C} from Theorem~\ref{genrr_suboptimal}, an optimal test would achieve non-trivial power greater than the significance level by selecting a sufficiently large $C$. However, Theorem~\ref{genrr_suboptimal} proves that the \textnormal{\texttt{GenRR}}+$\ell_2$ test becomes asymptotically powerless for any fixed $C > 0$, thereby underscoring its suboptimality.
    
    \vskip 1em
    
	\begin{proof} For the proof,
		we analyze the power function of the test statistic. To achieve this, we use high-dimensional asymptotics for U-statistics, specifically Corollary 3.3 and Theorem 3.3 from \citet{kim_multinomial_2020}, which are restated as Theorem~\ref{appendix:genrr_powerless:theorem_null} and Theorem~\ref{appendix:genrr_powerless:theorem:alternative_dist}, respectively.
		
		Let $w :=(e^\privacyParameter-1)/(e^\privacyParameter + \alphabetSize-1)$.
		Recall from~\eqref{genrr_transformed_p} that
		the $\privacyParameter$-LDP view  $\{\tilde{\rvTwo}_i\}_{i\in [\sampleSize_1]}$ via \texttt{GenRR} mechanism is equivalent to a random sample drawn from a mixture of $\probVec_{\rvTwo}$ and $\probVec_{\rvThree}$ defined as follows:
		\begin{align*}
			\probVec_{\tilde{\rvTwo}}
			&:=
			\frac
			{e^{\alpha}}
			{e^{\alpha} + \alphabetSize - 1 }
			\probVec_{\rvTwo}
			+
			\frac
			{1}
			{e^{\alpha} + \alphabetSize - 1 }
			(\mathbf{1}_\alphabetSize-\probVec_{\rvTwo})	
			\\&=
			\frac
			{e^{\alpha}-1}
			{e^{\alpha} + \alphabetSize - 1 }
			\probVec_{\rvTwo}
			+
			\frac
			{\alphabetSize}
			{e^{\alpha} + \alphabetSize - 1 }
			\frac{1}{\alphabetSize}
			\mathbf{1}_\alphabetSize
			\\&=
			w\probVec_{\rvTwo} + (1-w)\probVec_{\rvThree},
			\numberthis
			\label{appendix:genrr_powerless:w}
		\end{align*}
		Similarly, $\{\tilde{\rvThree}_i\}_{i \in [n_2]}$
		can be viewed as a random sample drawn from
		$\probVec_{\tilde{\rvThree}} := 	w\probVec_\rvThree +  (1-w) \probVec_{\rvThree} = \probVec_{\rvThree}$.
		Note that the $\ell_2$ distance between the probability vectors shrinks to $\|\probVec_{\tilde{\rvTwo}} - \probVec_{\tilde{\rvThree}} \|_2 = w \epsilon$.
		
		For convenience, let $\{\tilde{\vectorize{\rvTwo}}_i\}_{i\in [\sampleSize_1]}$ be the one-hot vectorized version of $\{\tilde{\rvTwo}_i\}_{i\in [\sampleSize_1]}$.
		We define a one-sample U-statistic for uniformity testing as follows:
		\begin{equation}\label{appendix:genrr_powerless:test_statistic}
			U_{\sampleSize_1}
			:=	
			\frac{2}{\sampleSize_1(\sampleSize_1-1)}
			\sum_{1 \leq \sampleIndexOne < \sampleIndexTwo \leq \sampleSize_1}
			h(
			\tilde{\vectorize{\rvTwo}}_\sampleIndexOne, \tilde{\vectorize{\rvTwo}}_\sampleIndexTwo
			),
		\end{equation}
		where
		$
		h(
		\tilde{\vectorize{\rvTwo}}_\sampleIndexOne, \tilde{\vectorize{\rvTwo}}_\sampleIndexTwo
		):=
		\bigl(
		\tilde{\vectorize{\rvTwo}}_\sampleIndexOne
		- 
		\probVec_{\rvThree}
		\bigr)^\top
		\bigl(
		\tilde{\vectorize{\rvTwo}}_\sampleIndexTwo
		- 
		\probVec_{\rvThree}
		\bigr)$,
		which is a special case of the two-sample statistic $U_{\sampleSize_1, \sampleSize_2}$ in~\eqref{def:statistic_elltwo} with $\sampleSize_2 = \infty$. Now we analyze the power function of the test based on $U_{\sampleSize_1}$ by specifying the critical value,  the distribution of the test statistic under the alternative, and the power function in order.

\vskip 1em

\noindent \textit{Critical value.}
		We  specify the critical value by establishing the asymptotic normality of the test statistic under the null by utilizing Theorem~\ref{appendix:genrr_powerless:theorem_null}.
		Since the statistic~\eqref{appendix:genrr_powerless:test_statistic} is equivalent to the statistic $U_I$ in~\eqref{def:U_stat_kim_2020} of Theorem~\ref{appendix:genrr_powerless:theorem_null}, it suffices to show that $\sampleSize_1 / \sqrt{\alphabetSize} \to \infty$ to establish the asymptotic normality.
		Note that the assumption	$\sqrt{\alphabetSize} \epsilon = C
		\alphabetSize^{3/4}
		/
		(\sampleSize_1 \privacyParameter^2)^{1/2} \to 0$~in \eqref{appendix:genrr_powerless:asymptotic_regime} implies $\sampleSize_1 / \sqrt{\alphabetSize} \to \infty$ for fixed $\privacyParameter$. 
		Therefore, under the null of $\probVec_\rvTwo =\probVec_\rvThree$, which implies  $
		\probVec_{\tilde{\rvTwo}} =
		w \probVec_\rvThree + (1-w)\probVec_\rvThree = \probVec_\rvThree$, we have
		\begin{equation*}
			\sqrt{\binom{\sampleSize_1}{2}} \frac{U_{\sampleSize_1}}{\sqrt{
					\alphabetSize^{-1}  (1 - \alphabetSize^{-1})}} \overset{d}{\longrightarrow} \mathcal{N}(0, 1).   
		\end{equation*}
		\noindent
		Based on this asymptotic normality, an asymptotically exact critical value is specified as follows:
		\begin{align*}
			\text{Reject}~H_0~\text{if}~	U_{\sampleSize_1} \geq
			z_{\gamma}
			\sqrt{
				\frac{
					2(1 -  1/ \alphabetSize)}{
					\alphabetSize n_1(n_1-1)}
			}.
			\numberthis \label{appendix:genrr_powerless:critical_value}
		\end{align*}
		Here $z_{\gamma}$ denotes the upper $\gamma$ quantile of $W \sim \mathcal{N}(0,1)$ satisfying $\mP(W > z_\gamma) = \gamma$. 

\vskip 1em

\noindent \textit{Alternative distribution.}
		Next we derive that the asymptotic distribution of the test statistic under the alternative specified in~\eqref{appendix:genrr_powerless:alternative}.
		By substituting $
		\|
		\probVec_{\tilde{\rvTwo}}
		-
		\probVec_{\rvThree}
		\|_2^2 = w^2 \epsilon^2
		$
		into~\eqref{one_sample_asymptotic_normal_theorem},
		we can establish the following asymptotic normality:
		\begin{equation}\label{genrr_ustat_asympototic_dist}
			\sqrt{
				\frac{
					\sampleSize_1(\sampleSize_1-1)
				}{
					2
				} 
			}
			\frac{
				U_{\sampleSize_1}
				-
				w^2 \epsilon^2}{
				\sqrt{
					\mathrm{tr}(\Sigma_{\tilde{\rvTwo}}^2)
					+
					2(\sampleSize_1 - 1)
					(\probVec_{\tilde{\rvTwo}} - \probVec_\rvThree)^\top 
					\Sigma_{\tilde{\rvTwo}}
					(\probVec_{\tilde{\rvTwo}} - \probVec_\rvThree)	
				}
			}
			\stackrel{d}{\to}
			\mathcal{N}(0,1),
		\end{equation}
		provided that the conditions C1, C2, and C3 of  Theorem~\ref{appendix:genrr_powerless:theorem:alternative_dist} hold.
		We verify these conditions in order.

\vskip 1em 

\noindent \textit{Verification of the condition C1.} The condition C1 is restated as follows:
			\begin{equation*}
				\frac{
					\mathrm{tr}(\Sigma_{\tilde{\rvTwo}}^4)
				}{
					\bigl\{
					\mathrm{tr}(\Sigma_{\tilde{\rvTwo}}^2)
					\bigr\}^2   
				}
				\to
				0.
			\end{equation*}
			We first analyze the denominator  $\bigl\{
			\mathrm{tr}(\Sigma_{\tilde{\rvTwo}}^2)
			\bigr\}^2 $. Since $\Sigma_{\tilde{\rvTwo}}  = \mathrm{diag}(\probVec_{\tilde{\rvTwo}}) - \probVec_{\tilde{\rvTwo}} \probVec_{\tilde{\rvTwo}}^\top$,
			we have 
			\begin{equation*}
				\Sigma_{\tilde{\rvTwo}}^2
				= 
				\{ \mathrm{diag}(\probVec_{\tilde{\rvTwo}})
				\}^2
				+
				( \probVec_{\tilde{\rvTwo}} \probVec_{\tilde{\rvTwo}}^\top )^2
				-
				\mathrm{diag}(\probVec_{\tilde{\rvTwo}})\probVec_{\tilde{\rvTwo}} \probVec_{\tilde{\rvTwo}}^\top
				-
				\probVec_{\tilde{\rvTwo}} \probVec_{\tilde{\rvTwo}}^\top
				\mathrm{diag}(\probVec_{\tilde{\rvTwo}})
				. 
			\end{equation*}
			\sloppy Utilizing the equalities
			$\bigl(\probVec_{\tilde{\rvTwo}} \probVec_{\tilde{\rvTwo}}^\top\bigr)^2 = \|\probVec_{\tilde{\rvTwo}}\|_2^2 \probVec_{\tilde{\rvTwo}} \probVec_{\tilde{\rvTwo}}^\top$,
			%
			$
			\mathrm{tr}
			\bigl(
			\probVec_{\tilde{\rvTwo}} \probVec_{\tilde{\rvTwo}}^\top
			\bigr) = \|
			\probVec_{\tilde{\rvTwo}}\|_2^2$,
			and
			$\mathrm{tr}\bigl(\text{diag}(\probVec_{\tilde{\rvTwo}})\probVec_{\tilde{\rvTwo}} \probVec_{\tilde{\rvTwo}}^\top\bigr)
			=
			\probVec_{\tilde{\rvTwo}}^\top \text{diag}(\probVec_{\tilde{\rvTwo}}) \probVec_{\tilde{\rvTwo}}
			$,
			we can calculate $\mathrm{tr}(\Sigma_{\tilde{\rvTwo}}^2)$ as
			\begin{equation*}
				\mathrm{tr}(\Sigma_{\tilde{\rvTwo}}^2)
				=
				\|\probVec_{\tilde{\rvTwo}}\|_2^2
				+
				\|
				\probVec_{\tilde{\rvTwo}}
				\|_2^4
				-
				2
				\|\probVec_{\tilde{\rvTwo}}\|_3^3.
			\end{equation*}
			In a similar manner, we can expand the numerator term as follows:
			\begin{equation*}
				\mathrm{tr}(\Sigma_{\tilde{Y}}^4)
				=
				\|\mathbf{p}_{\tilde{Y}}\|_4^4 + \|\mathbf{p}_{\tilde{Y}}\|_2^8 + 2\|\mathbf{p}_{\tilde{Y}}\|_3^6
				+
				4\|\mathbf{p}_{\tilde{Y}}\|_2^2 \|\mathbf{p}_{\tilde{Y}}\|_4^4
				- 4\|\mathbf{p}_{\tilde{Y}}\|_2^4\|\mathbf{p}_{\tilde{Y}}\|_3^3 - 4\|\mathbf{p}_{\tilde{Y}}\|_5^5.
			\end{equation*}
			Therefore, it suffices to study the asymptotic behavior of
			\begin{align*}
					\mathrm{tr}&(\Sigma_{\tilde{\rvTwo}}^4)
				/
					\bigl\{
					\mathrm{tr}(\Sigma_{\tilde{\rvTwo}}^2)
					\bigr\}^2   
				\\&=
				\frac{
					\bigl(
					\|\mathbf{p}_{\tilde{{Y}}}\|_2^8
					- 4\|\mathbf{p}_{\tilde{{Y}}}\|_2^4\|\mathbf{p}_{\tilde{{Y}}}\|_3^3
					+ 2\|\mathbf{p}_{\tilde{{Y}}}\|_3^6
					\bigr)
					+
					\|\mathbf{p}_{\tilde{{Y}}}\|_4^4
					+
					4\|\mathbf{p}_{\tilde{{Y}}}\|_2^2 \|\mathbf{p}_{\tilde{{Y}}}\|_4^4
					- 4\|\mathbf{p}_{\tilde{{Y}}}\|_5^5
				}{
					\bigl(
					\|
					\probVec_{\tilde{\rvTwo}}
					\|_2^8
					-
					4
					\|
					\probVec_{\tilde{\rvTwo}}
					\|_2^4
					\|\probVec_{\tilde{\rvTwo}}\|_3^3
					+
					4
					\|\probVec_{\tilde{\rvTwo}}\|_3^6
					\bigr)
					+
					\|\probVec_{\tilde{\rvTwo}}\|_2^4
					+
					2
					\|\probVec_{\tilde{\rvTwo}}\|_2^2
					\|
					\probVec_{\tilde{\rvTwo}}
					\|_2^4
					-
					4
					\|\probVec_{\tilde{\rvTwo}}\|_2^2
					\|\probVec_{\tilde{\rvTwo}}\|_3^3
				}
				\numberthis \label{trace_ratio_norm_expression}
				,
			\end{align*}
			where the parentheses group together terms of the same order.
			\noindent
			Since 
			$\probVecElement{\rvTwo}{\vectorIndex}
			=
			1/\alphabetSize
			+
			(-1)^\vectorIndex
			\epsilon
			/
			\sqrt{\alphabetSize}$, we have
			\begin{equation}\label{appendix:genrr_powerless:probvec_o1}
				\probVecElement{\tilde{\rvTwo}}{\vectorIndex}
				=
				\frac{w}{\alphabetSize}
				+
				\frac{
					w
					(-1)^\vectorIndex
					\sqrt{k}\epsilon }{\alphabetSize}
				+
				\frac{1}{\alphabetSize}
				-
				\frac{w}{\alphabetSize}
				=
				\frac{1}{\alphabetSize}
				+
				\frac{(-1)^m w \epsilon}{
					\sqrt{\alphabetSize}
				},
			\end{equation}
			for $\vectorIndex \in [\alphabetSize]$.
			Utilizing~\eqref{appendix:genrr_powerless:probvec_o1} and the fact that $\alphabetSize$ is an even integer, the powers of norms of $\probVec_{\tilde{\rvTwo}}$ are calculated
			as follows:
			\begin{equation} \label{power_of_norms_calculation}
				\begin{aligned}
					& \|\probVec_{\tilde{\rvTwo}}\|_2^2
					=
					\frac{1}{\alphabetSize} + w^2 \epsilon^2
					, \ 
					\|\probVec_{\tilde{\rvTwo}}\|_3^3
					=
					\frac{1}{\alphabetSize^2}
					+
					\frac{3w^2 \epsilon^2}{\alphabetSize}
					, \\
					& \|\probVec_{\tilde{\rvTwo}}\|_4^4 = \frac{1}{\alphabetSize^3}
					+
					6
					\frac{w^2 \epsilon^2}{\alphabetSize^2}
					+
					\frac{w^4 \epsilon^4}{\alphabetSize}, \
					\|\probVec_{\tilde{\rvTwo}}\|_5^5 = \frac{1}{\alphabetSize^4}
					+
					10
					\frac{w^2 \epsilon^2}{\alphabetSize^3}
					+
					5
					\frac{w^4 \epsilon^4}{\alphabetSize^2}.
				\end{aligned}
			\end{equation}
			By substituting~\eqref{power_of_norms_calculation} into~\eqref{trace_ratio_norm_expression}, 
			and recalling that $w\epsilon$ is of order $\alphabetSize^{-3/4}\sampleSize_1^{-1/2}$,
			we find that among the terms  in~\eqref{trace_ratio_norm_expression}, the term that converges to 0 the most slowly is $-4/k^3$,
			which comes from $-
			4
			\|\probVec_{\tilde{\rvTwo}}\|_2^2
			\|\probVec_{\tilde{\rvTwo}}\|_3^3$
			in the denominator. Consequently, the quantity in~\eqref{power_of_norms_calculation} converges to 0, confirming condition C1 of Theorem~\ref{appendix:genrr_powerless:theorem:alternative_dist}.

\vskip 1em

			\noindent \textit{Verification of the condition C2.}
			The condition C2 is recalled below as
			\begin{equation}\label{condition_C2_restated}
				\dfrac{
					\mathbb{E}[W_1]
					+
					\sampleSize_1
					\mathbb{E}
					[W_2]
				}{
					\sampleSize_1^2
					\big\{
					\mathrm{tr}(\Sigma_{\tilde{\rvTwo}}^2)
					\big\}^2
				} \to 0,
			\end{equation}
			where
			\begin{align*}
				W_1&:=
				\bigl\{ (\tilde{\vectorize{\rvTwo}}_1 - \probVec_{\rvThree})^\top (\tilde{\vectorize{\rvTwo}}_2
				-
				\probVec_{\rvThree})
				\bigr\}^4,
				\\
				W_2&:=
				\bigl\{ (\tilde{\vectorize{\rvTwo}}_1 - \probVec_{\rvThree})^\top (\tilde{\vectorize{\rvTwo}}_2
				-
				\probVec_{\rvThree})
				\bigr\}^2
				\bigl\{ (\tilde{\vectorize{\rvTwo}}_1 - \probVec_{\rvThree})^\top (\tilde{\vectorize{\rvTwo}}_3
				-
				\probVec_{\rvThree})
				\bigr\}^2.
			\end{align*}
			We separately analyze the terms in~\eqref{condition_C2_restated}.
			First  for $
			\mathbb{E}[W_1]
			$,
			the random variable $W_1$ takes two distinct values:
			\begin{align*}
				W_1
				=
				\begin{cases}
					\biggl(1-\dfrac{1}{\alphabetSize}\biggr)^4, & 
					\tilde{\vectorize{\rvTwo}}_1 = \tilde{\vectorize{\rvTwo}}_2,
					\\[1em]
					~\dfrac{1}{\alphabetSize^4},
					& \text{otherwise.}
				\end{cases}	
			\end{align*}
			Observe that, according to~\eqref{appendix:genrr_powerless:probvec_o1}, we have
			\begin{equation}\label{prob_m_m'}
				\mathbb{P}(\tilde{\vectorize{\rvTwo}}_1 = \tilde{\vectorize{\rvTwo}}_2)
				=
				\sum_{\vectorIndex=1}^\alphabetSize
				\probVecElement{\tilde{\rvTwo}}{\vectorIndex}^2
				= 
				\sum_{\vectorIndex=1}^\alphabetSize
				\biggl(
				\frac{1}{\alphabetSize}
				+
				\frac{
					(-1)^m w \epsilon
				}{
					\sqrt{\alphabetSize}}
				\biggr)^2
				=
				\frac{1}{\alphabetSize}
				+w^2\epsilon^2.
			\end{equation}
			Using the established probability~\eqref{prob_m_m'}, we calculate the target expected value as
			\begin{align*}
				\mathbb{E}[W_1]
				= 
				-\frac{1}{k^5}
				+
				\frac{k}{k^5}
				-
				\frac{k \epsilon^2 w^2 }{k^5}
				+
				\frac{(-1 + k)^4}{k^5}
				+
				\frac{(-1 + k)^4 (1 + k\epsilon^2  w^2)}{k^5}.
			\end{align*}
			Among the terms in the expectation computed above,
			the term that converges to 0 most slowly is of order
			$1/\alphabetSize$, which arises from the fourth term of the numerator. 
			Similarly, for
			$
			\mathbb{E}[W_2],
			$
			the random variable $W_2$ takes on the values as:
			\begin{align*}
				W_2
				=
				\begin{cases}
					\biggl(1-\dfrac{1}{\alphabetSize}\biggr)^4, & 
					\tilde{\vectorize{\rvTwo}}_1
					=
					\tilde{\vectorize{\rvTwo}}_2
					=
					\tilde{\vectorize{\rvTwo}}_3,
					\\[1em]
					~\dfrac{1}{\alphabetSize^2}
					\biggl(1-\dfrac{1}{\alphabetSize}\biggr)^2, & 
					[\tilde{\vectorize{\rvTwo}}_1 = \tilde{\vectorize{\rvTwo}}_2,
					\tilde{\vectorize{\rvTwo}}_1 \neq
					\tilde{\vectorize{\rvTwo}}_3]
					\cup
					[\tilde{\vectorize{\rvTwo}}_1 = \tilde{\vectorize{\rvTwo}}_3,
					\tilde{\vectorize{\rvTwo}}_1 \neq
					\tilde{\vectorize{\rvTwo}}_2]
					\numberthis
					\label{kernel_three_genrr}
					\\[1em]
					\dfrac{1}{\alphabetSize^4}, & \text{otherwise.}
				\end{cases}	 
			\end{align*}
			The corresponding probabilities are calculated as
			\begin{align*}
				& \mP(\tilde{\vectorize{\rvTwo}}_1
				=
				\tilde{\vectorize{\rvTwo}}_2
				=
				\tilde{\vectorize{\rvTwo}}_3)
				=
				\sum_{\vectorIndex = 1}^\alphabetSize
				\probVecElement{\tilde{\rvTwo}}{\vectorIndex}^3
				=
				\|\probVec_{\tilde{\rvTwo}}\|_3^3
				\stackrel{(a)}{=}
				\frac{1}{\alphabetSize^2}
				+
				\frac{3 w^2 \epsilon^2}{\alphabetSize},~\text{and}
				\\
				& \mP([\tilde{\vectorize{\rvTwo}}_1 = \tilde{\vectorize{\rvTwo}}_2,
				\tilde{\vectorize{\rvTwo}}_1 \neq
				\tilde{\vectorize{\rvTwo}}_3]
				\cup
				[\tilde{\vectorize{\rvTwo}}_1 = \tilde{\vectorize{\rvTwo}}_3,
				\tilde{\vectorize{\rvTwo}}_1 \neq
				\tilde{\vectorize{\rvTwo}}_2])
				\stackrel{(b)}{=}
				2
				\mP([\tilde{\vectorize{\rvTwo}}_1 = \tilde{\vectorize{\rvTwo}}_2,
				\tilde{\vectorize{\rvTwo}}_1 \neq
				\tilde{\vectorize{\rvTwo}}_3])
				\\& \hskip 20.6em =
				2\sum_{\vectorIndex=1}^\alphabetSize
				\probVecElement{\tilde{\rvTwo}}{\vectorIndex}^2
				(1-\probVecElement{\tilde{\rvTwo}}{\vectorIndex})
				\\& \hskip 20.6em =
				2\|\probVec_{\tilde{\rvTwo}}\|_2^2-
				2\|\probVec_{\tilde{\rvTwo}}\|_3^3
				\\& \hskip 20.35em \stackrel{(c)}{=}
				\frac{2}{\alphabetSize}
				+
				2w^2 \epsilon^2
				-
				\frac{2}{\alphabetSize^2}
				-
				\frac{6 w^2 \epsilon^2}{\alphabetSize}
				,
			\end{align*}
			where steps $(a)$ and $(c)$ use the equations in~\eqref{power_of_norms_calculation}
			and
			step $(b)$ uses the independence between samples, as well as the exclusiveness and symmetry of the events.
			Now $\mE[W_2]$ is calculated as
			\begin{align*}
				\mE[W_2]&=
				\biggl(
				1-\frac{1}{\alphabetSize}
				\biggr)^4
				\biggl(
				\frac{1}{\alphabetSize^2}
				+
				\frac{3w^2 \epsilon^2}{\alphabetSize}
				\biggr)
				+
				\biggl\{
				\frac{1}{\alphabetSize^2}
				\biggl(
				1- \frac{1}{\alphabetSize}
				\biggr)^2
				\biggr\}
				\biggl(
				\frac{2}{\alphabetSize}
				+
				2w^2 \epsilon^2
				-
				\frac{2}{\alphabetSize^2}
				-
				\frac{6w^2 \epsilon^2}{\alphabetSize}
				\biggr)
				\\&\quad +
				\frac{1}{\alphabetSize^4}
				\biggl(
				1
				-
				\frac{1}{\alphabetSize^2}
				-
				\frac{3w^2 \epsilon^2}{\alphabetSize}
				-
				\frac{2}{\alphabetSize}
				-
				2w^2 \epsilon^2
				+
				\frac{2}{\alphabetSize^2}
				+
				\frac{6w^2 \epsilon^2}{\alphabetSize}
				\biggr),
				\numberthis
				\label{E_W2}
			\end{align*}
			where the term that converges to $0$ the slowest is of order $1/\alphabetSize^2$, originating from the first term on the right-hand side of~\eqref{E_W2}.
			Finally, the trace term in the denominator is calculated as
			\begin{align*}
				\mathrm{tr}(\Sigma_{\tilde{\rvTwo}}^2)
				&=
				\|\probVec_{\tilde{\rvTwo}}\|_2^2
				+
				\|
				\probVec_{\tilde{\rvTwo}}
				\|_2^4
				-
				2
				\|\probVec_{\tilde{\rvTwo}}\|_3^3
				\\&=
				\frac{1}{\alphabetSize} + w^2 \epsilon^2+
				\frac{1}{\alphabetSize^2} + w^4 \epsilon^4
				+
				\frac{2w^2 \epsilon^2}{\alphabetSize}
				-
				\frac{2}{\alphabetSize^2}
				-
				\frac{6w^2 \epsilon^2}{\alphabetSize}
				\\&=
				\frac{1}{\alphabetSize}
				-
				\frac{1}{\alphabetSize^2}
				+
				w^2 \epsilon^2
				+
				w^4 \epsilon^4
				-
				\frac{4w^2 \epsilon^2}{\alphabetSize},
				\numberthis
				\label{trace_calculation_plugged_in}
			\end{align*}
			where the term that converges to 0 the slowest is $1/k$.
			Now to verify the condition C2,
			we separately analyze the asymptotic behaviors of $\mathbb{E}[W_1]/\sampleSize_1^2
			\big\{
			\mathrm{tr}(\Sigma_{\tilde{\rvTwo}}^2)
			\big\}^2$
			and
			$\sampleSize_1
			\mathbb{E}[W_2]/\sampleSize_1^2
			\big\{
			\mathrm{tr}(\Sigma_{\tilde{\rvTwo}}^2)
			\big\}^2$,
			focusing on the terms that converges the slowest to $0$ in the numerator and denominator, respectively.
			For the first term, it suffices to investigate
			$(1/\alphabetSize)/(\sampleSize_1^2/\alphabetSize^2)$, which converges to 0
			as long as $\alphabetSize/\sampleSize_1^2 \to 0$.
			Note that $\alphabetSize/\sampleSize_1^2 \to 0$ is implied by our assumption~\eqref{appendix:genrr_powerless:asymptotic_regime}.
			For the second term, it suffices to investigate $(\sampleSize_1/\alphabetSize^2)/(\sampleSize_1^2/\alphabetSize^2)$, which converges to 0.
			Therefore the convergence in~\eqref{condition_C2_restated} is verified, 
			confirming the condition C2 of Theorem~\ref{appendix:genrr_powerless:theorem:alternative_dist}. 

\vskip 1em

			\noindent \textit{Verification of the condition C3.}
			The condition C3 is restated as
			\begin{equation}\label{cond_c3_negative}
				(\probVec_{\tilde{\rvTwo}} - \probVec_\rvThree)^\top 
				\Sigma_{
					\tilde{
						{\rvTwo}
					}
				}
				(\probVec_{\tilde{\rvTwo}} - \probVec_\rvThree)
				=
				o\left(
				\sampleSize_1^{-1} \mathrm{tr}
				(\Sigma_{
					\tilde{{\rvTwo}}
				}^2)
				\right).
			\end{equation}
			To calculate the quadratic form therein,  note that since
			$
			\probVec_{\tilde{\rvTwo}}
			-
			\probVec_\rvThree
			=
			w(\probVec_\rvTwo- \probVec_\rvThree)
			$,
			we have
			\begin{equation*}
				(\probVec_{\tilde{\rvTwo}} - \probVec_\rvThree)^\top 
				\Sigma_{
					\tilde{{\rvTwo}}
				}
				(\probVec_{\tilde{\rvTwo}} - \probVec_\rvThree)
				= 
				w^2
				(\probVec_\rvTwo - \probVec_\rvThree)^\top 
				\mathrm{diag}(\probVec_{\tilde{\rvTwo}})
				(\probVec_\rvTwo - \probVec_\rvThree)
				-
				w^2
				(\probVec_\rvTwo - \probVec_\rvThree)^\top 
				\probVec_{\tilde{\rvTwo}}
				\probVec_{\tilde{\rvTwo}}^\top
				(\probVec_\rvTwo - \probVec_\rvThree).  
			\end{equation*}
			We separately analyze the two terms on the right-hand side. For the first term, we have
			\begin{align*}
				w^2
				(\probVec_\rvTwo - \probVec_\rvThree)^\top 
				\mathrm{diag}(\probVec_{\tilde{\rvTwo}})
				(\probVec_\rvTwo - \probVec_\rvThree)
				&=
				w^2
				\sum_{\vectorIndex=1}^\alphabetSize
				\frac{
					(-1)^\vectorIndex \epsilon
				}{
					\sqrt{\alphabetSize}
				}
				\biggl(
				\frac{1}{\alphabetSize}
				+
				\frac{
					(-1)^\vectorIndex 
					w\epsilon
				}{
					\sqrt{\alphabetSize}
				}
				\biggr)
				\frac{
					(-1)^\vectorIndex \epsilon
				}{
					\sqrt{\alphabetSize}
				}
				\\&=
				\frac{w^2\epsilon^2}{
					\alphabetSize
				}
				\sum_{\vectorIndex=1}^\alphabetSize
				\biggl(
				\frac{1}{\alphabetSize}
				+
				\frac{
					(-1)^\vectorIndex w\epsilon
				}{
					\sqrt{\alphabetSize}
				}
				\biggr)
				\\&=
				\frac{w^2\epsilon^2}{
					\alphabetSize
				},
			\end{align*}
			where the last equality  holds since $\alphabetSize$ is even. The second term is computed as
			\begin{align*}
				w^2
				(\probVec_\rvTwo - \probVec_\rvThree)^\top 
				\probVec_{\tilde{\rvTwo}}
				\probVec_{\tilde{\rvTwo}}^\top
				(\probVec_\rvTwo - \probVec_\rvThree)
				&=
				w^2
				\bigl\{
				(\probVec_\rvTwo - \probVec_\rvThree)^\top 
				\probVec_{\tilde{\rvTwo}}
				\bigr\}^2
				\\&=
				w^2
				\biggl\{
				\sum_{\vectorIndex=1}^\alphabetSize
				\frac{
					(-1)^\vectorIndex \epsilon
				}{
					\sqrt{\alphabetSize}
				}
				\biggl(
				\frac{1}{\alphabetSize}
				+
				\frac{
					(-1)^\vectorIndex w\epsilon
				}{
					\sqrt{\alphabetSize}
				}
				\biggr)
				\biggr\}^2
				\\&=w^4 \epsilon^4,
			\end{align*}
			where  the last equality again uses the fact that $\alphabetSize$ is an even integer.
			Therefore we have
			\begin{equation}\label{noise_calculation_done}
				(\probVec_{\tilde{\rvTwo}} - \probVec_\rvThree)^\top 
				\Sigma_{
					\tilde{
						{\rvTwo}
					}
				}
				(\probVec_{\tilde{\rvTwo}} - \probVec_\rvThree)
				=
				\frac{w^2 \epsilon^2}{\alphabetSize}
				-
				w^4 \epsilon^4,
			\end{equation}
			where $w^2 \epsilon^2 / \alphabetSize$ is the term that converges to 0 most slowly. 
			Utilizing the trace calculation~\eqref{trace_calculation_plugged_in}, we can verify the condition C3~\eqref{cond_c3_negative}
			by studying the asymptotic behavior of $(w^2 \epsilon^2 / \alphabetSize)/(1/\alphabetSize \sampleSize_1)$. Since this quantity converges to 0, the condition C3  is verified.
		
	As all conditions of Theorem~\ref{appendix:genrr_powerless:theorem:alternative_dist} are satisfied, the asymptotic distribution in~\eqref{genrr_ustat_asympototic_dist} is confirmed.

\vskip 1em

		\noindent \textit{Power function.}
		Using the critical value in~\eqref{appendix:genrr_powerless:critical_value}
		and the asymptotic alternative distribution established in~\eqref{genrr_ustat_asympototic_dist}, 
		under the asymptotic regime in~\eqref{appendix:genrr_powerless:asymptotic_regime},
		the power function is written as:
		\begin{align*}
			\gamma&_{\sampleSize_1, \alphabetSize, \privacyParameter}
			(\probVec_\rvTwo,
			\probVec_\rvThree)
			\\&=
			\mathbb{P}_{H_1} \!
			\left(
			U_{n_1}
			\geq
			z_{\gamma}
			\sqrt{
				\frac{
					2(1 -  1/ \alphabetSize)}{
					\alphabetSize n_1(n_1-1)}
			}
			\right)
			\\&=
			\mathbb{P}_{H_1} \!
			\left(
			W
			\geq
			\sqrt{
				\frac{
					\sampleSize_1(\sampleSize_1-1)
				}{
					2
				} 
			}
			\frac{z_{\gamma}
				\sqrt{
					2\{1 - ( 1/ \alphabetSize) \}/ \{ \alphabetSize n_1(n_1-1)  \}
				}
				- w^2 \epsilon^2
			}{
				\sqrt{
					\mathrm{tr}(\Sigma_{\tilde{\rvTwo}}^2)
					+
					2(\sampleSize_1 - 1)
					(\probVec_{\tilde{\rvTwo}} - \probVec_\rvThree)^\top 
					\Sigma_{\tilde{\rvTwo}}
					(\probVec_{\tilde{\rvTwo}} - \probVec_\rvThree)	
				}
			}
			\right) + o(1)
			\\&=
			\Phi \!
			\left(
			\sqrt{
				\frac{
					\sampleSize_1(\sampleSize_1-1)
				}{
					2
				} 
			}
			\frac{
				w^2 \epsilon^2
				-
				z_{\gamma}
				\sqrt{
					2\{1 - ( 1/ \alphabetSize) \}/ \{ \alphabetSize n_1(n_1-1)  \}
				}
			}{
				\sqrt{
					\mathrm{tr}(\Sigma_{\tilde{\rvTwo}}^2)
					+
					2(\sampleSize_1 - 1)
					(\probVec_{\tilde{\rvTwo}} - \probVec_\rvThree)^\top 
					\Sigma_{\tilde{\rvTwo}}
					(\probVec_{\tilde{\rvTwo}} - \probVec_\rvThree)	
				}
			}
			\right)
			\\&=
			\Phi \!
			\left(
			\frac{
				\sqrt{
					\sampleSize_1(\sampleSize_1-1)
					/
					2
				}
				w^2 \epsilon^2
				-
				z_{\gamma}
				\sqrt{
					(1/\alphabetSize)
					-
					(1/\alphabetSize^2)
				}
			}{
				\sqrt{
					\left\{
					(1/\alphabetSize)
					-
					(1/\alphabetSize^2)
					+
					w^2 \epsilon^2
					+
					w^4 \epsilon^4
					-
					(4w^2 \epsilon^2/\alphabetSize)
					\right\}
					+
					2(\sampleSize_1 - 1)
					\bigl\{
					(w^2 \epsilon^2/\alphabetSize)
					-
					w^4 \epsilon^4
					\bigr\}	
				}
			}
			\right)
			\\ &\to
			\Phi \!
			\left( -z_\gamma \right) = \gamma,
			\numberthis
			\label{appendix:genrr_powerless:power_function}
		\end{align*}
		where $W \sim \mathcal{N}(0,1)$,   $\Phi$ denotes the cumulative distribution function of $W$, and $o(1)$ refers to a quantity that goes to 0 as $\sampleSize_1$ and $\alphabetSize$ go to $\infty$, in the regime of~\eqref{appendix:genrr_powerless:asymptotic_regime}.
		The last line in~\eqref{appendix:genrr_powerless:power_function} uses the fact that given that $w\epsilon$ is of the order $\alphabetSize^{-3/4}\sampleSize_1^{-1/2}$, the term in~\eqref{appendix:genrr_powerless:power_function} that converges most slowly to 0 is $\sqrt{1/\alphabetSize}$, which appears both on the second term in the numerator and the first term in the denominator.  It is important to note that the convergence result \eqref{appendix:genrr_powerless:asymptotic_regime} holds for any sufficiently small $\privacyParameter$, indicating that the \texttt{GenRR}+$\ell_2$ test becomes asymptotically powerless in the regime where the minimax testing rate is $\alphabetSize^{1/4} / (\sampleSize_1 \privacyParameter^2)^{1/2}$.
This completes the proof of Theorem \ref{genrr_suboptimal}.
	\end{proof}
\end{appendix}

\end{document}